\documentclass[twoside,11pt]{article}
\usepackage{tikz}
\usetikzlibrary{arrows.meta,positioning}
\usepackage{blindtext}

\usepackage[preprint]{jmlr2e}

\usepackage{graphicx} % For images
\usepackage{tcolorbox}
\usepackage{algorithm}
\usepackage{subcaption} 
\usepackage{float}
\usepackage{xcolor}
\usepackage{algorithmic}
\usepackage{url}
\usepackage{booktabs}
\usepackage{soul}
\usepackage{multirow}
\usepackage{xcolor}
\usepackage{pifont}
\usepackage{enumitem}
\usepackage{mathtools}
\usepackage{bm}

\usepackage{lastpage}
\jmlrheading{XX}{2026}{1-\pageref{LastPage}}{X/XX; Revised X/XX}{X/XX}{21-0000}{Jianhong Chen, Naichen Shi and Xubo Yue}

\ShortHeadings{SCD}{Chen, Shi, and Yue}
\firstpageno{1}

\begin{document}

% \title{Causal Discovery for Dynamical Systems}
\title{Causal Discovery from Heteroscedastic Stochastic Dynamical Systems under Imperfect Physical Models}

\author{\name Jianhong Chen \email chen.jianho@northeastern.edu \\
       \addr Department of Mechanical \& Industrial Engineering\\ Northeastern University\\ Boston, MA, USA
       \AND
       \name Naichen Shi \email naichen.shi@northwestern.edu \\
       \addr Department of Industrial Engineering and Management Sciences \\Department of Mechanical Engineering \\ Northwestern University \\ IL, USA
       \AND
       \name Xubo Yue \email x.yue@northeastern.edu \\
       \addr Department of Mechanical \& Industrial Engineering\\ Northeastern University\\ Boston, MA, USA}
\editor{My editor}

\maketitle

\vspace{-1em}
\begin{abstract}
Causal discovery is a data-driven paradigm for analyzing complex real-world systems, while physics-based models, such as ordinary differential equations (ODEs), provide mechanistic structure for real-world dynamical processes. Integrating these paradigms can use physical knowledge as an inductive bias, improving identifiability, stability, and robustness. However, real dynamical systems often exhibit feedback, cyclic interactions, and nonstationarity, whereas many causal discovery methods rely on acyclicity, stationarity, or equilibrium assumptions. We propose an integrative causal discovery framework for dynamical systems that leverages partial physical knowledge through stochastic differential equations (SDEs). The drift term encodes known ODE dynamics, while the diffusion term captures unknown causal couplings beyond the prescribed physics. We develop a scalable sparsity-inducing maximum quasi-likelihood estimator with a theoretically justified stabilization technique. Under mild conditions, we establish causal graph recovery guarantees for both stable and unstable SDE regimes. We also analyze robustness to ODE misspecification and clarify how the stabilization constant balances numerical stability and statistical recoverability. Experiments on linear SDEs and nonlinear benchmarks, including Lotka-Volterra and Lorenz dynamics with acyclic and cyclic structures, show improved graph recovery and robustness over data-driven baselines. We also demonstrate practical utility on real-world epidemic data by reconstructing stochastic SIR dynamics within our causal discovery framework.
\end{abstract}
\begin{keywords}
Causal Discovery, Stochastic Dynamical Systems, Physics-Informed Learning, High-dimensional Statistics, Misspecification Analysis.
\end{keywords}

\section{Introduction}
Understanding cause-effect relationships is fundamental for explaining complex real-world phenomena, such as gene regulation in genetics \citep{Feng2023, glymour2019review}, anthropogenic climate change in earth science \citep{Runge2019InferringCF}, and brain connectivity in neuroscience \citep{sanchez2019estimating}. Establishing reliable causal relationships ideally relies on randomized controlled experiments (RCTs), which are often considered the gold standard. However, RCTs can be prohibitively expensive, time-consuming, or even infeasible in many settings. Consequently, inferring causal structure from purely observational data and, when available, interventional data, known as \emph{causal discovery}, has attracted much attention \citep{chang2024causal}. A central challenge in causal discovery is that, without additional identifying assumptions, multiple causal models can be statistically indistinguishable. Classical approaches address this issue by imposing identifying assumptions or inductive biases, such as acyclicity \citep{pamfil2020dynotears}, non-Gaussianity \citep{shimizu06a}, or temporal precedence \citep{wang2024neural}, often yielding identifiability only up to an equivalence class. While these assumptions have enabled substantial methodological progress, the resulting models are typically formulated at a generic statistical level and do not explicitly incorporate mechanistic knowledge of the underlying system. As a result, the inferred causal graph may capture statistical dependence patterns without faithfully reflecting the physical, biological, or dynamical mechanisms that govern the observed process. 
% In contrast, many physical systems are governed by mechanistic principles that impose structure far beyond generic statistical regularities. Conservation laws, symmetries, and dynamical constraints restrict how system variables can evolve over time and space. Differential equations, including ordinary and partial differential equations, provide a compact mathematical language for representing such structure by encoding the mechanisms through which variables interact and evolve. This observation motivates a natural question:

In contrast, many physical and biological systems are often governed by mechanistic principles that impose structure far beyond generic statistical regularities. These principles constrain how system variables interact and evolve over time, and are often represented through differential equations such as ordinary or partial differential equations. Such equations provide a compact mathematical language for encoding known mechanisms. This observation motivates a natural question:

\begin{tcolorbox}
[width=\linewidth, colback=white!95!black]
\textit{Research Question: Can physical knowledge be incorporated as an inductive bias for causal discovery, and if so, what are its theoretical consequences?} 
\end{tcolorbox}

This perspective is closely related to the idea emphasized by \citet{CRLYoshua} that learning reusable mechanisms is an important direction in causal discovery, with physics providing a natural source of such mechanisms. Ordinary differential equations (ODEs), in particular, are a core modeling tool for characterizing mechanistic dynamics in many scientific and engineering domains, including neuroscience \citep{Vignesh2024ARO}, infectious disease \citep{alqudah2025study}, and manufacturing \citep{han2025physics}. Despite the fact that both tools serve as powerful approaches for modeling real-world processes,  \textit{there remains a clear need for a framework that bridges causal discovery with available physical knowledge}. 

% \textbf{An Intuitive (but Faulty) Solution.} A natural strategy is to use available physical knowledge as a partial structural prior, thereby reducing causal discovery to a partial structure learning problem in which only the unknown edges must be inferred. While this can be useful, it still treats mechanistic knowledge and causal discovery as two separate components: the physical model is specified first, and the remaining causal structure is inferred afterward. This separation can be problematic when the known mechanism interacts nontrivially with the unknown causal component, or when stochastic effects obscure a clean decomposition between the two. In such cases, treating the physical model merely as a preprocessing step may distort the downstream graph-learning problem. \textit{This further highlights the need for a framework that more directly incorporates given physical knowledge into causal discovery}.

Developing such a model raises several challenges. \emph{First}, many widely used causal discovery methods are built around acyclicity assumptions, which can be overly restrictive for modeling real-world dynamical mechanisms. \emph{Second}, there is no clear unified framework for integrating these two sources of information. More concretely, data-driven causal discovery is often formulated through a structural causal model (SCM), that is $x_i(t) = f_i\bigl(x_{\mathrm{pa}(i)}(t)\bigr),$ whereas mechanistic prior knowledge is typically expressed through differential equations of the form $\dot{x}_i(t) = g_i\bigl(t,\bm{x}(t),\bm{\gamma}\bigr)$. Because these formulations operate at different modeling levels, namely functional dependence and continuous-time evolution, it is nontrivial to combine them within a common modeling framework without introducing an explicit bridge between mechanistic constraints and causal graph learning.  

To address these difficulties, we propose a multiplicative-noise stochastic differential equation (SDE) framework that bridges available physical knowledge and data-driven causal discovery. The key modeling idea is that multiplicative SDEs naturally extend ODE models by accounting for process noise and unmodeled inputs, while representing functional dependence under stochastic uncertainty. This yields a unified formulation in which available physical knowledge and causal dependence can be incorporated in a common model. Building on this formulation, Section~\ref{sec:notion} introduces a causal notion adapted to the stochastic dynamical setting considered in this paper.

Framing causal discovery within an SDE model offers several important benefits. \emph{First}, the Brownian-motion component in an SDE (i.e., $W_t$) provides a mathematically tractable way to represent accumulated random perturbations over time, so that the model describes a distribution over trajectories rather than a single deterministic path. This is important in real systems, where uncertainty propagates through time and across coupled variables. Moreover, the diffusion term provides a principled representation of process noise and intrinsic dynamical variability, supporting likelihood-based inference and uncertainty quantification. \emph{Second}, the multiplicative SDE framework provides a natural way to address the central problem of causal discovery from nonstationary time series, since its state-dependent noise generates heteroskedastic and time-evolving dynamics that are incompatible with standard stationarity assumptions. \emph{Third}, since SDEs form a well-established mathematical framework, they provide a principled foundation for deriving theoretical guarantees on the statistical properties of the proposed algorithm. 

\subsection{Main Contributions} In this study, we propose a framework that bridges known dynamics and causal discovery, supported by theoretical guarantees and empirical validation. Our key contributions are:
 
\textbf{A novel formulation.} We introduce an SDE-based causal discovery framework that integrates mechanistic (physics-based) dynamics with data-driven graph learning. In our formulation, the drift term encodes known ODE dynamics, while the diffusion term captures unknown cross-variable causal couplings beyond the prescribed physics. The resulting model accommodates nonstationary time series with state-dependent stochastic variability.

\textbf{A statistically robust algorithm.} We propose \texttt{SCD}, a proximal-gradient method for causal graph recovery from observed trajectories by optimizing an $\ell_1$-penalized quasi-likelihood induced by Euler-Maruyama discretization. To ensure numerical stability and well-posedness of the structure learning problem, we incorporate a stabilization constant into the objective. This stabilization also yields several regularity properties that facilitate the theoretical analysis, including local restricted strong convexity. The estimation strategy applies to both linear models and nonlinear functional forms through feature expansion.

\textbf{Theoretical guarantees.} In summary, our theoretical analysis has three main components. First, we prove a high-probability support recovery guarantee. Second, we establish robustness of the recovery result to ODE misspecification. Third, we characterize the role of the stabilization constant, including its effect on both theory and practice. More specifically, these three components are as follows. 
\begin{enumerate}
    \item  For the support recovery analysis, we consider two SDE regimes: a stable regime, as shown in Theorem~\ref{Thm: LASSO}, and an unstable regime, as shown in Theorem~\ref{Thm:lasso_uns}. Under the stable SDE setting, we prove that for each variable \(x_i\), if $n \gtrsim C\tilde{c}^2 s^3 \log p$ and $\lambda_n \gtrsim \sqrt{\log p/n},$ where \(s \coloneqq \max_i s_i\), \(s_i\) denotes the parent-set size of variable \(x_i\), and \(C,\tilde c>0\) are constants, then the resulting estimator achieves high-probability support recovery (i.e., consistent parent-set selection). Analogous support recovery results are established for the unstable SDE setting.
    
    \item  For robustness, we establish that the support recovery guarantee persists under numerical ODE misspecification, as shown in Theorem~\ref{Thm:Lasso_miss}, after appropriately adjusting the regularization level and sample-size requirements. More precisely, when the mechanistic drift is misspecified at level \(\delta\), the recovery conditions depend explicitly on \(\delta\). We derive the corresponding regularization and sample-size requirements, and show that exact support recovery still holds for a suitable range of misspecification levels.
    
    \item For the stabilization constant, we show that the stabilized estimator still recovers the support of the population parameter under an adjusted minimum signal strength condition as shown in Corollary~\ref{cor:exact-support-original}. Then, we characterize a feasible admissible range for the stabilization constant \(c\) and derive a theory-driven optimal choice, as shown in Proposition~\ref{prop:fesibale_c}, which makes explicit the trade-off between numerical stabilization and statistical recoverability. 
\end{enumerate}

To our knowledge, this is among \textit{the first} support recovery results for nonconvex causal graph estimation under multiplicative SDE noise without imposing acyclicity or additive-noise assumptions. Moreover, we establish robustness to ODE misspecification and provide a theoretically justified characterization of the stabilization strategy.

\textbf{Experimental study.} We validate the linear version of \texttt{SCD} on time-series data generated from SDEs whose diffusion structure includes both acyclic and cyclic directed graphs. We vary the dimension \(p\) under both stable and unstable dynamics, and compare \texttt{SCD} against four benchmark methods. We further conduct simulations under ODE misspecification and empirically study the admissible range of the stabilization constant \(c\), as well as the theory-predicted tolerance range, thereby providing empirical support for the corresponding theoretical results. 

\textbf{Application.} We further validate \texttt{SCD} using Lotka Volterra and chaotic Lorenz dynamics as known physical priors. In this setting, we evaluate both the nonlinear and linear versions of our method under diffusion structures that include both acyclic and cyclic directed graphs. Finally, we apply our algorithm to real-world COVID data to recover a stochastic SIR model under known SIR dynamics.

\section{Literature Review}
\textbf{Causal Discovery from Time-series Data.} Our work is related to causal discovery from observational time-series data, as it studies causal structure in dynamical systems. A comprehensive survey of time-series causal discovery can be found in \citet{chang2024causal}. Existing approaches are typically categorized into (i) constraint-based methods relying on conditional independence tests, (ii) score-based structure optimization, and (iii) Granger-causality-based methods grounded in predictability. While our framework departs from these paradigms by incorporating mechanistic ODE knowledge as an inductive bias, it nevertheless operates on time-series observations. Constraint-based methods infer causal structure via conditional independence tests. PCMCI \citep{Runge2019PCMCI} combines the PC algorithm \citep{Spirtes2000} with momentary conditional independence tests to identify lagged parents, while PCMCI+ \citep{runge20a} further accounts for instantaneous effects. Extensions such as LPCMCI \citep{reiser2022causaldiscoverytimeseries} handle latent confounders, and CD-NOD \citep{Zhang2017a} targets nonstationary, heterogeneous time series. Without additional functional assumptions, these methods typically identify graphs only up to a Markov equivalence class. Score-based approaches instead posit an explicit SEM for the temporal process and learn the graph by optimizing a likelihood or score. \citet{aapo2010} leverage identifiability of additive noise models \citep{Hoyer2008a} to build linear autoregressive SEMs with non-Gaussian noise, and DYNOTEARS \citep{pamfil2020dynotears} adapts the continuous acyclicity constraint of \citet{Zheng2018} for differentiable structure learning. Granger causality rules out backward-in-time and instantaneous effects, and is often operationalized via vector autoregressive prediction \citep{Shojaie2010}. Building on this principle, \citet{Tank2021NGC} propose \texttt{Neural-GC}, which uses regularized neural networks for nonlinear Granger discovery; more recent work strengthens neural Granger methods via Jacobian-based regularization \citep{zhou2024}.

\noindent \textbf{Interpretation of Differential Equations From Causal Perspective.} One line of work that is seemingly related to our problem involves reinterpreting differential equation models from a causal perspective. For instance, \citet{mooij2013} show that, under suitable regularity conditions, the equilibrium behavior of a first-order ODE system can be mapped to a structural causal model, providing a causal interpretation of equilibrium relations, even in the presence of feedback cycles. Moreover, \citet{Hansen_2014} formalize how causal interventions act on SDEs by explicitly defining postintervention SDEs. Their framework shows that intervening on an SDE corresponds to the continuous-time limit of interventions in structural equation models, and establishes conditions under which postintervention distributions are identifiable from the observational SDE. While these works clarify how causality is defined and interpreted in differential equation models, they do not address the problem of learning causal graph structure from observational data.

\noindent \textbf{SDE-based Causal Discovery.} The closest methodological line to our work is SDE-based causal discovery, which models causal structure through stochastic dynamical systems. \citet{bellot2022neural} propose \texttt{NGM} based on the additive-noise SDE model $dx(t)=f(x(t)) dt + dW(t)$, which uses Neural ODEs \citep{chen2018node} for mean process $f(x(t))$ and extracts graphical structure from the first layer of $f(x(t))$. This formulation assumes data drawn from a stationary distribution induced by additive noise, which constrains its applicability to a limited class of dynamical systems.
% However, their method is only applicable to data generated from a stationary distribution, which follows from the assumed linear SDE with additive noise $dW(t)$. 
% and is explicitly stated as a key Assumption~ 
To address this limitation, \citet{wang2024neural} propose \texttt{SCOTCH}, a variational-inference framework that infers posterior distributions over candidate graphs. It adopts a general SDE formulation \(dx(t)=f(x(t))\,dt+g(x(t))\,dW(t)\) and uses neural networks to parameterize both $f$ and $g$, with the graph defined by the induced dependencies among state variables. However, their emphasis is on continuous-time modeling and forecasting from irregular observations. Although they discuss identifiability and consistency, their method mainly learns a latent continuous-time representation to improve prediction rather than performing classical structural identification. In addition, their diffusion term $g$ is constrained to a nonzero diagonal form. Related efforts also include constraint-based causal discovery formulated in SDE form \citep{manten2025signature}, which is currently limited to directed acyclic graphs, and optimal-control-based approaches such as graphical continuous Lyapunov models (GCLMs) \citep{varando2020gclm,dettling}, which rely on additive-noise dynamics. 

Although these SDE-based works provide a continuous time description of stochastic dynamics, existing approaches often restrict attention to additive-noise formulations, acyclicity, and frequently rely on stationarity-type conditions for analysis or inference. Moreover, ODEs and SDEs are often used primarily as data-generation tools, with limited discussion of how mechanistic ODE structure relates to the corresponding SDE representation. Importantly, in many real-world systems, ODE-level physical mechanisms constitute reusable domain knowledge. Causal discovery should be grounded in reliable prior knowledge and, when possible, refined into reusable knowledge that supports future discoveries and interventions \citep{CRLYoshua}. How to systematically incorporate such partial mechanistic information as an inductive bias for causal structure learning remains an open challenge.

\section{Causal Coupling Beyond Known Physical Dynamics }

In this section, we first present our formulation. We then provide a heuristic motivation for why SDEs can be formulated within a physics-informed causal discovery framework. Next, we formalize the corresponding notion of causality. Finally, we describe our problem setting and its extension.

\textbf{Our Formulation.} 
Before presenting our formulation, we introduce some necessary notation. For \(p\in\mathbb N\), let \([p]:=\{1,\ldots,p\}\). For \(i\in[p]\), \(i\) indexes component \(x_i\), \(pa(i)\) denotes the parent set of \(x_i\), and \(x_{pa(i)}(t)\) denotes the subvector of \(x(t)\in\mathbb{R}^p\) for \(t\in[0,T]\) and $T>0$ corresponding to the parents of \(x_i\). When partial physics knowledge is available, we place the known mechanistic component in the drift and model the remaining unresolved variation through a parent-dependent diffusion term. We consider the following stochastic modulation formulation
\begin{equation}
\label{Eq:heuristic_for}
dx_i(t)\coloneqq\underbrace{g_i(x(t),\bm \gamma,t)}_{\text{ODE Information}}\,dt + \underbrace{h_i(x_{\mathrm{pa}(i)}(t))}_{\text{Causal Information}}\,dW_{i,t},
\end{equation}
where \(W_{i,t}\) denotes a standard Brownian motion for the \(i\)-th component; \(h_i\) is a diffusion function depending on the parents of node \(i\); \(g_i\) denotes the ODE model for the \(i\)-th component; and $\bm \gamma$ denotes the ODE parameters. This formulation gives the diffusion-based structure two complementary causal interpretations. First, the known drift term anchors the model in mechanistic dynamics, so the inferred structure captures parent-dependent variation beyond the prescribed deterministic dynamics. Second, the dependence of \(h_i\) on \(x_{\mathrm{pa}(i)}(t)\) induces a stochastic information flow from the parent variables to \(x_i\), since these variables modulate the conditional variability of the future increment of \(x_i(t)\). Notably, learning the structure through the diffusion term provides an uncertainty-aware perspective on causal discovery by identifying parent variables through their modulation of unresolved stochastic variation.

% \textbf{Our Formulation.} When partial physics knowledge is available, we place the known mechanistic component in the drift and model the remaining unresolved variation through a parent-dependent diffusion term. This leads to the following parent-dependent stochastic modulation ansatz
% \begin{equation}
% \label{Eq:heuristic_for}
%     dx_i(t)\coloneqq \underbrace{g_i(x(t),\gamma,t)}_{\text{ODE Information}}dt+ \underbrace{h_i(x_{\mathrm{pa}(i)}(t))}_{\text{Causal Information}}dW_{i,t},
% \end{equation}
% where \(W_{i,t}\) denotes a standard Brownian motion, $h_i$ is a parent-dependent diffusion function In this way, the ODE component captures known deterministic mechanistic dynamics, while the stochastic component incorporates additional parent-driven variation and uncertainty not explained by the deterministic model alone.

% To make this practical, we can approximate $h_i$ using a dictionary of candidate functions $ \phi(x_{\mathrm{pa}(i)}(t))$, which serves as a surrogate for the unknown parent-driven dynamics. For example, through a parameterization such as $h_i(x_{\mathrm{pa}(i)}) \coloneqq\nabla f_i\bigl(x_{\mathrm{pa}(i)}(t)\bigr)^\top\phi(x_{\mathrm{pa}(i)}(t))$. 

\subsection{A Heuristic Motivation for the SDE Formulation}
\label{Sec:mot}
In this section, we provide a heuristic motivation for using a parent-dependent diffusion term, where the diffusion modulation depends on the states of the parent variables, when combining partial physics knowledge with causal structure. We first recall the causal formulation and the ODE formulation, which are commonly written as
\begin{equation}
\label{Eq:botheq}
x_i(t) = f_i\bigl(x_{\mathrm{pa}(i)}(t)\bigr) + u_i(t), \qquad \frac{d}{dt}x_i(t) = g_i\bigl(x(t),\bm \gamma,t\bigr)
\end{equation}
where \(f_i\) denotes a causal functional relationship and $u_i(t)$ denotes a residual latent process. These two formulations arise from different modeling viewpoints: the first encodes causal dependence, while the second specifies deterministic temporal dynamics. Thus, they provide two complementary sources of information, but they are \textbf{not directly compatible} in general. To integrate these two sources of information, we note that the SDE framework provides a natural bridge between ODE-based dynamics and stochastic uncertainty. We emphasize that the following discussion is a modeling analogy, and its purpose is to motivate the ansatz in Eq.\eqref{Eq:heuristic_for}, which we adopt as our modeling choice.

\textbf{A tale from SDE.} To motivate this connection, we first consider the following formal SDE formulation \citep{oksendal2010stochastic}:
\begin{equation}
    \frac{d}{dt}X_t=b(X_t,t)+\sigma(X_t,t)\epsilon(t),
\end{equation}
where $\epsilon(t)$ denotes Gaussian white noise. The first term \(b(\cdot,\cdot)\) represents the deterministic dynamics, while the second term \(\sigma(X_t,t)\epsilon(t)\) represents stochastic perturbations. The function \(\sigma(X_t,t)\) is the diffusion coefficient, which is allowed to depend on both the state and time. Here, \(X_t\) denotes the same state process as \(x(t)\), written in standard SDE notation. Equivalently, in rigorous It\^o form, $dX_t = b(X_t,t)\,dt + \sigma(X_t,t)dW_t$, where \(W_t\) denotes a standard Brownian motion.
% dX_t = b(X_t,t)\,dt + \sigma(X_t,t)\underbrace{dW_t}_{\epsilon(t)dt}.
When the stochastic term vanishes, this formulation reduces formally to an ODE. An ODE can be seen as the deterministic counterpart of an SDE. 

\textbf{A tale from SCM.} We also obtain a useful dynamical interpretation of the causal formulation. By differentiating the causal formulation, as shown on the left-hand side of Eq.\eqref{Eq:botheq}, with respect to time and applying the chain rule, we can write
\begin{equation}
\frac{d}{dt}x_i(t)=\nabla f_i\bigl(x_{\mathrm{pa}(i)}(t)\bigr)^\top\frac{d}{dt}x_{\mathrm{pa}(i)}(t)+ \frac{d}{dt}u_i(t),
\end{equation}
where \(\nabla f_i\) denotes the gradient of \(f_i\) with respect to its parent variables. This relation indicates that the temporal evolution of \(x_i(t)\) depends on its parent variables through the sensitivity term \(\nabla f_i\). \textit{This suggests, by analogy, modeling any residual fluctuation (i.e., $\frac{d}{dt}u_i(t)$) in the dynamics of \(x_i\) as also depending on those parent variables.} In particular, if \(j \in pa(i)\), then \(f_i\) depends nontrivially on \(x_j\); in the smooth setting, this is reflected by \(\partial_{x_j} f_i\) not being identically zero. 

Moreover, if \(i\) has no parents, then the parent-dependent term vanishes and the remaining variability is absorbed into \(u_i(t)\), so that \(\frac{d}{dt}u_i(t) = \frac{d}{dt}x_i(t)\). We model the known deterministic evolution of component \(i\) through the physics-prior drift \(g_i(x(t),\gamma,t)\). Deviations from this baseline, which are not explained by the prior physical model, are treated as stochastic perturbations and encoded in the diffusion component of the SDE.

Since an SDE reduces to an ODE when the diffusion term vanishes, it naturally contains deterministic mechanistic dynamics as a special case. Conversely, differentiating a deterministic structural relation of the form \(x_i(t)=f_i(x_{\mathrm{pa}(i)}(t))\) suggests that the temporal evolution of \(x_i(t)\) is governed by its parent variables through the sensitivity term \(\nabla f_i(x_{\mathrm{pa}(i)}(t))\). Hence, from these two perspectives, Eq.~\eqref{Eq:heuristic_for} provides a natural bridge between known mechanistic dynamics and parent-dependent stochastic modulation, thereby connecting partial physics knowledge with causal structure learning.

% Since an SDE reduces to an ODE when the stochastic term vanishes, and since differentiating the structural equation shows that the temporal evolution of \(x_i(t)\) depends on its parent variables through \(\nabla f_i(x_{pa(i)}(t))\), 

% Hence, from these two perspectives, Eq.\eqref{Eq:heuristic_for} provides a natural bridge between known mechanistic dynamics and parent-dependent stochastic modulation, thereby connecting partial physics knowledge with causal structure learning.

\subsection{Characterization of Causal Influence}
\label{sec:notion}

Building on the heuristic discussion above, we interpret the stochastic term as an
\textit{uncertain information channel} whose effect is modulated by the current system state. In an idealized setting where all relevant mechanisms are fully known and correctly modeled, this stochastic component may be negligible, in which case the deterministic dynamics alone are sufficient to describe the system evolution. Under partial mechanistic knowledge, however, unresolved or exogenous influences may remain, and these are captured through a state-dependent stochastic term. Motivated by this viewpoint, we impose the generic SDE representation for $\bm X_t \in \mathbb R^p$ such that
\begin{equation}
    d\bm X_t = g(\bm X_t,\bm\gamma,t)dt + H(\bm X_t,t)\odot dW_t,
\end{equation}
where $W_t = (W_{1,t},W_{2,t},\ldots W_{p,t}) \in \mathbb R^p$ and $\odot$ denotes the componentwise Hadamard product, and the stochastic differential is interpreted in the It$\hat o$ sense. \(g(\bm X_t,\bm \gamma,t)\) denotes the known \textit{physics as prior knowledge}, which is given partially, and $H(\bm X_t,t) = \bigl(H_1(\bm X_t,t),\ldots,H_p(\bm X_t,t)\bigr)^\top$ represents a \textit{state-dependent stochastic modulation} term driven by parent information and not accounted for by the known dynamics. More precisely, following the causal model, $H: \mathbb R^p \times [0,T] \to \mathbb R^p $.  Under this formulation, it is important to distinguish two notions. First, the condition \(H_i(X_t,t)\not\equiv 0\) indicates that component \(i\) contains residual stochastic variation beyond the prescribed deterministic dynamics. Second, additional causal coupling from variable \(X_j\) to component \(X_i\) is present only when the stochastic modulation of component \(i\) depends nontrivially on \(X_j\). Under this intuition, we can define causality as in \textbf{Definition \ref{Defn:causal}}. 

\begin{definition}[Causal coupling through stochastic modulation]
\label{Defn:causal}
For \(i,j\in[p]\), we say that \(X_j\) exerts \emph{causal coupling beyond the known dynamics} on component \(X_i\) if the \(i\)-th stochastic modulation term depends nontrivially on \(X_j\), that is, if there exist \(t\in[0,T]\) and two states \(x,x'\in\mathbb{R}^p\) such that
\[
x_{-j}=x'_{-j}, \qquad x_j\neq x'_j, \qquad H_i(x,t)\neq H_i(x',t).
\]
Equivalently, when \(H_i\) is smooth, this means that \(\partial_{x_j}H_i(x,t)\) is not identically zero.
\end{definition}
If \(H_i(X_t,t)\not\equiv 0\) but \(H_i\) does not depend on any \(X_j\) for \(j\in [p]\), then component \(i\) contains unresolved stochasticity but no additional cross-component causal coupling beyond the known deterministic mechanism.

At the system level, the dynamics are said to be \emph{causally closed under the known mechanism} if $H(X_t,t)\equiv 0$. In this case, the system reduces to the known deterministic model and no additional state-dependent coupling beyond the prescribed mechanism is present. This definition characterizes causality as additional dynamical coupling not captured by the known ODE component. It is consistent with an information-flow interpretation \citep{GC1969}: if a variable enters the stochastic modulation term of another variable, then perturbations to the former can propagate through the system and alter downstream state trajectories. In this sense, the stochastic term encodes unresolved causal effects beyond the known mechanistic dynamics. This viewpoint complements, rather than replaces, predictive notions such as Granger causality, and it naturally allows higher-order effects through state-dependent modulation.

\textbf{Example: Predator--prey behavior.} Consider a predator-prey system (e.g., foxes and rabbits), whose population dynamics can be modeled by the Lotka-Volterra (LV) equations. In an idealized setting with no additional external drivers, the dynamics are fully determined over long time horizons given the initial conditions and fixed parameters. In practice, however, exogenous factors inevitably enter the system, for example, human hunting that reduces the predator population, habitat changes, or the introduction of a new species. These interventions can be interpreted as causal inputs that perturb the original dynamics. Consequently, the original LV model alone is no longer sufficient to capture the observed behavior, and the system must be augmented to account for these new causal factors. In our formulation, the known ecological mechanism is encoded in the drift term represented by $ \bm g(t, \bm x(t), \bm \gamma)$ (e.g., the LV component in the predator-prey example), while unresolved influences are represented through the stochastic term $H(X_t,t) \odot dW_t$, whose dependence on parent variables captures additional causal coupling beyond the known ODE.

In the next subsection, we instantiate this general characterization in the linear setting, where the stochastic modulation term is parameterized by the causal matrix \(A\).

\subsection{Problem Setup}
\label{sec: Setup}
In this study, we focus primarily on the linear causal fitting case for clarity of mathematical presentation. However, the formulation can be extended to nonlinear settings via dictionary expansion. For the linear case of Eq.\eqref{Eq:heuristic_for}, the causal functional relationship is written as $f_i\bigl(X_{\mathrm{pa}(i)}(t)\bigr)=A_i^\top X(t),$ where $A_i^\top \in \mathbb R^p$ denotes the $i$-row of $A \in \mathbb R^{p \times p}$. We want to note that the non-parent entries of $A_i$ are zero. Then, the dictionary function \(\phi\) is taken to be linear. The continuous-time SDE for each $i\in[p]$ can be written as:
\begin{equation}
\label{Eq: SDE_introd}
    dX_i(t) =  g_i(t, \bm X(t), \bm \gamma)dt + \big(A^\top_i \bm X(t)\big) dW_i(t),
\end{equation}
where $\{W_i(t)\}_{i=1}^p$ are independent standard Brownian motions. As a result, in $p$-dimensional, we can have the following SDE:
\begin{equation}
\label{GSDE}
\begin{split}
&d\bm X_t = \bm{g}(t, \bm X_t, \bm\gamma) dt+S_A(\bm X_t) dW_t, \\
 & \qquad S_A(\bm X_t) =\mathrm{diag}\big([A^\top\bm X_t]_1,[A^\top\bm X_t]_2,\ldots,[A^\top\bm X_t]_p\big), 
\end{split}
\end{equation}
where $\bm g(t,\bm X(t),\gamma) = \bigl(g_1(t,\bm X(t),\gamma),\dots,g_d(t,\bm X(t),\gamma),0,\dots,0\bigr)^\top\in \mathbb{R}^p$, so that the known mechanistic dynamics are encoded in the first \(d\) components, with \([d]\subset [p]\), and \(W_t=(W_1(t),\dots,W_p(t))^\top\). Then, we view the observed $p$-dimensional trajectory $\bm{x}(t)$ as a (discretely sampled) realization of an underlying stochastic process $\bm{X}(t)=(X_1(t),\ldots,X_p(t))$ on $[0,T]$. We approximate the SDE dynamics on a time grid $I=\{t_0,t_1,\ldots,t_n\}\subset[0,T]$ using the Euler-Maruyama scheme \citep{kloeden2011numerical}. For each component $i\in[p]$ and time index $k\in\{0,\ldots,n-1\}$, with step size $\Delta t_k=t_{k+1}-t_k$, we have
\begin{equation}
x^{i}_{k+1} = x^{i}_{k} + g_i(t_k, \bm x_k, \bm \gamma)\Delta t_k + \big(A_i^\top \bm x_k\big)\Delta W_k^i,
\label{eq:em}
\end{equation}
where $\bm x_k =\bm x(t_{k})$ and $\Delta W_k^i = \sqrt{\Delta t_k}\,z_{k+1}^i$, where $z^{i}_{k+1}\sim\mathcal{N}(0,1)$. Here, $g_i(t_k,\bm x_k,\bm \gamma)$ denote the deterministic ODE for the \(i\)-th variable and define the drift term. In contrast, $A^\top_i$ collects the coefficients associated with multiplicative noise and defines the diffusion term. For the concentration analysis of the discretely observed sequence, we impose the following dependence condition on the Euler–Maruyama observations.

\begin{assumption}[Geometric $\alpha$-mixing]
\label{ass:alpha-mixing-EM}
For the concentration analysis, we consider a fixed discretization step $\Delta>0$ and the associated Euler--Maruyama observation sequence $\{X_k\}_{k\ge 1}$ defined by $X_k := X^{\mathrm{EM}}(k\Delta)$. We define the strong mixing coefficients by
\[
\alpha_X(m) := \sup_{r\ge 1} \sup_{A\in \sigma(X_1,\dots,X_r),\, B\in \sigma(X_{r+m},X_{r+m+1},\dots)} \bigl|\mathbb{P}(A\cap B)-\mathbb{P}(A)\mathbb{P}(B)\bigr|,
\]
where $m\ge1$. Moreover, there exist constants $A_\alpha,a_\alpha>0$ such that $\alpha_X(m)\le A_\alpha e^{-a_\alpha m}$.
\end{assumption}
Assumption~\ref{ass:alpha-mixing-EM} is a standard high-level short-memory condition on the discretely observed sequence. It requires temporal dependence to decay geometrically with the lag, which is weaker than assuming independent observations, yet strong enough to permit Bernstein-type concentration inequalities for dependent data. Such mixing assumptions are widely used in modern nonstationary time-series theory (see, e.g., \cite{Fryzlewicz_2011, Vogt_2012}), especially in high-dimensional analysis where one needs nonasymptotic deviation bounds and restricted-eigenvalue-type control under serial dependence. Moreover, geometric \(\alpha\)-mixing can be justified in certain model-specific multiplicative-noise SDEs closely related to our setting, such as stochastic Lotka--Volterra systems, as shown in \S\ref{sec:mix}.

% However, because the present work is formulated for a general class of SDEs, verifying Assumption~\ref{ass:alpha-mixing-EM} requires model-specific arguments for the underlying mechanistic model, which is out of the current scope. We therefore leave its verification for specific mechanistic models as future work.

\subsection{Nonlinear Extension}
\label{Sec: Nonlinear}
% , and a symbolic representation \(\Psi = (\psi_1,\psi_2,\ldots, \psi_p)\), where $\psi_i : \mathbb R^{p\cdot J} \to \mathbb R$. For each $i\in [p]$,

In this section, we show that the formulation in Eq.\eqref{Eq:heuristic_for} also admits a nonlinear dictionary extension. For simplicity and interpretability, we consider an additive nonlinear representation based on  coordinate-wise basis functions (e.g., $\phi_1(u)=u^2$ and $\phi_2(u)=\sin(u)$). Under this construction, the stochastic modulation term is approximated through a dictionary of candidate nonlinear functions. We define $\Phi(X_i(t)) = \bigl(\phi_1(X_i(t)),\phi_2(X_i(t)),\dots,\phi_J(X_i(t))\bigr)^\top \in \mathbb{R}^J$. Applying the dictionary to all \(p\) variables yields
\[
\Phi(\bm X(t)) =\begin{bmatrix}
\Phi(X_1(t)) & \Phi(X_2(t)) & \cdots & \Phi(X_p(t))
\end{bmatrix}\in \mathbb{R}^{J\times p}.
\]
In the nonlinear dictionary setting, the effect of variable $X_j(t)$ on component $i$ is represented by the coefficient block associated with $\Phi(X_j(t))$; accordingly, a directed edge $j\to i$ is present when this block is nonzero. Let $\bar\Phi(\bm X(t)) := \operatorname{vec}(\Phi(\bm X(t))) \in \mathbb{R}^{pJ}.$ We then model the nonlinear stochastic modulation for component \(i\) as $S_{\tilde A}^i(\bm X_t) := \tilde A_i^\top \bar\Phi(\bm X(t)),$ where \(\tilde A_i \in \mathbb{R}^{pJ}\) is the coefficient vector associated with the \(i\)-th component, and \(\tilde A \in \mathbb{R}^{p\times (pJ)}\). This yields the nonlinear extension for the \(i\)-th component as $dX_i(t) = g_i(t, \bm X(t),\bm \gamma)dt + \bigl(\tilde A_i^\top \bar\Phi(\bm X(t))\bigr)dW_i(t).$ In vector form, this can be written as
\begin{equation}
\begin{split}
       & d\bm X_t = g(t, \bm X(t),\bm \gamma)\,dt + S_{\tilde A,\Phi}(\bm X_t)dW_t \\
       &\qquad S_{\tilde A,\Phi}(\bm X_t) = \operatorname{diag}\bigl( \tilde A_1^\top \bar\Phi(\bm X(t)),\dots,\tilde A_p^\top \bar\Phi(\bm X(t)) \bigr).
\end{split}
\end{equation}
The linear model is recovered as a special case by taking $J = 1$ and taking \(\Phi(X_i(t))=X_i(t)\), or equivalently \(\bar\Phi (\bm X(t))=\bm X(t)\) up to the chosen stacking convention. For a more general nonlinear extension, we believe this is a meaningful avenue for future work, and we provide additional discussion in \S\ref{Sed:Dis}.

% \newpage
% Moreover, our $S_A(\bm X_t)$ idea can be extend to nonlinear setting. To model the nonlinear relationship and nonlinear function, we introduce the following representation
% \begin{equation}
%     S^\Phi_g(\bm X_t) = g(\Phi(\bm X_t))
% \end{equation}
% where $\Phi$ is the dictionary of the function representation, and $g$ is the symbolic representation. As an example, given $g_i\coloneqq a_i^\top \Phi_i(\bm X_k)$ as additive relationship of function of $\Phi_i(\bm X_k)$ and each $ X^i_k$ encoded a function such as $\phi_i \coloneqq\sin(X^i_k)$. Given the dictionary size $j \coloneqq|\Phi|$, we can parametrize $g \coloneqq \tilde{A} \in \mathbb R^{p\times (p\cdot j)}$, and we introduce candidate signal $\bm {\tilde X}_k\in \mathbb R^{(p\cdot j)}$

\section{Causal Graph Estimation}
\label{sec: method}

Before moving to the learning objective, we first define the graph formally.
We consider a directed graph $G\coloneqq (V,E)$, where $V\coloneqq \{v_1,\ldots,v_p\}$ with $v_i \coloneqq x_i$. We define $e_{ij}\coloneqq (v_j \to v_i)$ and include $e_{ij}\in E$ if and only if $a_{ij}\neq 0$, where $a_{ij}$ denotes the $(i,j)$th entry of the adjacency matrix $A$. The graph structure is determined by ODE knowledge and adjacency matrix $A$. In this problem, our main task is to learn the causal graph and new edges from the unknown mechanism variables to ODE variables, encoded by the adjacency matrix $A$, and we assume that the graph is sparse; that is,

\begin{assumption}[Sparsity]
\label{sparse}
 There exists some $\kappa \in [0,1)$ for all node $j\in V\setminus{V_{phy}}$ such that, as $|V|\to\infty$, $\max_{j\in V\setminus{V_{phy}}} |pa(j)| = O(|V|^{\kappa}),$
\end{assumption}
where $V_{phy} \coloneqq \{x_1, x_2, \ldots, x_q\} \subseteq V$ denotes the set of variables for which physics information is available. Sparsity is also necessary for identifiable and statistically tractable recovery in high-dimensional statistics. Given the data from the Euler–Maruyama discretization for each $i\in [p]$, as defined in Eq.\eqref{eq:em}, we have that, conditional on \(x_k\), the one-step transition follows $x_{k+1}^i \mid \bm x_k \sim \mathcal N\left( x_k^i + g_i(t_k,\bm x_k,\bm\gamma)\Delta t_k,\Delta t_k (A_i^\top \bm x_k)^2 \right)$. Let $\mu_{i,k} := x_k^i + g_i(t_k,\bm x_k,\bm\gamma)\Delta t_k$. From the Euler--Maruyama Gaussian transition, we have the centered and rescaled transition satisfies $x_{k+1}^i - \mu_{i,k} \mid\bm x_k \sim \mathcal{N}\!\left(0, \Delta t_k (A_i^\top \bm x_k)^2\right)$, and hence $\frac{x_{k+1}^i - \mu_{i,k}}{\sqrt{\Delta t_k}} \Big| \bm x_k \sim \mathcal{N}\left(0,(A_i^\top \bm x_k)^2\right).$ Therefore, we define the normalized residual as
\begin{equation}
\label{Eq:r_ik}
    r_{i,k} := \frac{x_i(k+1)-x_i(k)-g_i(t_k,\bm x_k,\bm\gamma)\Delta t_k}{\sqrt{\Delta t_k}},
\end{equation}
so that $r_{i,k}\mid \bm x_k\sim \mathcal{N}\!\left(0,\; (A_i^\top\bm x_k)^2\right)$. Hence, we form the negative Gaussian quasi-log-likelihood by treating \((A_i^\top x_k)^2\) as the conditional variance. This leads to the objective
\[
-\log p(r_{i,k}\mid \bm x_k; A_i) \propto\frac12\left[\log\big((A_i^\top \bm x_k)^2\big) + \frac{r_{i,k}^2}{(A_i^\top \bm x_k)^2} \right].
\]
Since our primary goal is structure identification, it is possible, according to Definition~\ref{Defn:causal}, that node \(i\) has no parent variables, that is, it is not causally perturbed by the ODE over \(t\in[0,T]\). In such a case, \(A_i^\top \bm{x}_k = 0\), so the conditional variance vanishes and the Gaussian quasi-log-likelihood objective becomes ill-defined. Hence, we introduce a \emph{stabilization constant} $c > 0$ to ensure numerical stability and to make the objective well-defined, that is $(A_i^\top \bm x_k)^2 \to(A_i^\top \bm x_k)^2 +c$.  Moreover, this stabilization facilitates our theoretical analysis about the optimization landscape (Sec.\ref{sec:lemma}). Then, by summing over variables $i\in [p]$ and all time steps $k \in [n]$,  we have the following optimization problem.
\begin{equation}
\label{Eq:A_learning}
\begin{split}
\hat{A} = \arg\min_{A}  
&\frac{1}{2n} \sum_{i=1}^p\sum_{k=0}^{n-1}\Big[\log\big((A_i^\top \bm x(k))^2+c\big) +\frac{r_{i,k}^2}{(A_i^\top \bm x(k))^2+c}\Big] + \lambda\|A\|_1,\\
 & r_{i,k} \coloneqq \frac{x_{i}(k+1)-x_{i} (k) -g_i(t_k, \bm x_k, \bm\gamma) \Delta t_k}{\sqrt{\Delta t_k}},
\end{split}
\end{equation}
where $\lambda > 0$ is a penalty. Moreover, we can see that the diffusion estimator depends on the drift through $r_{i,k}$. Any misspecification of the ODE knowledge would introduce an additive bias term $\Delta t_k e_{i,k}^2$ in $\mathbb{E}[r_{i,k}^2\mid \bm x(k)]$,  which inflates the estimated diffusion magnitude. In later sections, we show both theoretically and empirically that our algorithm remains robust to moderate numerical ODE misspecification. Because the problem reduces to parent selection for each $i\in[p]$, we consider the following simplified formulation:
\begin{equation}
\label{Eq:linear_pa}
\begin{split}
\hat \theta_i =\arg\min_{\theta_i}\frac{1}{2n} \sum_{k=0}^{n-1}
\bigl(\log\big((\theta_i^\top\bm x(k))^2+c\big)
+\frac{r_{i,k}^2}{(\theta_i^\top \bm x(k))^2+c}\bigr)+\lambda\|\theta_i\|_1.
\end{split}
\end{equation}
Here, $\theta_i$ represents the parent vector of node $i$ (i.e., $A_i^\top$). We optimize each row-wise penalized objective using a nonconvex proximal-gradient scheme with monotone backtracking. At each iteration, we take a gradient step on the smooth quasi-likelihood term and then apply the proximal operator of the $\ell_1$ penalty, i.e., soft-thresholding. Because the objective is nonconvex, in implementation we use multiple random initializations and retain the solution with the smallest attained objective value. Since this is a standard approach for solving nonconvex optimization problems (see, e.g., \cite{NIPS2015li}), we defer the convergence discussion to Appendix~\ref{sec:opt_con} to keep the main text concise. A summary of the optimization procedure is given in Algorithm~\ref{alg:picd}.
\begin{algorithm}[ht]
\caption{SCD: Causal Discovery in Stochastic Dynamical Systems}
\label{alg:picd}
\small
\begin{algorithmic}[1]
\REQUIRE Data $\{x_k\}_{k=0}^n$, drift $g_i(t_k,\bm x_k,\bm \gamma)$, penalty $\lambda$, stabilization $c$
\FOR{$i=1,\dots,p$}
    \STATE Compute residuals for each step $k\in [n]$ by following Eq.\eqref{Eq:r_ik}
    \STATE Solve the row-wise penalized objective in Eq.\eqref{Eq:linear_pa} by proximal gradient with backtracking.
    \STATE Store the resulting solution as the $i$th row of $\hat A$.
\ENDFOR
\STATE \textbf{Result:} $\hat A\in \mathbb R^{p\times p}$
\end{algorithmic}
\end{algorithm}
For the nonlinear dictionary case, the same estimation strategy applies after replacing the linear feature vector $x(k)\in\mathbb{R}^p$ by the stacked dictionary feature $\bar{\Phi}(x(k))\in\mathbb{R}^{pJ}$, and replacing the row parameter $\theta_i\in\mathbb{R}^p$ by $\tilde{\theta}_i\in\mathbb{R}^{pJ}$. More specifically, for each node $i\in[p]$, we consider the row-wise objective
\begin{equation}
\hat{\tilde{\theta}}_i = \arg\min_{\tilde{\theta}_i} \frac{1}{2n}\sum_{k=0}^{n-1}\left[\log\!\left((\tilde{\theta}_i^\top \bar{\Phi}(x(k)))^2 + c\right)+\frac{r_{i,k}^2}{(\tilde{\theta}_i^\top \bar{\Phi}(x(k)))^2 + c}\right]+\lambda \|\tilde{\theta}_i\|_1,
\label{eq:nonlinear_rowwise_objective}
\end{equation}
where $r_{i,k}$ is defined as in Eq.\eqref{Eq:r_ik}. 
Thus, the nonlinear model can be handled by the same proximal-gradient scheme with monotone backtracking, with the only modification being the replacement of $x(k)$ by $\bar{\Phi}(x(k))$ in the smooth quasi-likelihood term.

\section{Main Theoretical Results} 

As shown in the previous section, we consider data generated by a continuous-time SDE and observed at discrete times whose distribution may be non-Gaussian due to multiplicative noise. This setting allows us to relax several standard assumptions that are commonly adopted in causal graph inference; however, it also creates fundamental technical challenges for both the formulation and theoretical analysis.

Since SDEs admit a well-developed mathematical theory, we do not impose ad hoc regularity assumptions directly on the observed data. Instead, we derive the needed properties from the underlying stochastic dynamics. A basic requirement is \emph{well-posedness}: the SDE must admit a unique strong solution on the observation horizon \([0,T]\). Without such well-posedness, the data-generating model is not mathematically well defined, and subsequent causal graph recovery lacks a sound foundation. For the theoretical analysis in this section, we work in the regime where the mechanistic drift is affine in the state, or can be locally approximated by one; that is, along the relevant operating trajectory, \(\dot{x}(t)\approx Bx(t)+b\). This includes the case of a truly linear drift, as well as a first-order local linearization of a nonlinear drift. Such a local linearization step is classical in the analysis of dynamical systems (see, e.g., \cite{haermanline}). Under this assumption, the resulting multiplicative SDE can be analyzed using standard tools for linear SDEs. We stress, however, that this restriction is made only to obtain a tractable theory: the proposed algorithm itself is not limited to linear drifts and is also applied in our experiments to nonlinear benchmark systems, including Lotka-Volterra and Lorenz dynamics. We then state the following assumption.
\begin{assumption}
\label{SDE_gen}
The process $X_t$ satisfies the following conditions for all $t \in [0,T]$:
\begin{enumerate}[label=\textbf{B(\arabic*)}, leftmargin=2cm]
    \item \label{ASS:LLC}\textbf{Local Lipschitz continuity.} 
        The physics information function (i.e., the drift)  $\bm{g}(t, \bm x(t), \bm\gamma)$ and each causal fitting function (i.e., diffusion function) $S_A(\bm x(t))$ 
        are locally Lipschitz in $x$, uniformly in $t$. That is, for every $R>0$, 
        there exists a constant $L_R>0$ for all $x,y \in \mathbb{R}^d \ \text{with}\ \|x\|,\|y\|\le R$ such that
        \[
        \|{g}(t, x, \bm\gamma)-{g}(t,  y, \bm\gamma)\| + 
        \|S_A(x)-S_A(y)\|   \le   L_R \|x-y\|.
        \]
    \item \textbf{Linear growth.} \label{ASS:LG}
    There exists a constant $C>0$ such that
    \[
    \|{g}(t, x, \bm\gamma)\|^2 + \|S_A(x)\|^2 \le C(1+\|x\|^2), 
    \quad \forall x\in\mathbb{R}^p,
    \]
    \item \textbf{Initial condition.}  \label{ASS:IC}
    The initial state $X_0$ is square-integrable, i.e. $\mathbb{E}\|X_0\|^2 < \infty$. Moreover, we assume $\mathbb E[X_0X_0^\top] \succ 0$ and $X_0$ is independent of $W$. 
\end{enumerate}
\end{assumption}
Under the linear or locally linearizable drift described above, we obtain the following basic structural property. The proofs of Proposition~\ref{prop: WD} and all remaining propositions and lemmas are provided in the Supplementary Information (see \S\ref{sec: sde_Lemma}).

\begin{proposition}
\label{prop: WD}
Suppose the initial condition, the function $g$, and $S_f$ satisfy the assumptions before. Then our SDE (Linear multiplicative SDE) has a unique solution $(X_t)_{t\in[0,T]}$ such that $\int_0^t \mathbb E\|X_s\|^2 ds< \infty$ for all $t \in [0,T]$. Moreover, there exist a process $\Xi_t$ is invertible a.s. for all $t \in [0,T]$ such that $X_t = \Xi_t X_0$. Hence, we have $\lambda_{min} (\mathbb{E}\hat\Sigma) > 0$ where $\hat\Sigma = \frac{1}{n}\sum_{k=0}^{n-1} X_{t_k}X_{t_k}^\top$. 
\end{proposition}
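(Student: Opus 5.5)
The plan is to work in the linear-drift regime described before Assumption~\ref{SDE_gen}, with $\bm g(t,\bm x,\bm\gamma)=B\bm x$ (the homogeneous case; an affine term $b$ is treated at the end). The first step is to rewrite the diffusion in matrix form. With $e_i$ the $i$-th canonical basis vector,
\[
S_A(\bm X_t)\,dW_t=\sum_{i=1}^p e_i\,(A_i^\top \bm X_t)\,dW_{i,t}=\sum_{i=1}^p M_i \bm X_t\,dW_{i,t},\qquad M_i:=e_iA_i^\top .
\]
So Eq.~\eqref{GSDE} becomes the homogeneous linear Itô system $d\bm X_t=B\bm X_t\,dt+\sum_i M_i\bm X_t\,dW_{i,t}$. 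Its coefficients are globally Lipschitz and of linear growth, which is consistent with \ref{ASS:LLC}--\ref{ASS:LG}. Together with \ref{ASS:IC}, the classical existence and uniqueness theorem for Itô SDEs (e.g.\ \citet{oksendal2010stochastic}, Thm.~5.2.1) gives a unique strong solution with $\sup_{t\le T}\mathbb E\|\bm X_t\|^2<\infty$. This in particular gives $\int_0^t\mathbb E\|\bm X_s\|^2ds<\infty$.

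Next I would construct $\Xi_t$ as the fundamental matrix solution: the $\mathbb R^{p\times p}$-valued process solving $d\Xi_t=B\Xi_t\,dt+\sum_i M_i\Xi_t\,dW_{i,t}$ with $\Xi_0=I$. Existence and uniqueness follow from the same theorem applied column by column. By construction $\Xi_t$ is adapted to the Brownian filtration only, so it is independent of $\bm X_0$. Since $\Xi_t\bm X_0$ satisfies the same SDE with the same initial value, strong uniqueness gives $\bm X_t=\Xi_t\bm X_0$.

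For invertibility I would use the stochastic Liouville formula. Applying Itô's formula to $\det\Xi_t$, using Jacobi's formula for the first derivative and the second-order terms, gives
\[
\det\Xi_t=\exp\Bigl(\int_0^t\bigl(\operatorname{tr}B-\tfrac12\textstyle\sum_i\operatorname{tr}(M_i^2)\bigr)ds+\sum_i\operatorname{tr}(M_i)W_{i,t}\Bigr)>0 .
\]
As a cross-check I would also verify invertibility directly: the process $\Psi_t$ solving $d\Psi_t=\Psi_t\bigl(-B+\sum_iM_i^2\bigr)dt-\sum_i\Psi_tM_i\,dW_{i,t}$, $\Psi_0=I$, satisfies $d(\Psi_t\Xi_t)=0$ by the Itô product rule, so $\Psi_t=\Xi_t^{-1}$. \textbf{This is the step I expect to be the main obstacle}, because the Itô correction terms must cancel exactly; I would verify that cancellation carefully before relying on either argument.

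Finally, for the covariance claim fix a unit vector $v$. By independence of $\Xi_t$ and $\bm X_0$,
\[
v^\top\mathbb E[\bm X_t\bm X_t^\top]v=\mathbb E\bigl[(\Xi_t^\top v)^\top\,\mathbb E[\bm X_0\bm X_0^\top]\,(\Xi_t^\top v)\bigr]\ge\lambda_{\min}\bigl(\mathbb E[\bm X_0\bm X_0^\top]\bigr)\,\mathbb E\|\Xi_t^\top v\|^2 .
\]
The right-hand side is strictly positive because $\Xi_t^\top v\neq0$ a.s.\ by invertibility. Hence every summand of $\mathbb E\hat\Sigma=\frac1n\sum_k\mathbb E[\bm X_{t_k}\bm X_{t_k}^\top]$ is positive definite. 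Indeed, the $k=0$ term alone already equals $\mathbb E[\bm X_0\bm X_0^\top]\succ0$, which gives $\lambda_{\min}(\mathbb E\hat\Sigma)\ge\frac1n\lambda_{\min}(\mathbb E[\bm X_0\bm X_0^\top])>0$.

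If the drift is affine, $B\bm x+b$, the exact representation $\bm X_t=\Xi_t\bm X_0$ does not hold. I would instead augment the state to $(\bm X_t,1)$, which turns the system into a homogeneous linear SDE, and obtain the representation for the augmented state. The covariance bound is unaffected, since it only uses the $k=0$ term.
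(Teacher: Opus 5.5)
Your proposal is correct and follows essentially the same route as the paper: existence and uniqueness (you cite the standard Lipschitz theorem where the paper reproduces the Picard/Gronwall argument), the fundamental matrix $\Xi_t$ with $X_t=\Xi_tX_0$ by strong uniqueness, invertibility through the same process $\Psi_t$ with $d(\Psi_t\Xi_t)=0$ (your stochastic Liouville formula for $\det\Xi_t$ is a valid alternative, and the It\^o corrections in the $\Psi_t$ check do cancel), and the same conditioning argument giving $\mathbb E[X_tX_t^\top]\succ0$. Your extra observation that the $k=0$ summand alone gives $\lambda_{\min}(\mathbb E\hat\Sigma)\ge\frac1n\lambda_{\min}(\mathbb E[X_0X_0^\top])$ (using $t_0=0$) is valid and shows the final claim does not need invertibility, but that bound decays like $1/n$; the per-time statement $\mathbb E[X_tX_t^\top]\succ0$ for every $t$, combined with continuity of $t\mapsto\mathbb E[X_tX_t^\top]$, gives a lower bound uniform in $n$ on a fixed horizon $[0,T]$, which is the form in which $m=\lambda_{\min}(\mathbb E\hat\Sigma)$ is later treated as a constant in Lemma~\ref{lem:RSC}.
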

In the remainder of this section, we develop two technical approaches to analyze the SDE, depending on whether the SDE is stable or unstable. For the stable SDE case, we adopt Lyapunov stability (i.e., second moment stability), a powerful tool in control theory. Importantly, it also yields several useful properties for statistical analysis. For unstable SDEs, the properties induced by stability may fail. Nevertheless, our theoretical analysis can still be carried out under standard high-dimensional assumptions commonly used in Lasso theory. Building on the Theorem~3.6 of \cite{Bierkens2010}, we can establish Lyapunov stability and provide sufficient conditions on the matrices $A$ and $B$. The full derivations of these sufficient conditions are given in \S\ref{sec: sde_state}.

\subsection{Statistical Guarantees under Stable SDEs}
\label{sec: sde}

To obtain explicit stability-based moment bounds, we now specialize to the linear setting.
\begin{equation}
\label{Eq:equiv_form}
    dX_t = B X_t\,dt + S_A(X_t)\,dW_t,
\end{equation}
where \(B\in\mathbb{R}^{p\times p}\) and $S_A(x)=\mathrm{diag}(A_1^\top x,\ldots,A_p^\top x)$. Equivalently, it can be written as $dX_t = B X_t\,dt + \sum_{i=1}^p G_i X_t\,dW_t^{(i)},$ where $G_i := e_i A_i^\top \in \mathbb{R}^{p\times p}$ and $ A_i^\top$ is the $i$-th row of $A$ and $e_i \in \mathbb R^p$. The following proposition gives a sufficient condition for Lyapunov-type stability in this linear multiplicative setting.

\begin{proposition}[Lyapunov stability with sufficient conditions]
\label{thm:gen_lyap_AB}
Consider the linear multiplicative SDE in the form of Eq.\eqref{Eq:equiv_form} with \(B,A\in\mathbb{R}^{p\times p}\). Suppose there exist constants \(m\ge 1\) and \(\omega<0\) such that $\|e^{Bt}\|_2 \le m e^{\omega t}$ for all $t\ge 0$, and $2\omega + m^2 \sum_{i=1}^p \|G_i\|_2^2 < 0.$ Equivalently, since \(G_i=e_iA_i^\top\), it suffices that $2\omega + m^2 \|A\|_F^2 < 0.$ Then, for every symmetric negative definite matrix \(M\prec 0\), there exists a unique symmetric positive definite matrix \(\Sigma\in\mathbb{S}_{++}^p\) such that
\begin{equation}\label{eq:gen_lyap_eq_Sigma}
    B\Sigma + \Sigma B^\top + \sum_{i=1}^p G_i \Sigma G_i^\top = M.
\end{equation}
Moreover, \(\Sigma\) admits the integral representation as $\Sigma = \int_0^\infty e^{Bt} \left( \sum_{i=1}^p G_i \Sigma G_i^\top - M \right) e^{B^\top t}\,dt.$
\end{proposition}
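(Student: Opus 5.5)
The plan is to treat Eq.~\eqref{eq:gen_lyap_eq_Sigma} as a fixed-point problem for a linear operator on the space $\mathbb{S}^p$ of symmetric matrices. Define
\[
\mathcal{L}_B(\Sigma):=B\Sigma+\Sigma B^\top,\qquad \Pi(\Sigma):=\sum_{i=1}^p G_i\Sigma G_i^\top .
\]
Since $\|e^{Bt}\|_2\le m e^{\omega t}$ with $\omega<0$, the matrix $B$ is Hurwitz. Hence $\mathcal{L}_B$ is invertible on $\mathbb{S}^p$, and the classical Lyapunov formula gives
\[
\mathcal{L}_B^{-1}(Q)=-\int_0^\infty e^{Bt}Qe^{B^\top t}\,dt .
\]
The integral converges absolutely, with norm at most $m^2\|Q\|_2/(2|\omega|)$. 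Equation~\eqref{eq:gen_lyap_eq_Sigma} is then equivalent to
\[
\Sigma=\mathcal{T}(\Sigma):=\int_0^\infty e^{Bt}\bigl(\Pi(\Sigma)-M\bigr)e^{B^\top t}\,dt,
\]
which is exactly the claimed integral representation. So it suffices to show that $\mathcal{T}$ has a unique fixed point in $\mathbb{S}^p$ and that this fixed point is positive definite.

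\textbf{Step 1 (contraction).} For symmetric $\Sigma_1,\Sigma_2$, I will bound the difference in spectral norm:
\[
\|\mathcal{T}(\Sigma_1)-\mathcal{T}(\Sigma_2)\|_2\le \int_0^\infty m^2e^{2\omega t}\sum_i\|G_i\|_2^2\,\|\Sigma_1-\Sigma_2\|_2\,dt=\frac{m^2\sum_i\|G_i\|_2^2}{2|\omega|}\,\|\Sigma_1-\Sigma_2\|_2 .
\]
The hypothesis $2\omega+m^2\sum_i\|G_i\|_2^2<0$ says exactly that this factor is $<1$. Since $\mathbb{S}^p$ is complete, Banach's fixed point theorem gives existence and uniqueness of a symmetric solution. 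For the Frobenius variant, note that $\|G_i\|_2=\|e_iA_i^\top\|_2=\|A_i\|_2$, so $\sum_i\|G_i\|_2^2=\|A\|_F^2$ and the two conditions coincide.

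\textbf{Step 2 (positive definiteness).} I will start Picard iteration from $\Sigma_0=0$. Because $\Pi$ and conjugation by $e^{Bt}$ both preserve the PSD cone, induction shows that every iterate $\Sigma_{k+1}=\mathcal{T}(\Sigma_k)$ satisfies
\[
\Sigma_{k+1}\succeq \int_0^\infty e^{Bt}(-M)e^{B^\top t}\,dt=:\Sigma_\star .
\]
Here $\Sigma_\star\succ 0$, since $-M\succ0$ and the integrand is positive definite for every $t$. The PSD cone is closed, so the limit $\Sigma$ satisfies $\Sigma\succeq\Sigma_\star\succ0$. The same inequality read off the fixed-point identity (with $\Pi(\Sigma)\succeq0$) gives this directly as well.

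\textbf{Main obstacle.} The only delicate point is to make sure the contraction is carried out in a norm compatible with the bound $\|e^{Bt}\|_2\le me^{\omega t}$, namely the spectral norm, using submultiplicativity $\|e^{Bt}XG_i^\top\cdots\|_2\le\cdots$. Uniqueness must also be stated among all symmetric matrices, not only positive definite ones; this follows because uniqueness is obtained in all of $\mathbb{S}^p$ and the solution then turns out to be positive definite. Uniqueness in $\mathbb{R}^{p\times p}$ works the same way, since the estimate never uses symmetry.
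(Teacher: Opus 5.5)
Your proposal is correct and follows essentially the same route as the paper. Both use the fixed-point map $\mathcal{T}(\Sigma)=\int_0^\infty e^{Bt}(\Pi(\Sigma)-M)e^{B^\top t}\,dt$, a Banach contraction in spectral norm with factor $m^2\sum_i\|G_i\|_2^2/(2|\omega|)<1$, and Picard iteration from $\Sigma_0=0$ giving $\Sigma\succeq\int_0^\infty e^{Bt}(-M)e^{B^\top t}\,dt\succ 0$.

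Your explicit use of the invertibility of $\mathcal{L}_B$ ($B$ is Hurwitz) makes the equation and the fixed-point problem equivalent in both directions. This is a slight improvement, since the paper's uniqueness claim tacitly needs that converse. One caveat: your aside about reading $\Sigma\succeq\Sigma_\star$ directly off the fixed-point identity presupposes $\Sigma\succeq 0$, so it is the iteration that actually carries the positivity argument.
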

This proposition provides a deterministic sufficient condition on the drift matrix \(B\) and diffusion matrix \(A\) ensuring a Lyapunov-type second-moment bound for the linear theory branch. We use this condition only to derive the moment and covariance controls needed in the subsequent statistical analysis. In particular, it is weaker than requiring a stationary distribution and serves here as a tractable stability condition for the finite-horizon, nonstationary setting.

\begin{proposition}
\label{Ass:xbound}
Let the SDE satisfy the Assumption~\ref{SDE_gen} and $T<\infty$ (i.e. locally Lipschitz, linear growth condition, and the square-integrable initial condition). Then, the SDE defined in Eq.\eqref{GSDE}, admits a unique strong solution with continuous sample paths. If the conditions in Proposition~\ref{thm:gen_lyap_AB} hold, there exists $C > 0$ such that $\mathbb P\Big(\sup_{0\le s\le T}\|X_s\|\ge K\Big)
\le \frac{C\cdot\mathbb E\|X_0\|^2}{K^2}.$
\end{proposition}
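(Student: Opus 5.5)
The proof splits into an existence part and a maximal-inequality part.

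\emph{Existence and uniqueness.} Under Assumption~\ref{SDE_gen}, the coefficients are locally Lipschitz with linear growth and $X_0$ is square integrable and independent of $W$. The standard It\^o existence and uniqueness theorem therefore applies: use localization by the stopping times $\tau_R=\inf\{t:\|X_t\|\ge R\}$, and use Gronwall on $\mathbb E\|X_{t\wedge\tau_R}\|^2$ to rule out explosion. This gives a unique strong solution with continuous paths on $[0,T]$, as already recorded in Proposition~\ref{prop: WD}. In the linear form Eq.~\eqref{Eq:equiv_form} this is immediate, since the coefficients are globally Lipschitz.

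\emph{Maximal bound.} Apply It\^o's formula to $V(x)=x^\top P x$, where $P\succ0$ is built from the Lyapunov structure of Proposition~\ref{thm:gen_lyap_AB}. Take the dual equation $B^\top P+PB+\sum_i G_i^\top P G_i=-Q$ with $Q\succ0$, which is solvable under the same condition $2\omega+m^2\|A\|_F^2<0$ by the same integral-representation argument. It\^o's formula then gives
\[
dV(X_t)=-X_t^\top Q X_t\,dt+dM_t,\qquad dM_t=2\sum_i X_t^\top P G_i X_t\,dW^{(i)}_t .
\]
Write $V_0:=V(X_0)$ and $\lambda_{\min}:=\lambda_{\min}(P)$. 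It follows that $V(X_t)\le V_0+M_t$, so $V(X_t)$ is a nonnegative local supermartingale. Localization and Fatou upgrade it to a true supermartingale, and in particular $\mathbb E V(X_t)\le\mathbb E V_0$. Doob's maximal inequality for nonnegative supermartingales, applied conditionally on $\mathcal F_0$, yields
\[
\mathbb P\Big(\sup_{s\le T}V(X_s)\ge \lambda\,\Big|\,\mathcal F_0\Big)\le \frac{V_0}{\lambda}.
\]
Take expectations and set $\lambda=\lambda_{\min}K^2$, using $\{\sup_s\|X_s\|\ge K\}\subset\{\sup_s V(X_s)\ge \lambda_{\min}K^2\}$. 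Since $\mathbb E V_0\le\lambda_{\max}(P)\,\mathbb E\|X_0\|^2$, this gives
\[
\mathbb P\Big(\sup_{s\le T}\|X_s\|\ge K\Big)\le \frac{\lambda_{\max}(P)\,\mathbb E\|X_0\|^2}{\lambda_{\min}(P)\,K^2},
\]
so the claim holds with $C=\kappa(P)$, the condition number of $P$. The constant does not depend on $T$, which reflects stability.

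\emph{Main obstacle.} The delicate step is producing $P$ from the hypothesis of Proposition~\ref{thm:gen_lyap_AB}, which is stated for the forward operator $\Sigma\mapsto B\Sigma+\Sigma B^\top+\sum_iG_i\Sigma G_i^\top$ rather than its adjoint. I would handle this by noting that $\|e^{B^\top t}\|_2=\|e^{Bt}\|_2$ and $\|G_i^\top\|_2=\|G_i\|_2$, so the same bound holds for the adjoint. One then defines
\[
P=\int_0^\infty e^{B^\top t}\Big(\sum_iG_i^\top PG_i+Q\Big)e^{Bt}\,dt
\]
via a contraction (Neumann series) argument. The series converges in operator norm because the integrated map has norm at most $m^2\sum_i\|G_i\|_2^2/(2|\omega|)<1$.

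If that route fails, a simpler fallback is to use $V(x)=\|x\|^2$ directly with a Burkholder--Davis--Gundy bound on the martingale. This gives the same $K^{-2}$ rate but with a $T$-dependent constant, which is still acceptable since $T<\infty$.
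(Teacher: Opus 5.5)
Your proposal is correct and follows the same route as the paper. The steps match: the quadratic Lyapunov function $V(x)=x^\top P x$, It\^o's formula with nonpositive drift, an upgrade from local to true nonnegative supermartingale by localization and Fatou, and Doob's maximal inequality at level $\lambda_{\min}(P)K^2$, giving $C=\lambda_{\max}(P)/\lambda_{\min}(P)$. Your contraction-based construction of $P$ from the adjoint equation $B^\top P+PB+\sum_i G_i^\top P G_i=-Q$ is a useful addition. The paper simply assumes such a $P$ rather than deriving it from the forward-equation hypothesis of Proposition~\ref{thm:gen_lyap_AB}.
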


\begin{proposition}
\label{prop: cov_decay}
Let $(X_t)_{t\ge 0}$ be the unique strong solution to the linear SDE defined in Eq.\ref{Eq:equiv_form}.  If the SDE satisfies the conditions as stated in Proposition~\ref{thm:gen_lyap_AB}, we have $\sup_{t\ge 0}\mathbb{E}\|X_t\|_2^2 < \infty$. For $t\ge 0$ and $h\ge 0$, define the cross-time covariance matrix
\begin{equation}
\label{eq:cross_cov_def}
C(t,h) := \mathrm{Cov}(X_t,X_{t+h})
:= \mathbb{E}\left[(X_t-\mathbb{E}X_t)(X_{t+h}-\mathbb{E}X_{t+h})^\top\right].
\end{equation}
Then there exist constants $C_0,c_0>0$ such that $\|C(t,h)\|_2 \le C_0 e^{-c_0 h}$ for all $t\ge 0, h\ge 0.$
\end{proposition}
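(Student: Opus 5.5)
We will work with the first two moments of $X_t$ for the linear multiplicative SDE \eqref{Eq:equiv_form}. Write $m_t:=\mathbb{E}X_t$ and $P_t:=\mathbb{E}[X_tX_t^\top]$. Since the diffusion $\sum_i G_iX_t\,dW^{(i)}_t$ is a martingale increment whenever the second moment is finite (Proposition~\ref{Ass:xbound} and Proposition~\ref{prop: WD} give this), taking expectations gives $\dot m_t=Bm_t$, hence $m_t=e^{Bt}m_0$. Applying It\^o's formula to $X_tX_t^\top$ and taking expectations gives the generalized Lyapunov ODE
\[
\dot P_t = BP_t+P_tB^\top+\sum_{i=1}^p G_iP_tG_i^\top=:\mathcal{L}(P_t).
\]

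\textbf{Step 1: uniform second moment.} We show $\sup_t \operatorname{tr}P_t<\infty$. The plan is a Lyapunov-function argument built from Proposition~\ref{thm:gen_lyap_AB}. Choose $M=-I$ in the adjoint form, i.e., find $Q\succ0$ with $B^\top Q+QB+\sum_i G_i^\top QG_i=-I$. The same sufficient condition $2\omega+m^2\|A\|_F^2<0$ yields this, since the bound is symmetric under transposition: $\|e^{B^\top t}\|=\|e^{Bt}\|$ and $\|G_i^\top\|=\|G_i\|$. Then $V(x)=x^\top Qx$ satisfies $\mathcal{A}V(x)=-\|x\|^2\le -V(x)/\lambda_{\max}(Q)$, where $\mathcal{A}$ is the generator. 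Dynkin's formula, with localization by stopping times and Fatou's lemma justified by Proposition~\ref{Ass:xbound}, then gives
\[
\mathbb{E}V(X_t)\le e^{-t/\lambda_{\max}(Q)}\,\mathbb{E}V(X_0).
\]
This bounds $\mathbb{E}\|X_t\|^2$ uniformly, in fact with exponential decay. As an alternative, one can work with the variation-of-constants bound $\operatorname{tr}P_t\le m^2e^{2\omega t}\operatorname{tr}P_0+m^2\|A\|_F^2\int_0^t e^{2\omega(t-s)}\operatorname{tr}P_s\,ds$ and apply Gronwall; this gives a rate $2\omega+m^2\|A\|_F^2<0$.

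\textbf{Step 2: cross-covariance via the Markov property.} For $h\ge0$, the strong solution satisfies
\[
X_{t+h}=e^{Bh}X_t+\int_t^{t+h}e^{B(t+h-s)}\sum_i G_iX_s\,dW^{(i)}_s .
\]
Conditioning on $\mathcal{F}_t$, the stochastic integral has zero conditional mean, because its integrand is square-integrable by Step 1. Hence $\mathbb{E}[X_{t+h}\mid\mathcal{F}_t]=e^{Bh}X_t$, and therefore
\[
C(t,h)=\mathrm{Cov}(X_t,e^{Bh}X_t)=\mathrm{Cov}(X_t)\,e^{B^\top h}.
\]
It follows that
\[
\|C(t,h)\|_2\le \|\mathrm{Cov}(X_t)\|_2\,m e^{\omega h}\le m\sup_s\mathbb{E}\|X_s\|^2\, e^{\omega h}.
\]
The constants are then $C_0=m\sup_s\mathbb{E}\|X_s\|^2$ and $c_0=-\omega>0$.

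\textbf{Main obstacle.} The delicate point is Step 1: rigorously turning the Lyapunov equation of Proposition~\ref{thm:gen_lyap_AB} into a uniform-in-time moment bound. This requires exchanging expectation with the stochastic integral, handled by stopping-time localization, and checking that the adjoint Lyapunov equation is solvable under the stated condition. I would first try the mild-solution plus Gronwall route, since it uses exactly the hypothesis $\|e^{Bt}\|\le me^{\omega t}$ and $2\omega+m^2\sum\|G_i\|^2<0$ with no adjoint argument. The tower-property step in Step 2 is routine once the moments are bounded.
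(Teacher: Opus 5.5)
Your proposal is correct, and Step 2 is essentially the paper's argument. The paper obtains $\mathbb{E}[X_{t+h}\mid\mathcal F_t]=e^{Bh}X_t$ by taking conditional expectations in the lagged SDE, where you use the variation-of-constants formula; the two are equivalent, and both lead to $C(t,h)=\Sigma_t e^{B^\top h}$ with the same constants $C_0=m\sup_s\mathbb{E}\|X_s\|_2^2$ and $c_0=-\omega$. The paper's proof simply invokes $\sup_t\mathbb{E}\|X_t\|_2^2<\infty$ without deriving it, so your Step 1 (the transposed Lyapunov equation with $V(x)=x^\top Qx$, or the trace/Gronwall bound with rate $2\omega+m^2\|A\|_F^2<0$) fills in a step the paper leaves implicit.
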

After establishing these properties, we now turn to our main statistical guarantees. In particular, we establish finite-sample error bounds for stationary points of the empirical objective under the SDE-induced random design, and we characterize the resulting implications for causal graph recovery. Given the optimization problem in Eq.\eqref{Eq:linear_pa}, one may ask whether it is possible to identify the true structure by solving it. In high-dimensional statistics, a common structural condition used to ensure identifiability is sparsity, as formalized in Assumption~\ref{sparse}. Without this assumption, minimizing Eq.\eqref{Eq:linear_pa} generally does not yield identifiable or statistically tractable parent recovery. Even under sparsity, exact parent selection is not guaranteed in the presence of strong correlations among variables. In particular, if some non-parent variables are highly correlated with the true parent set, they can explain the same variation in the data and lead to spurious edge selection under $\ell_1$ regularization. To rule out these extreme cases and enable support recovery, we consider a standard incoherence condition, which is widely used in the analysis of Lasso and related graph recovery methods \citep{Meinshausen2006,jalali11a2011}. The incoherence condition is typically imposed on the Fisher information matrix (i.e., the Hessian of the population loss evaluated at the ground truth $\theta_i^*$). We first define the (population) Fisher-type information matrix as $Q^* \coloneqq \mathbb{E}\left[\nabla^2_{\theta}\mathcal{L}(\theta_i^\ast; X)\right]$. For index sets $A, B\subseteq V$, we denote the corresponding sub-block by $Q^*_{AB} \coloneqq \mathbb{E}\left[\nabla_A \nabla_B\mathcal{L}(\theta_i^\ast; X)\right]$, where $\nabla_A$ denotes the gradient restricted to coordinates in $A$. Let $V_i \coloneqq \mathrm{pa}(i)$ be the parent set of node $i$, and let $V_i^c \coloneqq V\setminus V_i$ denote its complement. We then impose the following condition.

\begin{assumption}[Incoherence]
\label{ass: Incoh}
There exists $\alpha \in (0,1]$ such that $\|Q^*_{V^c_iV_i}(Q^*_{V_iV_i})^{-1} \|_{\infty} \le 1-\alpha$.
\end{assumption}
Under Lyapunov stability, incoherence, and the additional regularity conditions stated below, the nodewise parent-selection problem becomes statistically analyzable. Theorem~\ref{Thm: LASSO} shows that, for any fixed node $i$, stationary points of Eq.\eqref{Eq:linear_pa} satisfying the required regularity conditions recover a subset of the true parent set, and exact support recovery follows under an additional minimum signal strength condition.

\begin{theorem}[Support Recovery Under Stable SDEs]
\label{Thm: LASSO}

Fix a node $i\in[p]$, and consider the row-wise estimator in Eq.\eqref{Eq:linear_pa} under the linear stable SDE model in Eq.\eqref{Eq:equiv_form} assuming that Assumption~\ref{SDE_gen} holds. Suppose Assumptions~\ref{sparse} and \ref{ass: Incoh} hold for the true parameter $\theta_i^*$ with parent set $pa(i)$. If the regularization satisfies
    \begin{equation}
\lambda_n \ge \frac{4(2-\alpha)}{\alpha} (\frac{8K}{\sqrt c} \sqrt{\frac{2\log(p)}{n}} )
    \end{equation}
Then, there exist positive constants $c_1$, $c_2$, such that if the number of samples $n$ scales as $n \gtrsim C\tilde{c}^2\log (p)|pa(i)^3|,$ where $\tilde c = \frac{3^2(2-\alpha)^2 K}{\alpha_1^2 c(\alpha_1^2+256\tau^2 |pa(i)|^2)}$ and universal constant $C > 0$. Then, we guarantee the following statements hold with probability at least $1-c_1\exp(-c_2\lambda_n^2n)$: 
\begin{enumerate}
    \item Any stationary point $\hat\theta_i$ of Eq.\eqref{Eq:linear_pa} satisfying the KKT conditions and strict dual feasibility, and the empirical loss satisfies the LRSC condition under the choice of $c$ specified in Lemma~\ref{lem:RSC}, yields an estimated parent set contained in the true parent set, such that $\hat{\mathcal N}(i) \subseteq \mathrm{pa}(i)$ and $\hat{\mathcal N}(i)\coloneqq\{j\neq i:\hat\theta_{ij}\neq 0\}$ is the estimated parent set.
   
    \item Moreover, there exists a $\tau > 0$ defined by Lemma~\ref{lem:RSC}, and define $\theta_{\min} \coloneqq \min_{j\in\mathrm{pa}(i)} |\theta^*_{ij}| $ and $s \coloneqq \max_{i} |\mathrm{pa}(i)|$. If the minimum signal strength condition
    \begin{equation}
         \theta_{\min}\ge \frac{3\sqrt{s}}{\alpha_1-16\tau s}\lambda_n
    \end{equation}
holds and provided that $\alpha_1-16\tau s > 0 $ holds as shown in Lemma~\ref{lem:err_rsc}, all true parents are included (i.e. no false negatives), that is, $ \mathrm{pa}(i)\subseteq \hat{\mathcal N}(i)$, and hence $\hat{\mathcal N}(i) = \mathrm{pa}(i).$
\end{enumerate}
\end{theorem}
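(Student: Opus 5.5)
We will prove Theorem~\ref{Thm: LASSO} by a primal--dual witness (PDW) argument adapted to the nonconvex stabilized quasi-likelihood, in the spirit of \citet{Meinshausen2006,jalali11a2011}.

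Write $\mathcal L_n(\theta)$ for the smooth part of Eq.~\eqref{Eq:linear_pa}, $S:=\mathrm{pa}(i)$, $s_i=|S|$. The construction has three steps.
\begin{itemize}
\item Solve the restricted problem $\tilde\theta=\arg\min_{\theta_{S^c}=0}\mathcal L_n(\theta)+\lambda_n\|\theta\|_1$ over a local ball around $\theta_i^*$. Existence and uniqueness on that ball come from the local restricted strong convexity (LRSC) of Lemma~\ref{lem:RSC}, which holds for the admissible $c$.
\item Set $\tilde z_S\in\partial\|\tilde\theta_S\|_1$, and define $\tilde z_{S^c}$ from the full KKT equation $\nabla\mathcal L_n(\tilde\theta)+\lambda_n\tilde z=0$.
\item Show strict dual feasibility, $\|\tilde z_{S^c}\|_\infty<1$. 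Combined with LRSC, this gives that any stationary point satisfying KKT and strict dual feasibility has zero entries off $S$, which is part~1.
\end{itemize}

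\textbf{Step 1: score bound.} We bound $\|\nabla\mathcal L_n(\theta_i^*)\|_\infty$. The $j$th coordinate is
\[
\frac1n\sum_k \frac{u_k x_{kj}}{u_k^2+c}\Bigl(1-\frac{r_{i,k}^2}{u_k^2+c}\Bigr), \qquad u_k=\theta_i^{*\top}x_k .
\]
Conditional on $x_k$, we have $r_{i,k}^2/(u_k^2)\sim\chi^2_1$, so each summand is a martingale difference in $k$, up to the $O(c)$ bias from stabilization. That bias is absorbed into the population minimizer, as handled later in Corollary~\ref{cor:exact-support-original}.

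Two facts give the envelope. First, $|u/(u^2+c)|\le 1/(2\sqrt c)$. Second, on the event $\sup_t\|X_t\|\le K$ from Proposition~\ref{Ass:xbound}, $|x_{kj}|\le K$. Together these make the summands sub-exponential with scale $K/\sqrt c$.

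We then apply a Bernstein inequality for geometrically $\alpha$-mixing sequences (Assumption~\ref{ass:alpha-mixing-EM}; the covariance decay of Proposition~\ref{prop: cov_decay} supplies the stable-regime control) and take a union bound over $p$ coordinates. This yields $\|\nabla\mathcal L_n(\theta^*)\|_\infty\le \frac{8K}{\sqrt c}\sqrt{2\log p/n}\le \frac{\alpha}{4(2-\alpha)}\lambda_n$ with probability $1-c_1e^{-c_2 n\lambda_n^2}$, matching the stated choice of $\lambda_n$.

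\textbf{Step 2: Hessian control.} We need two things.
\begin{itemize}
\item \emph{Concentration at $\theta^*$.} Show $\hat Q=\nabla^2\mathcal L_n(\theta^*)$ concentrates around $Q^*$ in $\|\cdot\|_\infty$ on the $S$-rows, at rate $s_i\sqrt{\log p/n}$. Entries are bounded by $O(K^2/c)$, so the same mixing Bernstein bound applies. Positivity of $\lambda_{\min}(Q^*_{SS})\ge\alpha_1$ comes from Proposition~\ref{prop: WD}.
\item \emph{Local Lipschitz-type remainder.} The Hessian is smooth in $\theta$ because $c>0$. A Taylor remainder $R=\int_0^1[\nabla^2\mathcal L_n(\theta^*+t\Delta)-\nabla^2\mathcal L_n(\theta^*)]\Delta\,dt$ therefore satisfies $\|R\|_\infty\le\tau s_i\|\Delta\|_2^2$-type bounds, with $\tau$ as in Lemma~\ref{lem:RSC}.
\end{itemize}
The sample-size requirement $n\gtrsim \tilde c^2 s^3\log p$ is chosen so that three conditions hold simultaneously: incoherence transfers to $\hat Q$ with constant $1-\alpha/2$, $\lambda_{\min}(\hat Q_{SS})\ge\alpha_1/2$, and $\|R\|_\infty\le \alpha\lambda_n/(4(2-\alpha))$.

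\textbf{Step 3: PDW algebra and signal strength.} Expanding the KKT equation gives
\[
\tilde z_{S^c}=\hat Q_{S^cS}\hat Q_{SS}^{-1}\tilde z_S+\lambda_n^{-1}\bigl[(W+R)_{S^c}-\hat Q_{S^cS}\hat Q_{SS}^{-1}(W+R)_S\bigr],
\]
so $\|\tilde z_{S^c}\|_\infty\le (1-\alpha)+\frac{2-\alpha}{\lambda_n}\cdot\frac{\alpha\lambda_n}{2(2-\alpha)}<1$, using Steps 1--2.

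For part~2, the $\ell_2$ error on $S$ comes from LRSC (Lemma~\ref{lem:err_rsc}): $\|\tilde\theta_S-\theta^*_S\|_2\le \frac{3\sqrt{s}\lambda_n}{\alpha_1-16\tau s}$. Hence $\|\cdot\|_\infty\le\|\cdot\|_2<\theta_{\min}$, which rules out false negatives and gives $\hat{\mathcal N}(i)=\mathrm{pa}(i)$.

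\textbf{Main obstacle.} The main difficulty is Step 2's remainder. Nonconvexity means LRSC holds only in a neighbourhood of $\theta^*$, so we must first show that $\tilde\theta$ lies in that neighbourhood. We would try a fixed-point (Brouwer) argument on the restricted KKT map over a ball of radius $\propto\sqrt s\lambda_n/\alpha_1$, using the score bound. This is where the $s^3$ dependence and the constraint $\alpha_1>16\tau s$ enter.
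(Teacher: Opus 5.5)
Your proposal follows essentially the same route as the paper: a primal--dual witness on the restricted oracle problem, with strict dual feasibility coming from the score bound of Lemmas~\ref{lem:prop} and~\ref{lem:WR_bound}, the incoherence transfer, and the Hessian-Lipschitz remainder of Lemma~\ref{lem:h_lips} (whose constant is $\bar D_{\max}$ rather than the LRSC $\tau$), followed by Lemma~\ref{lem:err_rsc} plus the beta-min condition; the one difference is that the paper simply assumes the stationary point lies in the LRSC region, whereas you propose to prove it with a fixed-point argument. Two constant slips need fixing: Lemma~\ref{lem:err_rsc} gives $\|\Delta\|_2\le 3\sqrt{s}\lambda_n/\bigl(2(\alpha_1-16\tau s)\bigr)$, and you need that factor $1/2$ to get $\|\Delta\|_\infty\le\theta_{\min}/2<\theta_{\min}$ (your bound only gives $\le\theta_{\min}$); and if Step~2 transfers incoherence only with constant $1-\alpha/2$, the Step~3 arithmetic gives $1+\alpha^2/\bigl(4(2-\alpha)\bigr)>1$, so the thresholds on $\|W\|_\infty$ and $\|R\|_\infty$ must be recomputed with $\alpha/2$ in place of $\alpha$.
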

The proof sketch for Theorem~\ref{Thm: LASSO} is provided below, and all lemmas used in the proof are included in \S\ref{sec:lemma} and the proof of lemmas also provide in \S\ref{sec: Lemma}.

Proving the main theorem is challenging due to several obstacles. First, the objective function is inherently non-convex. Fortunately, despite global non-convexity, we find that the objective satisfies a local restricted strong convexity (LSRC) property with the help of the stabilization constant $c$, which enables us to establish stability and convergence guarantees. Moreover, multiplicative noise in the SDE yields a non-stationary trajectory, challenging methods that assume stationarity. Nevertheless, Lyapunov stability implies useful structure; specifically, Lyapunov stability provides a stability-induced property that offers reusable prior knowledge for learning from such data.
\begin{proof}
\textit{For the first statement}, we prove the first statement via a primal-dual witness technique using the KKT conditions. The first step is to construct the primal candidate using a restricted oracle subproblem by restricting non-parent coordinates to zero such that 
\begin{equation}
    \hat \theta_{i(r)} \coloneqq \arg\min_{\{\theta_{ik} = 0 \mid k\notin pa(i)\}} \big\{ \mathcal L(\theta_i)+\lambda_n\|\theta_i\|_1\big\}, 
\end{equation}
where $\hat \theta_{i(r)}$ is the optimal solution of the oracle subproblem. Then, we need to construct the dual candidate (i.e., $(\hat Z_{i(r)})_{pa(i)}$) by using Lemma~\ref{KKT} for all $j\in pa(i)$ such that  
\begin{equation}
    (\hat Z_{i(r)})_j \in \partial |(\hat\theta_{i(r)})_j|
    =
    \begin{cases}
    \operatorname{sign}((\hat\theta_{i(r)})_j), & (\hat\theta_{i(r)})_j\neq 0,\\
    [-1,1], & (\hat\theta_{i(r)})_j=0.
    \end{cases}
\end{equation}
By the restricted KKT condition, the pair
\((\hat\theta_{i(r)},\hat Z_{i(r)})\) satisfies the stationarity condition on \(pa(i)\). It remains to verify the off-support dual feasibility condition, namely \(|(\hat Z_{i(r)})_k|<1\) for all \(k\notin pa(i)\). By Lemma~\ref{lem: PDW}, the PDW-constructed pair \((\hat\theta_{i(r)},\hat Z_{i(r)})\) satisfies strict dual feasibility with high probability. Hence the PDW estimator is a full-problem stationary point with no false positives. Moreover, any stationary point $\hat\theta_i$ of Eq.\eqref{Eq:linear_pa}, satisfying the stated strict dual feasibility condition, must satisfies $\hat\theta_{ik}=0$ for all $k\notin pa(i)$ (no false positives). This is true since, if $\hat\theta_{ik}\neq 0$ for some $k\notin pa(i)$, then the subgradient characterization of the $\ell_1$ norm gives $(\hat z_i)_k=\operatorname{sign}(\hat\theta_{ik})\in\{-1,1\}$, which contradicts strict dual feasibility $(|(\hat z_i)_k|<1)$.

\textit{For the second statement}, it remains to show that there are no false negatives. Given the LRSC from Lemma~\ref{lem:RSC}, and boundedness of population estimator from Lemma~\ref{lem:prop} holds for $n$ sufficiently large, we can have the error bound by using Lemma~\ref{lem:h_lips} and Lemma~\ref{lem:err_rsc}. Then, we define $\theta_{\min} \coloneqq \min_{j\in\mathrm{pa}(i)} |\theta^*_{ij}|.$ To show the correct support recovery, it suffices to show that $  \|\hat \theta_{i} - \theta^*_{i}\|_\infty \le \frac{\theta_{\min}}{2}$. Then, we can show it by using the fact that $\|\cdot\|_\infty \le\|\cdot\|_2$ and Lemma~\ref{lem:err_rsc} such that $\|\hat \theta_{i} - \theta^*_{i}\|_\infty\le\frac{3\lambda_n\sqrt{s}}{2(\alpha_1-16\tau s)}$. We then have that $\frac{2}{\theta_{\min}}\|\hat \theta_{i} -  \theta^*_{i}\|_\infty\le\frac{2}{\theta_{\min}}\frac{3\lambda_n\sqrt{s}}{2(\alpha_1-16\tau s)}\le 1$, and then $\theta_{\min}\ge\frac{3\lambda_n\sqrt{s}}{\alpha_1-16\tau s}.$ Therefore, every true parent coefficient remains nonzero on $\mathrm{pa}(i)$. Combined with part~(1), which already established that $\hat{\theta}_{ik}=0$ for all $k\notin \mathrm{pa}(i)$, we conclude that $\operatorname{supp}(\hat{\theta}_i)=\mathrm{pa}(i)$. 
\end{proof}
\vspace{-1em}
\begin{remark}
The minimum signal strength condition is commonly referred to as a beta-min condition in the high-dimensional LASSO literature (see, e.g., \cite{Meinshausen2006,jalali11a2011}); here, it ensures that each true parent effect is sufficiently separated from the estimation error level. Moreover, Theorem~\ref{Thm: LASSO} is stated for stationary points satisfying the KKT conditions, LRSC, and strict dual feasibility. Although this may appear conditional, uniqueness can be recovered under a stronger sample-size regime. This is consistent with the distinction commonly made in high-dimensional nonconvex estimation: restricted curvature is sufficient for support recovery, whereas uniqueness of stationary points requires stronger full-curvature conditions (see, e.g., \cite{Loh2017supprot}). Specifically, if the sample size is further enlarged so that $\alpha_1 - p\tau_n >0,$ where $\tau_n$ is from Lemma~\ref{lem:RSC} such that $\tau_n = \tau r_n$, then the LRSC bound in Lemma~\ref{lem:RSC} implies ordinary local strong convexity on the LRSC region. Indeed, since $\|\Delta\|_1^2 \le p\|\Delta\|_2^2,$ we have, for all directions \(\Delta\), $\Delta^\top \nabla^2\ell_i(\theta)\Delta \ge (\alpha_1-p\tau_n)\|\Delta\|_2^2.$ Consequently, \(\ell_i(\theta)+\lambda_n\|\theta\|_1\) is locally strongly convex on this region, because the \(\ell_1\) penalty is convex. Therefore, there is at most one stationary point in the LRSC region. Since the PDW solution \(\hat\theta_{i(r)}\) lies in this region and satisfies the full KKT conditions with high probability by Lemma~\ref{lem: PDW}, it coincides with the unique stationary point in this region. Since \(\tau_n=O(\sqrt{\log p/n})\), the condition \(\alpha_1-p\tau_n>0\) is satisfied when \(n\gtrsim p^2\log p\) with a sufficiently large constant.
\end{remark}

% Theorem~\ref{Thm: LASSO} is stated for a fixed node \(i\). Applying the theorem row-wise to all \(i\in[p]\) and taking a union bound yields full graph support recovery, with the corresponding adjustment of the overall failure probability. 

\subsection{Statistical Guarantees under Unstable SDEs}
\label{Sec: unsta}

As shown above, Lyapunov stability provides two central ingredients for our analysis: (i) \emph{high-probability boundedness} of the SDE solution over $[0,T]$ (Proposition~\ref{Ass:xbound}), and (ii) \emph{decay of temporal dependence} (e.g., covariance or mixing decay; Proposition~\ref{prop: cov_decay}), which together yield the concentration bounds required for statistical recovery.  We now turn to the unstable regime, where Lyapunov stability may be weakened or fail. A natural question is how to address the resulting technical challenges without these two properties? To obtain a finite-horizon guarantee in this setting, we begin with a standard moment assumption on the initial condition.

\begin{assumption}[Finite $q$-th moment of the initial condition]
\label{ass:finiteq_inital}
There exists $q \ge 2$ such that $\mathbb{E}\,\|X_0\|^{q} < \infty.$ Equivalently, $X_0 \in L^{q}$.
\end{assumption}
Assumption~\ref{ass:finiteq_inital} is standard for finite-horizon moment estimates in SDE theory (see e.g., \cite{oksendal2010stochastic} and \cite{Higham2002}). Then, under the Lipschitz and linear growth conditions in Assumption~\ref{SDE_gen}, together with Assumption~\ref{ass:finiteq_inital}, the following finite-horizon  bound holds.

\begin{proposition}
\label{ass:localization}
Let $X_t\in\mathbb{R}^p$ solve the SDE  defined in Eq.\eqref{Eq:equiv_form} for $t\in[0,T]$, and satisfy the Assumption~\ref{SDE_gen} and Assumption~\ref{ass:finiteq_inital}. Then, there exists a constant $C=C(q,L,p)>0$ such that
\begin{equation*}
    \mathbb{E}\Big[\sup_{t\in[0,T]}\|X_t\|^q\Big] \le C\,e^{CT}\Bigl(1+\mathbb{E}\|X_0\|^q\Bigr).
\end{equation*}
Consequently, we have $ \mathbb{P}\left(\sup_{t\in[0,T]}\|X_t\|\le K(T,\delta)\right)\ge 1-\delta$ for any $\delta\in(0,1)$, with $K(T,\delta)
:=\Biggl(\frac{C\,e^{CT}\bigl(1+\mathbb{E}\|X_0\|^q\bigr)}{\delta}\Biggr)^{1/q}$. For any sampling times $0=t_0<\cdots<t_{n-1}\le T$, $\mathbb{P}\left(\max_{0\le k\le n-1}\|X(t_k)\|\le K(T,\delta)\right)\ge 1-\delta$, and also $\max_k\|X(t_k)\|_\infty\le K(T,\delta)$ holds with probability at least $1-\delta$ since $\|x\|_\infty\le \|x\|$.
\end{proposition}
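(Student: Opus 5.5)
The plan is the standard finite-horizon moment estimate for SDEs with Lipschitz, linear-growth coefficients, proved through localization, the Burkholder--Davis--Gundy (BDG) inequality and Gronwall's lemma. Markov's inequality then gives the high-probability statements.

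\emph{Localization.} Since $dX_t = BX_t\,dt + S_A(X_t)\,dW_t$ has linear coefficients, Assumption~\ref{SDE_gen} holds with global constants. Concretely, $\|Bx\|\le \|B\|_2\|x\|$ and $\|S_A(x)\|_F\le \|A\|_F\|x\|$, so the linear growth bound holds with $L:=\|B\|_2+\|A\|_F$. Proposition~\ref{prop: WD} gives a unique strong solution. For $R>0$, define the stopping time $\tau_R:=\inf\{t\ge 0:\|X_t\|\ge R\}\wedge T$ and the running supremum $Y_R(t):=\mathbb{E}\bigl[\sup_{s\le t\wedge\tau_R}\|X_s\|^q\bigr]$. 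This quantity is finite by construction.

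\emph{Moment estimate.} Write $X_{s\wedge\tau_R}=X_0+\int_0^{s\wedge\tau_R}BX_u\,du+\int_0^{s\wedge\tau_R}S_A(X_u)\,dW_u$. Applying $(a+b+c)^q\le 3^{q-1}(a^q+b^q+c^q)$ splits the estimate into three terms.
\begin{enumerate}
\item The initial-condition term contributes $3^{q-1}\mathbb{E}\|X_0\|^q$.
\item For the drift term, use H\"older in time to get $\bigl\|\int_0^{s}BX_u\,du\bigr\|^q\le T^{q-1}\|B\|^q\int_0^s\|X_u\|^q\,du$.
\item For the stochastic integral, which is a martingale, apply BDG: $\mathbb{E}\sup_{s\le t\wedge\tau_R}\bigl\|\int_0^s S_A\,dW\bigr\|^q\le C_q\,\mathbb{E}\bigl(\int_0^{t\wedge\tau_R}\|S_A(X_u)\|_F^2du\bigr)^{q/2}$. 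H\"older then bounds this by $C_qT^{q/2-1}\|A\|_F^q\int_0^t\mathbb{E}\sup_{v\le u\wedge\tau_R}\|X_v\|^q\,du$.
\end{enumerate}
Combining the three terms gives $Y_R(t)\le C_1(1+\mathbb{E}\|X_0\|^q)+C_2\int_0^tY_R(u)\,du$. Gronwall's lemma then yields $Y_R(T)\le C_1(1+\mathbb{E}\|X_0\|^q)e^{C_2T}$. Since $\tau_R\uparrow T$ by non-explosion, letting $R\to\infty$ and applying monotone convergence gives the claimed bound. Absorbing the constants (which depend on $q$, $L$, and $p$ through the norms and through the BDG constant for the $p$-dimensional integral) into a single $C=C(q,L,p)$ gives the stated form, with the polynomial-in-$T$ factors absorbed into $e^{CT}$. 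This is the step I expect to be the main obstacle. The BDG constants must be tracked, and the $T^{q/2-1}$ factors must be absorbed into the exponential. One must also ensure that $Y_R$ is finite so that Gronwall applies, which is exactly the role of the stopping time.

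\emph{Probability bounds.} Markov's inequality applied to $\sup_t\|X_t\|^q$ gives
\[
\mathbb{P}\Bigl(\sup_{t\in[0,T]}\|X_t\|>K\Bigr)\le \frac{C e^{CT}(1+\mathbb{E}\|X_0\|^q)}{K^q}.
\]
Setting the right-hand side equal to $\delta$ gives $K(T,\delta)$. The discrete statement follows because $\max_k\|X(t_k)\|\le\sup_{t\in[0,T]}\|X_t\|$ holds pathwise. The $\ell_\infty$ statement follows from $\|x\|_\infty\le\|x\|$.
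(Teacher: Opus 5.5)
Your route is the paper's: the mild form, the $3^{q-1}$ split, H\"older for the drift, BDG plus H\"older for the stochastic integral, Gronwall, then Markov. Your stopping time $\tau_R$ is a genuine improvement. The paper applies Gronwall directly to $\mathbb{E}[\sup_{u\le t}\|X_u\|^q]$ without knowing that this quantity is finite. Your $Y_R$ is bounded by $R^q+\mathbb{E}\|X_0\|^q$, and monotone convergence removes the localization at the end.

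The step you single out as the main obstacle does not work as you describe. The powers of $T$ cannot be absorbed into $e^{CT}$ with $C$ independent of $T$, because they sit in the Gronwall rate rather than in a prefactor. Your integral inequality has rate $C_2=3^{q-1}\bigl(\|B\|^qT^{q-1}+C_q\|A\|_F^qT^{q/2-1}\bigr)$. Gronwall therefore gives the factor $\exp\{3^{q-1}(\|B\|^qT^{q}+C_q\|A\|_F^qT^{q/2})\}$, which is superexponential in $T$. What you actually prove is the bound with $C=C(q,L,p,T)$.

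The paper's proof has exactly the same feature, since its $C$ ``absorbs $3^{q-1}$ and the $t$-dependent factors''. For the fixed finite horizon used downstream this is harmless, because only the finiteness of $K(T,\delta)$ matters.

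If you want a constant that is genuinely free of $T$, rerun your localized estimate on unit intervals. For $t\in[k,k+1]$, write $X_t=X_k+\int_k^t BX_u\,du+\int_k^t S_A(X_u)\,dW_u$. On an interval of length one the H\"older/BDG factors are at most $1$, since $q\ge 2$. This gives $\mathbb{E}\sup_{[k,k+1]}\|X\|^q\le C_0\,\mathbb{E}\|X_k\|^q\le C_0^{k+1}\,\mathbb{E}\|X_0\|^q$. Summing over $k<\lceil T\rceil$ yields $Ce^{CT}\,\mathbb{E}\|X_0\|^q$.

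Alternatively, apply It\^o's formula to $\|X_t\|^q$. This gives $\mathbb{E}\|X_t\|^q\le e^{\kappa t}\,\mathbb{E}\|X_0\|^q$ with $\kappa=q\|B\|_2+\tfrac12 q(q-1)\|A\|_F^2$. Then control the supremum of the martingale part by BDG with the usual $\tfrac12\,\mathbb{E}\sup$ absorption.
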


\begin{remark}
\label{remark: sec moment}
Proposition~\ref{ass:localization} implies a standard finite-horizon localization event. For any $\delta\in(0,1)$, let $K=K(T,\delta)$ be as in Proposition~\ref{ass:localization} and define $\mathcal E_K := \{\max_{0\le k\le n-1}\|X(t_k)\|_\infty \le K\}$. Since $\max_k \|X(t_k)\|_\infty \le \sup_{t\in[0,T]}\|X_t\|_\infty$ and $\|x\|_\infty\le \|x\|_2$, we have $\mathbb P(\mathcal E_K)\ge 1-\delta$. We will work on $\mathcal E_K$ to enable subsequent concentration arguments. Moreover, for technical convenience in our concentration analysis on the finite horizon $[0,T]$, we work with a stopped version of the process. Define the stopping time $\tau_K:=\inf\{k:\|X_k\|_\infty>K\}\wedge n,$ and the stopped variables $\tilde X_k:=X_{k\wedge\tau_K}$ and $\tilde Z_k^{(ab)}:=\tilde X_{k,a}\tilde X_{k,b}$. Then $|\tilde Z_k^{(ab)}|\le K^2$ holds almost surely for all $k$, and hence $\bigl\|\mathbb E[\tilde Z_k\mid\mathcal F_{k-1}]\bigr\|_\infty \le K^2$ a.s. Moreover, on the event $\mathcal E_K=\{\max_{0\le k\le n-1}\|X_k\|_\infty\le K\} =\{\tau_K=n\}$, we have $\tilde Z_k=Z_k$ for all $k\le n-1$. Therefore, any high-probability bound established for $\tilde Z_k$ transfers to $Z_k$ up to an additive $\mathbb P(\mathcal E_K^c)$ term.
\end{remark}
To proceed with support recovery, however, boundedness alone is not sufficient, as we also need a lower-curvature condition for the predictable Gram matrix along sparse directions. We therefore impose the following restricted-eigenvalue assumption.

\begin{assumption}
\label{ass:pred_curvature}
Let $s:=|\mathrm{supp}(\theta^\star)|$ and a constant $c_0> 0$. Define $\mathcal R(s;c_0):=\Big\{\Delta\in\mathbb R^p:\ \|\Delta\|_1\le c_0\sqrt{s}\,\|\Delta\|_2\Big\}.$ There exists a deterministic constant $m>0$ such that, with high probability 
\begin{equation*}
\inf_{\Delta\in\mathcal R(s;c_0)\setminus\{0\}}
\frac{\Delta^\top\tilde{\Sigma}\Delta}{\|\Delta\|_2^2}\ge\ m, \quad \text{where}\quad\tilde{\Sigma}:=\frac1n\sum_{k=1}^{n}\mathbb E[X_kX_k^\top\mid\mathcal F_{k-1}].
\end{equation*}
\end{assumption}

\begin{remark}
    Assumption~\ref{ass:pred_curvature} is a restricted-eigenvalue condition on the predictable Gram matrix. Unlike the global requirement $\lambda_{\min}(\tilde{\Sigma})>0$, it only enforces a uniform curvature lower bound along near-sparse directions $\mathcal R(s;c_0)=\{\Delta:\|\Delta\|_1\le c_0\sqrt{s}\|\Delta\|_2\}$. This assumption plays the role of a persistent-excitation condition for support recovery in dependent, pathwise settings and is standard in high-dimensional sparse analysis (see, e.g., \cite{Bickel_2009}.
\end{remark} 
Under Assumption~\ref{ass:finiteq_inital}, together with the corresponding property, and Assumption~\ref{ass:pred_curvature}, we obtain the following theorem for unstable multiplicative SDEs. The required technical details are summarized in \S\ref{sec:lemma}, and all proofs are collected in \S\ref{sec: Lemma}

\begin{theorem}[Support Recovery for Unstable Multiplicative SDE]
\label{Thm:lasso_uns}
Let $X_t \in \mathbb{R}^p$ solve the multiplicative linear SDE in Eq.\eqref{Eq:equiv_form} on the finite horizon $[0,T]$, and fix a node $i \in [p]$. Let $S_i := pa(i) = supp(\theta_i^*)$ and $s_i := |S_i|$. Assume that Assumptions~\ref{sparse}, \ref{SDE_gen}, \ref{ass: Incoh}, \ref{ass:finiteq_inital}, and \ref{ass:pred_curvature} hold, and let $K$ be defined as in Proposition~\ref{ass:localization} so that, with probability at least $1-\delta$, $\max_{0\le k\le n-1}\|X(t_k)\|_\infty \le K$. Then there exist positive constants $c_1,c_2,C$ such that whenever
\begin{equation}
    \lambda_n \ge C \sqrt{\frac{\log p}{n}} \qquad\text{and}\qquad n \gtrsim s_i^3 \log p,
\end{equation}
the following statements hold with probability at least $1-c_1 e^{-c_2 n \lambda_n^2}-\delta$. 

\begin{enumerate}
\item By Proposition~\ref{ass:localization}, any stationary point $\hat\theta_i$ of Eq.\eqref{Eq:linear_pa} satisfying the stationarity conditions in Lemma~\ref{KKT}, strict dual feasibility, and the LRSC condition in Lemma~\ref{lem:RSC_unstable} satisfies $\widehat N(i) := \{j\neq i : \hat\theta_{ij}\neq 0\} \subseteq pa(i).$
\item Moreover, let \(\theta_{\min,i}:=\min_{j\in\operatorname{pa}(i)}|\theta^*_{ij}|\).
If the minimum signal strength condition \(\theta_{\min,i}\ge 3\sqrt{s_i}\lambda_n/(\alpha_1-16\tau s_i)\)
holds, then \(\operatorname{supp}(\hat\theta_i)=\operatorname{pa}(i)\).
% Moreover, let $\theta_{\min,i} := \min_{j\in pa(i)} |\theta_{ij}^*|.$ We have $supp(\hat\theta_i)=pa(i)$ if the following minimum signal strength condition holds
% \[
% \theta_{\min,i} \ge \frac{3\sqrt{s_i}}{\alpha_1-16\tau s_i}\lambda_n
% \]
\end{enumerate}
\end{theorem}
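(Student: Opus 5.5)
The plan is to mirror the primal-dual witness (PDW) argument used for Theorem~\ref{Thm: LASSO}. Every place where the stable proof relied on Lyapunov stability (Propositions~\ref{Ass:xbound} and~\ref{prop: cov_decay}) will instead use the localization event $\mathcal E_K$ of Proposition~\ref{ass:localization}, the stopped process of Remark~\ref{remark: sec moment}, and the predictable restricted-eigenvalue condition of Assumption~\ref{ass:pred_curvature}.

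\textbf{Step 1: localization.} I first intersect with $\mathcal E_K$, which has probability at least $1-\delta$. On this event all sampled states satisfy $\|X(t_k)\|_\infty\le K$. The score and Hessian of the row loss in Eq.~\eqref{Eq:linear_pa} are
\[
\nabla\ell_i(\theta)=\frac1n\sum_k \frac{(\theta^\top x_k)\bigl((\theta^\top x_k)^2+c-r_{i,k}^2\bigr)}{\bigl((\theta^\top x_k)^2+c\bigr)^2}\,x_k ,
\]
with an analogous expression for $\nabla^2\ell_i(\theta)$. Because of the stabilizer $c$, both have summands that are Lipschitz in $\theta$ and bounded by polynomials in $K$ and $c^{-1/2}$, up to the Gaussian factor $r_{i,k}^2$.

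\textbf{Step 2: score concentration.} At $\theta_i^*$ the conditional law $r_{i,k}\mid x_k\sim\mathcal N(0,(\theta_i^{*\top}x_k)^2)$ implies that each coordinate of the score summand has conditional mean zero given $\mathcal F_{k}$. So $n\nabla\ell_i(\theta_i^*)$ is a martingale. Its increments, after stopping at $\tau_K$, are sub-exponential with parameter depending on $K$ and $c$. A Bernstein/Freedman martingale inequality plus a union bound over the $p$ coordinates then gives $\|\nabla\ell_i(\theta_i^*)\|_\infty\lesssim \sqrt{\log p/n}$ with probability $1-c_1e^{-c_2 n\lambda_n^2}$. This is where $\lambda_n\ge C\sqrt{\log p/n}$ enters. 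The stopping argument transfers the bound back to the unstopped process at the cost of $\mathbb P(\mathcal E_K^c)\le\delta$. This step replaces the mixing-based concentration used in the stable case, since no covariance decay is available.

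\textbf{Step 3: curvature (the main obstacle).} Next I need the unstable analogue of Lemma~\ref{lem:RSC}, namely Lemma~\ref{lem:RSC_unstable}. The goal is LRSC of the form
\[
\Delta^\top\nabla^2\ell_i(\theta)\Delta\ge\alpha_1\|\Delta\|_2^2-\tau\|\Delta\|_1^2
\]
for $\theta$ in a neighborhood of $\theta_i^*$. I would proceed in three moves:
\begin{enumerate}
\item Write the Hessian at $\theta_i^*$ as $\frac1n\sum_k w_k\,x_kx_k^\top$ plus a martingale-difference term. Here $w_k$ is the conditional expectation of the curvature weight, $2(\theta^{*\top}x_k)^2/((\theta^{*\top}x_k)^2+c)^2+1/((\theta^{*\top}x_k)^2+c)\cdot(\ldots)$, which on $\mathcal E_K$ is bounded below by a positive constant depending on $c$ and $K$.
\item Compare $\frac1n\sum_k x_kx_k^\top$ with the predictable Gram matrix $\tilde\Sigma$ using entrywise martingale concentration of $Z_k-\mathbb E[Z_k\mid\mathcal F_{k-1}]$ on the stopped variables $\tilde Z_k^{(ab)}$. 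This gives a max-entry deviation of order $K^2\sqrt{\log p/n}$, which converts into the $\tau\|\Delta\|_1^2$ slack.
\item Invoke Assumption~\ref{ass:pred_curvature} to get $\alpha_1\propto m$ on the cone $\mathcal R(s;c_0)$, and use Lipschitzness of the Hessian in $\theta$ (as in Lemma~\ref{lem:h_lips}) to extend the bound to a neighborhood.
\end{enumerate}
The delicate point is the cone: I must check that the PDW error vector lies in $\mathcal R(s_i;c_0)$. This is the usual $\|\Delta_{S^c}\|_1\le 3\|\Delta_S\|_1$ argument, so I would choose $c_0=4$.

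\textbf{Step 4: sample-size condition.} Incoherence (Assumption~\ref{ass: Incoh}) is stated at the population level. I therefore need the empirical blocks $\hat Q_{S^cS}\hat Q_{SS}^{-1}$ to stay within $\alpha/2$ of $Q^*$. Perturbation bounds on these $s_i\times s_i$ blocks using the max-entry deviation from Step~3 require $s_i^3\log p/n$ small, which explains $n\gtrsim s_i^3\log p$.

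\textbf{Step 5: conclusions.} With these pieces, I run the PDW argument exactly as before.
\begin{itemize}
\item[(i)] Solve the oracle problem restricted to $S_i$. Lemma~\ref{lem:err_rsc} gives $\|\hat\theta_{S}-\theta^*_S\|_2\le 3\sqrt{s_i}\lambda_n/(2(\alpha_1-16\tau s_i))$.
\item[(ii)] Bound the off-support dual variable via a Taylor expansion with Hessian remainder, the incoherence bound, and Step~2, giving $|\hat Z_k|<1$ for $k\notin S_i$ (Lemma~\ref{lem: PDW}). Any stationary point with strict dual feasibility then has no false positives, by the subgradient argument.
\item[(iii)] Under the beta-min condition, $\|\hat\theta_i-\theta_i^*\|_\infty\le\theta_{\min,i}/2$, so there are no false negatives.
\end{itemize}
The union of failure events yields probability at least $1-c_1e^{-c_2n\lambda_n^2}-\delta$.
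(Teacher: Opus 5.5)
Your Step~2 has a genuine gap: because of the stabilizer, the score at $\theta_i^*$ is not conditionally centered. Write $a_k:=\theta_i^{*\top}x_k$ and use $\mathbb E[r_{i,k}^2\mid\mathcal F_k]=a_k^2$. The summand you wrote then splits as
\[
\frac{a_k\bigl(a_k^2+c-r_{i,k}^2\bigr)}{(a_k^2+c)^2}\,x_k=\underbrace{\frac{c\,a_k}{(a_k^2+c)^2}\,x_k}_{=:b_k}+\frac{a_k^3\bigl(1-z_{k+1}^2\bigr)}{(a_k^2+c)^2}\,x_k .
\]
Only the second term is a martingale difference. Your Freedman/Bernstein argument controls that term, but $b_k$ is $\mathcal F_k$-measurable, with entries of order $K/\sqrt c$ that do not shrink with $n$.

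If $\theta_i^*$ is the true diffusion row, $\frac1n\sum_k\mathbb E b_k$ is generally nonzero; on $S_i$ it is exactly $F_i(\theta_S^*,c)$ from Section~\ref{Sec:c_role}. In that case no concentration inequality can give $\|\nabla\ell_i(\theta_i^*)\|_\infty\lesssim\sqrt{\log p/n}$. The paper handles this through the first part of Lemma~\ref{lem:prop}, which its unstable proof reuses. There $\theta_i^*$ is read as the $c$-stabilized population target, so the population first-order condition gives $\mathbb E[b_{k,j}]=0$. The centered, bounded sequence $\{b_{k,j}\}_k$ is then concentrated with a Bernstein inequality for geometrically $\alpha$-mixing sequences (Assumption~\ref{ass:alpha-mixing-EM}).

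So the mixing-type control you explicitly set aside (``no covariance decay is available'') is exactly what this term needs. Localization plus martingale inequalities cannot supply it by themselves, because $\frac1n\sum_k(b_{k,j}-\mathbb E b_{k,j})$ averages functions of a dependent, nonstationary state sequence and is not a martingale.

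The same problem recurs in Step~4. Assumption~\ref{ass: Incoh} concerns the deterministic matrix $Q^*$. Your Step~3 only compares $\frac1n\sum_k x_kx_k^\top$ (and the Hessian) with its predictable compensator, which is itself random. To transfer incoherence to $Q^n_{S^cS}(Q^n_{SS})^{-1}$ you need $\|Q^n-Q^*\|_\infty$ small, and that includes the non-martingale fluctuation $\frac1n\sum_k\bigl(H^*_{i,k}-\mathbb E H^*_{i,k}\bigr)$. The paper again gets this from Lemma~\ref{lem:prop}, via the $\alpha$-mixing Bernstein bound.

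The paper uses the stopped-martingale, predictable-Gram argument only inside Lemma~\ref{lem:RSC_unstable}. There a random lower bound through Assumption~\ref{ass:pred_curvature} is enough, and your curvature step is broadly in line with that lemma. To close the proof you must do one of two things. Either invoke a dependence condition such as Assumption~\ref{ass:alpha-mixing-EM} for both the score bias and the Fisher-information concentration. Or restate the target parameter and the incoherence condition at the conditional (predictable) level, and control the $c$-induced bias $b_k$ separately.
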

\begin{proof}
The argument follows the proof of Theorem~\ref{Thm: LASSO}, with the stability-based ingredients replaced by the finite-horizon localization as stated in Proposition~\ref{ass:localization} and restricted-curvature bounds as stated in Assumption~\ref{ass:pred_curvature}. In particular, Proposition~\ref{ass:localization} is proved by using Assumption~\ref{ass:finiteq_inital}, and yields the event $\max_{0\le k\le n-1}\|X(t_k)\|_\infty\le K$ with high probability.  

By Proposition~\ref{ass:localization}, the process remains in a finite-horizon localized event with high probability. On this event, Lemma~\ref{lem:prop} controls the score and Fisher-information terms, Lemma~\ref{lem:RSC_unstable} yields the LRSC condition, and Lemma~\ref{lem: PDW} shows that the constructed primal-dual pair satisfies the stationary-point conditions characterized in Lemma~\ref{KKT}. 

Moreover, the same dual-feasibility argument as in the proof of Theorem~\ref{Thm: LASSO} yields strict dual feasibility on the localized event. Therefore, the same primal-dual witness argument as in the stable case implies $\hat N(i)\subseteq pa(i)$. Under the minimum signal strength condition, the usual support separation argument yields $supp(\hat\theta_i)=pa(i)$.
\end{proof}
\vspace{-1em}
\subsection{ODE Misspecification}
\label{Sec: ODE miss}

Although our study assumes that part of the physical information is known, as noted earlier, our algorithm remains effective even when this component is misspecified. This situation arises naturally in real-world applications. A representative example is SIR modeling for infectious disease dynamics: while the model structure may be known, the inferred parameters often provide only a coarse characterization of the underlying process. Therefore, in this section, we theoretically investigate how misspecification in the physical component affects our method, as well as the role of the regularization term.

As seen from Eq.\eqref{Eq:A_learning}, the ODE information enters the formulation through $r_{i,k}$, so any misspecification in that component directly induces bias in $r_{i,k}$. To model this effect, we introduce an additive perturbation $|\delta|>0$ in the drift term at each time step, so that
\begin{equation}
\label{Eq:ODE_miss}
     r_{i,k} = \frac{X_{i,k+1}-X_{i,k}-(B_i^\top X_k+b_i +\delta)\Delta t}{\sqrt{\Delta t}}  = \theta^\top_{i}X_{k}z_{k+1} - \delta\sqrt{\Delta t}, \quad z_{k+1} \sim \mathcal N(0,1) 
\end{equation} 
Then, we have the following theorem. We emphasize that this result is established under the stable SDE regime. The unstable regime can be treated in a similar manner. All technical details are provided in Sec.~\ref{Sec:TD_odemiss}, and all proofs are deferred to Sec.~\ref{sec: ODE_miss_Lemma}.

\begin{theorem}[Support recovery under ODE misspecification]
\label{Thm:Lasso_miss}
Let $\tilde{\ell}_i^{(\delta)}(\theta)$ be the misspecified empirical loss defined by Eq.\eqref{Eq:ODE_miss} and assume the graphical model in Eq.\eqref{Eq:linear_pa} with multiplicative linear SDE in Eq.\eqref{GSDE}. Suppose Lyapunov stability and Assumptions~\ref{sparse}-\ref{ass: Incoh} hold. Fix a node $i\in[p]$, let $s_i = |pa(i)|$ and $s := \max_{i\in[p]} |pa(i)|$. We define
\begin{equation*}
\begin{split}
  & A_\delta := \left(\frac{3\sqrt3}{8}+C\right)\frac{K}{\sqrt c}+ \frac{3\sqrt3\,K}{8c^{3/2}}\delta^2\Delta t + C\frac{K}{c}\,\delta\sqrt{\Delta t}, \\ &\alpha_\delta
:= m\left( \frac{2}{9c} - \frac{B^2K^2+\delta^2\Delta t}{c^2} \right) 
,\qquad\bar D_{\max}^{(\delta)} := \frac{K^3}{2} \left( \frac{24}{c^{3/2}} + \frac{48(R+\delta\sqrt{\Delta t})^2}{c^{5/2}} \right),
\end{split}
\end{equation*}
where $A_\delta$ is from Lemma~\ref{lemma:ODE_miss gradient}; $\alpha_\delta$ is from Lemma~\ref{lemma:ODE_miss_lrsc}; and $\bar D_{\max}^{(\delta)}$ is from Lemma~\ref{lemma:ODE_miss_h_lip}. If the regularization satisfies
\begin{equation}
\lambda_n \ge \frac{4(2-\alpha)}{\alpha} A_\delta \sqrt{\frac{\log p}{n}},
\end{equation}
and the sample size satisfies $n \ge C\,\tilde c_\delta\, s^3 \log p,$ where $\tilde c_\delta := \max\left\{ \frac{H_\delta^2}{\alpha_\delta^2},
\left( \frac{400(2-\alpha)^2\,\bar D_{\max}^{(\delta)}\,A_\delta}{\alpha^2\alpha_\delta^2} \right)^2 \right\}$ and $H_\delta :=\left(\frac{2}{9c}-\frac{B^2K^2+\delta^2\Delta t}{c^2}\right)+\frac{C(1+|\delta|\sqrt{\Delta t})}{c^2}$,
the no-false-positive conclusion holds. If, in addition, the similar minimum signal strength condition as in Theorem~\ref{Thm: LASSO} holds, then the exact-support conclusion holds. Moreover, if the misspecification level satisfies
\begin{equation}
\label{eq:delta_small_range}
|\delta| \le \delta_0 := \min\left\{\sqrt{\frac{c/9-B^2K^2}{\Delta t}}, \frac{\sqrt{R^2+c/2}-R}{\sqrt{\Delta t}}, \sqrt{\frac{c}{\Delta t}} \right\},
\end{equation}
all conditions reduce to the same functional form as in the well-specified case of Theorem~\ref{Thm: LASSO}.
\end{theorem}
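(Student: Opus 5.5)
The plan is to rerun the primal--dual witness (PDW) argument from the proof of Theorem~\ref{Thm: LASSO}. Every ingredient that depends on the drift is replaced by its $\delta$-perturbed counterpart from \S\ref{Sec:TD_odemiss}. Under Eq.\eqref{Eq:ODE_miss} the residual is $r_{i,k}=\theta_i^{*\top}X_k z_{k+1}-\delta\sqrt{\Delta t}$. Conditionally on $X_k$ it is Gaussian with mean $-\delta\sqrt{\Delta t}$ rather than zero. The score at $\theta_i^*$ is therefore no longer a martingale difference; it picks up a deterministic-given-$X_k$ bias of order $\delta^2\Delta t$ and $\delta\sqrt{\Delta t}$. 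I will organize the proof as three perturbation steps (gradient, curvature, Hessian Lipschitz), followed by the unchanged PDW and $\ell_\infty$ support-separation argument.

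\textbf{Step 1 (score).} Using Lemma~\ref{lemma:ODE_miss gradient}, I will bound $\|\nabla\tilde\ell_i^{(\delta)}(\theta_i^*)\|_\infty\le A_\delta\sqrt{\log p/n}$ with high probability. I split the score into a centered part and a bias part:
\[
\frac{(\theta^{*\top}X_k)\,X_k\,(r_{i,k}^2-v_k)}{v_k^2}
\qquad\text{and}\qquad
\frac{(\theta^{*\top}X_k)\,X_k\,\delta^2\Delta t}{v_k^2},
\qquad v_k:=(\theta^{*\top}X_k)^2+c .
\]
The centered part contains a cross term $-2\delta\sqrt{\Delta t}\,\theta^{*\top}X_kz_{k+1}$. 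It is handled by a Bernstein inequality for $\alpha$-mixing sequences (Assumption~\ref{ass:alpha-mixing-EM}) on the localization event $\|X_k\|_\infty\le K$ from Proposition~\ref{Ass:xbound}. The elementary bound $|u|/(u^2+c)^{3/2}\le 3\sqrt3/(8c)$-type estimates then produce exactly the three terms in $A_\delta$. With this bound, the choice $\lambda_n\ge \frac{4(2-\alpha)}{\alpha}A_\delta\sqrt{\log p/n}$ yields $\|\nabla\tilde\ell\|_\infty\le\frac{\alpha}{4(2-\alpha)}\lambda_n$, which is the form needed in Lemma~\ref{lem: PDW}.

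\textbf{Step 2 (curvature and Hessian smoothness).} Next I will invoke Lemma~\ref{lemma:ODE_miss_lrsc} to obtain LRSC with curvature $\alpha_\delta=m\bigl(\tfrac{2}{9c}-\tfrac{B^2K^2+\delta^2\Delta t}{c^2}\bigr)$. The mechanism is that $\mathbb E[r^2\mid X_k]$ is inflated by $\delta^2\Delta t$, which enters the negative part of the Hessian weight. I then lower bound $\Delta^\top\hat\Sigma\Delta$ on the cone using the restricted eigenvalue $m$, coming from Proposition~\ref{prop: WD} plus covariance decay, Proposition~\ref{prop: cov_decay}. Lemma~\ref{lemma:ODE_miss_h_lip} supplies the Hessian Lipschitz constant $\bar D^{(\delta)}_{\max}$, where $R$ is replaced by $R+\delta\sqrt{\Delta t}$. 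The concentration of the restricted Hessian $Q_{SS}$ around its population version, together with $Q_{S^cS}$, costs $H_\delta/\alpha_\delta$ in sample size. These two steps give the two branches of $\tilde c_\delta$. The sample size $n\ge C\tilde c_\delta s^3\log p$ then makes the remainder in the PDW expansion at most $\frac{\alpha}{4(2-\alpha)}\lambda_n$, using $\|\hat\theta_S-\theta^*_S\|_2\lesssim \sqrt s\lambda_n/\alpha_\delta$ as in Lemma~\ref{lem:err_rsc}.

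\textbf{Step 3 (conclusion and small-$\delta$ reduction).} I will proceed as in Theorem~\ref{Thm: LASSO}. I construct the oracle $\hat\theta_{i(r)}$ and write the off-support dual variable as
\[
\hat Z_{S^c}=Q_{S^cS}Q_{SS}^{-1}\hat Z_S+\text{(score and remainder terms)} .
\]
Incoherence gives $\|\hat Z_{S^c}\|_\infty\le 1-\alpha+\alpha/2<1$, which yields no false positives. The beta-min condition, with $\alpha_1$ replaced by $\alpha_\delta$, gives exact support. For the final claim, the first constraint in $\delta_0$ gives $\delta^2\Delta t\le c/9-B^2K^2$, so $\alpha_\delta\ge m/(9c)$, which has the same order as in the well-specified case. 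The second constraint gives $(R+\delta\sqrt{\Delta t})^2\le R^2+c/2$, so $\bar D^{(\delta)}_{\max}$ stays $O(\bar D_{\max})$. The third constraint gives $\delta\sqrt{\Delta t}\le\sqrt c$, so $A_\delta=O(K/\sqrt c)$. All constants therefore collapse to those of Theorem~\ref{Thm: LASSO}.

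The main obstacle is Step 1. Because of the nonzero conditional mean, the bias term in the score is not centered. To show that it does not break the $\sqrt{\log p/n}$ rate, I would use the population first-order condition: the population minimizer under misspecification shifts, and I would show that the shift preserves the support (or is absorbed into $\lambda_n$) when $\delta$ is small. If that fails, I would bound the bias deterministically by $\frac{K}{c^{3/2}}\delta^2\Delta t$ and require $\lambda_n$ to dominate it, accepting a $\delta$-dependent floor on $\lambda_n$.
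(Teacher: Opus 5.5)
Your architecture matches the paper's: score bound, misspecified LRSC, misspecified Hessian-Lipschitz bound, then the unchanged PDW and $\ell_\infty$-separation argument, and the same three-way collapse for $|\delta|\le\delta_0$. The problem is Step~1. You cite Lemma~\ref{lemma:ODE_miss gradient}, but you then treat the $\delta^2\Delta t$ term as an uncentered bias that might shift the target or force a $\delta$-dependent floor on $\lambda_n$. Neither of your proposed remedies gives the stated theorem.

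The missing observation is that the $\delta^2\Delta t$ term is not a new kind of bias. With $u_k^*=\theta_i^{*\top}X_k$ one has $\mathbb E[r_{i,k}^2\mid X_k]=u_k^{*2}+\delta^2\Delta t$. So the part of the per-sample score not driven by $z_{k+1}$ is
\[
\frac{2u_k^*\,(c-\delta^2\Delta t)}{(u_k^{*2}+c)^2}\,X_k=\Bigl(1-\frac{\delta^2\Delta t}{c}\Bigr)b_k,\qquad b_k:=\frac{2c\,u_k^*}{(u_k^{*2}+c)^2}\,X_k .
\]
This is a constant multiple of the well-specified term $b_k$ from the proof of Lemma~\ref{lem:prop}. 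There $\mathbb E[b_k]=0$ by the population first-order condition defining the $c$-stabilized target $\theta_i^*$, so the misspecified term is centered too. Equivalently, the misspecified population score at $\theta_i^*$ is $(1-\delta^2\Delta t/c)$ times the well-specified one, so the target does not shift.

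The paper then applies two concentration inequalities:
\begin{itemize}
\item the mixing Bernstein inequality to this bounded, centered, $X_k$-measurable sequence, with envelope $|1-\delta^2\Delta t/c|\,\frac{3\sqrt3}{8}K/\sqrt c$;
\item a conditional Bernstein inequality for martingale differences to the $z_{k+1}$-driven part, including your cross term. That cross term is not handled by mixing.
\end{itemize}
This is exactly why the $\frac{3\sqrt3K}{8c^{3/2}}\delta^2\Delta t$ term in $A_\delta$ carries the factor $\sqrt{\log p/n}$.

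Your displayed split also does not work as written. With $v_k=u_k^{*2}+c$, one has $\mathbb E[r_{i,k}^2-v_k\mid X_k]=\delta^2\Delta t-c\neq 0$, so your ``centered part'' is not centered. The split must be taken around $\mathbb E[r_{i,k}^2\mid X_k]$, which produces the combination above.

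Your deterministic fallback would prove a weaker statement. A floor $\lambda_n\gtrsim K\delta^2\Delta t/c^{3/2}$ does not decay in $n$. Enlarging $n$ therefore cannot reconcile it with the upper bound $\lambda_n\le \alpha\alpha_\delta^2/\bigl(100\,\bar D^{(\delta)}_{\max}(2-\alpha)s^{3/2}\bigr)$ needed to control $R^{(\delta)}_{pa(i)}$. You would need an extra $s$-dependent smallness condition on $\delta$, and the beta-min threshold $\sqrt{s}\lambda_n/\alpha_\delta$ would no longer vanish.

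There are also two smaller misattributions.
\begin{itemize}
\item The branch $H_\delta^2/\alpha_\delta^2$ of $\tilde c_\delta$ does not come from Hessian-block concentration. It comes from the LRSC margin requirement $32\tau_\delta r_n s\le\alpha_\delta$: $H_\delta$ plays the role of $\tau_\delta$, whose $C(1+|\delta|\sqrt{\Delta t})/c^2$ part comes from concentrating $\tilde M_n$, including the new cross term $N_n$.
\item The constraint $|\delta|\le\sqrt{c/\Delta t}$ does more than control $A_\delta$. It also keeps the constant $C_\delta$ of Lemma~\ref{lemma:ODE_miss_hessian} bounded, which the incoherence transfer needs.
\end{itemize}
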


\begin{proof}
To show these quantities, we establish the following expressions, which are used in the PDW argument

\begin{equation}
 (i):\frac{1}{\lambda_n}\|\nabla \tilde{\ell}_i^{(\delta)}(\theta_i^*)\|_\infty \le \frac{\alpha}{4(2-\alpha)}, \qquad(ii):\frac{1}{\lambda_n}\|R_{pa(i)}^{(\delta)}\|_\infty\le \frac{\alpha}{4(2-\alpha)},
\end{equation}
where $R_{pa(i)}^{(\delta)}:= \bigl(\nabla^2 \tilde \ell_i^{(\delta)}(\theta_i^*) - \nabla^2 \tilde \ell_i^{(\delta)}(\tilde\theta_i)\bigr)\Delta$ and \(\Delta:=\hat\theta_i-\theta_i^*\). For more details about $R_{pa(i)}^{(\delta)}$, see Lemma~\ref{lem: PDW}. From Lemma~\ref{lemma:ODE_miss gradient}, we know the gradient's upper bound, and we can directly obtain the following sufficient condition to obtain (i).

\begin{equation}
\lambda_n \ge \frac{4(2-\alpha)}{\alpha}\left(\frac{3\sqrt3}{8}+C\right)\frac{K}{\sqrt c}+ \frac{3\sqrt3\,K}{8c^{3/2}}\delta^2\Delta t + C\frac{K}{c}\,\delta\sqrt{\Delta t} \sqrt{\frac{\log p}{n}}.
\end{equation}
The detailed derivation is given in Lemma~\ref{cor:lambda_misspec}. By the above inequality and the LRSC condition in Lemma~\ref{lemma:ODE_miss_lrsc}, the same cone condition used in Lemma~\ref{lem:err_rsc} is still hold. Thus, the corresponding $\ell_1$ and $\ell_2$ bound still hold. Then, by Lemma~\ref{lemma:ODE_miss_h_lip} and the fact that $\|\cdot\|_\infty \le\|\cdot\|_2$, we have $\|R_{pa(i)}^{(\delta)}\|_\infty \le \bar D_{\max}^{(\delta)}\|\Delta\|_1\|\Delta\|_2.$ Substituting the above \(\ell_1,\ell_2\) bounds yields a sufficient upper bound on \(\lambda_n\) ensuring (ii). The full procedure is attached in Lemma~\ref{cor:lambda_misspec}. Combining this upper bound with the lower bound from (i) gives the sample size condition \(n \ge C\tilde c_\delta s^3\log p\). Therefore, the primal-dual witness argument proceeds exactly as in Theorem~\ref{Thm: LASSO} and yields no false positives, yielding $\hat N(i)\subseteq pa(i)$. Finally, by imposing a similar minimum signal strength condition, every true parent is retained, yielding $\hat N(i)=pa(i)$. Moreover, if \(\delta\le \delta_0\), then the quantities \(A_\delta\), \(\alpha_\delta\), and \(\bar D_{\max}^{(\delta)}\) reduce to the same functional form as in the well-specified case, so the requirements on \(\lambda_n\) and \(n\) recover the same functional dependence as in Theorem~\ref{Thm: LASSO}, up to universal constants.
\end{proof}
\vspace{-1em}
\subsection{The Role of c-stabilization}
\label{Sec:c_role}
In the previous section, we introduced the stabilization parameter \(c\) into the Gaussian quasi-likelihood to ensure that the loss is well posed without requiring additional assumptions on the diffusion matrix. We then established Lasso support recovery for the population target induced by the \(c\)-stabilized loss. From a formal statistical perspective, the stabilized optimization output should be viewed as a surrogate for the corresponding population-level optimizer. We did not state this interpretation earlier, for clarity of exposition. 

This raises the central theoretical question of characterizing \(c\) for which the support of the original population target \(\theta^*\) coincides with that of the surrogate population target \(\theta^{\ast,c}\), namely, $\operatorname{supp}(\theta^*)=\operatorname{supp}(\theta^{*,c})$. Hence, in this section, we first show that support recovery for the surrogate target can be transferred to the original population target through a bridging argument. We next further analyze the role of $c$. A key point is that $c$ is a genuine trade-off parameter: it must be large enough to guarantee well-posedness of the loss and to make the RSC analysis valid, but it cannot be chosen arbitrarily large, since the effective RSC margin appears in the denominator of the recovery bound. Consequently, the true parameter signal must be sufficiently strong to avoid excessive shrinkage.

We first define  $\theta_S^*:=\theta_{i,S_i}^\star\in\mathbb R^{s_i}$ for $i\in\{1,\dots,p\}$ and $S_i = pa(i)$. Then, we first introduce the following formulations for the restricted expected empirical risk and its Jacobian with respect to $\vartheta\in\mathbb R^{s_i}$, where $s_i = |S_i|$. For \(\vartheta\in\mathbb R^{s_i}\), define $z_{i,k}(\vartheta):=\vartheta^\top X_{k,S_i}$ for $ k=0,\dots,n-1$. Then the restricted expected empirical risk can be written as
\begin{equation*}
    R_{n,i,c}^{(S_i)}(\vartheta) = \frac{1}{2n}\sum_{k=0}^{n-1} \mathbb E\!\left[\log\!\big(z_{i,k}(\vartheta)^2+c\big) + \frac{r_{i,k}^2}{z_{i,k}(\vartheta)^2+c} \right],
\end{equation*}
where $r_{i,k}$ is defined as in Eq.\eqref{Eq:A_learning}. We then define the gradient $F_i(\vartheta,c)$ as 
\begin{equation*}
    F_i(\vartheta,c):=\nabla R_{n,i,c}^{(S_i)}(\vartheta) = \frac{1}{n}\sum_{k=0}^{n-1} \mathbb E\!\left[ X_{k,S_i}\, z_{i,k}(\vartheta) \frac{z_{i,k}(\vartheta)^2+c-r_{i,k}^2} {\big(z_{i,k}(\vartheta)^2+c\big)^2} \right].
\end{equation*}
Using $\mathbb E[r_{i,k}^2\mid X_k]=(\theta_{i,S_i}^{*\top}X_{k,S_i})^2 :=(Z_{i,k}^\star)^2$, we obtain 
\begin{equation}
\label{Eq:pop_F_i}
    F_i(\theta_{i,S_i}^*,c) = \nabla R_{n,i,c}^{(S_i)}(\theta_{i,S_i}^*) = \frac{c}{n}\sum_{k=0}^{n-1} \mathbb E\left[ \frac{X_{k,S_i}Z_{i,k}^*}{\big((Z_{i,k}^*)^2+c\big)^2}\right].
\end{equation}
Moreover, we can define its Jacobian as $H_i(\vartheta,c):=D_\vartheta F_i(\vartheta,c) =\nabla^2 R_{n,i,c}^{(S_i)}(\vartheta)$. Since the derivation is analogous to Lemma~\ref{lem:prop}, we omit the full formulation from the main text for readability. We now impose the following assumption over the $\theta_S^*$.

\begin{assumption}[Local restricted regularity at $c=0$]
\label{ass:local_restricted_regularity}
Assume there exists an open neighborhood $ U_i\subset \mathbb R^{s_i}$ of  $\theta_{S}^*$ and an open interval \(I_i\subset\mathbb R\) with \(0\in I_i\) such that:
\begin{enumerate}
    \item $F_i(\vartheta,c)$ is continuously differentiable (e.g., $C^1$) on $U_i\times I_i$.
    \item $ F_i(\theta_S^*,0)=0$.
    \item The restricted Hessian at $(\theta_S^*,0)$ is invertible, that is $ H_i(\theta_S^*,0)=D_\vartheta F_i(\theta_S^*,0)$  is nonsingular.
\end{enumerate}
\end{assumption}
To avoid possible confusion, we briefly clarify each part of Assumption~\ref{ass:local_restricted_regularity}. These conditions are assumptions on the restricted population quantities around the true restricted parameter $\theta^*_{i,S_i}$. The first condition is a local regularity assumption on the restricted population score map around $\theta^*_{i,S_i}$. The second condition states that the true restricted parameter is a root of the restricted population score at $c=0$, namely $F_i(\theta^*_{i,S_i},0)=0$. Under the stated model, whenever the expression is well defined, this follows directly from Eq.\eqref{Eq:pop_F_i}. The third condition is a standard local nondegeneracy assumption on the true restricted parameter, requiring the restricted Hessian at $(\theta^*_{i,S_i},0)$ to be nonsingular. Then, we have the following corollary, which establishes a sufficient condition for bridging support recovery from the surrogate population target to the original population target.
\begin{corollary}[Support recovery for the population parameter]
\label{cor:exact-support-original}
Under Assumption~\ref{ass:local_restricted_regularity}, fix $c\in[0,c_{0,i}]$ for $c_{0,i} > 0$, and define $\bar\theta_{i,c}^*:=\iota_{S_i}(\vartheta_{i,c}^*)\in\mathbb R^p$. From Theorem~\ref{Thm: LASSO}, we know that  $\operatorname{supp}(\hat\theta_i)=\operatorname{supp}(\bar\theta_{i,c}^*)$ when $\beta_{\min,i}^{(c)}>T_{n,i}$. Let $\beta_{\min,i}\coloneqq \min_{j\in S_i} |[\theta_i^*]_j|$, and if it satisfies 
\[
\beta_{\min,i}>T_{n,i}+C_{H,i}\Delta_{i,c},
\]
where $C_{H,i}$ is a finite constant and $\Delta_{i,c} =\|F_i(\theta_S^*,c)\|_\infty$, we have that $\operatorname{supp}(\hat\theta_i)=S_i=\operatorname{supp}(\theta_i^*)$. 
\end{corollary}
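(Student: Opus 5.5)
The plan is an implicit-function-theorem bridge between the surrogate target $\vartheta^*_{i,c}$ and the original restricted target $\theta_S^*$, followed by a triangle-inequality argument on the minimum signal.

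\textbf{Step 1 (surrogate target).} By Assumption~\ref{ass:local_restricted_regularity}, $F_i$ is $C^1$ on $U_i\times I_i$, $F_i(\theta_S^*,0)=0$, and $H_i(\theta_S^*,0)$ is nonsingular. The implicit function theorem then gives $c_{0,i}>0$, a neighborhood $U_i'\subseteq U_i$, and a unique $C^1$ map $c\mapsto\vartheta^*_{i,c}\in U_i'$ such that $F_i(\vartheta^*_{i,c},c)=0$ and $\vartheta^*_{i,0}=\theta_S^*$. Shrinking $c_{0,i}$ and $U_i'$ if needed, continuity of $H_i$ lets us assume $\|H_i(\vartheta,c)^{-1}\|_{\infty\to\infty}\le C_{H,i}/2$ uniformly on $U_i'\times[0,c_{0,i}]$, with $C_{H,i}<\infty$. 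This defines $\bar\theta^*_{i,c}=\iota_{S_i}(\vartheta^*_{i,c})$ as the surrogate population target to which Theorem~\ref{Thm: LASSO} applies.

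\textbf{Step 2 (distance between targets).} Fix $c\in[0,c_{0,i}]$. Compare the two roots at the same $c$. We have $F_i(\vartheta^*_{i,c},c)=0$ while $F_i(\theta_S^*,c)$ is generally nonzero. The mean value theorem in integral form gives
\[
0-F_i(\theta_S^*,c)=\Bigl(\int_0^1 H_i(\theta_S^*+u(\vartheta^*_{i,c}-\theta_S^*),c)\,du\Bigr)(\vartheta^*_{i,c}-\theta_S^*).
\]
The averaged Jacobian is a small perturbation of $H_i(\theta_S^*,c)$ on $U_i'$, so a Neumann-series argument (taking $U_i'$ convex and small) shows it is invertible with $\infty\to\infty$ inverse norm at most $C_{H,i}$. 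Hence
\[
\|\vartheta^*_{i,c}-\theta_S^*\|_\infty\le C_{H,i}\,\|F_i(\theta_S^*,c)\|_\infty=C_{H,i}\Delta_{i,c}.
\]
This is the step I expect to need the most care. One must check that the segment stays in $U_i'$, which follows from continuity of the implicit branch at $c=0$, and that the uniform inverse bound really holds there. If convexity is troublesome, I would instead differentiate $c\mapsto\vartheta^*_{i,c}$ along the branch, but the form above matches the stated $\Delta_{i,c}$ directly.

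\textbf{Step 3 (transfer of the signal condition).} Both $\theta_i^*$ and $\bar\theta^*_{i,c}$ vanish off $S_i$: the first by definition, the second by construction via $\iota_{S_i}$. For each $j\in S_i$,
\[
|[\bar\theta^*_{i,c}]_j|\ge|[\theta_i^*]_j|-C_{H,i}\Delta_{i,c}\ge\beta_{\min,i}-C_{H,i}\Delta_{i,c}>T_{n,i}.
\]
So $\beta^{(c)}_{\min,i}>T_{n,i}$, every coordinate of $\bar\theta^*_{i,c}$ on $S_i$ is nonzero, and $\operatorname{supp}(\bar\theta^*_{i,c})=S_i$. Theorem~\ref{Thm: LASSO}, whose threshold $T_{n,i}=3\sqrt{s_i}\lambda_n/(\alpha_1-16\tau s_i)$ is the minimum-signal bound, then gives $\operatorname{supp}(\hat\theta_i)=\operatorname{supp}(\bar\theta^*_{i,c})=S_i=\operatorname{supp}(\theta_i^*)$ on the same high-probability event.
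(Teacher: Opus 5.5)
Your proposal is correct and follows the paper's argument essentially step for step. Your Steps 1--2 reproduce Lemma~\ref{lem:local_restricted_branch} (the implicit-function branch, taken on a ball so the segment stays inside) and Lemma~\ref{lem:uniform_inverse_hessian} (the integral mean-value identity plus a Neumann-series bound giving $\|\vartheta^*_{i,c}-\theta_S^*\|_\infty\le C_{H,i}\Delta_{i,c}$), and your Step 3 is the paper's reverse-triangle-inequality transfer yielding $\beta^{(c)}_{\min,i}>T_{n,i}$ and $\operatorname{supp}(\bar\theta^*_{i,c})=S_i$, with Theorem~\ref{Thm: LASSO} taken as given for the surrogate target exactly as the paper does.
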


\begin{proof}
By Lemma~\ref{lem:uniform_inverse_hessian} and the reverse triangle inequality, for every $j\in S_i$,
\[
|[\vartheta_{i,c}^*]_j| \ge |[\theta_S^*]_j| - \|\vartheta_{i,c}^*-\theta_S^*\|_\infty \ge \beta_{\min,i}-C_{H,i}\Delta_{i,c}.
\]
Hence, we know that $\beta_{\min,i}^{(c)} \ge \beta_{\min,i}-C_{H,i}\Delta_{i,c}.$ Therefore, if we pick $\beta_{\min,i}>T_{n,i}+C_{H,i}\Delta_{i,c},$ we still have $\beta_{\min,i}^{(c)}>T_{n,i}$.  By the surrogate stabilized support-recovery theorem, we have $\operatorname{supp}(\hat\theta_i)=\operatorname{supp}(\bar\theta_{i,c}^*)$. It remains to show that $\operatorname{supp}(\bar\theta_{i,c}^*)=S_i.$ By construction, we have that $[\bar\theta_{i,c}^*]_{S_i^c}=0,$ so no false positives occur outside $S_i$. On the other hand, for every $j\in S_i$,
\[
|[\vartheta_{i,c}^*]_j| \ge \beta_{\min,i}-C_{H,i}\Delta_{i,c} > T_{n,i}\ge 0,
\]
and in particular $|[\vartheta_{i,c}^*]_j|>0$.  Hence every active coordinate remains nonzero, so $\operatorname{supp}(\bar\theta_{i,c}^*)=S_i$. Combining the two displays yields $\operatorname{supp}(\hat\theta_i)=S_i=\operatorname{supp}(\theta_i^*).$
\end{proof} 
As discussed earlier, Corollary~\ref{cor:exact-support-original} admits a natural interpretation consistent with our previous observations. The true signal must be sufficiently strong to prevent any true parents from disappearing after stabilization, and the result makes this requirement precise through an explicit quantitative bound. The key idea of the proof is a support-transfer argument. Lemma~\ref{lem:uniform_inverse_hessian} shows that the stabilized population target remains uniformly close to the original population parameter. This closeness result is established using an \textit{implicit function theorem argument}, as developed in Lemma~\ref{lem:local_restricted_branch}. 

Having established support recovery for the population parameter, we next examine the role of the stabilization parameter $c$. In particular, the LRSC condition imposes a lower bound on 
$c$, which makes the nonemptiness of the feasible stabilization set an important question.
\begin{proposition}[Feasible and optimal choice of $c$]
\label{prop:fesibale_c}
Let $\beta_{\min,i}$ be as in Corollary~\ref{cor:exact-support-original} for node \(i\), define
\[
\mathcal C_i := \Bigl\{ c\ge c_{\min,i}: \beta_{\min,i} > T_{n,i}(c)+C_{H,i}\Delta_{i,c}c: = \phi_i(c) \Bigr\},
\]
where $c_{\min,i}\ge 9\|\theta_i^\star\|_2^2K^2$ required by Lemma~\ref{lem:RSC}. For all \(c\ge c_{\min,i}\), we have $T_{n,i}(c):=\frac{27\sqrt{s}\lambda_n}{m-16sr_n}\,c,$ and $\Delta_{i,c}=\|F_i(\theta_S^\star,c)\|_\infty \le \frac{9}{16\sqrt{3}}Kc^{-1/2}.$ If \(n\) is large enough and 
\begin{equation}
    \label{Eq:beta_suff}
    \beta_{\min,i} > \frac{\sqrt3}{16}\frac{C_{H,i}}{\|\theta_i^\star\|_2} + \frac{54\sqrt{s}\lambda_n}{m}\,c_{\min,i},
\end{equation}
then \(\mathcal C_i\neq\varnothing\). Hence, the minimum signal strength condition in Corollary~\ref{cor:exact-support-original} is attainable. Moreover, the upper bound \(\phi_i(c)\le \frac{54\sqrt{s}\lambda_n}{m}c + \frac{9}{16\sqrt{3}}C_{H,i}Kc^{-1/2}\) is uniquely minimized at
\begin{equation*}
c_{*,i} = \left( \frac{C_{H,i}Km} {192\sqrt{3}\sqrt{s}\lambda_n} \right)^{2/3}.
\end{equation*}
Accordingly, the optimal admissible choice is obtained by projecting \(c_{*,i}\) onto the admissible interval \(\mathcal C_i\).
\end{proposition}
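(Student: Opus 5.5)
The plan is to reduce everything to the explicit scalar function bounding $\phi_i$, and then run an elementary one-dimensional analysis in $c$.

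\emph{Step 1: upper bound on $\phi_i$.} For $c\ge c_{\min,i}$, Lemma~\ref{lem:RSC} supplies the LRSC constant, and with it the explicit form $T_{n,i}(c)=\frac{27\sqrt{s}\lambda_n}{m-16sr_n}c$. Since $r_n=O(\sqrt{\log p/n})$, for $n$ large enough we have $16sr_n\le m/2$. Hence $T_{n,i}(c)\le \frac{54\sqrt s\lambda_n}{m}c$.

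\emph{Step 2: bound on $\Delta_{i,c}$.} I would bound $\Delta_{i,c}$ from Eq.~\eqref{Eq:pop_F_i}. Consider the scalar map $u\mapsto cu/(u^2+c)^2$. It is maximized at $u^2=c/3$, which gives $\sup_u |cu|/(u^2+c)^2=\frac{3\sqrt3}{16}c^{-1/2}$. On the localization event, $|X_{k,j}|\le K$. This gives $\Delta_{i,c}\le C' K c^{-1/2}$, with the stated constant $\frac{9}{16\sqrt3}$ (possibly after absorbing a factor of 3 from the normalization in Lemma~\ref{lem:prop}).

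Combining the two steps yields the majorant
\[
\bar\phi_i(c):=\frac{54\sqrt s\lambda_n}{m}c+\frac{9}{16\sqrt3}C_{H,i}Kc^{-1/2}\ \ge\ \phi_i(c).
\]

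\emph{Step 3: nonemptiness of $\mathcal C_i$.} It suffices to show that $\beta_{\min,i}>\bar\phi_i(c)$ at one admissible point. I would evaluate at $c=c_{\min,i}$. The linear term gives at most $\frac{54\sqrt s\lambda_n}{m}c_{\min,i}$. For the second term, use $c_{\min,i}\ge 9\|\theta_i^\star\|_2^2K^2$, so $c_{\min,i}^{-1/2}\le (3\|\theta_i^\star\|_2K)^{-1}$. The term then becomes $\frac{9}{16\sqrt3}\cdot\frac{C_{H,i}}{3\|\theta_i^\star\|_2}=\frac{\sqrt3}{16}\frac{C_{H,i}}{\|\theta_i^\star\|_2}$, which is exactly the first summand in Eq.~\eqref{Eq:beta_suff}. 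So Eq.~\eqref{Eq:beta_suff} implies $c_{\min,i}\in\mathcal C_i$. Corollary~\ref{cor:exact-support-original} then gives the attainability claim. One subtlety is that the Corollary is stated for $c\in[0,c_{0,i}]$, so I would need $c_{\min,i}\le c_{0,i}$ or argue that the implicit-function neighborhood covers it. I expect this compatibility between the Lemma~\ref{lem:RSC} lower bound and the Assumption~\ref{ass:local_restricted_regularity} neighborhood to be the main obstacle. I would handle it by assuming it implicitly as part of ``$n$ large enough'' and the constants.

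\emph{Step 4: the minimizer.} Write $\bar\phi_i(c)=ac+bc^{-1/2}$ with $a,b>0$. Then $\bar\phi_i''(c)=\frac34 bc^{-5/2}>0$, so $\bar\phi_i$ is strictly convex on $(0,\infty)$ and diverges at both ends. Setting $a-\frac b2 c^{-3/2}=0$ gives the unique minimizer $c_*=(b/2a)^{2/3}$. Substituting $a$ and $b$ gives $b/(2a)=\frac{9C_{H,i}Km}{16\sqrt3\cdot108\sqrt s\lambda_n}=\frac{C_{H,i}Km}{192\sqrt3\sqrt s\lambda_n}$, which matches the stated $c_{*,i}$.

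Finally, convexity implies the minimum over any interval is attained at the projection of $c_{*,i}$ onto that interval. The admissible set is $\{c\ge c_{\min,i}\}$ intersected with a sublevel set of the convex $\bar\phi_i$, which is itself an interval. This justifies the projection statement.
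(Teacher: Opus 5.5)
Your proposal is correct and follows essentially the same route as the paper: the large-$n$ bound $T_{n,i}(c)\le\frac{54\sqrt s\lambda_n}{m}c$, the use of $c_{\min,i}\ge 9\|\theta_i^\star\|_2^2K^2$ to reduce the $c^{-1/2}$ term to $\frac{\sqrt3}{16}C_{H,i}/\|\theta_i^\star\|_2$, and the same calculus on $ac+bc^{-1/2}$. The paper passes through the affine majorant $g_i$ and a continuity argument to get a whole interval $[c_{\min,i},c_{\min,i}+\varepsilon_i)$, but checking $c=c_{\min,i}$ alone suffices; also, no factor of $3$ needs absorbing in Step~2, since $\frac{3\sqrt3}{16}=\frac{9}{16\sqrt3}$ exactly, and the sublevel set of $\bar\phi_i$ is only a guaranteed sub-interval of $\mathcal C_i$ (which is defined through $\phi_i$), so your projection argument should be read on that sub-interval, a point the paper itself leaves unjustified.
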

As a result, the sufficient minimum signal strength condition decomposes into two parts: a bridge part, which is uniformly controlled on the admissible local $c$-range, and a recovery part, which depends explicitly on $c$. Thus, larger $c$ helps stabilization, whereas excessively large $c$ weakens effective curvature and therefore requires a stronger minimum signal strength signal. That is the core trade-off in the theorem. Moreover, we provide a sufficient condition as a lower bound for $\beta_{\min,i}$ to ensure the admissible set $\mathcal C_i$ is nonempty. Under this condition, we provide a theory-driven optimal $c_{*,i}$, and in particular obtain the guaranteed admissible subrange $[c_{\min,i}, c_{*,i}] \subset \mathcal C_i$. Notably, since \(c_{*,i}\asymp \lambda_n^{-2/3}\), under the scaling, \(\lambda_n \asymp \sqrt{\log p / n}\), from Theorem~\ref{Thm: LASSO}, we have $c_{*,i}\asymp \left(\frac{n}{\log p}\right)^{1/3}$. This indicates that, as the sample size increases, the admissible choice of \(c\) becomes less restrictive.

\section{Numerical Experiments}

In this section, we first evaluate the performance of our algorithm on simulated nonstationary time series generated by multiplicative SDEs under both stable and unstable regimes and across two graph families: DAGs (Sec.~\ref{Sec:DAG}) and directed loop graphs (Sec.~\ref{Sec:DLP}). To keep the simulation section concise, we present representative results in the main text: DAGs under the stable system and directed loop graphs under the unstable system. Full results for all graph families and regimes are provided in Appendix~\ref{sec:full_simu}. We benchmark our method against four baselines using three evaluation metrics. We then present a scenario (Sec.~\ref{Sec:miss+c}) demonstrating the robustness of our algorithm to misspecification of the physics prior knowledge, and we examine the admissible region of the stabilization constant and the tolerance established in Sec.~\ref{Sec:c_role}.

\textbf{Experimental setup.}
We evaluate graph recovery under two synthetic graph families: random DAGs generated from an Erd\H{o}s--R\'enyi model, following a standard simulation protocol widely used in DAG-learning benchmarks \citep{pamfil2020dynotears, Zheng2018}, and directed loop graphs with reciprocal chain interactions. Nonzero causal weights are sampled from bounded uniform distributions. Given the causal matrix $A$, we generate trajectories from the stochastic dynamical model in Sec.~\ref{sec: Setup} using Euler--Maruyama discretization on $[0,5]$ with $\Delta t=0.01$. We compare \texttt{SCD} with four temporal causal discovery baselines: \texttt{DYNOTEARS}, \texttt{PCMCI}, \texttt{SCOTCH}, and \texttt{Neural-GC}. Graph recovery is evaluated using Structural Hamming Distance (SHD), True Positive Rate (TPR), and False Discovery Rate (FDR), where higher TPR and lower SHD and FDR indicate better recovery. Full experimental details are provided in Appendix~\ref{sec:app:experiment-details}.

Because our setting involves nonstationary multiplicative SDEs and directed loop structures, it differs from the assumptions underlying several standard temporal causal discovery methods, such as acyclicity, stationarity, linear VAR dynamics, or particular noise specifications. We therefore view these baselines as informative reference points under a unified evaluation protocol rather than as methods specifically designed for our model class. A detailed discussion of baseline behavior and assumption mismatches is provided in \ref{sec:full_simu}.

\subsection{Directed Acyclic Graphs} 
\label{Sec:DAG}

\begin{table}[t]
\centering
\small
\setlength{\tabcolsep}{7pt}
\renewcommand{\arraystretch}{1.02}
\caption{Graph recovery results on \textit{DAGs} (Stable System): mean $\pm$ std over 10 runs.}
\label{tab:dag_stable}
\begin{tabular}{llcccc}
\toprule
\textbf{Metric} & \textbf{Method} & $p = 5$ & $p = 10$  & $p = 15$ & $p = 20$ \\
\midrule
\multirow{5}{*}{\textbf{SHD} $\downarrow$}
& \texttt{SCD} (ours)   & $1.900 \pm 1.300 $ & $5.200 \pm 1.327$ & $19.400 \pm 1.625$ & $69.300 \pm 3.315$ \\
& \texttt{DYNOTEARS}    & $10.400 \pm 0.548$ & $50.200 \pm 6.301$ & $94.800 \pm 4.764$ & $241.200 \pm 25.470$ \\
& \texttt{PCMCI}        & $9.400 \pm 1.817$  & $41.600 \pm 11.567$ & $91.800 \pm 22.610$ & $170.400 \pm 28.510$ \\
& \texttt{SCOTCH}       & $8.400 \pm 1.949$ & $38.000 \pm 4.359$ & $85.800 \pm 14.255$ & $161.400 \pm 25.294$ \\
& \texttt{Neural-GC}    & $12.200 \pm 3.564$ & $53.000 \pm 3.742$ & $97.600 \pm 3.507$ & $215.500 \pm 7.778$ \\
\midrule
\multirow{5}{*}{\textbf{TPR} $\uparrow$}
& \texttt{SCD} (ours)   & $0.902 \pm 0.077$ & $0.847 \pm 0.049$ & $0.896 \pm 0.031$ & $0.836 \pm 0.016$ \\
& \texttt{DYNOTEARS}    & $0.407 \pm 0.051$ & $0.095 \pm 0.040$ & $0.196 \pm 0.034$ & $0.221 \pm 0.059$ \\
& \texttt{PCMCI}        & $0.481 \pm 0.062$ & $0.225 \pm 0.081$ & $0.272 \pm 0.070$ & $0.332 \pm 0.041$ \\
& \texttt{SCOTCH}       & $0.563 \pm 0.147$ & $0.117 \pm 0.066$ & $0.190 \pm 0.049$ & $0.322 \pm 0.030$ \\
& \texttt{Neural-GC}    & $0.356 \pm 0.158$ & $0.142 \pm 0.047$ & $0.247 \pm 0.038$ & $0.318 \pm 0.027$ \\
\midrule
\multirow{5}{*}{\textbf{FDR} $\downarrow$}
& \texttt{SCD} (ours)   & $0.111 \pm 0.111$ & $0.175 \pm 0.047$ & $0.294 \pm 0.013$ & $0.371 \pm 0.037$ \\
& \texttt{DYNOTEARS}    & $0.622 \pm 0.127$ & $0.750 \pm 0.117$ & $0.722 \pm 0.059$ & $0.607 \pm 0.103$ \\
& \texttt{PCMCI}        & $0.356 \pm 0.214$ & $0.388 \pm 0.227$ & $0.592 \pm 0.156$ & $0.613 \pm 0.168$ \\
& \texttt{SCOTCH}       & $0.422 \pm 0.093$ & $0.787 \pm 0.130$ & $0.773 \pm 0.138$ & $0.716 \pm 0.109$ \\
& \texttt{Neural-GC}    & $0.556 \pm 0.208$ & $0.512 \pm 0.248$ & $0.533 \pm 0.158$ & $0.414 \pm 0.105$ \\
\bottomrule
\end{tabular}
\end{table}

We first evaluate the performance of our method on DAGs by varying the number of variables as  $p \in \{5,10,15,20\}$. The stable-system results are reported in  Table~\ref{tab:dag_stable}, and the unstable-system results are provided in Appendix~\ref{sec:full_simu}. We report graph recovery for the additional coupling component,  i.e., the unknown graph embedded in the diffusion term $A$. To ensure a fair comparison, we exclude from all baseline estimates the edges specified by the prior knowledge.

From Table~\ref{tab:dag_stable}, we observe that SHD increases for all methods as the dimension  grows from $p=5$ to $p=20$. This trend is expected, since the number of possible directed edges scales on the order of $p^2$, making graph recovery progressively more challenging at higher dimensions. Across all dimensions, \texttt{SCD} achieves substantially lower SHD than the baselines. For example, when $p=20$, \texttt{SCD} attains SHD $69.300$, whereas the baseline methods range from $161.400$ to $241.200$. Moreover, \texttt{SCD} maintains high TPR, between $0.836$ and $0.902$, while keeping FDR comparatively low. In particular, at $p=20$, \texttt{SCD} has FDR $0.371$, compared with baseline FDR values ranging from $0.414$ to $0.716$; at $p=5$, \texttt{SCD} has FDR $0.111$, compared with baseline values ranging from $0.356$ to $0.622$. These results indicate that \texttt{SCD} recovers a large fraction of true edges while controlling false discoveries more effectively than the competing methods.

\subsection{Directed Loop Graphs}
\label{Sec:DLP}
We further evaluate performance on \textit{directed loop graphs} under both stable and unstable regimes. The unstable-system results are reported in Table~\ref{tab:loop_unstable}, while the corresponding stable-system results are provided in Appendix~\ref{sec:full_simu}. These structures are motivated by feedback-loop patterns that arise in many dynamical systems \citep{mooij2013}. As before, the results evaluate recovery of the additional coupling component, i.e., the unknown graph embedded in the diffusion term $A$.

\begin{table}[t]
\centering
\small
\setlength{\tabcolsep}{7pt}
\renewcommand{\arraystretch}{1.02}
\caption{Graph recovery results on \textit{Directed loop graphs} (Unstable System): mean $\pm$ std over 10 runs.}
\label{tab:loop_unstable}
\begin{tabular}{llcccc}
\toprule
\textbf{Metric} & \textbf{Method} & $p = 5$ & $p = 10$  & $p = 15$ & $p = 20$ \\
\midrule

\multirow{5}{*}{\textbf{SHD} $\downarrow$}
& \texttt{SCD} (ours)   & $0.800 \pm 0.400$ & $1.900 \pm 1.053$ & $6.500 \pm 3.590$ & $13.500 \pm 4.153$ \\
& \texttt{DYNOTEARS}    & $9.800 \pm 1.643$ & $29.800 \pm 5.541$ & $44.800 \pm 16.115$ & $58.600 \pm 6.269$ \\
& \texttt{PCMCI}        & $8.600 \pm 1.140$ & $29.800 \pm 3.271$ & $81.300 \pm 3.055$ & $135.600 \pm 15.258$ \\
& \texttt{SCOTCH}       & $11.200 \pm 1.789$ & $38.400 \pm 3.912$ & $68.400 \pm 10.065$ & $119.800 \pm 18.102$ \\
& \texttt{Neural-GC}    & $10.700 \pm 1.258$ & $36.200 \pm 13.226$ & $56.000 \pm 13.614$ & $89.200 \pm 5.263$ \\
\midrule

\multirow{5}{*}{\textbf{TPR} $\uparrow$}
& \texttt{SCD} (ours)   & $0.978 \pm 0.044$ & $0.959 \pm 0.037$ & $0.950 \pm 0.064$ & $0.902 \pm 0.057$ \\
& \texttt{DYNOTEARS}    & $0.167 \pm 0.236$ & $0.150 \pm 0.109$ & $0.098 \pm 0.075$ & $0.162 \pm 0.072$ \\
& \texttt{PCMCI}        & $0.477 \pm 0.078$ & $0.251 \pm 0.040$ & $0.141 \pm 0.026$ & $0.099 \pm 0.016$ \\
& \texttt{SCOTCH}       & $0.326 \pm 0.081$ & $0.149 \pm 0.044$ & $0.114 \pm 0.039$ & $0.121 \pm 0.025$ \\
& \texttt{Neural-GC}    & $0.306 \pm 0.130$ & $0.240 \pm 0.017$ & $0.153 \pm 0.050$ & $0.133 \pm 0.031$ \\
\midrule

\multirow{5}{*}{\textbf{FDR} $\downarrow$}
& \texttt{SCD} (ours)   & $0.075 \pm 0.061$ & $0.062 \pm 0.043$ & $0.183 \pm 0.097$ & $0.274 \pm 0.104$ \\
& \texttt{DYNOTEARS}    & $0.800 \pm 0.274$ & $0.811 \pm 0.178$ & $0.893 \pm 0.116$ & $0.858 \pm 0.076$ \\
& \texttt{PCMCI}        & $0.575 \pm 0.143$ & $0.667 \pm 0.111$ & $0.619 \pm 0.109$ & $0.679 \pm 0.096$ \\
& \texttt{SCOTCH}       & $0.625 \pm 0.177$ & $0.744 \pm 0.145$ & $0.771 \pm 0.135$ & $0.663 \pm 0.073$ \\
& \texttt{Neural-GC}    & $0.562 \pm 0.375$ & $0.542 \pm 0.295$ & $0.750 \pm 0.179$ & $0.747 \pm 0.094$ \\
\bottomrule
\end{tabular}
\end{table}

We observe similar trends on directed loop graphs under the unstable system. As $p$ increases from $5$ to $20$, SHD generally increases across methods, reflecting the increasing difficulty of graph recovery in higher dimensions. Nevertheless, \texttt{SCD} consistently achieves the lowest SHD while maintaining very high TPR across all dimensions. In particular, \texttt{SCD} attains TPR between $0.902$ and $0.978$, indicating strong sensitivity to true edges even in the presence of feedback loops. Its FDR remains comparatively low, increasing from $0.075$ at $p=5$ to $0.274$ at $p=20$, which is consistent with the harder high-dimensional setting.

In contrast, the baselines tend to suffer from markedly higher FDR and substantially lower TPR, especially at larger $p$. For example, at $p=20$ in Table~\ref{tab:loop_unstable}, \texttt{SCD} attains SHD $13.5$ with TPR $0.902$ and FDR $0.274$. By comparison, the baseline methods have SHD values ranging from $58.6$ to $135.6$, TPR below $0.17$, and FDR ranging from $0.663$ to $0.858$. Relative to the DAG experiments, the loop-graph setting is generally more challenging for the baselines, which is consistent with a model-assumption mismatch. Several baselines are primarily designed for acyclic, lagged, or stationarity-based causal structures, whereas directed loop graphs contain intrinsic feedback interactions that are not fully captured by purely acyclic formulations. The corresponding stable-loop results are provided in Appendix~\ref{sec:full_simu} and show the same qualitative pattern.

\subsection{ODE Misspecification and Role of c}
\label{Sec:miss+c}
\begin{figure}[tbp]
    \centering
    \includegraphics[width=0.5\linewidth]{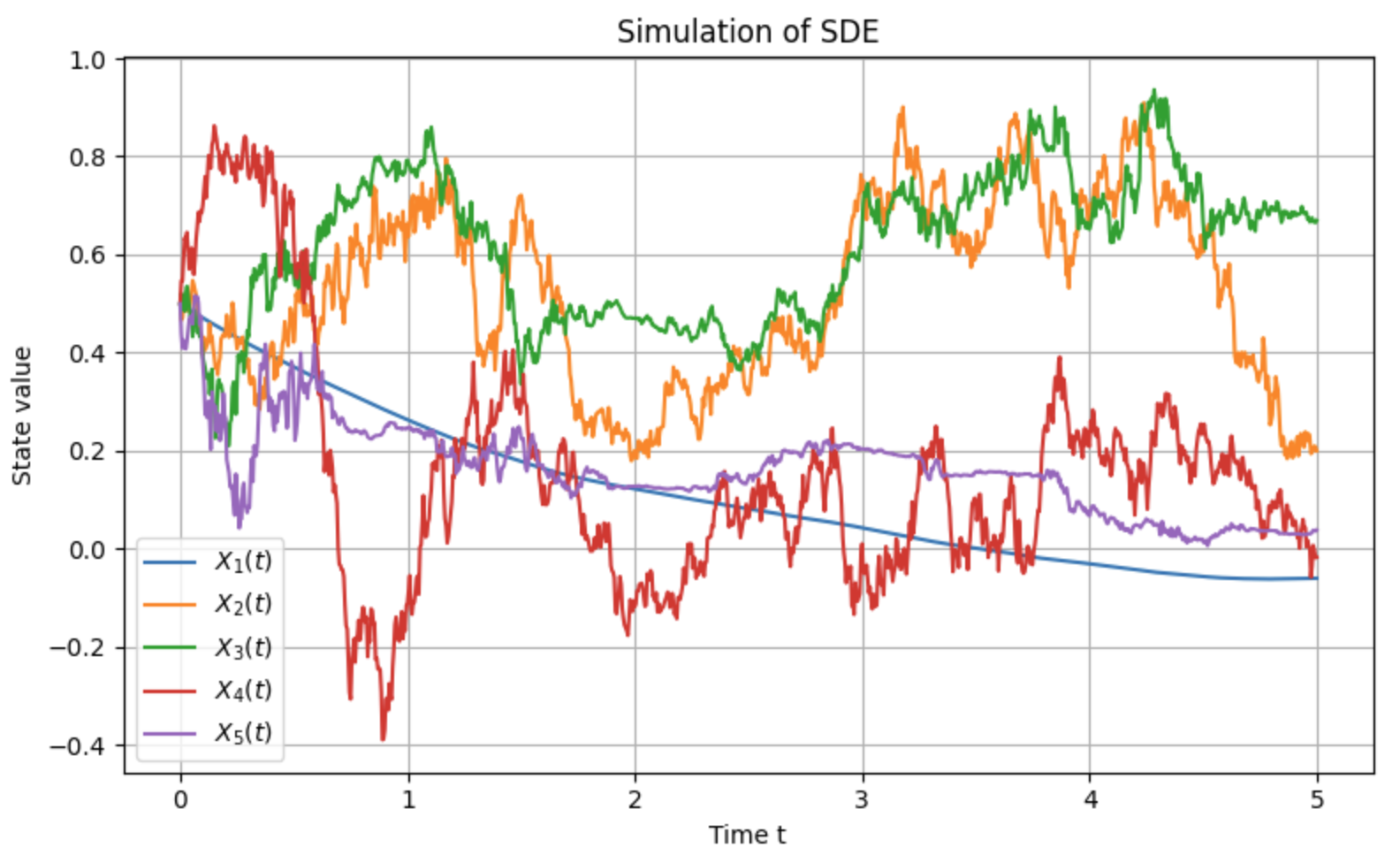}
    \caption{Visualization of the simulated dataset generated under an unstable regime with directed loops in the diffusion graph and feedback loops and self-loops in the ODE component. The resulting time series is nonstationary.}
       \label{fig: blockSDE}
\end{figure}

As shown in Sec~\ref{Sec: ODE miss}, our method does not require ODE information when it is unavailable. In this section, we further examine the robustness of our algorithm under ODE (drift) misspecification. Specifically, we use the nominal drift $g(x_k,\gamma,t) \approx Bx_k + b$, while the true drift deviates from this form by a deterministic additive perturbation, that is $g^{\star}(x_k) = Bx_k + b + \delta_k,$ where $\|\delta_k\|_2 \le \varepsilon_{\mathrm{mis}}.$ In this section, we consider a five-dimensional system under an unstable SDE regime with directed loops in both the diffusion graph and the ODE component. Specifically, in the ODE prior, we specify cross-interactions between two variables, forming a two-node feedback loop: $x_1 \to x_2$ and $x_2 \to x_1$. Thus, ODE information is provided for only $2$ out of $5$ variables, and the specified ODE contains both feedback cycles and self-loops. Accordingly, the ODE drift matrix $B$ has a non-diagonal $2\times2$ block in its top-left corner, while all remaining entries are set to zero (i.e., the other variables have no prescribed ODE dynamics). This choice yields intentionally weak mechanistic guidance for the remaining variables.  A visualization of the simulated data is shown in Fig.\ref{fig: blockSDE}, and the resulting time series remains nonstationary. We first report the results for the no-misspecification case and compare our method with four benchmarks. The results are reported in Tab.\ref{tab:picd_d5_compact}. We observe that \texttt{SCD} achieves the lowest SHD, the lowest FDR, and the highest TPR among all benchmarks.

\begin{table}[tbp]
\centering
\caption{Graph recovery results for directed loop diffusion graphs under an unstable regime, with an ODE prior containing feedback loops and self-loops (mean $\pm$ std over 10 runs).}
\label{tab:picd_d5_compact}
\begin{tabular}{lccc}
\toprule
\textbf{Method} & \textbf{SHD} $\downarrow$ & \textbf{TPR} $\uparrow$ & \textbf{FDR} $\downarrow$ \\
\midrule
\texttt{SCD} (ours)  & $1.100 \pm 1.044$ & $0.903 \pm 0.085$ & $0.025 \pm 0.075$ \\
\texttt{DYNOTEARS} & $8.000 \pm 0.471$ & $0.150 \pm 0.337$ & $0.975 \pm 0.053$ \\
\texttt{PCMCI} & $9.300 \pm 1.947$ & $0.422 \pm 0.115$ & $0.588 \pm 0.156$ \\
\texttt{SCOTCH } & $10.200 \pm 1.483$ & $0.354 \pm 0.099$ & $0.625 \pm 0.177$ \\
\texttt{Neural-GC} & $9.100 \pm 2.079$ & $0.459 \pm 0.222$ & $0.662 \pm 0.167$ \\
\bottomrule
\end{tabular}
\end{table}

\begin{table}[tbp]
\centering
\caption{Stress test under drift misspecification (mean $\pm$ std over 10 runs).}
\label{tab:misspec_stress}
\begin{tabular}{rccc}
\hline
\textbf{Miss-entry \%} & \textbf{SHD} $\downarrow$ & \textbf{FDR} $\downarrow$ & \textbf{TPR} $\uparrow$ \\
\hline
25\%  & $1.700 \pm 1.900$ & $0.025 \pm 0.075$ & $0.856 \pm 0.140$ \\
50\%  & $1.700 \pm 1.900$ & $0.025 \pm 0.075$ & $0.856 \pm 0.140$ \\
75\%  & $1.800 \pm 2.135$ & $0.025 \pm 0.075$ & $0.850 \pm 0.150$ \\
100\% & $1.900 \pm 1.921$ & $0.025 \pm 0.075$ & $0.839 \pm 0.144$ \\
\hline
\end{tabular}
\end{table}

The results are reported in Tab.\ref{tab:misspec_stress}. We observe that with one misspecified drift edge, performance degrades slightly (TPR drops from $0.90$ to $\approx 0.85$) and then stabilizes when two edges are misspecified. Moreover, when three or four drift edges are misspecified, the FDR remains unchanged and the SHD increases only mildly (from $1.7$ to $1.8$ and $1.9$), while the TPR decreases from $0.856$ to $0.850$ and $0.839$. Overall, these results indicate that our algorithm is robust for parent selection under moderate ODE misspecification. After presenting the robustness results, we turn to the role of the stabilization parameter \(c\). Motivated by Proposition~\ref{prop:fesibale_c}, we first conduct a sensitivity analysis over a range of \(c\) values to show that the performance of our algorithm is not driven by a delicately tuned choice of \(c\). We next examine empirically whether larger \(n\) makes the admissible choice of \(c\) less restrictive. The experimental setup is based on the stable DAG diffusion with \(p=5\). 

From Fig.\ref{fig:c_n_tradeoff}, we observe that the recovery performance is best over a moderate range of \(c\) values. When \(c\) is too small, SHD is high while TPR is relatively low, indicating poor graph recovery. As \(c\) increases to a moderate range, SHD and FDR decrease substantially and TPR approaches one, suggesting stable recovery performance. However, when \(c\) becomes excessively large, performance deteriorates again: SHD and FDR increase, while TPR drops sharply. Overall, the figure indicates that the proposed method is not driven by a narrowly tuned choice of \(c\), but rather performs well over a nontrivial interval of moderate values. The empirical results are also consistent with the theoretical picture described in Proposition~\ref{prop:fesibale_c}. When \(c\) is too small, the local RSC condition may fail, and graph recovery deteriorates accordingly. When \(c\) becomes excessively large, the signal induced by the true parameters is overly diluted; without a corresponding increase in the true parameter strength (followed by the minimum signal strength condition), the recovery result again worsens.

\begin{figure}[tbp]
    \centering
    \includegraphics[width=1.0\linewidth]{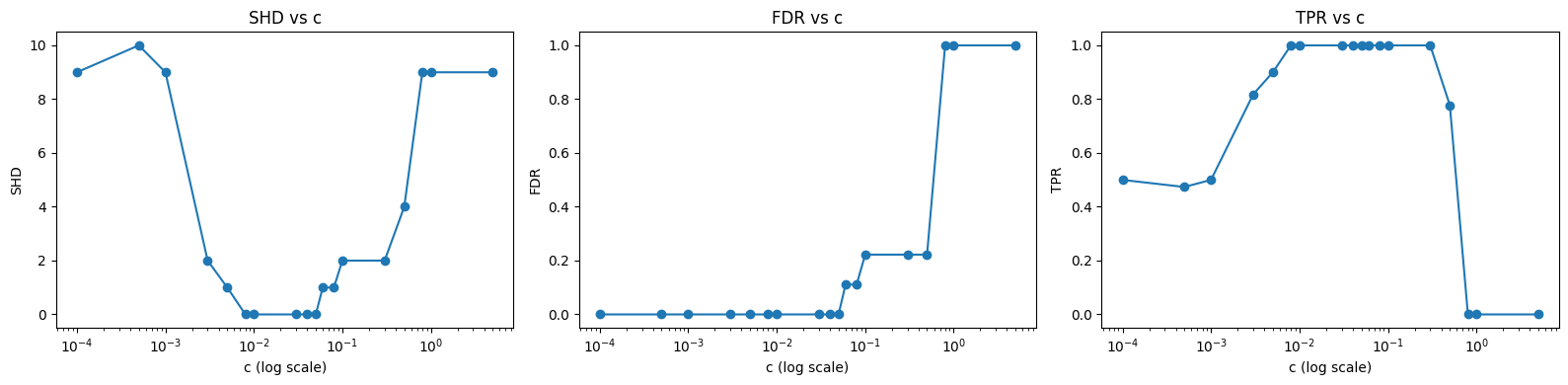}
    \caption{Recovery performance versus the stabilization parameter $c$ at $n=500$.}
    \label{fig:c_n_tradeoff}
\end{figure}

From Fig.\ref{fig:c_n_tradeoff}, we also observe that our previous choice of \(c\) in the stable SDE setting with DAG diffusion is not the best-performing choice. In particular, recovery performance degrades sharply when \(c>0.5\). We therefore examine whether increasing the sample size \(n\) can make the admissible choice of \(c = 0.5\) less restrictive. For this purpose, we consider additional values of \(c\), including a smaller value \(c=0.3\) and a larger value \(c=0.8\). In this experiment, we consider four sample sizes, \(n \in \{500,1000,1500,2000\}\). 

The full results are reported in Tab.\ref{tab:c_n_tradeoff}. We observe that, for each fixed value of \(c\), recovery performance generally improves as \(n\) increases: SHD and FDR tend to decrease, while TPR tends to increase. In addition, the degradation caused by larger \(c\) becomes less severe when \(n\) is larger. For example, when \(c=0.8\), SHD decreases from \(7.2\) to \(5.4\), FDR decreases from \(0.800\) to \(0.508\), and TPR increases from \(0.600\) to \(0.719\) as \(n\) grows from \(500\) to \(2000\). These results provide empirical support for the prediction of Proposition~\ref{prop:fesibale_c} that, as the sample size \(n\) increases, the guaranteed admissible upper end of \(c\) becomes less restrictive.

\begin{table}[tbp]
\centering
\caption{Empirical recovery performance under different stabilization parameters $c$ and sample sizes $n$. Results are reported as mean $\pm$ standard deviation over 10 runs.}
\label{tab:c_n_tradeoff}
\begin{tabular}{llcccc}
\toprule
$c$ & Metric & $n=500$ & $n=1000$ & $n=1500$ & $n=2000$ \\
\midrule
\multirow{3}{*}{0.3}
& \textbf{SHD} $\downarrow$ & $3.400 \pm 1.497$ & $3.300 \pm 0.881$ & $3.000 \pm 0.500$ & $2.700 \pm 0.745$ \\
& \textbf{FDR} $\downarrow$ & $0.289 \pm 0.074$ & $0.286 \pm 0.055$ & $0.278 \pm 0.056$ & $0.244 \pm 0.067$ \\
& \textbf{TPR} $\uparrow$& $0.898 \pm 0.123$ & $0.908 \pm 0.089$ & $0.935 \pm 0.065$ & $0.955 \pm 0.060$ \\
\midrule
\multirow{3}{*}{0.5}
& \textbf{SHD} $\downarrow$ & $4.300 \pm 1.479$ & $4.000 \pm 1.581$ & $3.500 \pm 0.500$ & $3.300 \pm 0.943$ \\
& \textbf{FDR} $\downarrow$ & $0.361 \pm 0.182$ & $0.361 \pm 0.092$ & $0.306 \pm 0.048$ & $0.296 \pm 0.052$ \\
& \textbf{TPR} $\uparrow$& $0.882 \pm 0.118$ & $0.887 \pm 0.121$ & $0.897 \pm 0.060$ & $0.905 \pm 0.067$ \\
\midrule
\multirow{3}{*}{0.8}
& \textbf{SHD} $\downarrow$ & $7.200 \pm 1.470$ & $7.000 \pm 1.095$ & $6.600 \pm 1.497$ & $5.400 \pm 1.841$ \\
& \textbf{FDR} $\downarrow$ & $0.800 \pm 0.163$ & $0.733 \pm 0.151$ & $0.689 \pm 0.191$ & $0.508 \pm 0.229$ \\
& \textbf{TPR} $\uparrow$& $0.600 \pm 0.490$ & $0.710 \pm 0.369$ & $0.717 \pm 0.371$ & $0.719 \pm 0.306$ \\
\bottomrule
\end{tabular}
\end{table}

\section{Application}
\label{Sec:APP}
In this section, we first present two studies based on real-world dynamical systems. We then show the reconstruction of a stochastic SIR model using real COVID-19 data.
\subsection{Stochastic Lotka-Volterra System}
This experiment demonstrates that the proposed method does not require fully specified governing equations. Instead, it can leverage partial mechanistic structure while remaining robust to omitted drift components, which is a practically important regime in scientific discovery problems. We begin with the two-dimensional stochastic Lotka Volterra system
\begin{equation}
\label{eq:lv_sde_basic}
\begin{aligned}
dV_t &= \bigl(aV_t - bV_tR_t\bigr)\,dt + \sigma_1(V_t)\,dW_{1,t},\\
dR_t &= \bigl(cV_tR_t - dR_t\bigr)\,dt + \sigma_2(R_t)\,dW_{2,t},
\end{aligned}
\end{equation}
where $V_t$ and $R_t$ denote the prey and predator populations, respectively, and $W_t=(W_{1,t},W_{2,t})^\top$ is a two-dimensional Brownian motion. Then, we introduce the following equations: $\sigma_1(X_t) = f_1\big(\phi_1(P_{1,t}), \phi_2(P_{2,t})\big)$ and $\sigma_2(X_t) = f_2\big(\phi_3(P_{1,t}), \phi_4(P_{2,t})\big)$, where $P_1,P_2$ are two variables introduced into the system, and each $\phi_i$ is either a linear function or a nonlinear dictionary element. In our setting, we assume that $f_1, f_2$ are additive, that is,  $f_i(u,v)= u+v$. Then, to study causal discovery under partially known mechanistic dynamics, we augment the system with two additional components and consider the four-dimensional process such that $X_t = (V_t, R_t, P_{1,t}, P_{2,t})^\top \in \mathbb{R}^4$, and the true data-generating dynamics are given by
\begin{equation}
\label{eq:true_augmented_lv_sde}
    dX_t = b_{\mathrm{true}}(X_t)dt + \bigl((A^*)^\top \phi (X_t)\bigr)\odot dW_t,
\end{equation}
where $b_{\mathrm{true}}(X_t) = (aV_t- bV_tR_t,cV_tR_t - dR_t, eP_{1,t},fP_{2,t})^\top$, $\phi(X_t)$ denotes the dictionary-transformed state as defined in Sec~\ref{Sec: Nonlinear}, $W_t \in \mathbb{R}^4$ is a four-dimensional Brownian motion, and $A^*\in \mathbb R^{4\times4}$ which is the adjacency matrix of either simulated DAG and direct loop graph with a zero upper-left $2\times 2$ block. In our experiments, we set $a=1.5,b=1, c=1, d=3, e=f=-2$. The ODE information in the data-generating process is computed using the Runge--Kutta method \citep{butcher2016numerical}. Because our SDE-based estimator defines residuals under the Euler--Maruyama discretization, \textit{the estimator operates under both partial mechanistic misspecification and discretization mismatch}. A visualization of a generated dataset is shown in Fig.\ref{fig:LVSDE}. The left panel presents the phase portrait of the observed causally perturbed prey--predator coordinates, while the right panel shows the time trajectories of the full state.
\begin{figure}
    \centering
    \includegraphics[width=0.8\linewidth]{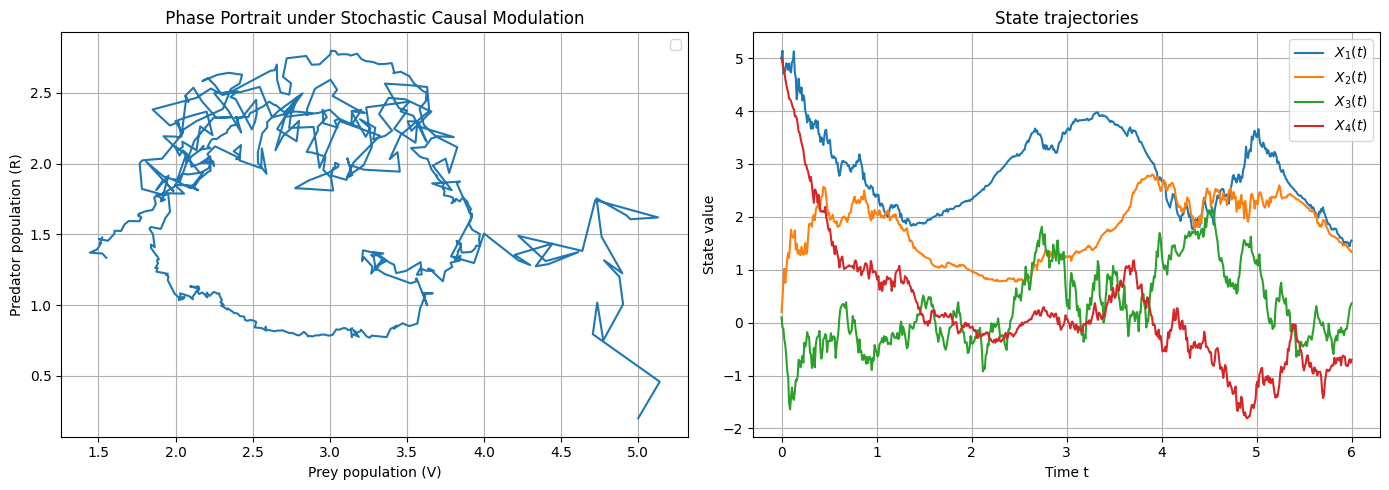}
    \caption{Visualization of a representative sample path from the stochastic Lotka--Volterra system, illustrating its nonstationary temporal behavior. The visualization is generated under a linear diffusion setting with a loop graph structure.}
    \label{fig:LVSDE}
\end{figure}
\begin{table}[t]
\centering
\caption{Performance comparison on the LV system under different graph and functional settings. All results are reported as mean $\pm$ standard deviation over 10 runs.}
\label{tab:lv_results}
\resizebox{0.8\textwidth}{!}{%
\begin{tabular}{llcccc}
\toprule
\multirow{2}{*}{\textbf{Metric}} & \multirow{2}{*}{\textbf{Method}} 
& \multicolumn{2}{c}{DAG} 
& \multicolumn{2}{c}{Loop Graph} \\
\cmidrule(lr){3-4} \cmidrule(lr){5-6}
& & Linear & Nonlinear & Linear & Nonlinear \\
\midrule

\multirow{5}{*}{\textbf{SHD} $\downarrow$}
& \texttt{SCD} (ours)       & $0.800 \pm 0.777$ & $1.300 \pm 0.579$ & $1.600 \pm 0.800$ & $2.700 \pm 1.030$ \\
& \texttt{PCMCI}      & $6.900 \pm 1.345$ & $7.100 \pm 1.595$ & $6.600 \pm 2.011$ & $7.000 \pm 1.000$ \\
& \texttt{DYNOTEARS}  & $9.600 \pm 0.787$ & $9.300 \pm 0.949$ & $8.100 \pm 1.287$ & $7.300 \pm 1.380$ \\
& \texttt{Neural-GC}        & $8.100 \pm 1.464$ & $8.800 \pm 0.463$ & $6.100 \pm 0.316$ & $6.100\pm 0.378$ \\
& \texttt{SCOTCH}     & $2.500 \pm 2.014$ & $5.000 \pm 1.871$ & $6.100 \pm 0.876$ & $4.800 \pm 2.950$ \\
\midrule

\multirow{5}{*}{\textbf{TPR} $\uparrow$}
& \texttt{SCD} (ours)        & $0.901 \pm 0.105$ & $0.899 \pm 0.093$ & $0.900 \pm 0.122$ & $0.811 \pm 0.149$ \\
& \texttt{PCMCI}      & $0.342 \pm 0.103$ & $0.301 \pm 0.126$ & $0.451 \pm 0.130$ & $0.437 \pm 0.072$ \\
& \texttt{DYNOTEARS}  & $0.071 \pm 0.089$ & $0.127 \pm 0.093$ & $0.258 \pm 0.195$ & $0.384 \pm 0.115$ \\
& \texttt{Neural-GC}        & $0.340 \pm 0.077$ & $0.341 \pm 0.014$ & $0.495 \pm 0.014$ & $0.494 \pm 0.017$ \\
& \texttt{SCOTCH}      & $0.548 \pm 0.422$ & $0.573 \pm 0.292$ & $0.415 \pm 0.236$ & $0.390 \pm 0.261$ \\
\midrule

\multirow{5}{*}{\textbf{FDR} $\downarrow$}
& \texttt{SCD} (ours)        & $0.034 \pm 0.076$ & $0.148 \pm 0.124$ & $0.133 \pm 0.103$ & $0.190 \pm 0.165$ \\
& \texttt{PCMCI}      & $0.629 \pm 0.138$ & $0.660 \pm 0.165$ & $0.417 \pm 0.239$ & $0.429 \pm 0.163$ \\
& \texttt{DYNOTEARS}  & $0.914 \pm 0.107$ & $0.840 \pm 0.126$ & $0.717 \pm 0.236$ & $0.500 \pm 0.304$ \\
& \texttt{Neural-GC}        & $0.343 \pm 0.151$ & $0.200 \pm 0.000$ & $0.017 \pm 0.053$ & $0.024 \pm 0.063$ \\
& \texttt{SCOTCH}      & $0.380 \pm 0.274$ & $0.680 \pm 0.110$ & $0.800 \pm 0.153$ & $0.533 \pm 0.342$ \\
\bottomrule
\end{tabular}%
}
\end{table}
 During the learning procedure, we assume that only the Lotka--Volterra component of the drift is known at inference time to evaluate robustness to incomplete mechanistic knowledge. Specifically, the estimator is provided with the reduced drift, that is, $b_{\mathrm{known}}(X_t) = (aV_t - bV_tR_t, cV_tR_t - dR_t,0,0)^\top$, so that the stable drift terms in the last two coordinates are omitted during estimation. The goal is to recover the support of the unknown diffusion-interaction matrix $A^*$ from discrete observations generated from Eq.\eqref{eq:true_augmented_lv_sde}.

The results are reported in Tab.\ref{tab:lv_results}. Across all four settings, we observe that \texttt{SCD} achieves the smallest SHD, indicating the most accurate overall graph recovery. In the DAG setting, \texttt{SCD} attains SHD 0.800 and 1.300 under the linear and nonlinear specifications, respectively, improving substantially over the best competing baseline, \texttt{SCOTCH} , with corresponding SHD values 2.500 and 5.000. A similar pattern holds for loop graphs, where \texttt{SCD} again yields the lowest SHD (1.600 for linear and 2.700 for nonlinear), while all competing methods incur substantially larger reconstruction error. These results indicate that the proposed method remains accurate under both nonlinear stochastic modulation and cyclic graph structure. This improvement is supported by a favorable TPR--FDR tradeoff. \texttt{SCD} maintains high TPR across all settings (0.901, 0.899, 0.900, and 0.811) while keeping FDR relatively low (0.034, 0.148, 0.133, and 0.190). Overall, the deterioration from DAG to loop graphs and from linear to nonlinear settings is modest relative to the baselines, providing empirical evidence that \texttt{SCD} is robust to graph complexity and nonlinear diffusion effects under partially specified mechanistic dynamics.

\subsection{Stochastic Chaotic Lorenz System}
To further demonstrate that the proposed method does not rely on fully specified governing equations, we consider the stochastic chaotic Lorenz system. Specifically, we begin with the classical three-dimensional stochastic chaotic Lorenz system.
\begin{equation}
    \begin{split}
        dx_t &= \alpha (y_t - x_t)\,dt + \sigma_1(x_t)\,dW_{1,t}, \\
dy_t &= \bigl(x_t(\rho - z_t) - y_t\bigr)\,dt + \sigma_2(y_t)\,dW_{2,t}, \\
dz_t &= \bigl(x_t y_t - \beta z_t\bigr)\,dt + \sigma_3(z_t)\,dW_{3,t},
    \end{split}
\end{equation}
where $(x_t,y_t,z_t)$ denotes the system state, $(\alpha,\rho,\beta)$ are the Lorenz parameters, $W_t=(W_{1,t},W_{2,t},W_{3,t})^\top$ is a three-dimensional Brownian motion, and the functions $\sigma_i$ are defined analogously to those in the stochastic Lotka--Volterra system. In this experiment, we also introduce two additional variables, defined analogously to those in the Lotka--Volterra setting. We then model the true data-generating process as
\begin{equation}
\label{eq:lorenz_sde}
dX_t = b_{\mathrm{true}}(X_t)\,dt + \bigl((A^*)^\top \phi(X_t)\bigr)\odot dW_t,
\end{equation}
where $X_t=(x_t,y_t,z_t, P_{1,t}, P_{2,t})^\top \in \mathbb{R}^5$, $b_{\mathrm{true}}(X_t) = \bigl(\alpha (y_t-x_t), x_t(\rho-z_t)-y_t,\; x_t y_t-\beta z_t, eP_{1,t}, fP_{2,t}\bigr)^\top$, $W_t \in \mathbb{R}^5$ is a five-dimensional Brownian motion, and $A^*\in \mathbb R^{5\times5}$ denotes the adjacency matrix of the simulated graph, which is taken to be either a DAG or a directed-loop graph, with a zero upper-left $3\times 3$ block. In our experiments, we set $\alpha=10,\rho=28, \beta=\frac{8}{3}, e=f=-1$. A visualization of a generated dataset is shown in Fig.\ref{fig:LorenSDE}. The left panel presents the 3D phase portrait of the observed causally perturbed Lorenz coordinates, which exhibits a noisy butterfly-like phase portrait, while the right panel shows the time trajectories of the full state.

\begin{figure}
    \centering
    \includegraphics[width=0.8\linewidth]{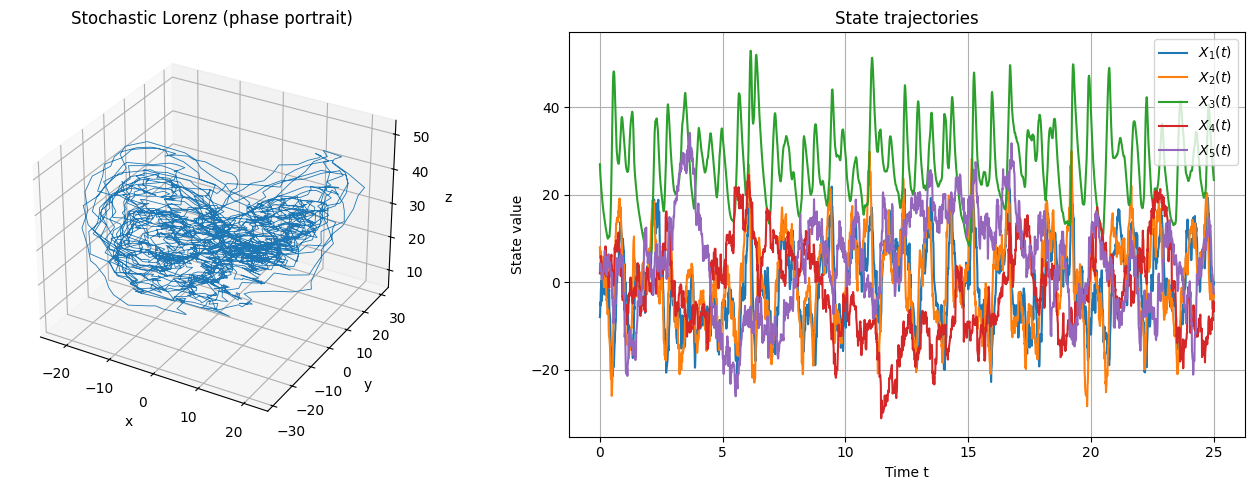}
    \caption{Visualization of a representative sample path from the stochastic chaotic Lorenz system, illustrating its nonstationary temporal behavior. The visualization is generated under a linear diffusion setting with a loop graph structure.}
    \label{fig:LorenSDE}
\end{figure}

During the learning procedure, we also remove some drift components at inference time. Specifically, the estimator is provided with a reduced Lorenz drift, that is, $b_{\mathrm{known}}(X_t) = \bigl(\alpha (y_t-x_t), x_t(\rho-z_t)-y_t,\; x_t y_t-\beta z_t,0,0)$.  The goal is to recover the support of the unknown diffusion-interaction matrix \(A^*\) from discrete observations generated according to Eq.\eqref{eq:lorenz_sde}.

\begin{table}[t]
\centering
\caption{Performance comparison on the Chaotic Lorenz system under different graph and functional settings. All results are reported as mean $\pm$ standard deviation over 10 runs. NGC is marked as NA because the public implementation does not work for our setting.}
\label{tab:lorenz_results}
\resizebox{0.8\textwidth}{!}{%
\begin{tabular}{llcccc}
\toprule
\multirow{2}{*}{\textbf{Metric}} &\multirow{2}{*}{\textbf{Method}} 
& \multicolumn{2}{c}{DAG}
& \multicolumn{2}{c}{Loop Graph} \\
\cmidrule(lr){3-4} \cmidrule(lr){5-6}
& & Linear & Nonlinear & Linear & Nonlinear \\
\midrule

\multirow{5}{*}{\textbf{SHD} $\downarrow$}
& \texttt{SCD} (ours)        & $1.200 \pm 1.249$ & $1.500 \pm 1.025$ & $2.200 \pm 1.077$ & $3.100 \pm 0.831$ \\
& \texttt{PCMCI}      & $8.400 \pm 1.430$ & $8.900 \pm 2.183$ & $10.000 \pm 1.886$ & $8.300 \pm 2.497$ \\
& \texttt{DYNOTEARS}  & $10.900 \pm 1.729$ & $7.600 \pm 2.271$ & $10.100 \pm 1.287$ & $9.100 \pm 1.197$ \\
& \texttt{Neural-GC}        & NA & NA & NA & NA \\
& \texttt{SCOTCH}      & $7.900 \pm 2.378$ & $6.700 \pm 2.359$ & $7.900 \pm 0.994$ & $6.500 \pm 1.434$ \\
\midrule

\multirow{5}{*}{\textbf{TPR} $\uparrow$}
& \texttt{SCD} (ours)        & $0.900 \pm 0.144$ & $0.908 \pm 0.160$ & $0.895 \pm 0.072$ & $0.858 \pm 0.152$ \\
& \texttt{PCMCI}      & $0.371 \pm 0.295$ & $0.357 \pm 0.301$ & $0.426 \pm 0.280$ & $0.455 \pm 0.252$ \\
& \texttt{DYNOTEARS}  & $0.043 \pm 0.069$ & $0.162 \pm 0.217$ & $0.233 \pm 0.139$ & $0.202 \pm 0.168$ \\
& \texttt{Neural-GC}        & NA & NA & NA & NA \\
& \texttt{SCOTCH}      & $0.443 \pm 0.207$ & $0.527 \pm 0.259$ & $0.460 \pm 0.192$ & $0.636 \pm 0.124$ \\
\midrule

\multirow{5}{*}{\textbf{FDR} $\downarrow$}
& \texttt{SCD} (ours)        & $0.069 \pm 0.088$ & $0.222 \pm 0.104$ & $0.171 \pm 0.131$ & $0.253 \pm 0.074$ \\
& \texttt{PCMCI}      & $0.675 \pm 0.200$ & $0.846 \pm 0.120$ & $0.690 \pm 0.130$ & $0.621 \pm 0.178$ \\
& \texttt{DYNOTEARS}  & $0.925 \pm 0.127$ & $0.863 \pm 0.169$ & $0.752 \pm 0.163$ & $0.740 \pm 0.214$ \\
& \texttt{Neural-GC}        & NA & NA & NA & NA \\
& \texttt{SCOTCH}      & $0.558 \pm 0.181$ & $0.664 \pm 0.173$ & $0.576 \pm 0.079$ & $0.469 \pm 0.146$ \\
\bottomrule
\end{tabular}%
}
\end{table}
Tab.\ref{tab:lorenz_results} shows that \texttt{SCD} again achieves the best support recovery across all four settings on the stochastic chaotic Lorenz system. It attains the smallest SHD in every case, with values 1.200, 1.500, 2.200, and 3.100 under DAG-linear, DAG-nonlinear, loop-linear, and loop-nonlinear settings, respectively. In contrast, the competing baselines exhibit substantially larger reconstruction error across all settings. This advantage is supported by a favorable TPR--FDR tradeoff: \texttt{SCD} maintains high TPR (0.900, 0.908, 0.895, and 0.858) while keeping FDR relatively low (0.069, 0.222, 0.171, and 0.253). Although performance deteriorates moderately from DAG to loop graphs and from linear to nonlinear settings, the degradation remains modest relative to the baselines.

Taken together with the stochastic Lotka--Volterra results, these findings indicate that \texttt{SCD} is consistently robust across two qualitatively different stochastic dynamical systems: an oscillatory prey--predator system and a chaotic Lorenz system. In both cases, \texttt{SCD} yields the smallest SHD together with strong recall and controlled false discovery, suggesting that the proposed method is stable across graph structure, functional form, and system complexity. \texttt{Neural-GC} is reported as NA in Tab.\ref{tab:lorenz_results} because the public implementation does not operate reliably in this setting. However, the loop-nonlinear setting is clearly the most challenging regime for \texttt{SCD}, yielding its worst SHD (3.100), lowest TPR (0.858), and highest FDR (0.253). This trend is consistent with the stochastic Lotka--Volterra results, where the nonlinear loop-graph case is likewise the hardest setting. Overall, these results suggest that SCD is robust across both oscillatory and chaotic nonstationary systems, although recovery becomes more difficult when cyclic structure and nonlinear diffusion effects are present simultaneously.

\subsection{Stochastic SIR model from COVID-19 Data}

From the previous two sections, we have shown how a known dynamical system becomes stochastic through causal diffusion modulation. For example, starting from the classical Lorenz system, the noisy and increasingly irregular butterfly-shaped trajectory can be understood from the perspective of the underlying causal interaction structure. However, in the previous sections, the diffusion term does not directly involve interactions among the coordinates appearing in the known ODE component. This raises a natural question: \textit{can the proposed framework also recover diffusion interactions that directly depend on variables from the known mechanistic system itself?}

In this section, we demonstrate that the proposed framework can infer additional stochastic structure beyond the known ODE dynamics by using real COVID data to infer a diffusion graph whose dominant dependence pattern can be compared against an epidemiologically motivated reference structure. Before introducing the data and preprocessing steps, we first describe the classical stochastic SIR formulation and the diffusion structure that we aim to learn, which is motivated by the SIR literature.

In standard epidemic models \citep{ALLEN2017128, Greenwood2009}, the susceptible compartment is often treated as dependent through the conservation law, that is, $S_t= N - I_t-R_t $, where $I_t$ denotes the infected compartment and $R_t$ denotes the removed compartment, which may include recoveries, deaths, or both. The stochastic modeling is typically written in terms of the bivariate state \((S_t,I_t)\). In particular, the stochastic terms are usually driven by the infection and recovery mechanisms, and therefore depend on quantities such as \(S_t I_t / N\) and \(I_t\). Consequently, the primary stochastic coupling is concentrated in the \(S\)- and \(I\)-equations, reflecting strong dependence of \(dS_t\) and \(dI_t\) on \(S_t\) and \(I_t\). By contrast, these standard formulations do not typically posit an explicit diffusion component in which \(R_t\) acts as an independent stochastic driver. Thus, from the perspective of the epidemic-modeling literature, the clearest prior support is for a diffusion structure dominated by the \((S,I)\) block, with no direct evidence for an explicit \(R\)-driven diffusion effect. Motivated by this structure, we consider the following stochastic SIR model for $X_t= [S_t, I_t, R_t]^\top$ such that
\begin{equation}
\label{eq:sir}
\begin{split}
   dX_t&=f(X_t)\,dt+(GX_t)\odot dW_t\quad\text{where } f(X_t)=
\begin{bmatrix}
-\beta S_t I_t \\
\beta S_t I_t - \gamma I_t\\
\gamma I_t
\end{bmatrix} \text{ and }
G = \begin{bmatrix}
        a,b, 0\\
        c,d, 0\\
        0,0,0
    \end{bmatrix}
\end{split}
\end{equation}
where $dW_t$ is a three-dimensional Brownian motion. The matrix $G$ is chosen to encode the dominant parent-set structure suggested by standard stochastic SIR models: stochastic dependence is concentrated in the $(S_t,I_t)$ block, with no explicit $R_t$-driven diffusion effect.

% The matrix $G$ is chosen to capture the key diffusion structure suggested by the authoritative stochastic SIR literature, namely that the dominant stochastic dependence is concentrated in the $(S_t,I_t)$ components, while there is no explicit evidence for $R_t$ as an independent driver in the diffusion term.

We briefly describe the data preprocessing and ODE-prior construction here, and provide the full preprocessing details in Appendix~\ref{sec:prepro}. We use the U.S. COVID-19 county-level confirmed-case and death time-series data from the Johns Hopkins University Center for Systems Science and Engineering (JHU CSSE) \citep{Dong2020AnIW}. County-level records are aggregated to the state level and aligned by date. From the cumulative confirmed cases $C_t$, we compute daily new cases and define the infectious compartment $I_t$ as a rolling sum over a fixed infectious window. The removed and susceptible compartments are then constructed as $R_t=C_t-I_t$ and $S_t=N-I_t-R_t $. We set $N=100000$ and normalize all compartments by $N$, yielding an SIR-type trajectory satisfying $S_t+I_t+R_t=1$. The study period is 2020-05-06 to 2021-05-01; the processed trajectory is shown in Fig.\ref{fig:sir}.

We then fit the classical SIR ODE in Eq.~\eqref{eq:sir} to the constructed trajectory by estimating $(\beta,\gamma)$ via least-squares optimization, using a numerical ODE solver and L-BFGS. The fitted ODE serves as an approximate mechanistic drift prior for the subsequent stochastic diffusion-structure inference. Since the compartments are reconstructed from reported case counts and the ODE parameters are estimated from data, this prior should be interpreted as approximate rather than ground truth. After preprocessing the data and preparing the misspecified physics prior, we then apply our method and compare it with benchmark methods to evaluate the inferred graph structure against a literature-motivated epidemiological reference pattern. For all methods, we apply a threshold of $0.1$ to obtain binary adjacency patterns. The resulting inferred diffusion graphs are shown in Fig.\ref{fig:SIR1}. 
% We apply our method and benchmark approaches, threshold all estimated adjacency matrices at $0.1$, and report the inferred diffusion graphs in Fig.~\ref{fig:SIR1}.
\begin{figure}[t]
    \centering
    \includegraphics[width=0.6\linewidth]{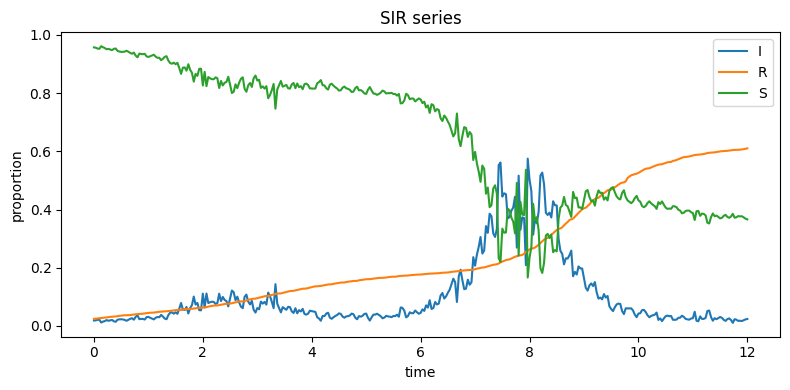}
    \caption{Visualization of the preprocessed SIR data from JHU CSSE. All components are expressed as population densities obtained by dividing by $N$.}
    \label{fig:sir}
\end{figure}
\begin{figure}[t]
\centering
\includegraphics[width=0.8\linewidth]{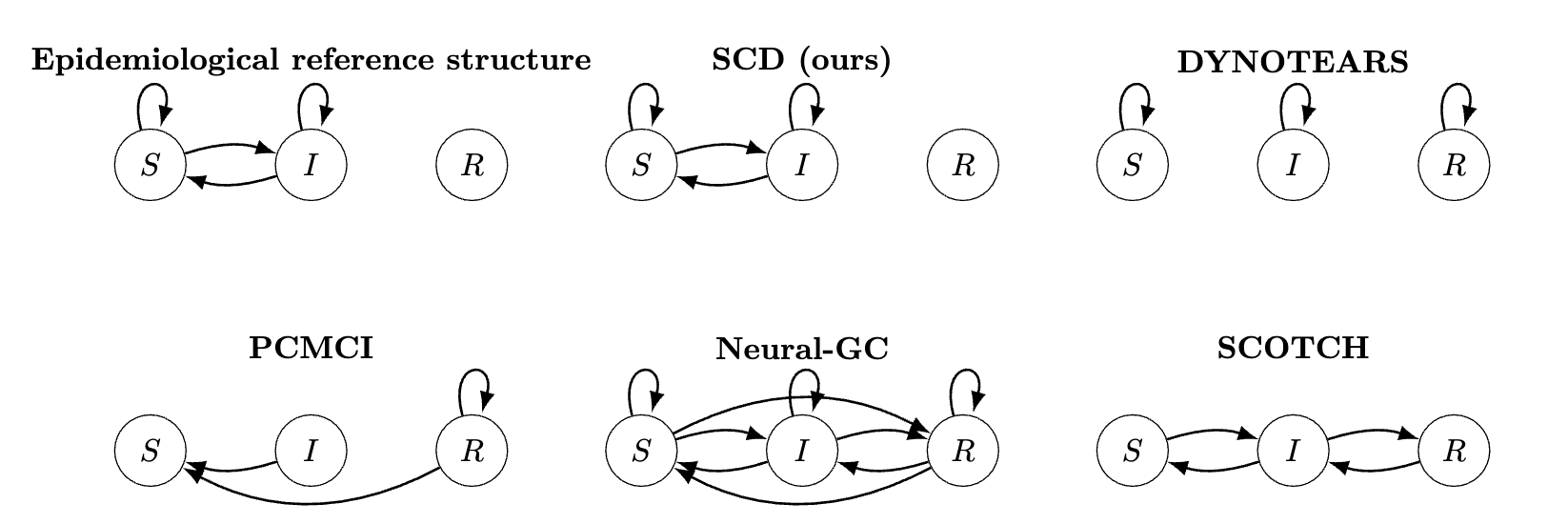}
\caption{Directed graph representations of the literature-motivated reference structure and the estimated adjacency patterns for the five methods on the COVID-19 SIR data. Here, an edge $j\to i$ is drawn whenever $A_{ij}=1$.}
\label{fig:SIR1}
\end{figure}
Fig.\ref{fig:SIR1} compares the thresholded diffusion graphs across methods on the COVID-19 SIR data. Among all methods, \texttt{SCD} is the most consistent with the literature-motivated reference structure: it yields a sparse graph concentrated in the $(S,I)$ block, with self-loops on $S$ and $I$, bidirectional coupling between these two compartments, and no explicit $R$-driven diffusion effect. This pattern also agrees with the stochastic SIR prior that the dominant stochastic dependence is driven by the susceptible and infected compartments, while the removed compartment does not act as an independent driver. By contrast, \texttt{DYNOTEARS} degenerates to self-loops only and fails to recover cross-compartment interactions, \texttt{PCMCI} introduces an $R \to S$ effect together with an $R$ self-loop, \texttt{Neural-GC} yields a nearly fully dense graph that is difficult to interpret, and \texttt{SCOTCH} recovers a chain-like structure $S \leftrightarrow I \leftrightarrow R$, suggesting additional coupling through $R$. Overall, these results indicate that SCD provides the most epidemiologically interpretable diffusion graph, and is the method most closely aligned with the literature-supported reference structure in this real-data setting.
\section{Conclusion}
\label{Sed:Dis}
We proposed a physics-informed framework for causal discovery in stochastic dynamical systems that incorporates available mechanistic knowledge through an SDE formulation. This perspective allows known ODE structure to be used as an inductive bias while explicitly modeling stochastic uncertainty in the system dynamics. Based on this formulation, we developed a learning procedure built on a stabilized Gaussian quasi-likelihood objective, which remains applicable when the mechanistic component is only partially known or misspecified. Taken together, these results position SDEs as a principled statistical framework for causal discovery when both prior physical structure and stochastic variability matter.

Several promising directions remain for future work. (1) A natural extension is to incorporate richer forms of mechanistic knowledge beyond standard ODEs, including delay differential equations, dynamical systems with time-varying parameters, and more general nonlinear functional specifications. (2) Stochastic partial differential equations (SPDEs) provide a natural next step for causal learning from spatiotemporal datasets, where spatial interactions and temporal evolution must be modeled jointly. (3) Although our current method supports certain forms of nonlinear causal discovery through functional dictionaries, developing more flexible nonlinear extensions remains an important direction. (4) Building on the same stochastic-calculus foundation, particularly It\^o's lemma, the framework could be extended toward causal representation learning by coupling latent SDE dynamics with an observation model. More broadly, our motivation is to incorporate physical knowledge into causal discovery, recognizing that causal conclusions are most reliable when the underlying scientific phenomenon is meaningfully represented by the model. Extending this idea beyond differential-equation-based formulations to richer forms of physical structure may further improve the reliability and applicability of causal discovery in scientific settings, although developing such a general and systematic integration remains challenging.

\acks{This work was supported by start-up funds from Northeastern University. The authors declare no competing interests related to financial activities outside the submitted work.}

% Manual newpage inserted to improve layout of sample file - not
% needed in general before appendices/bibliography.

\newpage

\appendix

\section{Additional Details}
\subsection{Optimization Algorithm and Convergence Justification}
\label{sec:opt_con}
we only briefly summarize the procedure below. For each node \(i\), define the row-wise objective
\[
F_i(\theta_i)=f_i(\theta_i)+\lambda\|\theta_i\|_1,
\]
where \(f_i(\theta_i)\) denotes the smooth part of Eq.\eqref{Eq:linear_pa}. The row-wise objective is optimized by proximal gradient iterations of the form
\[
\theta_i^{(m+1)}
=
\operatorname{prox}_{\eta_m\lambda\|\cdot\|_1}
\Bigl(
\theta_i^{(m)}-\eta_m \nabla f_i(\theta_i^{(m)})
\Bigr),
\]
where \(\eta_m>0\) is selected by monotone backtracking. Since \(c>0\), the stabilized smooth term \(f_i\) is continuously differentiable. In our implementation, the backtracking step accepts only updates that decrease the objective value, so that
\[
F_i(\theta_i^{(m+1)}) \le F_i(\theta_i^{(m)}).
\]
Since $F_i$ is bounded below, the objective values converge along the accepted iterates. By the standard sufficient-decrease property of proximal gradient with backtracking (see, e.g., \cite{DeMarchi2022}), we also have $\sum_{m =0}^\infty \|\theta_i^{(m+1)}-\theta_i^{(m)}\|_2^2 <\infty$, and thus $\|\theta_i^{(m+1)}-\theta_i^{(m)}\|_2\to 0$. The proximal optimality condition implies
\[
0 \in \nabla f_i\!\left(\theta_i^{(m)}\right)
+ \frac{1}{\eta_m}\left(\theta_i^{(m+1)} - \theta_i^{(m)}\right)
+ \lambda\,\partial \left\|\theta_i^{(m+1)}\right\|_1 .
\]
Therefore, any accumulation point \(\bar{\theta}_i\) of an infinite accepted iterate sequence is a first-order stationary point:
\[
0 \in \nabla f_i(\bar{\theta}_i) + \lambda\,\partial \|\bar{\theta}_i\|_1 .
\]

\subsection{More on Mixing Condition}
\label{sec:mix}
The following example is closely related to the predator--prey benchmark used in our paper. We want to note that this example is included only as a sanity check, illustrating that Assumption~\ref{ass:alpha-mixing-EM} is compatible with multiplicative-noise SDE models and can be verified under model-specific conditions. We consider a stochastic Lotka--Volterra system of the form
\[
\begin{aligned}
dV_t &=
(aV_t-bV_tR_t)\,dt+\sigma_1(V_t)\,dW_{1,t},\\
dR_t &=(cV_tR_t-dR_t)\,dt+\sigma_2(R_t)\,dW_{2,t},
\end{aligned}
\qquad (V_t,R_t)\in\mathbb R_+^2 .
\]
As a canonical multiplicative-noise special case, one may take $\sigma_1(V)=\eta_1 V, \sigma_2(R)=\eta_2 R,$ which gives
\[
\begin{aligned}
dV_t &= V_t(a-bR_t)\,dt+\eta_1 V_t\,dW_{1,t},\\
dR_t &= R_t(cV_t-d)\,dt+\eta_2 R_t\,dW_{2,t}.
\end{aligned}
\]
\textit{The diffusion coefficients are state-dependent} and degenerate on the coordinate boundaries. The factors \(V_t\) and \(R_t\) also preserve the positive orthant in the continuous-time model, making this a standard autonomous multiplicative-noise population system. For related stochastic predator--prey models, model-specific conditions are known under which the associated Markov process admits an invariant measure and has suitable long-run stability properties \citep{Huu2016}. In regimes where the fixed-\(\Delta\) skeleton chain $\{(V_{k\Delta},R_{k\Delta})\}_{k\ge 0}$ is geometrically \(\beta\)-mixing, and hence geometrically \(\alpha\)-mixing \citep{Meitz2021}. 

% From deterministic or otherwise nonstationary initial conditions, the process is generally not stationary, but geometric convergence to the invariant law provides the corresponding short-memory mechanism, with constants depending on the initial distribution or its Lyapunov moment.

\subsection{Experimental Setting Details}
\label{sec:app:experiment-details}
\noindent \textbf{Graph Generation:} \textit{For DAGs}, we generate random graphs using the \textit{Erd\H{o}s--R\'enyi (ER)} model with prescribed expected total degree \(pr\). Specifically, we set \(pr=\left\lceil \frac{p}{2}\right\rceil\), where \(p\) denotes the number of nodes. Each potential edge is sampled independently from a Bernoulli distribution with probability \(pr/p\). The resulting graph is represented as an adjacency matrix, and acyclicity is enforced by restricting the support to a strictly lower-triangular form, yielding a valid DAG. Finally, we apply a random permutation to the node labels to remove any trivial ordering effects, producing a randomized topology suitable for downstream experiments. After generating the binary adjacency matrix, we assign nonzero edge weights independently from a uniform distribution over \([-0.6,-0.3] \cup [0.3,0.6]\). \textit{For directed loop graphs}, we construct the graph on the node set \(\{0,1,\dots,p-1\}\) as follows. For each \(i=1,\dots,p-2\), we introduce reciprocal edges \(i \to i+1\) and \(i+1 \to i\), with weights sampled independently from the uniform distribution \(\mathrm{Unif}(w_{\min}, w_{\max})\), where \(w_{\min}=0.3\) and \(w_{\max}=1.3\). In addition, we connect the two chain endpoints to the sink node \(0\) by adding the edges \(1 \to 0\) and \((p-1) \to 0\), with weights drawn from the same distribution.

\noindent\textbf{Data Generation:} The data generation process involves two steps. First, we construct a weighted graph $G$ corresponding to the causal matrix $A$, and specify the ODE information corresponding to $B$ (and $b$, in the linearized form). For the causal structure, we follow the generation procedures described above for both graph families. Since our primary goal in this section is to evaluate the performance and robustness of graph inference, we use a decoupled first-order linear autonomous ODE system for the physics prior to facilitate verification of the stability condition in Sec.~\ref{sec: sde} and to clearly assess the effect of misspecification. Experiments based on real-world physics priors are deferred to Sec.~\ref{Sec:APP}. We simulate ODEs that include all self-loops (i.e., diagonal entries) and assign the weights uniformly over $[-1.5, -0.2]$. For the unstable regime, we generate the true ODE by setting a randomly selected subset of diagonal drift entries to zero. For the robustness examination, we include an additional experiment in which a subset of off-diagonal interaction terms in the ODE is included, reflecting real-world mechanistic priors that further restrict the search space. Second, given the ODE information and the specified graph structure, we generate time series data using the Euler--Maruyama approximation for the stochastic differential equations described in Sec.~\ref{sec: Setup}. Time series are generated on $[0,5]$ with 500 time steps using the Euler--Maruyama scheme ($\Delta t = 0.01$) for both the stable and unstable settings. 

\noindent\textbf{Benchmark Methods and Evaluation Metrics:} Since our method targets causal graph recovery from time-series data, we compare our method, described in Sec.~\ref{sec: method} and referred to as \texttt{SCD}, with four causal discovery methods for temporal data. The first baseline, \texttt{DYNOTEARS} \citep{pamfil2020dynotears}, is a score-based method for temporal DAG learning. The second baseline, \texttt{PCMCI} \citep{Runge2019PCMCI}, is a constraint-based method for time-series causal discovery based on conditional independence tests. The third baseline, \texttt{SCOTCH} \citep{wang2024neural}, is a variational-inference-based causal discovery method that combines neural SDEs with structure learning. The last baseline, \texttt{Neural-GC} \citep{Tank2021NGC}, is a Granger-causality-based method that uses a regularized neural network for causal discovery. We evaluate graph recovery using Structural Hamming Distance (SHD), True Positive Rate (TPR), and False Discovery Rate (FDR). SHD counts edge disagreements with the ground truth, including missing, extra, and misoriented edges, with lower values indicating better recovery. TPR measures the fraction of true edges correctly identified, whereas FDR measures the fraction of predicted edges that are false positives. Thus, higher TPR and lower SHD and FDR indicate better performance.

\noindent\textbf{Settings:} For the experimental setup, we tune the penalty parameters $\lambda$ and $c$ according to our theoretical guidance. Empirically, we find that $c=0.1$ and $\lambda=3$ perform well in low-dimensional settings. In high-dimensional settings, only a slight increase in $\lambda$ is needed. For \texttt{PCMCI} and \texttt{DYNOTEARS}, we use a one-lag ($\tau=1$) graph, since the loop structure can be represented in a lagged form under their modeling assumptions. We then aggregate the estimated lagged and contemporaneous graphs by taking the union of edges to obtain the final directed graph used for evaluation. All other settings for the remaining baselines are taken directly from the corresponding papers or their open-source implementations. After obtaining the estimated graphs, we apply thresholding to all methods. For our method, we find that a threshold in the range $[0.20, 0.25]$ works well across all settings. For \texttt{DYNOTEARS}, the recommended thresholds in the original implementation (e.g., $0.3$ for the contemporaneous graph and $0.1$ for the lagged graph) produced degenerate outputs with no selected edges in our experiments; therefore, we use smaller thresholds, namely $0.03$ for the contemporaneous graph and $0.01$ for the lagged graph. For \texttt{SCOTCH}, since its edge scores are probabilities produced by a logistic model, we threshold the output by fixing the threshold at $0.5$. All other baselines are thresholded according to their recommended settings.

We note that some baselines are developed under assumptions that do not fully align with our setting. In particular, several methods are primarily designed for DAGs or rely on acyclicity-related identifiability, whereas our directed loop graphs contain feedback loops (cyclic structures). Moreover, certain time-series baselines depend on stronger conditions such as stationarity, linear VAR dynamics, or specific noise models. When these assumptions are violated, these methods may be disadvantaged in our setting. We nevertheless report these baselines under a unified evaluation protocol as informative reference points for comparison.

\subsection{More Simulation Results}
\label{sec:full_simu}
\textbf{DAG with unstable system.} The DAG results under the unstable system are reported in Table~\ref{tab:dag_unstable}. The qualitative pattern is consistent with the stable-system results: \texttt{SCD} continues to achieve lower SHD, higher TPR, and lower FDR than the competing methods across dimensions. As $p$ increases from $5$ to $20$, the recovery problem becomes more challenging for all methods, but \texttt{SCD} remains comparatively stable. Its TPR ranges from $0.817$ to $0.894$, while its FDR ranges from $0.089$ to $0.404$. At $p=20$, \texttt{SCD} attains SHD $57.600$ and FDR $0.404$, whereas the baselines have SHD values ranging from $151.000$ to $174.200$ and FDR values ranging from $0.612$ to $0.911$. These results suggest that \texttt{SCD} remains robust even when the stability-aligned assumptions are weakened.

\textbf{Directed loop graph with stable system.} Table~\ref{tab:loop_stable} reports the directed loop graph results under the stable system. The qualitative pattern is consistent with the unstable-loop results in the main text. As $p$ increases from $5$ to $20$, SHD generally increases across methods, while \texttt{SCD} continues to achieve the lowest SHD and maintains high TPR. For example, at $p=20$, \texttt{SCD} achieves SHD $15.0$ with TPR $0.897$ and FDR $0.316$, whereas the baselines typically exhibit FDR in the range $0.73$--$0.90$, TPR below $0.13$, and substantially larger SHD. At lower dimensions, \texttt{SCD} also controls false discoveries effectively, with FDR $0.087$ at $p=5$ and $0.143$ at $p=10$. These results further support that \texttt{SCD} remains effective for recovering feedback-loop structures under both stable and unstable regimes.

Overall, the baselines exhibit substantial performance degradation in our experimental setting. \texttt{DYNOTEARS} typically shows very high FDR with low TPR, which results in large SHD. This is plausible in our setting because, although the underlying SDE dynamics are linear in form, time integration and multiplicative noise can induce dependence patterns that deviate from the stationary linear VAR-style assumptions underlying \texttt{DYNOTEARS}. \texttt{PCMCI} attains only modest TPR with moderate-to-high FDR and increased variability at larger $p$, which is also expected since it is primarily designed for stationary time series. \texttt{SCOTCH} likewise yields high FDR and low-to-moderate TPR as $p$ grows. One contributing factor is a mismatch in diffusion structure, as \texttt{SCOTCH} is developed for SDE models with a fully diagonal diffusion term, whereas our data-generating process can induce degenerate diffusion components (e.g., near-zero diffusion in some coordinates), violating this structural assumption. As shown by their data visualizations, \texttt{SCOTCH} is commonly demonstrated on settings with strongly increasing (potentially explosive) trajectories. In contrast, our simulations exhibit comparatively moderate diffusion magnitudes, which may represent a distribution shift relative to their typical evaluation regime and could partly explain the degraded performance. Finally, \texttt{Neural-GC} is grounded in a Granger-causality-style notion of temporal precedence; since feedback and contemporaneous interactions are present in data, this mismatch in causal semantics can also lead to degraded performance. These results indicate that \texttt{SCD} is well suited for extracting graph structure from this class of time series and remains robust when moving from stable to unstable dynamics.

\begin{table}[tbp]
\centering
\small
\setlength{\tabcolsep}{8pt}
\renewcommand{\arraystretch}{1.02}
\caption{Graph recovery results on \textit{DAGs} (Unstable System): mean $\pm$ std over 10 runs.}
\label{tab:dag_unstable}
\begin{tabular}{llcccc}
\toprule
\textbf{Metric} & \textbf{Method} & $p = 5$ & $p = 10$  & $p = 15$ & $p = 20$ \\
\midrule
\multirow{5}{*}{\textbf{SHD} $\downarrow$}
& \texttt{SCD} (ours)   & $2.800 \pm 1.720$ & $5.800 \pm 0.748$ & $22.700 \pm 3.801$ & $57.600 \pm 5.341$ \\
& \texttt{DYNOTEARS}    & $11.200 \pm 0.837$ & $26.600 \pm 5.225$ & $68.000 \pm 5.958$ & $157.000 \pm 3.082$ \\
& \texttt{PCMCI}        & $9.000 \pm 1.000$ & $29.200 \pm 9.935$ & $81.000 \pm 8.718$ & $169.200 \pm 7.085$ \\
& \texttt{SCOTCH}       & $8.400 \pm 2.191$ & $29.600 \pm 3.647$ & $80.000 \pm 6.856$ & $151.000 \pm 12.021$ \\
& \texttt{Neural-GC}    & $8.600 \pm 1.140$ & $41.800 \pm 5.541$ & $85.200 \pm 12.008$ & $174.200 \pm 10.640$ \\
\midrule
\multirow{5}{*}{\textbf{TPR} $\uparrow$}
& \texttt{SCD} (ours)   & $0.817 \pm 0.116$ & $0.864 \pm 0.047$ & $0.889 \pm 0.064$ & $0.894 \pm 0.036$ \\
& \texttt{DYNOTEARS}    & $0.245 \pm 0.160$ & $0.135 \pm 0.125$ & $0.123 \pm 0.046$ & $0.185 \pm 0.033$ \\
& \texttt{PCMCI}        & $0.401 \pm 0.247$ & $0.251 \pm 0.203$ & $0.289 \pm 0.052$ & $0.329 \pm 0.031$ \\
& \texttt{SCOTCH}       & $0.513 \pm 0.189$ & $0.222 \pm 0.085$ & $0.250 \pm 0.032$ & $0.325 \pm 0.056$ \\
& \texttt{Neural-GC}    & $0.503 \pm 0.108$ & $0.052 \pm 0.041$ & $0.192 \pm 0.059$ & $0.247 \pm 0.036$ \\
\midrule

\multirow{5}{*}{\textbf{FDR} $\downarrow$}
& \texttt{SCD} (ours)   & $0.089 \pm 0.083$ & $0.237 \pm 0.073$ & $0.359 \pm 0.052$ & $0.404 \pm 0.046$ \\
& \texttt{DYNOTEARS}    & $0.844 \pm 0.127$ & $0.875 \pm 0.125$ & $0.949 \pm 0.018$ & $0.911 \pm 0.026$ \\
& \texttt{PCMCI}        & $0.711 \pm 0.230$ & $0.738 \pm 0.149$ & $0.604 \pm 0.064$ & $0.612 \pm 0.061$ \\
& \texttt{SCOTCH}       & $0.489 \pm 0.243$ & $0.650 \pm 0.151$ & $0.710 \pm 0.097$ & $0.779 \pm 0.033$ \\
& \texttt{Neural-GC}    & $0.511 \pm 0.217$ & $0.887 \pm 0.120$ & $0.776 \pm 0.118$ & $0.788 \pm 0.027$ \\
\bottomrule
\end{tabular}
\end{table}

\begin{table}[tbp]
\centering
\small
\setlength{\tabcolsep}{8pt}
\renewcommand{\arraystretch}{1.02}
\caption{Graph recovery results on \textit{Directed loop graphs} (Stable System): mean $\pm$ std over 10 runs.}
\label{tab:loop_stable}
\begin{tabular}{llcccc}
\toprule
\textbf{Metric} & \textbf{Method} & $p = 5$ & $p = 10$  & $p = 15$ & $p = 20$ \\
\midrule

\multirow{5}{*}{\textbf{SHD} $\downarrow$}
& \texttt{SCD} (ours)   & $0.900 \pm 0.639$ & $4.400 \pm 1.990$ & $6.500 \pm 2.821$ & $15.000 \pm 2.646$ \\
& \texttt{DYNOTEARS}    & $10.200 \pm 1.924$ & $23.000 \pm 3.674$ & $74.200 \pm 10.895$ & $62.200 \pm 5.070$ \\
& \texttt{PCMCI}        & $8.400 \pm 2.074$  & $28.800 \pm 9.783$ & $60.000 \pm 15.050$ & $65.200 \pm 34.629$ \\
& \texttt{SCOTCH}       & $9.800 \pm 0.447$ & $36.000 \pm 6.708$ & $61.000 \pm 3.606$ & $127.500 \pm 32.889$ \\
& \texttt{Neural-GC}    & $10.800 \pm 1.643$ & $34.800 \pm 3.899$ & $96.200 \pm 18.660$ & $95.600 \pm 12.720$ \\
\midrule

\multirow{5}{*}{\textbf{TPR} $\uparrow$}
& \texttt{SCD} (ours)   & $0.975 \pm 0.051$ & $0.899 \pm 0.073$ & $0.941 \pm 0.055$ & $0.897 \pm 0.035$ \\
& \texttt{DYNOTEARS}    & $0.404 \pm 0.070$ & $0.250 \pm 0.151$ & $0.156 \pm 0.042$ & $0.120 \pm 0.019$ \\
& \texttt{PCMCI}        & $0.371 \pm 0.260$ & $0.115 \pm 0.106$ & $0.145 \pm 0.062$ & $0.116 \pm 0.107$ \\
& \texttt{SCOTCH}       & $0.343 \pm 0.062$ & $0.172 \pm 0.051$ & $0.109 \pm 0.012$ & $0.094 \pm 0.008$ \\
& \texttt{Neural-GC}    & $0.420 \pm 0.045$ & $0.145 \pm 0.089$ & $0.160 \pm 0.021$ & $0.093 \pm 0.036$ \\
\midrule

\multirow{5}{*}{\textbf{FDR} $\downarrow$}
& \texttt{SCD} (ours)   & $0.087 \pm 0.089$ & $0.143 \pm 0.072$ & $0.184 \pm 0.065$ & $0.316 \pm 0.061$ \\
& \texttt{DYNOTEARS}    & $0.475 \pm 0.105$ & $0.844 \pm 0.133$ & $0.636 \pm 0.081$ & $0.900 \pm 0.022$ \\
& \texttt{PCMCI}        & $0.675 \pm 0.338$ & $0.811 \pm 0.191$ & $0.779 \pm 0.125$ & $0.879 \pm 0.164$ \\
& \texttt{SCOTCH}       & $0.725 \pm 0.137$ & $0.744 \pm 0.093$ & $0.833 \pm 0.041$ & $0.730 \pm 0.092$ \\
& \texttt{Neural-GC}    & $0.100 \pm 0.105$ & $0.756 \pm 0.228$ & $0.443 \pm 0.090$ & $0.816 \pm 0.100$ \\
\bottomrule
\end{tabular}
\end{table}

\subsection{COVID-19 Data and ODE Preprocessing}
\label{sec:prepro}

We use the U.S. COVID-19 time-series data provided by the Johns Hopkins University Center for Systems Science and Engineering (JHU CSSE) \citep{Dong2020AnIW}, specifically the county-level confirmed-case and death data time series COVID19 confirmed US and time series COVID-19 deaths US. For each table, we remove unused metadata columns, reshape the data from wide to long format over calendar dates, and convert the date field to a standard datetime variable. We then restrict attention to a single state and aggregate county-level observations to the state level by summing over counties on each date. The confirmed-case and death series are merged by date to obtain a unified state-level panel. From the cumulative confirmed-case series, we first compute daily new confirmed counts. We then construct the infectious compartment $I_t$ as a rolling sum of recent new confirmed cases over a fixed infectious window, which serves as a proxy for active infections. The removed compartment is defined as $R_t = C_t - I_t$, where $C_t$ denotes the cumulative confirmed count, and the susceptible compartment is defined by $S_t=N - I_t - R_t$. This yields a constructed SIR trajectory on a common normalized scale. Finally, we truncate the sample to the study period from 2020-05-06 to 2021-05-01 and map the observation dates to an equally spaced time grid for downstream modeling. After preprocessing, the resulting trajectories exhibit the canonical qualitative behavior of an SIR system and, by construction, satisfy the conservation relation $S_t+I_t+R_t=N$. In our experiment, we set $N = 100000$, and both $R_t$ and $I_t$ are represented on the normalized scale obtained by dividing by $N$. The processed data are shown in Fig.\ref{fig:sir}.

As the ODE structure is given, we first fit the deterministic SIR drift parameters to the preprocessed data by solving the classical SIR system in Eq.\eqref{eq:sir} using the observed initial condition $(S_0,I_0,R_0)$. Specifically, we implement the SIR vector field in JAX and solve the ODE numerically over the observation grid using the Tsit5 solver from \texttt{diffrax}, an explicit Runge--Kutta method. The transmission and removal parameters $(\beta,\gamma)$ are then estimated by minimizing the sum of squared residuals between the ODE solution and the observed SIR trajectories via L-BFGS optimization. After optimization, the estimated parameters $\hat{\beta}$ and $\hat{\gamma}$ are used to reconstruct the fitted deterministic trajectory on a refined time grid. This fitted ODE solution serves as the mechanistic drift component for the subsequent stochastic modeling and diffusion-structure inference. We emphasize that the physics prior is misspecified, as its parameters are estimated in a purely data-driven manner and therefore cannot be numerically verified against the true underlying dynamics. 

\section{Technical Details}
\subsection{Technical details for Proposition~\ref{thm:gen_lyap_AB}}
\label{sec: sde_state}

For transparency, we first present the proof in a simplified single-channel linear multiplicative-noise setting, which serves as a technical reference for the main argument. The statement in the main text corresponds to the multi-channel formulation
\[
dX_t = B X_t\,dt + \sum_{i=1}^p G_i X_t\,dW_t^{(i)},
\]
where \(G_i=e_iA_i^\top\). The proof of the displayed main-text version follows by the same argument, replacing the single diffusion contribution by the sum of the corresponding channel-wise terms. We therefore record the core stability argument here in the simplified form and use it as a blueprint for Proposition~\ref{thm:gen_lyap_AB}. Before we move to the technical detail, we first show the Definition of stability
\begin{definition}
Consider the stochastic differential equation with $t \ge 0$
\begin{equation}
\begin{split}
&dx(t) = f(x(t),t) dt + g(x(t),t) dW(t), \\
&x(0) = x_0 
\end{split}
\end{equation}
The solution $x(t,x_0)$ is called pathwise asymptotically stable about equilibrium point $x_e\in\mathbb{R}^p$ if
\begin{equation}
    \lim_{t\to\infty}\|x(t;x_0)-x_e\|=0 \quad \text{almost surely},
\end{equation}
and pathwise exponentially stable if
\begin{equation}
    \limsup_{t\to\infty}\frac{1}{t}\log\|x(t;x_0)-x_e\|<0 \quad \text{almost surely},
\end{equation}
for all initial conditions $x_0\in\mathbb{R}^p$.
\end{definition}
\begin{remark}
The above definitions are asymptotic (as $t\to\infty$). In practice, we only observe data over a finite horizon $[0,T]$ with $T<\infty$. Rather than relying on asymptotic behavior, we only use the stability-implied properties needed for our statistical analysis.
\end{remark}
\begin{proposition}[Lyapunov bound for the pathwise Lyapunov exponent]
\label{thm:lyap-exponent-bound}
Consider the linear stochastic differential equation in Eq.\ref{GSDE}. Suppose there exist a positive definite matrix $\Sigma \in \mathbb{R}^{d\times d}$ and a real number
$\lambda \in \mathbb{R}$ such that
\begin{equation}
\label{eq:lyap-ineq}
    \langle \Sigma x(t), x(t)\rangle \bigr[2\langle \Sigma A x(t),x(t)\rangle + \langle \Sigma Bx(t),Bx(t)\rangle - 2 \lambda \langle \Sigma x(t), x(t)\rangle\bigl] \le 2\langle \Sigma x(t),B x(t)\rangle^2
\end{equation}
Then the solution $x(t)$ of \eqref{GSDE} satisfies
\begin{equation}
\label{eq:lyap-exponent-claim}
\limsup_{t\to\infty} \frac{1}{t} \log |x(t)|   \le   \lambda
\quad \text{almost surely}.
\end{equation}
\end{proposition}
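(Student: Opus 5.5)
The plan is the classical Khasminskii-type argument: apply It\^o's formula to the logarithm of the quadratic Lyapunov function $V(x):=\langle \Sigma x,x\rangle$. We read the statement in the simplified single-channel form announced at the start of this subsection, namely $dx(t)=Ax(t)\,dt+Bx(t)\,dW(t)$. Here $A$ plays the drift role and $B$ the diffusion role, which is the only reading under which \eqref{eq:lyap-ineq} is dimensionally consistent. The multi-channel version then follows by replacing the single diffusion term by $\sum_i G_i$ and the corresponding quadratic-variation sums.

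\textbf{Step 1 (the process does not hit the origin).} If $x_0=0$, uniqueness of strong solutions (Proposition~\ref{prop: WD}) gives $x(t)\equiv 0$. Then $\log|x(t)|=-\infty$ and the claim is trivial. If $x_0\neq 0$, I will show $x(t)\neq 0$ for all $t$ almost surely. The first option is to use the representation $x(t)=\Xi_t x_0$ with $\Xi_t$ invertible a.s., which is exactly what Proposition~\ref{prop: WD} provides. As a backup, I would argue that $\log V$ cannot explode to $-\infty$ in finite time. This follows by applying the Step 2 computation up to the stopping time $\inf\{t:V(x(t))\le\varepsilon\}$ and sending $\varepsilon\to0$, since both the drift and the diffusion coefficient of $\log V$ are bounded. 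Consequently $\log V(x(t))$ is a well-defined continuous semimartingale.

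\textbf{Step 2 (It\^o computation).} Since $\Sigma$ is symmetric, $dV=\bigl(2\langle\Sigma x,Ax\rangle+\langle\Sigma Bx,Bx\rangle\bigr)dt+2\langle\Sigma x,Bx\rangle\,dW$. It\^o's formula for the logarithm then gives
\[
d\log V=\Bigl[\frac{2\langle\Sigma x,Ax\rangle+\langle\Sigma Bx,Bx\rangle}{V}-\frac{2\langle\Sigma x,Bx\rangle^2}{V^2}\Bigr]dt+\frac{2\langle\Sigma x,Bx\rangle}{V}\,dW .
\]
Dividing the hypothesis \eqref{eq:lyap-ineq} by $V^2>0$ shows that the drift bracket is at most $2\lambda$. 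Therefore
\[
\log V(x(t))\le \log V(x_0)+2\lambda t+M_t,\qquad M_t:=\int_0^t\frac{2\langle\Sigma x,Bx\rangle}{V}\,dW_s .
\]

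\textbf{Step 3 (martingale term).} By Cauchy--Schwarz in the $\Sigma$-inner product, $|\langle\Sigma x,Bx\rangle|\le \|\Sigma^{1/2}B\Sigma^{-1/2}\|_2\,V$. Hence the integrand of $M_t$ is bounded by a constant $\kappa$, and $\langle M\rangle_t\le \kappa^2 t$. The strong law of large numbers for continuous local martingales with $\limsup_t \langle M\rangle_t/t<\infty$ gives $M_t/t\to0$ almost surely. This can be seen either via the Dambis--Dubins--Schwarz time change to a Brownian motion together with the law of the iterated logarithm, or via the exponential martingale inequality combined with Borel--Cantelli along integer times.

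\textbf{Step 4 (conclusion).} Since $|x|^2\le V(x)/\lambda_{\min}(\Sigma)$, we have
\[
\frac1t\log|x(t)|\le \frac{1}{2t}\bigl(\log V(x(t))-\log\lambda_{\min}(\Sigma)\bigr)\le \frac{\log V(x_0)-\log\lambda_{\min}(\Sigma)}{2t}+\lambda+\frac{M_t}{2t}.
\]
Taking $\limsup_{t\to\infty}$ yields \eqref{eq:lyap-exponent-claim}.

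I expect the main obstacle to be Step 1 together with the martingale strong law in Step 3. The It\^o algebra is routine, but rigor requires that $\log V$ is defined for all times and that $M_t/t\to0$ holds almost surely rather than only in probability. The boundedness of the integrand, which follows from $\Sigma\succ0$, is what makes both points go through.
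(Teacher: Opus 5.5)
Your proposal is correct and follows the same route as the paper's proof. Both apply It\^o's formula to $\log\langle\Sigma x,x\rangle$, divide the hypothesis by $V^2$ to bound the drift by $2\lambda$, and use the strong law for continuous martingales with linearly growing quadratic variation to get $M_t/t\to0$. Your version is more careful than the paper's on three points: it bounds the integrand explicitly via $\|\Sigma^{1/2}B\Sigma^{-1/2}\|_2$, it handles $x_0=0$, and it shows the process avoids the origin for all $t$ simultaneously, whereas the paper only shows $\mathbb P(y(t)=0)=0$ for each fixed $t$.
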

Thus, the core idea is to show Eq.\eqref{eq:lyap-ineq} hold. We then impose the following Lemma

\begin{lemma}
\label{lem:35}
Consider the linear SDE  with drift matrix $A$ and diffusion matrices $B_i$, $i=1,\dots,k$.
Suppose there exist a positive definite matrix $Q\in\mathbb{R}^{n\times n}$, a constant $\lambda\in\mathbb{R}$, and scalars
$b_i\in\mathbb{R}$, $i=1,\dots,k$, such that
\begin{equation}
\label{eq:cond13}
\Big(A+\sum_{i=1}^k b_i B_i\Big)^\top Q
+ Q\Big(A+\sum_{i=1}^k b_i B_i\Big)
+ \sum_{i=1}^k B_i^\top Q B_i
+ \Big(\frac12\sum_{i=1}^k b_i^2 - 2\lambda\Big) Q \preceq 0 .
\end{equation}
Then condition \eqref{eq:lyap-ineq} holds, and hence
\begin{equation}
\label{eq:limsup14}
\limsup_{t\to\infty}\frac{1}{t}\log\|x(t)\|\le \lambda
\qquad\text{a.s.}
\end{equation}
\end{lemma}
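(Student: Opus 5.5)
The plan is to verify the quadratic-form inequality \eqref{eq:lyap-ineq} pointwise in $x$ directly from the matrix inequality \eqref{eq:cond13}, and then invoke Proposition~\ref{thm:lyap-exponent-bound}. In the lemma's notation the drift is $Ax$ and the diffusion is $\sum_i B_i x\,dW^{(i)}$. The multi-channel version of \eqref{eq:lyap-ineq} should therefore be read as
\[
\langle Qx,x\rangle\Bigl[2\langle QAx,x\rangle+\sum_i\langle QB_ix,B_ix\rangle-2\lambda\langle Qx,x\rangle\Bigr]\le 2\sum_i\langle Qx,B_ix\rangle^2 .
\]
This is what one obtains by applying It\^o's formula to $\log\langle Qx_t,x_t\rangle$. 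The drift of $\log V$ with $V=\langle Qx,x\rangle$ is $\bigl(2\langle QAx,x\rangle+\sum_i\langle QB_ix,B_ix\rangle\bigr)/V-2\sum_i\langle Qx,B_ix\rangle^2/V^2$. The inequality is exactly the statement that this drift is at most $2\lambda$.

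First, fix $x\neq 0$; the case $x=0$ is trivial, since the process started away from $0$ never hits $0$ for a linear SDE. Write $V=\langle Qx,x\rangle>0$ and $u_i=\langle Qx,B_ix\rangle$. Evaluate \eqref{eq:cond13} as a quadratic form at $x$. Using $\langle (M^\top Q+QM)x,x\rangle=2\langle QMx,x\rangle$, this gives
\[
2\langle QAx,x\rangle+2\sum_i b_iu_i+\sum_i\langle QB_ix,B_ix\rangle+\Bigl(\tfrac12\sum_i b_i^2-2\lambda\Bigr)V\le 0 .
\]

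Second, I would eliminate the free scalars $b_i$ by completing the square. For each $i$, the inequality $\tfrac12 b_i^2V+2b_iu_i\ge -2u_i^2/V$ holds, with equality at $b_i=-2u_i/V$. The bound above then yields
\[
2\langle QAx,x\rangle+\sum_i\langle QB_ix,B_ix\rangle-2\lambda V\le 2\sum_i u_i^2/V .
\]
Multiplying by $V>0$ gives precisely the inequality \eqref{eq:lyap-ineq}. The $b_i$ are fixed constants, not chosen depending on $x$, so the completing-the-square step must go in the direction that works for arbitrary $b_i$. It does, because $\tfrac12 b_i^2V+2b_iu_i+2u_i^2/V=\tfrac{V}{2}(b_i+2u_i/V)^2\ge 0$.

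Finally, I apply Proposition~\ref{thm:lyap-exponent-bound}, or equivalently the It\^o argument sketched above, which gives \eqref{eq:limsup14}. That argument bounds the drift of $\log V(x_t)$ by $2\lambda$. The martingale part is $2\int\sum_i u_i/V\,dW^{(i)}$, which has bounded quadratic-variation rate $|u_i|/V\le\|Q^{1/2}B_iQ^{-1/2}\|$. By the strong law for martingales it is $o(t)$ almost surely. Hence $\limsup\frac1t\log V\le 2\lambda$, and since $V\asymp\|x\|^2$, the claim follows. The main obstacle is bookkeeping: the single-channel statement of Proposition~\ref{thm:lyap-exponent-bound} in the paper swaps the roles of $A$ and $B$ relative to the lemma. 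I would therefore state explicitly the multi-channel form being invoked and, if needed, include the short It\^o/martingale-LLN argument above to make the reduction self-contained.
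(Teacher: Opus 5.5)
Your proposal is correct and is essentially the paper's proof: take the quadratic form of \eqref{eq:cond13} at $x$, eliminate the fixed $b_i$ using $\bigl(\langle Qx,B_ix\rangle/\langle Qx,x\rangle+\tfrac12 b_i\bigr)^2\ge 0$ (your $\tfrac{V}{2}(b_i+2u_i/V)^2\ge 0$), multiply through by $\langle Qx,x\rangle$, and invoke Proposition~\ref{thm:lyap-exponent-bound} in its multi-channel form. Your bound $|u_i|/V\le\|Q^{1/2}B_iQ^{-1/2}\|$ supplies a justification for the paper's unproved claim $\langle M\rangle_t\le Ct$; note only that \eqref{eq:lyap-ineq} itself already treats $A$ as drift and $B$ as diffusion, so the role swap you flag is in the cited SDE \eqref{GSDE}, not in the inequality.
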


\begin{lemma}\label{lem:lyap_integral}
If the conditions as stated in Proposition~\ref{thm:gen_lyap_AB} hold, for any matrix $N$, the matrix $P:=\int_0^\infty e^{Bt} N e^{B^\top t}\,dt$ is well-defined and satisfies $B P + P B^\top = -N.$
\end{lemma}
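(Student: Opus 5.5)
We need to prove Lemma~\ref{lem:lyap_integral}: under the conditions of Proposition~\ref{thm:gen_lyap_AB}, for any matrix $N$ the matrix $P=\int_0^\infty e^{Bt}Ne^{B^\top t}\,dt$ is well defined and satisfies $BP+PB^\top=-N$. The plan has two steps: first show absolute convergence of the integral using the exponential bound on $e^{Bt}$, then verify the Lyapunov identity by differentiating the integrand and applying the fundamental theorem of calculus on $[0,T]$ before letting $T\to\infty$.

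\textbf{Convergence.} The hypothesis gives $\|e^{Bt}\|_2\le m e^{\omega t}$ with $\omega<0$. Since $\|e^{B^\top t}\|_2=\|(e^{Bt})^\top\|_2=\|e^{Bt}\|_2$, submultiplicativity yields
\[
\|e^{Bt}Ne^{B^\top t}\|_2\le m^2\|N\|_2 e^{2\omega t}.
\]
The right-hand side is integrable on $[0,\infty)$, with integral $m^2\|N\|_2/(2|\omega|)$. The integrand is continuous in $t$, so the matrix-valued improper integral converges absolutely entrywise, and $P$ is well defined. This also shows $\|P\|_2\le m^2\|N\|_2/(2|\omega|)$.

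\textbf{Identity.} Set $F(t):=e^{Bt}Ne^{B^\top t}$. Since $\frac{d}{dt}e^{Bt}=Be^{Bt}=e^{Bt}B$, the product rule gives
\[
F'(t)=BF(t)+F(t)B^\top .
\]
Integrating over $[0,T]$ gives
\[
F(T)-N=B\Bigl(\int_0^T F\Bigr)+\Bigl(\int_0^T F\Bigr)B^\top ,
\]
because $B$ and $B^\top$ are constant and pass through the integral. As $T\to\infty$, $\|F(T)\|_2\le m^2\|N\|_2e^{2\omega T}\to0$, and $\int_0^T F\to P$ by the first step. Left and right multiplication by fixed matrices are continuous, so the limit is $-N=BP+PB^\top$.

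There is no real obstacle here. The only points to state carefully are that the exponential bound transfers to $e^{B^\top t}$ through the transpose, and that the limit $T\to\infty$ may be taken inside the linear maps $X\mapsto BX+XB^\top$. The stronger condition $2\omega+m^2\|A\|_F^2<0$ is not needed; $\omega<0$ suffices. Finally, $P$ is the unique solution of this equation, since $B$ is Hurwitz, but uniqueness is not required for the lemma.
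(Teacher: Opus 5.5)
Your proof is correct and follows essentially the same route as the paper: set $F(t)=e^{Bt}Ne^{B^\top t}$, use $F'=BF+FB^\top$, integrate over $[0,T]$, and let $T\to\infty$ using $\|F(T)\|\le m^2e^{2\omega T}\|N\|\to 0$. You also prove absolute convergence of the integral explicitly, which the paper leaves implicit; note too that the paper's proof writes $A$ for the drift matrix where the statement uses $B$.
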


\subsection{Technical details for Sec~\ref{sec: sde} and Sec~\ref{Sec: unsta}}
\label{sec:lemma}
Before introducing the lemmas, we emphasize our key technical details. The challenge is that the objective in Eq.\eqref{Eq:A_learning} is nonconvex. Accordingly, basic optimization theory typically guarantees only that the algorithm returns a stationary point $\hat\theta_i$. However, stationarity alone is generally insufficient to ensure statistical identifiability: it can be difficult to relate an arbitrary stationary point $\hat\theta_i$ to the ground truth $\theta_i^*$ and to establish a nonasymptotic error bound.

We introduce a local restricted strong convexity (LRSC) condition to address this challenge and to establish identifiability. This technical is inspired by the restricted strong convexity (RSC) condition in \cite{poling2015}. Specifically, they restrict the feasible set so that the nonconvex loss becomes strongly convex throughout the restricted region, yielding a global curvature guarantee within that set. 

Technically, they establish the RSC based on $\|\Delta\|_2\le 1$ and $\|\Delta\|_2\ge 1$ for any $\Delta\in \mathbb R^p$. Then, by tuning the $\lambda$ and restricted feasible set to ensure the stationary points can be within $\|\hat\theta_i- \theta_i^*\|\le 1$. Then, they can introduce the a cone condition to establish the establish a nonasymptotic error bound. For more technical details, see \cite{poling2015}. 

However, our objective is based on a Gaussian quasi-likelihood whose curvature depends on the unknown ground-truth parameter. As a result, a global curvature guarantee over a fixed restricted feasible set is not sufficient for our analysis. Fortunately, the SDE stability condition together with Proposition~\ref{Ass:xbound}, provides the necessary control on the ground-truth dynamics and associated quantities. Under these properties, we find that tuning the slack parameter $c$ ensures the required regularity holds in the region of interest. Therefore, by restricting the feasible set we can establish the cone condition, which in turn enables a local restricted strong convexity (LRSC) argument. Then, we list all lemmas that we used for proving our Theorem~\ref{Thm: LASSO}. All proofs of the lemmas are included in \S\ref{sec: Lemma}.

The argument proceeds by first establishing several lemmas that characterize stationary points of Eq.\eqref{Eq:linear_pa} and establish the key high-probability properties needed for our analysis.
\begin{lemma}
 \label{KKT}
For each node $i \in [p]$ with the loss function given by Eq.\eqref{Eq:linear_pa}, the gradient of $\mathcal{L}_i(\theta_i)$ is given by
\begin{equation}
    \nabla_{\theta_i}\mathcal{L}_i(\theta_i)
=\frac{1}{n}\sum_{k=0}^{n-1} 
u_{i,k}(\theta_i) X_k, 
\end{equation}
where $u_{i,k}(\theta_i) =\frac{(\theta_i^\top X_k)\big(\theta_i^\top X_k)^2+c-r_{i,k}^2\big)}{((\theta_i^\top X_k)^2+c)^2}$. Consider the Lasso problem and define $G(\theta_i)\coloneqq \nabla_{\theta_i}\mathcal L_i(\theta_i)$,
\begin{equation}
\label{Lem_eq: stationary}
    \hat{\theta}_i=\arg\min_{\theta_i}  
\mathcal L_i(\theta_i)+\lambda\|\theta_i\|_1 .
\end{equation}
Then $\hat{\theta}_i$ is a stationary point if and only if for every $b\in[p]\setminus\{i\}$,
\begin{equation}
    \begin{cases}
G_b(\hat{\theta}_i)=-\lambda\mathrm{sign}(\hat{\theta}_{ib}), & \hat{\theta}_{ib}\neq 0,\\
|G_b(\hat{\theta}_i)|\le \lambda, & \hat{\theta}_{ib}=0,
\end{cases}
\end{equation}
where $\hat \theta_{ib}$ is the $b$-th entry of vector $\theta_i$. 
\end{lemma}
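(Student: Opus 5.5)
The statement has two parts. The first is a gradient formula for the smooth row-wise loss $\mathcal L_i$ in Eq.\eqref{Eq:linear_pa}. The second is the standard first-order (Fréchet/Clarke) stationarity characterization of the $\ell_1$-penalized problem. I will handle them in that order.

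\textbf{Gradient.} Write $z_k:=\theta_i^\top X_k$ and $v_k:=z_k^2+c>0$. Since $c>0$, each summand $\tfrac12[\log v_k + r_{i,k}^2/v_k]$ is a smooth function of $\theta_i$; note that $r_{i,k}$ does not depend on $\theta_i$. By the chain rule,
\[
\nabla_{\theta_i}\tfrac12\log v_k=\frac{z_kX_k}{v_k},
\qquad
\nabla_{\theta_i}\tfrac12\,\frac{r_{i,k}^2}{v_k}=-\frac{r_{i,k}^2z_kX_k}{v_k^2}.
\]
Adding these gives $z_k(v_k-r_{i,k}^2)X_k/v_k^2$, which equals $u_{i,k}(\theta_i)X_k$. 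Averaging over $k$ with the $1/n$ factor yields the stated formula; the $\tfrac{1}{2n}$ prefactor and the $\tfrac12$ from differentiating the square cancel to leave $1/n$. This also shows $\mathcal L_i\in C^1$ (indeed $C^\infty$), which the next step needs.

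\textbf{Stationarity.} Because $\mathcal L_i$ is continuously differentiable and $\|\cdot\|_1$ is convex, the limiting subdifferential of $\mathcal L_i+\lambda\|\cdot\|_1$ at $\theta$ equals $\nabla\mathcal L_i(\theta)+\lambda\,\partial\|\theta\|_1$. This is the sum rule for a smooth function plus a convex function. I therefore take stationarity to mean $0\in G(\hat\theta_i)+\lambda\,\partial\|\hat\theta_i\|_1$, which is the same notion used for accumulation points in Appendix~\ref{sec:opt_con}. The subdifferential of $\|\cdot\|_1$ is a product of coordinatewise sets: $\partial|\theta_b|=\{\mathrm{sign}(\theta_b)\}$ when $\theta_b\neq0$, and $[-1,1]$ when $\theta_b=0$. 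The inclusion therefore decouples across coordinates $b$. For $\hat\theta_{ib}\neq0$ it reads $G_b=-\lambda\,\mathrm{sign}(\hat\theta_{ib})$. For $\hat\theta_{ib}=0$ it reads $-G_b\in[-\lambda,\lambda]$, i.e.\ $|G_b|\le\lambda$. Each step is an equivalence, so the two directions of the ``if and only if'' follow together.

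The only delicate point, and the one I expect to be the main obstacle, is justifying the sum rule for the nonconvex loss. I would prove it directly. For any direction $d$, the one-sided directional derivative of the objective at $\theta$ is $G(\theta)^\top d+\lambda\sum_b\big(\mathrm{sign}(\theta_b)d_b\,\mathbf 1\{\theta_b\neq0\}+|d_b|\,\mathbf 1\{\theta_b=0\}\big)$. Requiring this to be nonnegative for all $d$ is equivalent to the coordinatewise conditions above, which can be checked by taking $d=\pm e_b$. This identifies stationarity, in the directional-derivative sense, with the KKT system. The index $b=i$ carries no separate content, since it is treated like any other coordinate, so restricting the statement to $b\in[p]\setminus\{i\}$ is consistent with it.
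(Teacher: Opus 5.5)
Your proposal is correct and essentially follows the paper's argument: the chain rule through $z_k=\theta_i^\top X_k$ gives the gradient, and the inclusion $0\in G(\hat\theta_i)+\lambda\,\partial\|\hat\theta_i\|_1$ splits coordinatewise into the stated conditions. Your directional-derivative check of the smooth-plus-convex sum rule is a small extra that the paper simply absorbs into its definition of stationarity; one correction is that the index $b=i$ does not ``carry no separate content'' when $\theta_{ii}$ is a free variable, because the ``if'' direction then needs that coordinate's condition too, so dropping it is justified only when coordinate $i$ is fixed at zero, as in the constrained problem $\arg\min_{\theta:\theta_{ii}=0}$ used in the proof of Lemma~\ref{lem:prop}.
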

The Lemma~\ref{KKT} characterizes stationary points of the $\ell_1$-regularized objective via first-order optimality KKT conditions. It also provides an explicit expression for the gradient of $\mathcal{L}_i$ and the corresponding stationarity conditions for the Lasso problem. Then, the following lemma provides high-probability bounds for the gradient and Hessian.

\begin{lemma}
\label{lem:prop}
Given the objective function from Eq.\ref{Eq:linear_pa} for all $i\in [p]$, the Gradient and Hessian at population level $\theta^*_i$ has the following properties:
\begin{enumerate}
    \item Denote that $s \coloneqq |pa(i)|$ and by Proposition~\ref{Ass:xbound}. We have 
    \begin{equation}
    \label{lemeq:4_1}
\|\nabla_{\theta_i}\ell_i(\theta_i^*)\|_{\infty}
= O_{\mathbb P}\Big(\sqrt{\tfrac{\log p}{n}}\Big).
    \end{equation}
    \item Let $V_i \coloneqq pa(i) \subset [p]$ be the set of all parents of the node $i$, $Q^n_{V_iV_i}$ be the empirical and $Q^*_{V_iV_i}$ be the population fisher information matrix. We have
    \begin{equation}
    \label{lemeq:4_2}
\|Q^n_{V_iV_i}-Q^*_{V_iV_i}\|_\infty=O_{\mathbb P}\!\left(s\sqrt{\frac{\log s}{n}}+s\frac{(\log n)^2\log s}{n}\right).
    \end{equation}
    \item By Assumption~\ref{ass: Incoh} and Proposition~\ref{prop: WD}, the empirical Fisher information also has the following property such that, with high probability,
    \begin{equation}
    \begin{split}
    &\big\|Q^n_{V_i^cV_i} (Q^n_{V_iV_i})^{-1}\big\|_\infty \le  1-\alpha+ O_{\mathbb P}\left(\frac{s^{3/2}}{C_{\min}}\,r_n + \frac{s^2}{C^2_{\min}}r^2_n\right),
    \end{split}
\end{equation}
where $r_n= s\sqrt{\frac{\log p}{n}}+ s\frac{(\log n)^2\log p}{n}$ and $C_{\min}$ is from Proposition~\ref{prop: WD}. For sufficiently large $n$ and $\frac{s^{3/2}}{C_{\min}}\,r_n + \frac{s^2}{C^2_{\min}}r^2_n = o(1)$, we have 
\[
\left\|Q^n_{V_i^cV_i}(Q^n_{V_iV_i})^{-1}\right\|_\infty \le 1-\frac{\alpha}{2}
\]
\end{enumerate}
Here $\|\cdot\|_\infty$ denotes the matrix $\ell_\infty$ norm (maximum absolute row sum) in Eq.\eqref{lemeq:4_2} and the vector $\ell_\infty$ norm in  Eq.\eqref{lemeq:4_1}.
\end{lemma}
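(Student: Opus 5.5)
The three parts of Lemma~\ref{lem:prop} will be handled in order. Throughout we work on the localization event $\mathcal E_K=\{\max_k\|X_k\|_\infty\le K\}$ from Proposition~\ref{Ass:xbound} (or Proposition~\ref{ass:localization} in the unstable case), and we use the stopped process of Remark~\ref{remark: sec moment} so that every summand is bounded almost surely. The failure probability $\mathbb P(\mathcal E_K^c)$ is then added at the end.

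\textbf{Part 1 (score).} By Lemma~\ref{KKT}, the score is $\nabla\ell_i(\theta_i^*)=\frac1n\sum_k u_{i,k}(\theta_i^*)X_k$. Write $Z_k:=\theta_i^{*\top}X_k$ and $r_{i,k}=Z_k z_{k+1}$, so that
\[
u_{i,k}=\frac{Z_k\,(Z_k^2+c-Z_k^2z_{k+1}^2)}{(Z_k^2+c)^2}.
\]
Since $\mathbb E[z_{k+1}^2\mid\mathcal F_k]=1$, this splits into a martingale-difference part,
\[
\frac{Z_k^3(1-z_{k+1}^2)}{(Z_k^2+c)^2}X_k,
\]
and a predictable part,
\[
\frac{cZ_k}{(Z_k^2+c)^2}X_k,
\]
which is exactly the population bias $F_i(\theta^*,c)$ of Eq.~\eqref{Eq:pop_F_i}. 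In the paper's convention the population target is the $c$-stabilized one, so I would center at that target; otherwise this part has to be carried as a bias term.

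For the martingale part, the factor $|Z_k^3|/(Z_k^2+c)^2$ is bounded by a constant times $c^{-1/2}$, uniformly in $Z_k$, so each increment is $K c^{-1/2}$ times a centered $\chi^2_1$ variable. Applying Bernstein/Freedman's inequality for sub-exponential martingale differences to each coordinate, and then a union bound over $p$ coordinates, gives a bound of order $\frac{K}{\sqrt c}\sqrt{\log p/n}$. This matches the constant $8K/\sqrt c$ appearing in Theorem~\ref{Thm: LASSO}.

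\textbf{Part 2 (Fisher information).} Differentiating $u_{i,k}$ once more shows that the Hessian entries have the form $w(Z_k,z_{k+1})X_{k,a}X_{k,b}$, where $w$ is a rational function that is bounded by constants times powers of $c^{-1}$ on $\mathcal E_K$. For each entry $(a,b)\in V_i\times V_i$ I would decompose $Q^n_{ab}-Q^*_{ab}$ into two pieces. The first is a martingale piece around the conditional expectation, handled by Bernstein exactly as in Part 1. The second is the deviation of the predictable average from its mean, which is where dependence across time enters. For this piece I would invoke the geometric $\alpha$-mixing of Assumption~\ref{ass:alpha-mixing-EM}, or the covariance decay of Proposition~\ref{prop: cov_decay}, and apply a Bernstein inequality for $\alpha$-mixing sequences with blocks of length about $\log n$. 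That blocking produces the $(\log n)^2\log s/n$ term. A union bound over the $s^2$ entries, followed by summing $s$ entries per row to pass to the matrix $\ell_\infty$ norm, gives the stated factor of $s$. The same argument over $V_i^c\times V_i$ gives the analogous bound with $\log p$, which is the quantity $r_n$.

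\textbf{Part 3 (incoherence transfer).} This follows the standard perturbation argument of \cite{jalali11a2011} and \cite{Meinshausen2006}. Writing $Q^n=Q^*+E$, decompose $Q^n_{V^cV}(Q^n_{VV})^{-1}-Q^*_{V^cV}(Q^*_{VV})^{-1}$ into three terms:
\[
E_{V^cV}(Q^*_{VV})^{-1},\qquad Q^*_{V^cV}\bigl[(Q^n_{VV})^{-1}-(Q^*_{VV})^{-1}\bigr],\qquad E_{V^cV}\bigl[(Q^n_{VV})^{-1}-(Q^*_{VV})^{-1}\bigr].
\]
Use $\|M^{-1}\|_\infty\le\sqrt s\,\|M^{-1}\|_2\le\sqrt s/C_{\min}$, where $C_{\min}>0$ is a lower bound on $\lambda_{\min}(Q^*_{VV})$ obtained from $\lambda_{\min}(\mathbb E\hat\Sigma)>0$ (Proposition~\ref{prop: WD}) combined with the lower bound on the weight in the stabilized Hessian. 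Together with the resolvent identity and the bounds from Part 2, this yields the $s^{3/2}r_n/C_{\min}+s^2r_n^2/C_{\min}^2$ perturbation. Once that perturbation is $o(1)$, it is below $\alpha/2$, which gives the final incoherence bound $1-\alpha/2$.

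The main obstacle is Part 2: controlling the predictable (non-martingale) part of the Hessian under nonstationarity. There are two issues. First, $Q^*$ must be defined as the average of the time-varying expectations. Second, the mixing Bernstein inequality requires the bounded, stopped summands; the stopping must be done carefully so that it does not destroy the mixing property.
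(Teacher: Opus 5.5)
Your plan follows the paper's proof closely in all three parts:
\begin{itemize}
\item the same split of the score into a $c$-bias term and a $(1-z_{k+1}^2)$ innovation term;
\item the same two-piece decomposition of the Hessian, with a Bernstein bound for the innovation piece, the $\alpha$-mixing Bernstein inequality of \cite{merlevede2009bernstein} for the predictable piece (this is where the $(\log n)^2$ comes from), and union bounds over $s^2$ and $(p-s)s$ entries;
\item the same $T_1,T_2,T_3$ perturbation argument with the resolvent identity and $\|(Q^*_{V_iV_i})^{-1}\|_\infty\le\sqrt s/C_{\min}$.
\end{itemize}

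There is one missing step, in Part~1. The predictable term $\frac1n\sum_k cZ_kX_k/(Z_k^2+c)^2$ is not the population bias $F_i(\theta^*,c)$; it is an empirical average whose \emph{expectation} is that bias. Centering at the stabilized target makes the expectation vanish, but the average still fluctuates at order $\sqrt{\log p/n}$. Your Part~1 bound controls only the innovation term. You need the same mixing argument you already use in Part~2:
\begin{itemize}
\item on $\mathcal E_K$ each summand is bounded by a constant multiple of $K/\sqrt c$ and is a measurable function of $X_k$;
\item the Bernstein inequality for bounded geometrically $\alpha$-mixing sequences, plus a union bound over the $p$ coordinates, then gives the $\frac{K}{\sqrt c}\sqrt{\log p/n}$ rate.
\end{itemize}
This is exactly how the paper treats its term $b_{k,j}$.

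Two further points on the centering.
\begin{itemize}
\item For the mean to vanish on all $p$ coordinates, the center must be the minimizer over all free coordinates $\{\theta:\theta_{ii}=0\}$, as in the paper's proof. The $S_i$-restricted root of Eq.~\eqref{Eq:pop_F_i} zeros only the $S_i$ block.
\item Once the center differs from the data-generating $\theta_i^*$, the residual is still $r_{i,k}=\theta_i^{*\top}X_kz_{k+1}$. The weights in your split then involve both linear forms; they stay bounded on $\mathcal E_K$, so only the constants change.
\end{itemize}

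Elsewhere your route is, if anything, sounder than the paper's. Handling the innovation pieces as martingale differences via Freedman or Bernstein is correct. The paper instead conditions on the whole trajectory $\{X_k\}$ and treats the $z_{k+1}$ as independent of it, which fails because $X_{k+1}$ is built from $z_{k+1}$.

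For the obstacle you flag, truncating by $\mathbf 1\{\|X_k\|_\infty\le K\}$ is a cleaner way to get bounded summands than stopping. The truncated variable is a function of $X_k$ alone, so it inherits the mixing coefficients, which the stopped process need not do. The cost is a deterministic mean shift, controlled by the moment bounds.
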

The next two lemmas establish support recovery for the $\ell_1$-regularized estimator. Lemma~\ref{lem:True_supp} states that the estimated parents match the ground truth with high probability, while Lemma~\ref{lem: PDW} provides a primal dual witness (PDW) verification of the required KKT conditions.

\begin{lemma}
 \label{lem:True_supp}
     Let $\hat \theta_i$ be given by Eq.\ref{Eq:linear_pa} for $i \in V$ with the $\lambda$ chosen as in Theorem~\ref{Thm: LASSO} and for some $c > 0$ for all $j \in pa(i)$ such that 
     \begin{equation}
     \begin{split}
     \mathbb P (|\operatorname{sign}(\hat \theta_{ij})| = |\operatorname{sign}( \theta^*_{ij})|)= 1- O(\exp (-cn)). 
     \end{split}
    \end{equation}
\end{lemma}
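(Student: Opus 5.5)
The plan is to prove Lemma~\ref{lem:True_supp} as a sign-consistency statement on the parent set. The argument has three parts: an $\ell_\infty$ error bound for the oracle (restricted) estimator, a beta-min comparison to fix signs, and a high-probability argument that converts the $O_{\mathbb P}$ statements into an $1-O(\exp(-cn))$ bound.

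First, work with the primal-dual witness pair $(\hat\theta_{i(r)},\hat Z_{i(r)})$ built in the proof of Theorem~\ref{Thm: LASSO}. By Lemma~\ref{lem: PDW}, this pair is a full stationary point with $\hat\theta_{ij}=0$ for $j\notin pa(i)$. It therefore suffices to control $\hat\theta_{i,V_i}$ on $V_i=pa(i)$.

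Second, linearize the restricted KKT condition from Lemma~\ref{KKT} around $\theta_i^*$:
\[
Q^n_{V_iV_i}(\hat\theta_{i,V_i}-\theta^*_{i,V_i}) = -\nabla_{V_i}\ell_i(\theta_i^*) - \lambda_n \hat Z_{V_i} - R_{V_i},
\]
where $R_{V_i}$ is the Taylor remainder. Inverting $Q^n_{V_iV_i}$ gives
\[
\|\hat\theta_{i,V_i}-\theta^*_{i,V_i}\|_\infty \le \|(Q^n_{V_iV_i})^{-1}\|_\infty\bigl(\|\nabla\ell_i(\theta_i^*)\|_\infty+\lambda_n+\|R_{V_i}\|_\infty\bigr).
\]
Each term on the right is controlled by an earlier result:
\begin{itemize}
\item $\|(Q^n_{V_iV_i})^{-1}\|_\infty \le \sqrt{s}/C_{\min}$ up to a factor of two. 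This follows from Proposition~\ref{prop: WD} together with Lemma~\ref{lem:prop}(2), once $n\gtrsim s^3\log p$ makes the perturbation $o(C_{\min}/\sqrt s)$.
\item $\|\nabla\ell_i(\theta_i^*)\|_\infty \le \frac{\alpha}{4(2-\alpha)}\lambda_n$ by Lemma~\ref{lem:prop}(1) and the choice of $\lambda_n$.
\item $\|R_{V_i}\|_\infty$ is bounded by the Hessian-Lipschitz estimate (Lemma~\ref{lem:h_lips}) times the $\ell_1\ell_2$ error from Lemma~\ref{lem:err_rsc}, which is again at most $\frac{\alpha}{4(2-\alpha)}\lambda_n$.
\end{itemize}
Together these give $\|\hat\theta_{i,V_i}-\theta^*_{i,V_i}\|_\infty \lesssim \sqrt{s}\lambda_n/C_{\min}$, or directly the bound $\frac{3\sqrt{s}\lambda_n}{2(\alpha_1-16\tau s)}$ via $\|\cdot\|_\infty\le\|\cdot\|_2$ and Lemma~\ref{lem:err_rsc}.

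Third, compare this error with the signal. Under the minimum signal strength condition of Theorem~\ref{Thm: LASSO}, $\theta_{\min}$ is at least twice this error. Hence for every $j\in pa(i)$, $|\hat\theta_{ij}-\theta^*_{ij}|<|\theta^*_{ij}|$. This forces $\hat\theta_{ij}\neq0$ and $\operatorname{sign}(\hat\theta_{ij})=\operatorname{sign}(\theta^*_{ij})$, which in particular gives the stated equality of $|\operatorname{sign}(\cdot)|$.

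Fourth, convert the bounds into the exponential probability. The lemma implicitly needs the beta-min condition of Theorem~\ref{Thm: LASSO}, and I would state that explicitly. The gradient deviation comes from a Bernstein inequality for $\alpha$-mixing sequences (Assumption~\ref{ass:alpha-mixing-EM}) applied to bounded summands. Boundedness holds on the localization event of Proposition~\ref{Ass:xbound}; there the score $u_{i,k}X_k$ is bounded by $K/\sqrt c$ because $c>0$. Bernstein then gives failure probability $c_1\exp(-c_2 n\lambda_n^2)$. Since $\lambda_n$ is treated as fixed relative to $n$ in the lemma's $O(\exp(-cn))$ statement, this is $O(\exp(-cn))$. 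The Fisher concentration of Lemma~\ref{lem:prop}(2) is handled the same way, and a union bound over the $p$ coordinates and $s^2$ Hessian entries is absorbed using $\log p\ll n$.

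The main obstacle is this fourth step. Lemma~\ref{lem:prop} is stated only as $O_{\mathbb P}$, so I would redo its proof tracking explicit exponential tails. The localization event of Proposition~\ref{Ass:xbound} has only a polynomial tail, $C\mathbb E\|X_0\|^2/K^2$. To keep an overall exponential rate, I would first try letting $K$ grow with $n$, or assuming bounded or sub-Gaussian initial conditions. Failing that, I would state the probability as $1-O(\exp(-cn))-\mathbb P(\mathcal E_K^c)$, as in Theorem~\ref{Thm:lasso_uns}.
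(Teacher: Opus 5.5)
\paragraph{Comparison with the paper's proof.}
Your argument is sound relative to the paper's lemmas, but it takes a different route. The paper argues by contradiction on $E_{ij}=\{\hat\theta_{ij}=0\}$. On that event the KKT condition forces $|G_{ij}(\hat\theta_i)|\le\lambda$. The paper then localizes $\hat\theta_i$ around $\theta_i^{*(j=0)}=\theta_i^*-\theta_{ij}^*e_j$ (Lemma~\ref{lem:zero_constrained_localization}), moves the gradient to that point with a Hessian row-sum bound $L_c$, and Taylor-expands along $e_j$ using the segment curvature $m_{n,c}=\alpha_1-\tau r_n$ (Lemma~\ref{lem:segment_curvature}). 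This gives $|G_{ij}(\hat\theta_i)|\ge m_{n,c}|\theta_{ij}^*|-\eta_n-L_c r_{\mathrm{loc},n}>\lambda$ under its own signal condition.

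You instead compare $\|\Delta\|_\infty\le\|\Delta\|_2\le r_{\mathrm{loc},n}$ from Lemma~\ref{lem:err_rsc} directly with $\theta_{\min}$, exactly as in part~2 of Theorem~\ref{Thm: LASSO}. Your route has three advantages:
\begin{itemize}
\item it is shorter;
\item it gives full sign consistency, not just $\hat\theta_{ij}\neq0$;
\item it needs less signal.
\end{itemize}
The last point holds because the paper's localization lemma is derived from the same $\ell_2$ bound. On $E_{ij}$ that bound already gives $|\theta_{ij}^*|=|\Delta_{ij}|\le r_{\mathrm{loc},n}$, so $\theta_{\min}>r_{\mathrm{loc},n}$ rules out $E_{ij}$ on the good event. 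Moreover, on that event $L_c\ge e_j^\top\nabla^2\ell_{n,i}^{(c)}(\theta_i^{*(j=0)})e_j\ge m_{n,c}$. Hence the paper's requirement $m_{n,c}\theta_{\min}-\eta_n-L_c r_{\mathrm{loc},n}\ge\lambda+\delta_0$ already implies $\theta_{\min}>r_{\mathrm{loc},n}$. The gradient-contradiction template would only pay off with an independent localization of the zero-constrained minimizer, which the paper does not supply.

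Of your two variants, keep the second. Inverting $Q^n_{V_iV_i}$ still needs Lemma~\ref{lem:err_rsc} for the remainder term. It would also require a signal condition in terms of $C_{\min}$ rather than the one in Theorem~\ref{Thm: LASSO}. Finally, Proposition~\ref{prop: WD} controls $\lambda_{\min}(\mathbb E\hat\Sigma)$, not the weighted Fisher block, whose weight $4a_k^2/(a_k^2+c)^2$ vanishes when $\theta_i^{*\top}X_k=0$.

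\paragraph{On the probability bound.}
You are right that $1-O(\exp(-cn))$ is not reachable, and the paper does not reach it either. Its proof concludes $1-C\exp(-cn^{\varepsilon})$ by asserting that rate for every constituent event. In fact, the score event from Lemma~\ref{lem:prop} holds only with probability $1-4p^{-c_2}$, and Proposition~\ref{Ass:xbound} only gives $\mathbb P(\mathcal E_K^c)\le C\mathbb E\|X_0\|^2/K^2$.

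Drop the ``treat $\lambda_n$ as fixed'' step. For fixed $K$ the localization term does not decay in $n$ at all. Even at fixed thresholds, the $\alpha$-mixing Bernstein inequality you cite gives only $\exp(-cn/(\log n)^2)$.

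Growing $K$ or assuming light-tailed $X_0$ will not fix this either. The constants are tied to $K$: $c\ge 9\mathcal B^2K^2$ and $\alpha_1\le 2m/(9c)$. In addition, multiplicative noise produces log-normal-type running maxima even for bounded $X_0$; in the scalar case $X_t=X_0\exp((b-a^2/2)t+aW_t)$. An $\exp(-cn)$ localization would therefore need $\log K\gtrsim\sqrt n$.

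The honest statement is the form used in Theorem~\ref{Thm:lasso_uns}, namely $1-c_1e^{-c_2n\lambda_n^2}-\mathbb P(\mathcal E_K^c)$. With $n\lambda_n^2\asymp\log p$ this is polynomial in $p$, and it is what the paper's ingredients actually support.
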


\begin{lemma}
\label{lem: PDW}
Given  a subgradient $\hat{Z}_i \in \partial \|\hat{\theta}_{i,S_i}\|_1$ whose entries satisfy $(\hat{Z}_i)_j = \mathrm{sign}(\hat{\theta}_{ij})$ for $ \hat{\theta}_{ij}\neq 0$; otherwise, $|(\hat{Z}_i)_j|\le 1$ for $\hat{\theta}_{ij}=0$. Fix a node $i \in [p]$. The primal–dual pair $(\hat\Theta_{i}, \hat Z_{i})$ for the oracle solution of the restricted subproblem satisfies the strict dual feasibility condition with probability at least $1 - c_1 e^{-c_2 n}$, for some constants $c_1, c_2 > 0$, for all sufficiently large $n$.
\end{lemma}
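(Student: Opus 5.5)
We plan to prove Lemma~\ref{lem: PDW} by the standard primal--dual witness computation, adapted to the nonconvex quasi-likelihood. Write $S:=V_i=\mathrm{pa}(i)$ and let $\hat\theta_{i(r)}$ be the oracle solution with $\hat\theta_{i(r),S^c}=0$. Set $\Delta:=\hat\theta_{i(r)}-\theta_i^*$, which is supported on $S$. The restricted KKT conditions of Lemma~\ref{KKT} give $\nabla_S\ell_i(\hat\theta_{i(r)})+\lambda_n\hat Z_S=0$ with $\hat Z_S\in\partial\|\hat\theta_{i(r),S}\|_1$. We define the off-support dual by $\hat Z_{S^c}:=-\lambda_n^{-1}\nabla_{S^c}\ell_i(\hat\theta_{i(r)})$, and the task is to show $\|\hat Z_{S^c}\|_\infty<1$.

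\textbf{Step 1 (expansion).} We Taylor expand the gradient around $\theta_i^*$ along the segment to $\hat\theta_{i(r)}$:
\[
\nabla\ell_i(\hat\theta_{i(r)})=\nabla\ell_i(\theta_i^*)+Q^n\Delta-R,\qquad R:=\bigl(\nabla^2\ell_i(\theta_i^*)-\nabla^2\ell_i(\tilde\theta_i)\bigr)\Delta,
\]
where $Q^n=\nabla^2\ell_i(\theta_i^*)$ and $\tilde\theta_i$ is an intermediate point (applied coordinatewise if necessary). Solving the $S$-block for $\Delta_S=(Q^n_{SS})^{-1}\bigl(-\lambda_n\hat Z_S-\nabla_S\ell_i(\theta_i^*)+R_S\bigr)$ and substituting into the $S^c$-block yields
\[
\hat Z_{S^c}=Q^n_{S^cS}(Q^n_{SS})^{-1}\hat Z_S+\lambda_n^{-1}\Bigl[Q^n_{S^cS}(Q^n_{SS})^{-1}\bigl(\nabla_S\ell_i(\theta_i^*)-R_S\bigr)-\nabla_{S^c}\ell_i(\theta_i^*)+R_{S^c}\Bigr].
\]

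\textbf{Step 2 (bounding terms).} By Lemma~\ref{lem:prop}(3), the empirical incoherence is at most $1-\alpha/2$ with high probability, so the first term is bounded by $1-\alpha/2$. With $\gamma:=1-\alpha/2$, the bracket is then at most $(2-\gamma)\lambda_n^{-1}\bigl(\|\nabla\ell_i(\theta_i^*)\|_\infty+\|R\|_\infty\bigr)$. Lemma~\ref{lem:prop}(1), combined with a Bernstein inequality under the mixing of Assumption~\ref{ass:alpha-mixing-EM} and the boundedness $\|X_k\|\le K$ from Proposition~\ref{Ass:xbound}, gives $\|\nabla\ell_i(\theta_i^*)\|_\infty\le \frac{8K}{\sqrt c}\sqrt{2\log p/n}$ with probability at least $1-c_1e^{-c_2n\lambda_n^2}$. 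The choice of $\lambda_n$ in Theorem~\ref{Thm: LASSO} then makes this contribution at most $\alpha/4$.

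\textbf{Step 3 (the remainder, main obstacle).} To control $\|R\|_\infty\le\frac{\alpha\lambda_n}{4(2-\alpha)}$, we would first place $\hat\theta_{i(r)}$ in the LRSC region of Lemma~\ref{lem:RSC}; this is where the choice $c\gtrsim\|\theta_i^*\|_2^2K^2$ keeps the curvature positive despite nonconvexity. Lemma~\ref{lem:err_rsc} then gives $\|\Delta\|_2\lesssim\sqrt s\lambda_n/(\alpha_1-16\tau s)$ and $\|\Delta\|_1\le\sqrt s\|\Delta\|_2$. Next, the Lipschitz property of the Hessian in Lemma~\ref{lem:h_lips}, with a constant $D_{\max}\asymp K^3c^{-3/2}$ coming from bounded derivatives of $u\mapsto\log(u^2+c)+r^2/(u^2+c)$, yields $\|R\|_\infty\le D_{\max}\|\Delta\|_1\|\Delta\|_2\lesssim D_{\max}s^{3/2}\lambda_n^2$. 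Requiring this to be at most $\alpha\lambda_n/(4(2-\alpha))$ amounts to $\lambda_n\lesssim 1/s^{3/2}$ up to constants, and together with $\lambda_n\asymp\sqrt{\log p/n}$ this gives the sample size $n\gtrsim\tilde c^2 s^3\log p$. The delicate point is making the localization self-consistent, that is, showing that the oracle solution lies inside the region where LRSC and the Hessian Lipschitz bound hold. We would address this with a fixed-point or convexity-on-a-ball argument for the restricted $s$-dimensional problem, using $\lambda_{\min}(Q^*_{SS})>0$ from Proposition~\ref{prop: WD} and the concentration bound in Lemma~\ref{lem:prop}(2).

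\textbf{Step 4 (conclusion).} Combining the bounds gives $\|\hat Z_{S^c}\|_\infty\le 1-\alpha/2+\alpha/4<1$. A union bound over the failure events of Steps 2 and 3 gives probability at least $1-c_1e^{-c_2n}$, since $n\lambda_n^2\gtrsim\log p$ and the concentration events are exponential in $n$.
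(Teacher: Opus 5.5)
Your proposal is essentially the paper's proof. It uses the same primal--dual witness construction and the same block elimination, expressing $\hat Z_{S^c}$ through $Q^n_{S^cS}(Q^n_{SS})^{-1}$, the score $W=\nabla\ell_i(\theta_i^*)$ and the Hessian remainder $R$, and it controls $\|W\|_\infty$ and $\|R\|_\infty$ exactly as in Lemma~\ref{lem:WR_bound} (score concentration plus Lemma~\ref{lem:h_lips} and Lemma~\ref{lem:err_rsc}); your explicit concern about placing the oracle solution inside the LRSC region is a point the paper simply takes for granted.

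Two small caveats, the first shared with the paper. First, the rate $1-c_1e^{-c_2n}$ does not follow from $n\lambda_n^2\gtrsim\log p$: your bounds give $1-c_1e^{-c_2n\lambda_n^2}-\mathbb{P}(\mathcal E_K^c)$, which is polynomial in $p$ plus a localization term that does not decay in $n$. Second, the factor in front of $\lambda_n^{-1}(\|W\|_\infty+\|R\|_\infty)$ is $1+\gamma$, not $2-\gamma$, which only changes the constant in $\lambda_n$.
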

Before proving the next lemma, we recall a technical fact. Given the empirical loss given by Eq.\eqref{Eq:linear_pa}, the loss is smooth and derivatives are well-defined. Therefore, we can define $z = (a_i^\top \bm x(k))^2$, and rewrite it as $\phi_k(z)=\log(z+c)+\frac{r_{i,k}^2}{z+c}$. There exists $ M<\infty $ such that $M \coloneqq \max_{0\le k\le n-1}\ \sup_{z\ge0}\big|\phi_k'''(z)\big|$ with high probability. We now establish a high-probability Hessian Lipschitz property.

\begin{lemma}[High-probability Hessian Lipschitz]
\label{lem:h_lips}
Given $c>0$ and empirical loss function by Eq.\eqref{Eq:linear_pa}. Suppose the conditions from Proposition.\ref{Ass:xbound} and Assumption~\ref{SDE_gen} hold. For node $i$, any $\theta_i,\theta_i'\in\mathbb R^p$ will satisfy the following with probability at least $1 - c_3 e^{-c_4\beta t}$,
\begin{equation}
\big\|\nabla^2\ell(\theta_i)-\nabla^2\ell(\theta_i')\big\|_\infty\le
\bar D_{\max}\|\theta_i-\theta_i'\|_1
\end{equation}
where $\|\cdot\|_\infty$ for matrices denotes the entrywise maximum norm and $\bar D_{\max}
\coloneqq\frac{K^3}{2}\Big(\frac{24}{c^{3/2}}+\frac{48R^2}{c^{5/2}}\Big).$
\end{lemma}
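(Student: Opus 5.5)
We prove Lemma~\ref{lem:h_lips} by a direct third-derivative computation followed by a mean-value argument, all on the localization event supplied by Proposition~\ref{Ass:xbound}.

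\textbf{Hessian structure.} Write the per-sample loss as a function of the scalar $u_k=\theta_i^\top X_k$, namely
\[
\psi_k(u)=\log(u^2+c)+\frac{r_{i,k}^2}{u^2+c}.
\]
Then
\[
\ell(\theta_i)=\frac{1}{2n}\sum_k\psi_k(\theta_i^\top X_k)
\quad\text{and}\quad
\nabla^2\ell(\theta_i)=\frac{1}{2n}\sum_k\psi_k''(\theta_i^\top X_k)\,X_kX_k^\top .
\]
For any $a,b$, the $(a,b)$ entry of the Hessian difference is
\[
\frac{1}{2n}\sum_k\bigl[\psi_k''(u_k)-\psi_k''(u_k')\bigr]X_{k,a}X_{k,b}.
\]
By the mean value theorem this is bounded by
\[
\frac{1}{2n}\sum_k\sup_u|\psi_k'''(u)|\,|X_k^\top(\theta_i-\theta_i')|\,|X_{k,a}X_{k,b}|.
\]
Hölder's inequality gives $|X_k^\top(\theta_i-\theta_i')|\le\|X_k\|_\infty\|\theta_i-\theta_i'\|_1$. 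On the event $\max_k\|X_k\|_\infty\le K$, the three state factors therefore contribute at most $K^3\|\theta_i-\theta_i'\|_1$. This is the source of the $K^3/2$ prefactor in $\bar D_{\max}$.

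\textbf{Bounding $\psi_k'''$ uniformly in $u$.} Write $v=u^2+c$. The log part has $\frac{d^3}{du^3}\log v=\frac{4u(u^2-3c)}{v^3}$. The fraction part $r^2/v$ has a third derivative that is a combination of terms of the form $r^2u/v^3$ and $r^2u^3/v^4$. We use $|u|\le\sqrt v$ and $v\ge c$ to get $|u|^j/v^m\le c^{j/2-m}$. This yields a bound of the form $C_1c^{-3/2}+C_2r_{i,k}^2c^{-5/2}$, which matches the claimed constants $24/c^{3/2}$ and $48R^2/c^{5/2}$ after crude numeric bounding. Here $R$ is a high-probability bound on $\max_k|r_{i,k}|$. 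Reaching the exact constants 24 and 48 is routine bookkeeping, and we will not optimize it.

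\textbf{High-probability control of the residuals.} The main obstacle is $\max_k r_{i,k}^2$, which is unbounded because $r_{i,k}=(\theta_i^{*\top}X_k)z_{k+1}$ with Gaussian $z_{k+1}$. We handle it in three steps:
\begin{enumerate}
\item On the event $\{\max_k\|X_k\|_\infty\le K\}$, we have $|\theta_i^{*\top}X_k|\le\|\theta_i^*\|_1K$.
\item A Gaussian maximal inequality with a union bound over $k\le n$ bounds $\max_k|z_{k+1}|$ by $\sqrt{2\log n+2t}$ with probability at least $1-e^{-t}$.
\item Defining $R$ as the product of these two bounds gives $\max_k|r_{i,k}|\le R$.
\end{enumerate}
The failure probabilities are $\mathbb P(\sup\|X\|>K)$, controlled by Proposition~\ref{Ass:xbound}, and the Gaussian tail $e^{-t}$. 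Together they give a probability of the form $1-c_3e^{-c_4\beta t}$ after adjusting constants. This requires $K$ and $R$ to be chosen in terms of $t$, and that choice is where care is needed. Once both events hold, the bound is deterministic and uniform over all $\theta_i,\theta_i'\in\mathbb R^p$, so no further union bound over parameters is required.
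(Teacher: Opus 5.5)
You reproduce the paper's skeleton exactly: the composite form $\psi_k(\theta_i^\top X_k)$, the uniform bound $|\psi_k'''(u)|\le 24c^{-3/2}+48r_{i,k}^2c^{-5/2}$ obtained from $|u|^j/v^m\le c^{j/2-m}$, the one-dimensional mean value theorem, H\"older, and $|X_{k,a}X_{k,b}|\le K^2$. Your crude bounds do recover the stated constants.

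The genuine difference is how you control $\max_k|r_{i,k}|$. The paper (Lemma~\ref{lem:residual_bound}) bounds the residual deterministically on the localization event. There $|x_i(k+1)-x_i(k)|\le 2K$, and linear growth gives $|g_i|\le G_K$, so $R=2K/\sqrt{\Delta t}+G_K\sqrt{\Delta t}$. This costs no additional failure probability. It uses neither the Gaussian innovations nor a correctly specified drift, which is why the misspecified analogue, Lemma~\ref{lemma:ODE_miss_h_lip}, only has to add $\delta\sqrt{\Delta t}$.

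You instead use the Euler--Maruyama representation $r_{i,k}=(\theta_i^{*\top}X_k)z_{k+1}$ together with a Gaussian maximal inequality. This gives $R\asymp\|\theta_i^*\|_1K\sqrt{\log n+t}$, at the price of an extra $e^{-t}$ (up to a factor $2$).

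Each route buys something different:
\begin{itemize}
\item Your $R$ reflects the true $O(1)$ scale of the normalized residual and does not degrade as $\Delta t\to 0$. The paper's $R^2\asymp K^2/\Delta t$ ignores that increments are $O(\sqrt{\Delta t})$.
\item Your $\bar D_{\max}$ grows like $\log n$, which would add logarithmic factors where it enters as $L_H$ in Lemma~\ref{lem:WR_bound}. It also relies on the innovations being exactly Gaussian under Euler--Maruyama.
\end{itemize}
Since the statement does not pin down $R$, both routes prove the lemma.

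One caveat applies equally to the paper. Proposition~\ref{Ass:xbound} only gives the polynomial tail $C\,\mathbb E\|X_0\|^2/K^2$. The form $1-c_3e^{-c_4\beta t}$ is therefore attainable only by letting $K$ grow exponentially in $t$. The paper does not derive it either; it effectively works on the event $\{\sup_{s\le T}\|X_s\|\le K\}$.
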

Before we move to the next lemma, we recall a technical fact that $r_{i,k} = {\theta_i^*}^\top X_k\epsilon_k$. Moreover, since the stability from Proposition~\ref{thm:gen_lyap_AB} show us that $m^2\|A\|^2 - 2\omega < 0$ for $m\ge 1$ and $w>0$ . Since $\theta^*_i$ is a row vector from $A$ and the $\|X_k\| \le K$ with high probability, we can say there exist $\mathcal B> 0 $ such that $|\theta_i^*X_k | \le K\mathcal B$ with high probability. Then, we conclude the following lemma 
\begin{lemma}[Local Restricted Strong Convexity]
\label{lem:RSC}
Given the empirical loss $\ell(\theta)$ in Eq.\eqref{Eq:linear_pa}, suppose Lyapunov stability holds and choose the tuning parameter such that $c\ge 9 \mathcal B^2 K^2$. From Proposition~\ref{Ass:xbound} and Proposition~\ref{prop: WD}, for any $\theta$ satisfying $\|\theta\|_2 \le \sqrt{\frac{c}{2K^2}}$, there exist constants $\alpha_1>0$ and $\tau \ge 0 $ with $r_n \asymp \sqrt{\frac{\log p}{n}}$ such that, with high probability, for all $\Delta\in\mathbb R^p$,
\begin{equation}
\Delta^\top \nabla^2 \ell(\theta)\Delta \ge \alpha_1\|\Delta\|_2^2 -\tau r_n\|\Delta\|_1^2 
\end{equation}
\end{lemma}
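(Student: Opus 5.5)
The plan is to compute the Hessian of the per-sample loss explicitly, bound its population part from below on the stated ball, and control the empirical fluctuation in a way that yields the $\tau r_n\|\Delta\|_1^2$ slack. Write $z_k=\theta^\top X_k$ and $\psi_k(z)=\log(z^2+c)+r_{i,k}^2/(z^2+c)$. Then
\[
\nabla^2\ell(\theta)=\frac{1}{2n}\sum_{k=0}^{n-1}\psi_k''(z_k)\,X_kX_k^\top .
\]
Direct differentiation gives
\[
\psi_k''(z)=\frac{2(c-z^2)}{(z^2+c)^2}+\frac{2r_{i,k}^2(3z^2-c)}{(z^2+c)^3}.
\]
The first step is to work on the high-probability event of Proposition~\ref{Ass:xbound}, where $\|X_k\|\le K$ for all $k$. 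On that event, $\|\theta\|_2\le\sqrt{c/(2K^2)}$ gives $z_k^2\le c/2$. Hence the first term is at least $2(c/2)/(9c^2/4)=4/(9c)$. The second term is negative when $3z^2<c$, and its magnitude is at most $2r_{i,k}^2/c^2$.

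The second step is to pass to conditional expectations. We have $\mathbb E[r_{i,k}^2\mid X_k]=(\theta_i^{*\top}X_k)^2\le \mathcal B^2K^2$. With $c\ge 9\mathcal B^2K^2$, this gives $\mathbb E[\psi_k''(z_k)\mid X_k]\ge 4/(9c)-2\mathcal B^2K^2/c^2\ge 4/(9c)-2/(9c)=2/(9c)$. This matches the $\frac{2}{9c}$ appearing in $\alpha_\delta$ of Theorem~\ref{Thm:Lasso_miss}. It follows that the predictable part of the Hessian dominates $\frac{1}{9c}\cdot\frac1n\sum_k X_kX_k^\top$. By Proposition~\ref{prop: WD}, $\lambda_{\min}(\mathbb E\hat\Sigma)>0$, so this yields the constant $\alpha_1\asymp\lambda_{\min}(\mathbb E\hat\Sigma)/c$, after a further replacement of $\hat\Sigma$ by $\mathbb E\hat\Sigma$ that is treated in the next step.

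The third step controls the two fluctuations entrywise in max-norm. The first is $\frac1n\sum_k(\psi_k''-\mathbb E[\psi_k''\mid X_k])X_kX_k^\top$, which is a martingale-difference sum. Its summands are bounded by $K^2$ times a $\chi^2_1$-type variable of scale $\mathcal B^2K^2/c^2$, so a Bernstein/Freedman inequality for sub-exponential martingale differences applies. The second is $\hat\Sigma-\mathbb E\hat\Sigma$, which I would handle using the covariance decay of Proposition~\ref{prop: cov_decay}, or equivalently the $\alpha$-mixing of Assumption~\ref{ass:alpha-mixing-EM}, through a Bernstein inequality for mixing sequences. A union bound over $p^2$ entries gives an error $E$ with $\|E\|_{\max}\le\tau r_n$, where $r_n\asymp\sqrt{\log p/n}$, with high probability. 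The inequality $|\Delta^\top E\Delta|\le\|E\|_{\max}\|\Delta\|_1^2$ then gives the claimed bound, uniformly over all $\Delta$.

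The main obstacle is uniformity in $\theta$ over the ball. The martingale fluctuation depends on $\theta$ through $z_k$, so a fixed-$\theta$ concentration bound is not enough. I would handle this by applying the Hessian-Lipschitz bound of Lemma~\ref{lem:h_lips} together with an $\epsilon$-net of the ball. Alternatively, I would bound the negative part crudely by $\frac{2}{c^2}\cdot\frac1n\sum_k r_{i,k}^2X_kX_k^\top$, which no longer depends on $\theta$, and concentrate that single matrix. This removes the dependence on $\theta$ entirely, and I would try it first.
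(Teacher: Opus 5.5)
Your proposal is correct, and the variant you say you would try first is exactly the paper's proof. The paper bounds the Hessian coefficient pointwise on $\mathcal E_K$ by $\frac{2}{9c}-\frac{r_{i,k}^2}{c^2}$, uniformly over the ball, and concentrates the $\theta$-free matrix $\frac1n\sum_k r_{i,k}^2X_kX_k^\top$ around $\frac1n\sum_k(\theta_i^{*\top}X_k)^2X_kX_k^\top\preceq \mathcal B^2K^2\hat\Sigma$ by a martingale Bernstein bound; it then handles $\hat\Sigma-\mathbb E\hat\Sigma$ with the mixing Bernstein inequality, giving $\alpha_1=\lambda_{\min}(\mathbb E\hat\Sigma)/(9c)$. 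You should drop the $\epsilon$-net alternative: covering a $p$-dimensional ball adds order $p$ to the union-bound exponent, which would degrade $r_n$ to roughly $\sqrt{p/n}$ (up to logarithms) instead of the claimed $\sqrt{\log p/n}$.
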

Consequently, we obtain the following lemma.
\begin{lemma}
\label{lem:err_rsc}
Let $\hat\theta$ be any stationary point with the penalty chosen as in Lemma~\ref{lem:WR_bound}, and
suppose the tuning parameter $c$ and feasible set satisfy the conditions of Lemma~\ref{lem:RSC}. Then, $\Delta \coloneqq  \hat\theta-\theta^*$ satisfies
\begin{equation}
\|\Delta\|_2 \le \frac{3\lambda\sqrt{s}}{2(\alpha-16\tau s)},
\qquad\|\Delta\|_1 \le \frac{6\lambda s}{\alpha-16\tau s}.
\end{equation}
\end{lemma}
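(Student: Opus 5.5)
The plan is the standard cone-plus-curvature argument for stationary points of $\ell_1$-penalized nonconvex $M$-estimators, following \cite{poling2015}. The inputs are: the LRSC inequality of Lemma~\ref{lem:RSC}, which holds on the feasible ball $\|\theta\|_2\le\sqrt{c/(2K^2)}$; the gradient bound $\|\nabla\ell(\theta^*)\|_\infty\le\lambda/2$, guaranteed with high probability by the choice of $\lambda$ (Lemma~\ref{lem:WR_bound}, via Lemma~\ref{lem:prop}); and the KKT characterization in Lemma~\ref{KKT}. Throughout, write $S=\mathrm{pa}(i)$, $s=|S|$ and $\Delta=\hat\theta-\theta^*$. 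Since $c\ge 9\mathcal B^2K^2$, both $\hat\theta$ and $\theta^*$ lie in the LRSC ball, so every point on the segment between them does too.

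\textbf{Step 1 (first-order inequality).} Stationarity gives $\nabla\ell(\hat\theta)+\lambda\hat z=0$ with $\hat z\in\partial\|\hat\theta\|_1$. Taking the inner product with $-\Delta$ and using $\langle\hat z,\theta^*-\hat\theta\rangle\le\|\theta^*\|_1-\|\hat\theta\|_1$ yields
\[
\langle\nabla\ell(\hat\theta)-\nabla\ell(\theta^*),\Delta\rangle \le -\langle\nabla\ell(\theta^*),\Delta\rangle+\lambda\bigl(\|\theta^*\|_1-\|\hat\theta\|_1\bigr).
\]

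\textbf{Step 2 (curvature from the mean value theorem).} The left side equals $\int_0^1\Delta^\top\nabla^2\ell(\theta^*+t\Delta)\Delta\,dt$. By convexity of the ball, the integrand stays in the LRSC region, so Lemma~\ref{lem:RSC} gives the lower bound $\alpha_1\|\Delta\|_2^2-\tau r_n\|\Delta\|_1^2$. I will absorb $r_n\le 1$ into $\tau$, which matches the form $\alpha-16\tau s$ in the statement.

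\textbf{Step 3 (cone condition).} Bound the right side of Step 1 by Hölder, using $|\langle\nabla\ell(\theta^*),\Delta\rangle|\le\frac{\lambda}{2}\|\Delta\|_1$. Decompose $\|\theta^*\|_1-\|\hat\theta\|_1\le\|\Delta_S\|_1-\|\Delta_{S^c}\|_1$, with $\Delta_S$ the restriction of $\Delta$ to $S$. Combining,
\[
0\le \alpha_1\|\Delta\|_2^2-\tau\|\Delta\|_1^2\le \tfrac32\lambda\|\Delta_S\|_1-\tfrac12\lambda\|\Delta_{S^c}\|_1 .
\]
The term $\tau\|\Delta\|_1^2$ has the wrong sign for reading off the cone directly. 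Following \cite{poling2015}, I first control it using boundedness of the feasible ball, $\|\Delta\|_2\le 2\sqrt{c/(2K^2)}$, so that $\|\Delta\|_1\le\sqrt p\,\|\Delta\|_2$ keeps it finite. In the regime where the quadratic term dominates, this gives $\|\Delta_{S^c}\|_1\le 3\|\Delta_S\|_1$, hence $\|\Delta\|_1\le 4\sqrt s\|\Delta\|_2$. This step is the one I expect to be delicate: establishing the cone before RSC has been converted into strong convexity is circular unless one runs the two-case argument ($\|\Delta\|_2\le1$ versus $\|\Delta\|_2\ge1$). In the large case, the $\ell_1$ radius constraint and the choice of $\lambda$ produce a contradiction.

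\textbf{Step 4 (solve for the error).} On the cone, $\|\Delta\|_1^2\le16s\|\Delta\|_2^2$, so the curvature bound becomes $(\alpha_1-16\tau s)\|\Delta\|_2^2$. The right side is at most $\frac32\lambda\|\Delta_S\|_1\le\frac32\lambda\sqrt s\|\Delta\|_2$. Dividing through gives
\[
\|\Delta\|_2\le\frac{3\lambda\sqrt s}{2(\alpha_1-16\tau s)},
\]
and then $\|\Delta\|_1\le4\sqrt s\|\Delta\|_2=\frac{6\lambda s}{\alpha_1-16\tau s}$. Here $\alpha_1-16\tau s>0$ is assumed. All events used (the gradient bound, the bound $\|X_k\|\le K$, and LRSC) hold simultaneously with high probability by a union bound.
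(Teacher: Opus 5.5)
Steps~1, 2 and~4 are what the paper does. Step~3, the only place where nonconvexity matters, is not carried out, and with your constants it cannot give the cone constant $3$ on which your factor $16$ and your $3/2$ rest.

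The display $0\le\alpha_1\|\Delta\|_2^2-\tau\|\Delta\|_1^2$ is unjustified. The LRSC lower bound can be negative for dense directions, since the empirical loss need not be convex on the ball. If it were nonnegative, the cone would be immediate. Saying ``in the regime where the quadratic term dominates'' simply assumes this. Nothing rules out the complementary regime $\alpha_1\|\Delta\|_2^2<\tau r_n\|\Delta\|_1^2$; with only $\|\Delta\|_1\le\sqrt p\,\|\Delta\|_2$, excluding it would need $p\tau r_n<\alpha_1$.

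The natural repair is to linearize using the $\ell_1$-diameter of the feasible set: $\tau r_n\|\Delta\|_1^2\le\tau r_n r_0\|\Delta\|_1$ with $r_0=2\sqrt p\,\sqrt{c/(2K^2)}$. This fails with your score bound $\|\nabla\ell(\theta^*)\|_\infty\le\lambda/2$, because no slack is left. You get $(\tfrac12\lambda-\tau r_n r_0)\|\Delta_{S^c}\|_1\le(\tfrac32\lambda+\tau r_n r_0)\|\Delta_S\|_1$, a cone constant strictly larger than $3$ whenever $\tau r_n r_0>0$.

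The two-case split of \cite{poling2015} does not help either. The feasible ball already bounds $\|\Delta\|_2$, Lemma~\ref{lem:RSC} supplies no separate large-$\|\Delta\|_2$ branch, and the small case in that paper uses exactly this linearization.

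The paper closes the step in four moves:
\begin{enumerate}
\item It uses the sharper score bound that Lemma~\ref{lem:WR_bound} actually provides, $\|\nabla\ell(\theta^*)\|_\infty\le\frac{\alpha}{4(2-\alpha)}\lambda\le\frac{\lambda}{4}$. The right-hand side of Step~1 then becomes $\frac54\lambda\|\Delta_S\|_1-\frac34\lambda\|\Delta_{S^c}\|_1$.
\item It drops $\alpha\|\Delta\|_2^2\ge0$ and bounds $\tau r_n\|\Delta\|_1^2\le\tau r_n r_0\|\Delta\|_1$.
\item It requires $\tau r_n r_0\le\lambda/4$, which gives $\|\Delta_{S^c}\|_1\le\frac{5/4+1/4}{3/4-1/4}\|\Delta_S\|_1=3\|\Delta_S\|_1$.
\item Your Step~4 then yields $\|\Delta\|_2\le\frac{5\lambda\sqrt s}{4(\alpha-16\tau r_n s)}$ and $\|\Delta\|_1\le\frac{5\lambda s}{\alpha-16\tau r_n s}$, which imply the stated bounds.
\end{enumerate}
Keep $r_n$ explicit in the condition $\lambda\ge4\tau r_n r_0$, and absorb it into $\tau$ only in the final display. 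That condition is an extra lower bound on $\lambda$ of order $\sqrt{p\log p/n}$, which the paper's proof also imposes. It is not implied by the score-level choice in Lemma~\ref{lem:WR_bound}, so state it as a hypothesis.

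Alternatively, work in the regime $p\tau r_n<\alpha_1$. There LRSC is strong convexity on the ball, the left side of Step~1 is nonnegative, and your $\lambda/2$ argument goes through verbatim.
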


\begin{lemma}[LRSC for Unstable Multiplicative SDE]
\label{lem:RSC_unstable}
Given the empirical loss $\ell(\theta)$ in Eq.\eqref{Eq:linear_pa}, suppose Assumption~\ref{ass:localization} and Assumption~\ref{ass:pred_curvature} hold and choose the tuning parameter such that $c\ge 9 \|\theta^*\|_1^2 K^2$. For any $\theta$ satisfying $\|\theta\|_1 \le \sqrt{\frac{c}{2K^2}}$, there exist constants $\alpha_1>0$ and $\tau \ge 0 $ with $r_n = O( \sqrt{\frac{\log p}{n}} + \frac{\log p}{n})$ such that, with high probability, for all $\Delta\in \mathcal R(s;c_0)$,
\begin{equation}
\Delta^\top \nabla^2 \ell(\theta)\Delta \ge
\alpha_1\|\Delta\|_2^2
 -\tau r_n\|\Delta\|_1^2 
\end{equation}
\end{lemma}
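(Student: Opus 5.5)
The plan is to prove Lemma~\ref{lem:RSC_unstable} by following the stable-case proof of Lemma~\ref{lem:RSC}, with two substitutions: the localization event $\mathcal E_K$ from Proposition~\ref{ass:localization} (in its stopped-process form from Remark~\ref{remark: sec moment}) replaces the Lyapunov-based boundedness, and Assumption~\ref{ass:pred_curvature} replaces $\lambda_{\min}(\mathbb E\hat\Sigma)>0$.

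\textbf{Step 1: scalar lower bound on the per-sample curvature.} Writing $z_k=\theta^\top X_k$, the Hessian is
\[
\nabla^2\ell(\theta)=\frac1n\sum_{k=0}^{n-1} w_k(\theta)X_kX_k^\top,
\]
where $w_k(\theta)=\psi_k'(z_k)$ and $\psi_k(z)=\tfrac12\bigl(\log(z^2+c)+r_{i,k}^2/(z^2+c)\bigr)$. Differentiating gives
\[
w_k=\frac{(c-z^2)(z^2+c)-r_{i,k}^2(c-3z^2)}{(z^2+c)^3}.
\]
On $\mathcal E_K$ the constraint $\|\theta\|_1\le\sqrt{c/(2K^2)}$ yields $|z_k|\le\|\theta\|_1K$, so $z_k^2\le c/2$. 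Hence the first term of $w_k$ is at least a constant multiple of $1/c$.

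The residual term is where the work lies. I would decompose $r_{i,k}^2=(\theta^{*\top}X_k)^2\varepsilon_k^2$ and handle it in two pieces:
\begin{itemize}
\item its conditional mean is $(\theta^{*\top}X_k)^2\le\|\theta^*\|_1^2K^2\le c/9$;
\item the fluctuation $(\theta^{*\top}X_k)^2(\varepsilon_k^2-1)$ is a martingale difference, since $\varepsilon_k$ is independent of $\mathcal F_k$.
\end{itemize}
The conditional-mean piece gives a deterministic weight
\[
\bar w_k\ \ge\ \frac{c(c/2)-(c/9)c}{(3c/2)^3}\ \gtrsim\ \frac1c \;>\;0,
\]
and it is this margin that forces the constant $9$ in the condition $c\ge 9\|\theta^*\|_1^2K^2$.

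\textbf{Step 2: from the weighted Gram matrix to the predictable Gram matrix.} Lower-bounding $\frac1n\sum_k\bar w_kX_kX_k^\top$ by $\frac{\kappa}{c}\,\hat\Sigma$ reduces the task to comparing the realized Gram matrix $\hat\Sigma=\frac1n\sum_k X_kX_k^\top$ with the predictable one $\tilde\Sigma$.

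\textbf{Step 3: concentration (the main obstacle).} Two martingale error matrices must be controlled in entrywise max norm:
\begin{align*}
E_1&=\frac1n\sum_{k}\Bigl(X_kX_k^\top-\mathbb E\bigl[X_kX_k^\top\mid\mathcal F_{k-1}\bigr]\Bigr),\\
E_2&=\frac1n\sum_{k}\frac{(c-3z_k^2)\,(\theta^{*\top}X_k)^2(\varepsilon_k^2-1)}{(z_k^2+c)^3}\,X_kX_k^\top .
\end{align*}
The difficulty is that without stability there is no mixing, so everything must go through martingale concentration. On the stopped process, every entry is bounded by $K^2$ (respectively $K^4/c^2$ times a sub-exponential $\chi^2$ factor). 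I would apply a Freedman/Bernstein-type inequality for martingales with sub-exponential increments, then take a union bound over the $p^2$ entries. This gives
\[
\|E_1\|_{\max},\ \|E_2\|_{\max}\ \lesssim\ \sqrt{\frac{\log p}{n}}+\frac{\log p}{n}=r_n ,
\]
and the $\log p/n$ term arises exactly from the sub-exponential tail. A second concern is that $E_2$ depends on $\theta$ through $z_k$. I would remove this dependence by the same uniformization used in Lemma~\ref{lem:RSC}: the factor $(c-3z^2)/(z^2+c)^3$ is Lipschitz and bounded on $|z|\le\sqrt{c/2}$, so one can either bound it by its supremum after splitting $\varepsilon_k^2-1$ into positive and negative parts, or apply a covering argument.

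\textbf{Step 4: combining the bounds.} Using $|\Delta^\top E\Delta|\le\|E\|_{\max}\|\Delta\|_1^2$ and Assumption~\ref{ass:pred_curvature} on $\mathcal R(s;c_0)$, I would obtain
\[
\Delta^\top\nabla^2\ell(\theta)\Delta\ \ge\ \frac{\kappa m}{c}\|\Delta\|_2^2-\tau r_n\|\Delta\|_1^2 .
\]
Setting $\alpha_1=\kappa m/c$ completes the argument. The resulting probability is the concentration probability minus $\mathbb P(\mathcal E_K^c)\le\delta$, which is the sense of ``with high probability'' in the statement.
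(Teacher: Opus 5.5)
Your skeleton matches the paper's proof. On $\mathcal E_K$, H\"older gives $z_k^2\le c/2$, and the Hessian form is reduced to a positive multiple of $\Delta^\top\hat\Sigma\Delta$. Then $\hat\Sigma-\tilde\Sigma$ is controlled as a martingale by Freedman's inequality on the stopped process (increments bounded by $2K^2$) plus a union bound over the $p^2$ entries, and Assumption~\ref{ass:pred_curvature} finishes on $\mathcal R(s;c_0)$. Your treatment of the residual is in fact more careful than the paper's. The paper simply asserts $\Delta^\top\nabla^2\ell(\theta)\Delta\ge\frac{7}{72c}\Delta^\top\hat\Sigma\Delta$ by appeal to ``the same argument'', with no fluctuation term at all. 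Since the weight at $z_k=0$ equals $\frac1c-\frac{r_{i,k}^2}{c^2}$, which is negative whenever $r_{i,k}^2>c$, a term like your $E_2$ is genuinely needed.

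The step that still needs repair is the one you flag yourself: uniformity of $E_2$ over $\theta$. This matters because the lemma is later applied at data-dependent points such as $\hat\theta$, and neither of your two fixes works as written.
\begin{itemize}
\item \emph{Covering.} A standard $\ell_1$-net of the constraint ball has log-cardinality of order $p$, so this route destroys the $\sqrt{\log p/n}$ rate.
\item \emph{Supremum after splitting.} Bounding $\frac{c-3z^2}{(z^2+c)^3}$ by its supremum after splitting $\varepsilon_k^2-1$ does remove $\theta$, but it leaves a term that is no longer centered, so $\|E_2\|_{\max}\lesssim r_n$ is false for it. Its compensator carries a weight up to $\frac{(\theta^{*\top}X_k)^2}{c^2}\,\mathbb E|\varepsilon^2-1|\le\frac{\sqrt2}{9c}$, which is the same order as your margin. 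It must therefore be absorbed into $\bar w_k$ rather than treated as an $O(r_n)$ error. That absorption is possible, since in fact $\bar w_k\ge\frac{2}{9c}>\frac{\sqrt2}{9c}$ on the constraint set, but your accounting does not do it.
\end{itemize}
The clean fix is the device used in the stable Lemma~\ref{lem:RSC}. Before concentrating, use $|c-3z_k^2|\le c$ and $z_k^2+c\ge c$ to get the $\theta$-free bound $w_k\ge\frac{2}{9c}-\frac{r_{i,k}^2}{c^2}$. Hence $\Delta^\top\nabla^2\ell(\theta)\Delta\ge\frac{2}{9c}\Delta^\top\hat\Sigma\Delta-\frac{1}{c^2}\Delta^\top M_n\Delta$, with $M_n=\frac1n\sum_k r_{i,k}^2X_kX_k^\top$. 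The predictable compensator $\frac1n\sum_k(\theta^{*\top}X_k)^2X_kX_k^\top\preceq\frac c9\hat\Sigma$ leaves margin $\frac{1}{9c}$. The remainder is a single $\theta$-free martingale whose increments are bounded by $\frac c9K^2|\varepsilon_k^2-1|$ on the stopped process. The same Bernstein/Freedman plus union-bound argument controls it at rate $\sqrt{\log p/n}+\log p/n$, matching the $r_n$ in the statement. (Minor: $w_k=\psi_k''(z_k)$, not $\psi_k'(z_k)$.)
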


\subsection{Technical details for Section~\ref{Sec: ODE miss}}
\label{Sec:TD_odemiss}
\begin{lemma}[Gradient Bound Under ODE Misspecification]
\label{lemma:ODE_miss gradient}
Suppose the residual under ODE misspecification satisfies Eq.\eqref{Eq:ODE_miss}, and let the SDE solution $X_k$ satisfy the Proposition~\ref{Ass:xbound}.  Then there exist constants $C, C_1, C_2 >0$, with probability at least $1- C_1p^{-C_2}$, such that
\[
\|\nabla \tilde \ell^{(\delta)}_i(\theta_i^\star)\|_\infty
\le\left(\left(\frac{3\sqrt{3}}{8}+C\right)\frac{K}{\sqrt c} + \left( \frac{3\sqrt{3}}{8}\frac{K\delta^2\Delta t}{c^{3/2}} + C\frac{K}{c}\delta\sqrt{\Delta t} \right)\right)\sqrt{\frac{\log p}{n}}.
\]
In particular, if $\delta \lesssim \frac{\sqrt c}{\sqrt{\Delta t}}$ with $n$ large enough, then
\[
\|\nabla \tilde \ell_i(\theta_i^\star)\|_\infty = O_{\mathbb P}\left(\sqrt{\frac{\log p}{n}}\right).
\]
\end{lemma}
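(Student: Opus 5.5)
The plan is to compute the gradient at $\theta_i^\star$ explicitly under the perturbed residual and split it into a well-specified martingale part and two bias parts. By Lemma~\ref{KKT}, $\nabla\tilde\ell_i^{(\delta)}(\theta_i^\star)=\frac1n\sum_k u_{i,k}X_k$ with
\[
u_{i,k}=\frac{z_k\,(z_k^2+c-\tilde r_{i,k}^2)}{(z_k^2+c)^2},\qquad z_k:=\theta_i^{\star\top}X_k .
\]
By Eq.\eqref{Eq:ODE_miss} we have $\tilde r_{i,k}=z_k\epsilon_{k+1}-\delta\sqrt{\Delta t}$, where $\epsilon_{k+1}\sim\mathcal N(0,1)$ is independent of $\mathcal F_k$. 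Expanding $\tilde r_{i,k}^2=z_k^2\epsilon_{k+1}^2-2\delta\sqrt{\Delta t}\,z_k\epsilon_{k+1}+\delta^2\Delta t$ gives $u_{i,k}=u^{(0)}_{i,k}+u^{(1)}_{i,k}+u^{(2)}_{i,k}$, where:
\begin{itemize}
\item $u^{(0)}_{i,k}=\frac{z_k(z_k^2(1-\epsilon_{k+1}^2)+c)}{(z_k^2+c)^2}$ is the well-specified term;
\item $u^{(1)}_{i,k}=\frac{2\delta\sqrt{\Delta t}\,z_k^2\epsilon_{k+1}}{(z_k^2+c)^2}$ is linear in the noise;
\item $u^{(2)}_{i,k}=-\frac{\delta^2\Delta t\,z_k}{(z_k^2+c)^2}$ is a deterministic bias given $X_k$.
\end{itemize}
We then bound each coordinate $\frac1n\sum_k u^{(\cdot)}_{i,k}X_{k,j}$ separately and take a union bound over $j\in[p]$. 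This union bound produces the $\sqrt{\log p/n}$ factor and the probability $1-C_1p^{-C_2}$.

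The scalar envelopes are elementary calculus. The function $z\mapsto |z|/(z^2+c)$ peaks at $\frac{1}{2\sqrt c}$. The function $z\mapsto |z|^3/(z^2+c)^2$ peaks at $z^2=3c$, with value $\frac{3\sqrt3}{16\sqrt c}$. The function $z\mapsto |z|/(z^2+c)^2$ peaks at $z^2=c/3$, with value $\frac{3\sqrt3}{16c^{3/2}}$. The function $z\mapsto z^2/(z^2+c)^2$ is bounded by $\frac1{4c}$. These envelopes are where the constants $\frac{3\sqrt3}{8}$, $\frac{3\sqrt3}{8c^{3/2}}$ and $\frac1c$ come from, up to factors of 2. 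On the event of Proposition~\ref{Ass:xbound} we also have $|X_{k,j}|\le K$. For $u^{(0)}$, the conditional mean given $\mathcal F_k$ equals $cz_k/(z_k^2+c)^2$, and the centered part is $z_k^3(1-\epsilon^2_{k+1})/(z_k^2+c)^2$. The centered part is a martingale difference sequence that is conditionally sub-exponential with scale $\lesssim K/\sqrt c$. Bernstein/Freedman for martingale differences, working with the stopped process as in Remark~\ref{remark: sec moment}, gives a deviation of order $\frac{K}{\sqrt c}\sqrt{\log p/n}$. The $u^{(1)}$ term is a conditionally Gaussian martingale difference with conditional scale $\lesssim \frac{K}{c}\delta\sqrt{\Delta t}$, so Azuma--Hoeffding gives the $C\frac{K}{c}\delta\sqrt{\Delta t}\sqrt{\log p/n}$ term.

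The main obstacle is the non-martingale parts: the conditional mean $cz_k/(z_k^2+c)^2$ in $u^{(0)}$ and the whole of $u^{(2)}$. These are not centered, so naive bounding gives $O(1)$ rather than $O(\sqrt{\log p/n})$. The first is exactly the $c$-stabilization bias, which the paper treats as part of the surrogate target $\theta^{\star,c}$ (cf.\ Eq.\eqref{Eq:pop_F_i}). I would therefore handle both by centering at their expectations. I would interpret $\theta_i^\star$ here as the stabilized population target, so that the population gradient of the misspecified loss vanishes, with the $\delta$ shift absorbed into the target. It then remains to control the fluctuation $\frac1n\sum_k(f(X_k)-\mathbb E f(X_k))$ for bounded Lipschitz functionals $f$ of $X_k$. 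I would do this with the geometric $\alpha$-mixing Bernstein inequality (Assumption~\ref{ass:alpha-mixing-EM}, together with the covariance decay of Proposition~\ref{prop: cov_decay}). This gives the $\frac{3\sqrt3}{8}\frac{K\delta^2\Delta t}{c^{3/2}}\sqrt{\log p/n}$ term and the $C K/\sqrt c$ contribution, the $(\log n)^2$ mixing losses being absorbed for large $n$.

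Summing the three bounds yields the display. Finally, if $\delta\lesssim\sqrt{c/\Delta t}$, then $\delta^2\Delta t/c^{3/2}\lesssim c^{-1/2}$ and $\delta\sqrt{\Delta t}/c\lesssim c^{-1/2}$. All terms are then $O(K/\sqrt c)$, which gives $O_{\mathbb P}(\sqrt{\log p/n})$.
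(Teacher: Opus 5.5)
Your decomposition and tools match the paper's proof. Your noise-driven pieces $z_k^3(1-\epsilon_{k+1}^2)/(z_k^2+c)^2$ and $u^{(1)}_{i,k}$ together form its $Z_k$, handled by Bernstein for conditionally sub-exponential martingale differences and a union bound over $j$. Your $X_k$-measurable pieces $cz_k/(z_k^2+c)^2$ and $u^{(2)}_{i,k}$ form its $B_k$, handled by the mixing Bernstein inequality.

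The one step that needs fixing is how you center the bias. You should not move the target at all. Observe that $u^{(2)}_{i,k}=-\frac{\delta^2\Delta t}{c}\cdot\frac{cz_k}{(z_k^2+c)^2}$. Hence the whole bias is $(1-\delta^2\Delta t/c)$ times the well-specified stabilization bias, which is the paper's $b_k$ up to the factor $2$ from the loss normalization. It is therefore centered by exactly the first-order condition $\mathbb E[b_{k,j}]=0$ already used in Lemma~\ref{lem:prop}, at the same $\theta_i^\star$.

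This proportionality is the missing observation. Without it, ``the population gradient of the misspecified loss vanishes at $\theta_i^\star$'' is an extra assumption not supplied by the setup. If you instead let the target shift with $\delta$, your own first step breaks. There $\theta_i^\star$ is the data-generating coefficient in $\tilde r_{i,k}=z_k\epsilon_{k+1}-\delta\sqrt{\Delta t}$, so a genuinely shifted evaluation point would change both the decomposition and the envelopes. A shifted point would also be useless for Theorem~\ref{Thm:Lasso_miss}, whose primal--dual witness argument needs the score bound at the $\theta_i^\star$ supported on $\mathrm{pa}(i)$.

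The proportionality also yields the stated constants directly. For the paper's $B_{k,j}$,
$|B_{k,j}|\le|1-\delta^2\Delta t/c|\,\frac{3\sqrt3}{8}\frac{K}{\sqrt c}\le\frac{3\sqrt3}{8}\frac{K}{\sqrt c}+\frac{3\sqrt3}{8}\frac{K\delta^2\Delta t}{c^{3/2}}$.

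A minor point: Azuma--Hoeffding needs bounded increments, and your $u^{(1)}$ increments are conditionally Gaussian. Either use a sub-Gaussian martingale bound via the conditional moment generating function, or fold that term into the sub-exponential Bernstein step as the paper does.
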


\begin{lemma}[Local Restricted Strong Convexity Under ODE Misspecification]
\label{lemma:ODE_miss_lrsc}
Given the misspecified empirical loss \(\tilde{\ell}_i^{(\delta)}(\theta)\) under Eq.\eqref{Eq:ODE_miss}, suppose Lyapunov stability holds and choose the tuning parameter such that $c \ge 9\bigl(B^2K^2+\delta^2\Delta t\bigr)$. From Proposition~\ref{Ass:xbound} and Proposition~\ref{prop: WD}, for any \(\theta\) satisfying $\|\theta\|_2 \le \sqrt{\frac{c}{2K^2}}$, there exist constants \(\alpha_\delta>0\) and \(\tau_\delta\ge 0\) with $r_n \asymp \sqrt{\frac{\log p}{n}},$ such that, with high probability, for all \(\Delta\in\mathbb{R}^p\),
\[
\Delta^\top \nabla^2 \tilde{\ell}_i^{(\delta)}(\theta)\Delta \ge \alpha_\delta \|\Delta\|_2^2-\tau_\delta r_n \|\Delta\|_1^2 .
\]
where $\alpha_\delta = m\gamma_{c,\delta}$, $\gamma_{c,\delta}:=\frac{2}{9c}-\frac{B^2K^2+\delta^2\Delta t}{c^2}$, and $\tau_\delta r_n = \gamma_{c,\delta}r_n + \frac{C(1+|\delta|\sqrt{\Delta t})}{c^2}\sqrt{\frac{\log p}{n}}$.  In particular, if $\delta \le \sqrt{\frac{c/9-B^2K^2}{\Delta t}}$, we have the following, which is same as LRSC form from Lemma~\ref{lem:RSC}.
\begin{equation}
\Delta^\top \nabla^2 \tilde{\ell}_i^{(\delta)}(\theta)\Delta\ge \alpha_1\|\Delta\|_2^2 -\tau r_n\|\Delta\|_1^2 
\end{equation}
\end{lemma}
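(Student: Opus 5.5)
We prove Lemma~\ref{lemma:ODE_miss_lrsc} by following the argument behind Lemma~\ref{lem:RSC}, tracking how the drift bias $-\delta\sqrt{\Delta t}$ in Eq.\eqref{Eq:ODE_miss} enters the curvature. Write $z_k=\theta^\top X_k$ and $\phi_k(z)=\log(z^2+c)+\tilde r_{k}^2/(z^2+c)$, where $\tilde r_k=\theta_i^{*\top}X_k z_{k+1}-\delta\sqrt{\Delta t}$. The Hessian is
\[
\nabla^2\tilde\ell_i^{(\delta)}(\theta)=\frac{1}{2n}\sum_k \phi_k''(z_k)\,X_kX_k^\top .
\]
Our first step is an explicit formula for $\phi_k''$:
\[
\phi_k''(z)=\frac{2(c-z^2)}{(z^2+c)^2}+\tilde r_k^2\,\frac{6z^2-2c}{(z^2+c)^3}.
\]
We then take the conditional expectation given $\mathcal F_k$:
\[
\mathbb E[\tilde r_k^2\mid X_k]=(\theta_i^{*\top}X_k)^2+\delta^2\Delta t\le B^2K^2+\delta^2\Delta t,
\]
where the inequality holds on the localization event of Proposition~\ref{Ass:xbound}, with $B$ the bound on $|\theta_i^{*\top}X_k|$ used before Lemma~\ref{lem:RSC}. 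The cross term has zero conditional mean.

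Next we lower-bound the predictable curvature weight. In the region $\|\theta\|_2\le\sqrt{c/(2K^2)}$ we have $z_k^2\le c/2$. The first term of $\phi_k''$ is then at least $2(c/2)/(9c^2/4)=4/(9c)$. In the worst case, the second term is at least $-2c\,\mathbb E[\tilde r_k^2]/c^3\ge -2(B^2K^2+\delta^2\Delta t)/c^2$. Dividing by the factor $2$ in $1/(2n)$, the conditional mean weight is at least $\gamma_{c,\delta}=\frac{2}{9c}-\frac{B^2K^2+\delta^2\Delta t}{c^2}$. This quantity is positive precisely when $c>\frac{9}{2}(B^2K^2+\delta^2\Delta t)$, which the choice $c\ge 9(B^2K^2+\delta^2\Delta t)$ guarantees. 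Hence the predictable part of the Hessian dominates $\gamma_{c,\delta}\,\frac1n\sum_k X_kX_k^\top$. We combine this with the minimum-eigenvalue control from Proposition~\ref{prop: WD} (constant $m$) and with the concentration of $\hat\Sigma$ around $\mathbb E\hat\Sigma$ in entrywise max norm at rate $r_n$. Using $|\Delta^\top(\hat\Sigma-\mathbb E\hat\Sigma)\Delta|\le\|\hat\Sigma-\mathbb E\hat\Sigma\|_{\max}\|\Delta\|_1^2$, this gives $\alpha_\delta\|\Delta\|_2^2-\gamma_{c,\delta}r_n\|\Delta\|_1^2$.

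The main obstacle is the fluctuation term $\frac1n\sum_k(\tilde r_k^2-\mathbb E[\tilde r_k^2\mid X_k])w_k(\theta)X_kX_k^\top$, with $|w_k|\lesssim 1/c^2$. We will write $\tilde r_k^2-\mathbb E[\cdot]=(\theta_i^{*\top}X_k)^2(z_{k+1}^2-1)-2\delta\sqrt{\Delta t}\,\theta_i^{*\top}X_kz_{k+1}$. Each piece is a martingale difference sequence with sub-exponential (respectively sub-Gaussian) increments when localized by the stopping argument of Remark~\ref{remark: sec moment}. A Bernstein/Freedman inequality, combined with the mixing of Assumption~\ref{ass:alpha-mixing-EM} and a union bound over the $p^2$ entries, then bounds the entrywise max norm by $\frac{C(1+|\delta|\sqrt{\Delta t})}{c^2}\sqrt{\log p/n}$. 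Uniformity over $\theta$ in the ball requires extra care. We plan to handle it by bounding the quantity uniformly through $\sup_{|z|^2\le c/2}|w_k(z)|$, which is valid because the weight multiplies the same centered noise for every $\theta$; if this does not hold directly, we will fall back on a net argument combined with the Hessian Lipschitz bound of Lemma~\ref{lem:h_lips}.

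Summing the two contributions yields the stated $\tau_\delta r_n$. For the final claim, if $\delta^2\Delta t\le c/9-B^2K^2$, then $\gamma_{c,\delta}\ge\frac{2}{9c}-\frac{1}{9c}=\frac1{9c}>0$. In this case $\alpha_\delta$ and $\tau_\delta$ are bounded by constants of the same form as in Lemma~\ref{lem:RSC}, which recovers its LRSC inequality.
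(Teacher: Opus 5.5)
Your structure matches the paper: pointwise bounds on the curvature weight in the region $z_k^2\le c/2$, then $\mathbb E[\tilde r_k^2\mid X_k]\le B^2K^2+\delta^2\Delta t$, then $\Delta^\top\hat\Sigma\Delta\ge m\|\Delta\|_2^2-r_n\|\Delta\|_1^2$. Your constants for the predictable part are also right. The gap is in the step you flag yourself. Because you center $\tilde r_k^2$ \emph{before} bounding the weight, you are left with the fluctuation $\frac1n\sum_k \xi_k\, w_k(\theta)\,(\Delta^\top X_k)^2$, where $\xi_k:=\tilde r_k^2-\mathbb E[\tilde r_k^2\mid X_k]$. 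Its weight $w_k(\theta)$ depends on $\theta$ and varies with $k$.

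Neither of your two fixes for this term works.
\begin{itemize}
\item You cannot pull out $\sup_{z^2\le c/2}|w_k(z)|$ and keep the cancellation in $\sum_k\xi_k(\cdot)$. Doing so leaves a term of order $\frac{1}{c^2}\frac1n\sum_k|\xi_k|(\Delta^\top X_k)^2$, which is $O(1)$, not $\sqrt{\log p/n}$.
\item The net fallback loses the rate. Here $\theta$ ranges over a full $p$-dimensional ball, not a sparse set, so a covering costs $\exp(O(p\log n))$ and gives deviations of order $\sqrt{p\log n/n}$. Covering in the $\ell_1$ metric of Lemma~\ref{lem:h_lips} is no better.
\end{itemize}
Uniformity in $\theta$ is not optional, because the LRSC bound is applied at data-dependent points (stationary points and points on segments). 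As written, you therefore do not obtain $\tau_\delta r_n$ with $r_n\asymp\sqrt{\log p/n}$.

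The missing idea is to reverse the order of bounding and centering. On the region, $\tilde r_k^2\ge 0$, $|c-3z_k^2|\le c$ and $(z_k^2+c)^3\ge c^3$. Hence, pointwise, $-\tilde r_k^2\frac{c-3z_k^2}{(z_k^2+c)^3}\ge-\frac{\tilde r_k^2}{c^2}$, which gives
$\Delta^\top\nabla^2\tilde\ell_i^{(\delta)}(\theta)\Delta\ge\frac{2}{9c}\Delta^\top\hat\Sigma\Delta-\frac1{c^2}\Delta^\top\tilde M_n\Delta$ with $\tilde M_n:=\frac1n\sum_k\tilde r_k^2X_kX_k^\top$. This matrix does not depend on $\theta$ at all.

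The paper then proceeds as follows:
\begin{itemize}
\item It writes $\tilde M_n=M_n-2\delta\sqrt{\Delta t}\,N_n+\delta^2\Delta t\,\hat\Sigma$, where $M_n:=\frac1n\sum_k(\theta_i^{*\top}X_kz_{k+1})^2X_kX_k^\top$ and $N_n:=\frac1n\sum_k(\theta_i^{*\top}X_k)z_{k+1}X_kX_k^\top$.
\item It controls $M_n-\bar M_n$ by Lemma~\ref{lem:Mn-concentration}.
\item It controls $N_n$ by a martingale tail bound with conditionally Gaussian increments and a union bound over the $p^2$ entries.
\item It uses $\bar M_n\preceq B^2K^2\hat\Sigma$.
\end{itemize}
This yields exactly your predictable-part constant $\gamma_{c,\delta}$, plus a single $\theta$-free deviation $\tilde q_n\lesssim(1+|\delta|\sqrt{\Delta t})\sqrt{\log p/n}$, so uniformity over the ball comes for free. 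These fluctuation terms are martingale differences, so the mixing assumption is not needed for them.
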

\begin{lemma}[High-probability Hessian Lipschitz under ODE misspecification]
\label{lemma:ODE_miss_h_lip}
Given the misspecified empirical loss \(\tilde{\ell}_i^{(\delta)}\) under Eq.\eqref{Eq:ODE_miss}, suppose the conditions from Proposition~\ref{Ass:xbound} and Assumption~\ref{SDE_gen} hold. For each node $i$, any $\theta_i,\theta_i'\in\mathbb R^p$ will satisfy the following with probability at least $1 - c_3 e^{-c_4\beta t}$,
\[
\bigl\|\nabla^2 \tilde{\ell}_i^{(\delta)}(\theta_i)-\nabla^2 \tilde{\ell}_i^{(\delta)}(\theta'_i)\bigr\|_\infty
\le
\bar D_{\max}^{(\delta)}\|\theta_i-\theta_i'\|_1,
\]
where $\bar D_{\max}^{(\delta)} := \frac{K^3}{2}\left(\frac{24}{c^{3/2}} + \frac{48\bigl(R+\delta\sqrt{\Delta t}\bigr)^2}{c^{5/2}} \right)$. 
In particular, if $\delta \le \frac{1}{\sqrt{\Delta t}}\left(\sqrt{R^2+\frac{c}{2}}-R\right)$, we have $\bar D_{\max}^{(\delta)}\le\frac{K^3}{2}\Big(\frac{48}{c^{3/2}}+\frac{48R^2}{c^{5/2}}\Big)$, which has the same functional form as the Hessian Lipschitz bound in Lemma~\ref{lem:h_lips}.
\end{lemma}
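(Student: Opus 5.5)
The statement to prove is Lemma~\ref{lemma:ODE_miss_h_lip}. I would mirror the proof of Lemma~\ref{lem:h_lips}, tracking how the shifted residual enters. Under Eq.~\eqref{Eq:ODE_miss} the residual is $\tilde r_{i,k}=\theta_i^{*\top}X_k z_{k+1}-\delta\sqrt{\Delta t}$. The empirical loss has the form $\tilde\ell_i^{(\delta)}(\theta)=\frac{1}{2n}\sum_k \psi_k(\theta^\top X_k)$ with
\[
\psi_k(u)=\log(u^2+c)+\frac{\tilde r_{i,k}^2}{u^2+c},
\]
so
\[
\nabla^2\tilde\ell_i^{(\delta)}(\theta)=\frac{1}{2n}\sum_k \psi_k''(\theta^\top X_k)\,X_kX_k^\top .
\]
For any $\theta,\theta'$ the mean value theorem gives
\[
|\psi_k''(\theta^\top X_k)-\psi_k''(\theta'^\top X_k)|\le \sup_u|\psi_k'''(u)|\,|(\theta-\theta')^\top X_k|\le \sup_u|\psi_k'''(u)|\,\|X_k\|_\infty\|\theta-\theta'\|_1 .
\]
Each entry of $X_kX_k^\top$ is bounded by $\|X_k\|_\infty^2$. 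On the high-probability event of Proposition~\ref{Ass:xbound}, where $\|X_k\|\le K$, the entrywise max norm of the Hessian difference is therefore at most $\frac{K^3}{2}\max_k\sup_u|\psi_k'''(u)|\,\|\theta-\theta'\|_1$.

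The work is in bounding $\psi_k'''$ uniformly in $u$. I would split $\psi_k$ into $\log(u^2+c)$ and $\tilde r^2(u^2+c)^{-1}$ and differentiate three times. Rescaling $u=\sqrt c\,v$ reduces each term to a bounded rational function of $v$.
\begin{itemize}
\item The log part contributes $O(c^{-3/2})$; the constant $24$ is a crude bound on $\sup_v$ of the resulting rational function.
\item The second part contributes $\tilde r^2 c^{-5/2}$ times a bounded function. I would bound its sup crudely by $48$.
\end{itemize}
This reproduces the factors $24/c^{3/2}$ and $48\tilde r^2/c^{5/2}$ of Lemma~\ref{lem:h_lips}, with $R$ replaced by a bound on $|\tilde r_{i,k}|$.

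The \textbf{main obstacle} is a high-probability uniform bound on $\max_k|\tilde r_{i,k}|$. By the triangle inequality,
\[
|\tilde r_{i,k}|\le|\theta_i^{*\top}X_k||z_{k+1}|+|\delta|\sqrt{\Delta t}.
\]
The first term is at most $R$ on the intersection of two events: the localization event, and a Gaussian maximal event $\max_k|z_{k+1}|\lesssim\sqrt{\log n}$. This is exactly how $R$ is controlled in Lemma~\ref{lem:h_lips}, and the failure probability $c_3e^{-c_4\beta t}$ is inherited unchanged. Hence $|\tilde r_{i,k}|\le R+\delta\sqrt{\Delta t}$ on that event, which yields the stated $\bar D^{(\delta)}_{\max}$. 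Because $\delta$ is deterministic, no new probabilistic step is needed; the only care is that the bound hold uniformly over $\theta,\theta'$, which it does since the $\psi_k'''$ bound is uniform in $u$.

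For the final claim, take $\delta\sqrt{\Delta t}\le\sqrt{R^2+c/2}-R$. Then
\[
(R+\delta\sqrt{\Delta t})^2\le R^2+\tfrac c2,
\]
so
\[
\frac{48(R+\delta\sqrt{\Delta t})^2}{c^{5/2}}\le\frac{48R^2}{c^{5/2}}+\frac{24}{c^{3/2}}.
\]
Adding the $24/c^{3/2}$ term gives $\frac{K^3}{2}\bigl(\frac{48}{c^{3/2}}+\frac{48R^2}{c^{5/2}}\bigr)$, which has the same functional form as the bound in Lemma~\ref{lem:h_lips}.
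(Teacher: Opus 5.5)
Your skeleton matches the paper's proof step for step:
\begin{itemize}
\item the composite form $\frac{1}{2n}\sum_k\psi_k(\theta^\top X_k)$;
\item the crude bound $|\psi_k'''|\le 24c^{-3/2}+48\tilde r_{i,k}^2c^{-5/2}$;
\item the mean value theorem with $\ell_1$--$\ell_\infty$ duality, and the entrywise bound $\|X_kX_k^\top\|_\infty\le K^2$, giving the factor $K^3/2$;
\item the same algebra $(R+\delta\sqrt{\Delta t})^2\le R^2+c/2$ for the final claim.
\end{itemize}
The one step that does not work as you describe it is the control of $|\theta_i^{*\top}X_k||z_{k+1}|$.

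In the paper, $R$ is not obtained from a Gaussian maximal event. It is the deterministic constant $R=2K/\sqrt{\Delta t}+G_K\sqrt{\Delta t}$ of Lemma~\ref{lem:residual_bound}, and it holds on the localization event alone. That event also bounds $X_{k+1}$, so the raw increment satisfies $|X_{i,k+1}-X_{i,k}|\le 2K$, and linear growth gives $|g_i(X_k)|\le G_K$. Your route is different and gives a different constant. It yields only $\max_k|\theta_i^{*\top}X_k||z_{k+1}|\lesssim \|\theta_i^*\|_2K\sqrt{\log n}$, which grows with $n$ and is not bounded by the statement's $R$. The Gaussian maximal event also contributes its own failure probability, of order $n^{1-C^2/2}$, so the probability is not ``inherited unchanged.'' Your remark that this ``is exactly how $R$ is controlled in Lemma~\ref{lem:h_lips}'' is therefore inaccurate.

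The fix is to drop the Gaussian maximum. On the localization event, $|r^{(0)}_{i,k}|\le R$ by Lemma~\ref{lem:residual_bound}, so the triangle inequality gives $|\tilde r_{i,k}|\le R+|\delta|\sqrt{\Delta t}$. Equivalently, apply Lemma~\ref{lem:residual_bound} directly with the nominal drift $g_i+\delta$, which gives $2K/\sqrt{\Delta t}+(G_K+|\delta|)\sqrt{\Delta t}$. With that replacement, no new probabilistic event is needed and your argument coincides with the paper's.
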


\begin{lemma}[Hessian block concentration under ODE misspecification]
\label{lemma:ODE_miss_hessian}
Given the misspecified residual satisfies Eq.\eqref{Eq:ODE_miss}, and assume Proposition~\ref{Ass:xbound}  and Assumption~\ref{ass:alpha-mixing-EM} hold. Let \(V_i \coloneqq \operatorname{pa}(i)\subset [p]\) and \(s\coloneqq|V_i|\). Define the empirical and population Hessian blocks under the misspecified loss by
\[
\widetilde Q_{AB}^{\,n,(\delta)} \coloneqq\left[\frac{1}{n}\sum_{k=1}^n \nabla^2 \widetilde \ell_{i,k}^{(\delta)}(\theta_i^*)\right]_{AB},\qquad\widetilde Q_{AB}^{\,*,(\delta)}\coloneqq\left[\frac{1}{n}\sum_{k=1}^n \mathbb E\!\left(\nabla^2 \widetilde \ell_{i,k}^{(\delta)}(\theta_i^*)\right)\right]_{AB},
\]
for any index sets \(A,B\subset[p]\). Then there exists a constant \(C_\delta \asymp 1 + |\delta|\sqrt{\frac{\Delta t}{c}} + \frac{\delta^2\Delta t}{c}\), such that
\[
\bigl\|\widetilde Q_{V_iV_i}^{\,n,(\delta)}-\widetilde Q_{V_iV_i}^{\,*,(\delta)}\bigr\|_\infty = O_{\mathbb P}\!\left( C_\delta\left(s\sqrt{\frac{\log s}{n}}+s\frac{(\log n)^2\log s}{n}\right)\right),
\]
In particular, if \(\delta \le \sqrt{\frac{c}{\Delta t}}\), we have 
\begin{equation}
\bigl\|\widetilde Q_{V_iV_i}^{\,n,(\delta)}-\widetilde Q_{V_iV_i}^{\,*,(\delta)}\bigr\|_\infty=O_{\mathbb P}\!\left(s\sqrt{\frac{\log s}{n}}+s\frac{(\log n)^2\log s}{n}\right).
\end{equation}
\end{lemma}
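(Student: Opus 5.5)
The plan is to mirror the proof of Lemma~\ref{lem:prop}(2), treating the $\delta$-perturbation as a bounded, predictable shift of the residual. First I would compute the per-sample Hessian at $\theta_i^*$ explicitly. Write $z_k:=\theta_i^{*\top}X_k$ and $\tilde r_{i,k}=z_k\epsilon_{k+1}-\delta\sqrt{\Delta t}$, as in Eq.\eqref{Eq:ODE_miss}. Then
\[
\nabla^2\tilde\ell^{(\delta)}_{i,k}(\theta_i^*)=\psi_c\bigl(z_k,\tilde r_{i,k}^2\bigr)\,X_kX_k^\top ,
\]
where $\psi_c(z,r^2)=\partial_z\bigl[z(z^2+c-r^2)/(z^2+c)^2\bigr]$. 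This scalar is affine in $r^2$, with coefficients that are rational in $z$ and have denominators $(z^2+c)^2$ and $(z^2+c)^3$.

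Next I would expand the squared residual:
\[
\tilde r_{i,k}^2=z_k^2\epsilon_{k+1}^2-2\delta\sqrt{\Delta t}\,z_k\epsilon_{k+1}+\delta^2\Delta t .
\]
This splits the Hessian entries into three pieces: the well-specified term, a cross term of order $|\delta|\sqrt{\Delta t}$, and a deterministic shift of order $\delta^2\Delta t$. On the localization event of Proposition~\ref{Ass:xbound} we have $\|X_k\|\le K$. Using $\sup_z |z|^a/(z^2+c)^b\lesssim c^{a/2-b}$, the coefficient multiplying each piece is bounded. After normalizing by the well-specified envelope, the relative sizes are $1$, $|\delta|\sqrt{\Delta t/c}$ and $\delta^2\Delta t/c$ respectively. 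This is exactly where the factor $C_\delta$ comes from.

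For the concentration step, each entry $(a,b)$ with $a,b\in V_i$ is a sum $\frac1n\sum_k(\xi_k^{ab}-\mathbb E\xi_k^{ab})$ of bounded functions of $(X_k,\epsilon_{k+1})$. The Gaussian factor $\epsilon_{k+1}$ is independent of $\mathcal F_k$, so it can be handled either by truncation at level $\sqrt{\log n}$ or by sub-exponential tails. I would then apply the same Bernstein inequality for geometrically $\alpha$-mixing sequences (Assumption~\ref{ass:alpha-mixing-EM}) that was used in Lemma~\ref{lem:prop}, with the variance proxy and envelope each scaled by $C_\delta$. This gives a deviation of order $C_\delta\bigl(\sqrt{\log s/n}+(\log n)^2\log s/n\bigr)$ for each entry, where the $(\log n)^2$ factor is the usual mixing/blocking penalty. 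A union bound over the $s^2$ entries of the block keeps the $\log s$. Passing from the entrywise bound to the matrix $\ell_\infty$ (row-sum) norm costs a factor of $s$.

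Finally, if $|\delta|\le\sqrt{c/\Delta t}$, then $|\delta|\sqrt{\Delta t/c}\le1$ and $\delta^2\Delta t/c\le 1$, so $C_\delta=O(1)$ and the well-specified rate is recovered.

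I expect the main obstacle to be the cross term $\delta z_k\epsilon_{k+1}$ together with the unbounded Gaussian factor. Its envelope is not uniformly bounded, so the mixing Bernstein bound cannot be applied directly. I would first try to treat it as a martingale difference: conditionally on $\mathcal F_k$ it has mean zero and is sub-Gaussian with scale $\lesssim K|\delta|\sqrt{\Delta t}/c$. A Freedman/Azuma inequality then gives the $\sqrt{\log s/n}$ rate for this piece without needing the mixing argument. Only the $\epsilon^2$ and deterministic parts need the mixing Bernstein bound, after $\epsilon^2$ is truncated at $\log n$; the truncation remainder is absorbed into the $(\log n)^2/n$ term.
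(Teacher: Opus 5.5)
Your plan is correct and shares the paper's skeleton: explicit Hessian at $\theta_i^*$, expansion of $\tilde r_{i,k}^2$, coefficient bounds giving $C_\delta$, entrywise Bernstein bounds, a union bound over the $s^2$ entries, and a factor $s$ for the row-sum norm. You split the terms differently, however. The paper writes each entry as two parts. The first is its conditional mean given $X_k$, a bounded function of $X_k$ alone, which goes to the mixing Bernstein inequality under Assumption~\ref{ass:alpha-mixing-EM} exactly as stated. The second is a centered fluctuation that collects both the $(\epsilon_{k+1}^2-1)$ and the $\epsilon_{k+1}$ pieces, controlled by a sub-exponential Bernstein bound. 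You instead split by the three pieces of $\tilde r_{i,k}^2$, give only the cross term the martingale treatment, and keep the whole $\epsilon_{k+1}^2$ piece in the mixing bound. That costs you two things. First, those summands depend on $\epsilon_{k+1}$, so Assumption~\ref{ass:alpha-mixing-EM} on $\{X_k\}$ does not apply verbatim. It does apply once you note that $\tilde r_{i,k}$ is a measurable function of $(X_k,X_{k+1})$, and the pair sequence has mixing coefficients at most $\alpha_X(m-1)$. Second, the truncation level $\epsilon_{k+1}^2\lesssim\log n$ enters the envelope of the mixing Bernstein bound. The second-order term therefore becomes $s(\log n)^3\log s/n$ rather than being ``absorbed'' into $s(\log n)^2\log s/n$; the leading term $C_\delta s\sqrt{\log s/n}$ is unaffected. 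Writing $\epsilon_{k+1}^2=1+(\epsilon_{k+1}^2-1)$ and moving $\epsilon_{k+1}^2-1$ into your martingale piece removes both issues and gives the paper's decomposition. In exchange, your explicit Freedman-type bound is the right tool for the innovation terms. They are martingale differences rather than independent of the trajectory, since $X_{k+1}$ is built from $\epsilon_{k+1}$, whereas the paper only invokes a ``standard Bernstein-type'' argument. Using the residual literally as in Eq.~\eqref{Eq:ODE_miss}, rather than the paper's calibrated form $\sqrt{(\theta_i^{*\top}X_k)^2+c}\,\epsilon_{k+1}-\delta\sqrt{\Delta t}$, changes the population Hessian but not $C_\delta$. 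One slip: the conditional scale of the cross term is of order $K^2|\delta|\sqrt{\Delta t}\,c^{-3/2}$, not $K|\delta|\sqrt{\Delta t}/c$. Only the former yields the relative factor $|\delta|\sqrt{\Delta t/c}$ against the well-specified envelope $K^2/c$.
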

\subsection{Technical details for Section~\ref{Sec:c_role}}
\label{Sec:IFT}
\begin{lemma}[Local restricted branch]
\label{lem:local_restricted_branch}
Under Assumption~\ref{ass:local_restricted_regularity}, there exist $c_{0,i}>0$, an open neighborhood $U_i\subset \mathbb R^{s_i}$ of $\theta_S^\star$, and a unique $C^1$ map $\psi_i:(-c_{0,i},c_{0,i})\to U_i$ for all $ |c|<c_{0,i}$ such that
\[
\psi_i(0)=\theta_S^\star \quad\text{and}\quad F_i(\psi_i(c),c)=0.
\]
In particular, for every $c\in[0,c_{0,i})$, the restricted score equation $F_i(\vartheta,c)=0$ admits a unique solution in $U_i$, namely $\vartheta_{i,c}^\star:=\psi_i(c)$. Moreover, there exist $r_i>0$ and $\bar c_{0,i}\in(0,c_{0,i}]$ such that the above conclusions hold with
\[
U_i = B(\theta_S^\star,r_i) \quad\text{for all } c\in[0,\bar c_{0,i}).
\]
\end{lemma}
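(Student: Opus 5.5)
This is a direct application of the implicit function theorem to the restricted population score map $F_i:U_i\times I_i\to\mathbb R^{s_i}$, followed by a shrinking step that replaces the abstract neighborhood by a Euclidean ball.

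\emph{Step 1 (existence of the branch).} By Assumption~\ref{ass:local_restricted_regularity}, $F_i$ is $C^1$ on $U_i\times I_i$ and $F_i(\theta_S^\star,0)=0$. Moreover, $D_\vartheta F_i(\theta_S^\star,0)=H_i(\theta_S^\star,0)$ is nonsingular. The classical implicit function theorem then yields the following: there are $c_{0,i}>0$ with $(-c_{0,i},c_{0,i})\subset I_i$, an open neighborhood $U_i'\subset U_i$ of $\theta_S^\star$, and a unique $C^1$ map $\psi_i:(-c_{0,i},c_{0,i})\to U_i'$ with $\psi_i(0)=\theta_S^\star$. The map satisfies $F_i(\psi_i(c),c)=0$, and for every $(\vartheta,c)\in U_i'\times(-c_{0,i},c_{0,i})$ the equation $F_i(\vartheta,c)=0$ holds iff $\vartheta=\psi_i(c)$. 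Relabeling $U_i'$ as $U_i$ gives the first display and the uniqueness claim. Restricting to $c\in[0,c_{0,i})$ defines $\vartheta^\star_{i,c}:=\psi_i(c)$.

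\emph{Step 2 (replacing $U_i$ by a ball).} Choose $r_i>0$ such that $\overline{B}(\theta_S^\star,r_i)\subset U_i$. Since $\psi_i$ is continuous with $\psi_i(0)=\theta_S^\star$, there is $\bar c_{0,i}\in(0,c_{0,i}]$ with $\|\psi_i(c)-\theta_S^\star\|<r_i$ for $c\in[0,\bar c_{0,i})$, so $\psi_i(c)\in B(\theta_S^\star,r_i)$. Uniqueness in the ball is inherited from uniqueness in the larger set $U_i$: any root in $B(\theta_S^\star,r_i)\subset U_i$ must equal $\psi_i(c)$. Hence all conclusions hold with $U_i=B(\theta_S^\star,r_i)$ for $c\in[0,\bar c_{0,i})$.

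\emph{Step 3 (bookkeeping, the main obstacle).} The only delicate point is the uniqueness clause. The implicit function theorem guarantees uniqueness only in a product neighborhood $U_i'\times(-c_{0,i},c_{0,i})$, so both $c_{0,i}$ and the neighborhood must be shrunk consistently. The standard contraction-mapping proof of the implicit function theorem handles this directly. Set $\Phi_c(\vartheta)=\vartheta-H_i(\theta_S^\star,0)^{-1}F_i(\vartheta,c)$. By continuity of $D_\vartheta F_i$, choose $r_i$ and $c_{0,i}$ small enough that $\|I-H_i(\theta_S^\star,0)^{-1}D_\vartheta F_i(\vartheta,c)\|\le 1/2$ on $\overline B(\theta_S^\star,r_i)\times[-c_{0,i},c_{0,i}]$. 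Then $\Phi_c$ is a $1/2$-contraction on the ball. If additionally $\|H_i(\theta_S^\star,0)^{-1}F_i(\theta_S^\star,c)\|<r_i/2$, which holds for small $|c|$ by continuity, then $\Phi_c$ maps the closed ball into itself, and its unique fixed point gives both existence and uniqueness in the ball simultaneously. The $C^1$ regularity of $\psi_i$ then follows from differentiating $F_i(\psi_i(c),c)=0$, which gives $\psi_i'(c)=-H_i(\psi_i(c),c)^{-1}\partial_cF_i(\psi_i(c),c)$, with invertibility of $H_i$ preserved near $(\theta_S^\star,0)$.

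I would present the contraction version, since it yields the ball form of the neighborhood directly.
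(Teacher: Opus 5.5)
Your Steps 1--2 are essentially the paper's proof, and they are correct: apply the implicit function theorem at $(\theta_S^\star,0)$, then pick a ball inside the resulting neighborhood and shrink the $c$-interval by continuity of $\psi_i$, so that uniqueness in the ball is inherited from uniqueness in the larger neighborhood. Step 3 is unnecessary, since Step 2 already does the consistent shrinking; if you do present the contraction version, note that differentiating $F_i(\psi_i(c),c)=0$ only identifies $\psi_i'$ once differentiability is known, so you should either take the $C^1$ property from the classical theorem of Step 1 via uniqueness of the branch, or first get continuity of $\psi_i$ from the contraction estimate and then differentiability by expanding $0=F_i(\psi_i(c+h),c+h)-F_i(\psi_i(c),c)$ with averaged Jacobians.
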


\begin{lemma}
\label{lem:uniform_inverse_hessian}
Under Assumption~\ref{ass:local_restricted_regularity}, there exist $\widetilde c_{0,i}\in(0,c_{0,i}]$, an open neighborhood $\widetilde U_i\subseteq U_i$ of $\theta_S^\star$, and a finite constant $C_{H,i}>0$ such that
\[
\sup_{\vartheta\in \widetilde U_i, c\in[0,\widetilde c_{0,i}]} \|H_i(\vartheta,c)^{-1}\|_{\infty} \le C_{H,i}.
\]
Then, for every $c\in[0,\widetilde c_{0,i}]$, we have $\|\vartheta_{i,c}^\star-\theta_S^\star\|_\infty \le C_{H,i}\,\Delta_{i,c}$, where $\Delta_{i,c}:=\|F_i(\theta_S^\star,c)\|_\infty =\|F_{i,c}^{(S_i)}(\theta_{i,S_i}^\star)\|_\infty$.
\end{lemma}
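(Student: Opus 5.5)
The plan is a perturbation argument around the nonsingular matrix $H_0:=H_i(\theta_S^\star,0)$, followed by an integral mean-value identity for $F_i$. The key point is that I will bound the inverse of the \emph{averaged} Hessian along a segment, and not only the pointwise inverse Hessian. Throughout, $\|\cdot\|_\infty$ on matrices is the induced $\ell_\infty$ operator norm (maximum absolute row sum), so that $\|Mv\|_\infty\le\|M\|_\infty\|v\|_\infty$.

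\textbf{Uniform inverse bound.} By Assumption~\ref{ass:local_restricted_regularity}, $F_i$ is $C^1$ on $U_i\times I_i$, so $H_i=D_\vartheta F_i$ is continuous there, and $H_0$ is invertible. Set $\eta:=1/(2\|H_0^{-1}\|_\infty)$. By continuity at $(\theta_S^\star,0)$, I choose a radius $\tilde r>0$ and $\widetilde c_{0,i}\in(0,c_{0,i}]$ such that:
\begin{itemize}
\item the $\ell_\infty$-ball $\widetilde U_i:=\{\vartheta:\|\vartheta-\theta_S^\star\|_\infty<\tilde r\}$ is contained in $U_i$, and $[0,\widetilde c_{0,i}]\subset I_i$;
\item $\|H_i(\vartheta,c)-H_0\|_\infty\le\eta$ for all $\vartheta\in\widetilde U_i$ and $c\in[0,\widetilde c_{0,i}]$.
\end{itemize}
For any matrix $M$ with $\|M-H_0\|_\infty\le\eta$, write $M=H_0(I+H_0^{-1}(M-H_0))$. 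Since $\|H_0^{-1}(M-H_0)\|_\infty\le 1/2$, a Neumann series shows that $M$ is invertible with $\|M^{-1}\|_\infty\le 2\|H_0^{-1}\|_\infty$. I therefore set $C_{H,i}:=2\|H_0^{-1}\|_\infty$, which gives the supremum bound in the statement.

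\textbf{Distance bound.} By Lemma~\ref{lem:local_restricted_branch}, the map $c\mapsto\vartheta_{i,c}^\star=\psi_i(c)$ is continuous with $\psi_i(0)=\theta_S^\star$. Shrinking $\widetilde c_{0,i}$ if needed, $\vartheta_{i,c}^\star\in\widetilde U_i$ for all $c\in[0,\widetilde c_{0,i}]$. Because $\widetilde U_i$ is convex, the segment from $\theta_S^\star$ to $\vartheta_{i,c}^\star$ lies in $\widetilde U_i$. Using $F_i(\vartheta_{i,c}^\star,c)=0$ and the fundamental theorem of calculus along that segment,
\[
-F_i(\theta_S^\star,c)=F_i(\vartheta_{i,c}^\star,c)-F_i(\theta_S^\star,c)=\bar H_c\,(\vartheta_{i,c}^\star-\theta_S^\star),
\]
where
\[
\bar H_c:=\int_0^1 H_i\bigl(\theta_S^\star+t(\vartheta_{i,c}^\star-\theta_S^\star),c\bigr)\,dt .
\]
Every integrand is within $\eta$ of $H_0$ in $\|\cdot\|_\infty$, so by convexity of the norm $\|\bar H_c-H_0\|_\infty\le\eta$. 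The Neumann argument then gives $\|\bar H_c^{-1}\|_\infty\le C_{H,i}$, and hence
\[
\|\vartheta_{i,c}^\star-\theta_S^\star\|_\infty\le C_{H,i}\,\|F_i(\theta_S^\star,c)\|_\infty=C_{H,i}\Delta_{i,c}.
\]

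\textbf{Main obstacle.} The delicate step is exactly this averaging issue: pointwise invertibility of $H_i$ on a neighborhood does not by itself make the averaged matrix $\bar H_c$ invertible. This is why I choose the neighborhood by closeness to the single matrix $H_0$ in norm, which is convex-stable, rather than arguing by compactness of the pointwise inverses. A secondary point is that $\widetilde c_{0,i}$ must be shrunk twice: once so that $[0,\widetilde c_{0,i}]$ stays in the region where $H_i$ is close to $H_0$, and once so that the branch $\psi_i(c)$ remains inside $\widetilde U_i$.
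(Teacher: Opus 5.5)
Your proof is correct and follows essentially the same route as the paper. Both arguments use a fundamental-theorem-of-calculus identity along the segment from $\theta_S^\star$ to $\vartheta_{i,c}^\star$, with invertibility of the averaged Hessian $\bar H_{i,c}$ coming from a Neumann-series perturbation around $H_i(\theta_S^\star,0)$. The one difference is that the paper first gets the pointwise supremum bound from compactness and continuity of matrix inversion, then enlarges $C_{H,i}$ to cover $\bar H_{i,c}^{-1}$, whereas your single closeness condition gives both bounds at once with the explicit constant $C_{H,i}=2\|H_i(\theta_S^\star,0)^{-1}\|_\infty$.
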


\section{Proofs for Main Result and Technical Details}
\subsection{Proofs for the SDE statements}
\label{sec: sde_Lemma}
\subsubsection{Proof of Proposition~\ref{prop: WD}}
\begin{proof}
To show existence and uniqueness of solutions, we use the Picard iteration, which is the same successive approximation procedure one uses for analyzing the ODE. We construct a sequence of processes $(X^{(n)})_{n\in \mathbb N}$ recursively as 
\begin{equation}
    X^{(0)} = X_0, \qquad X_t^{(n+1)} \coloneqq X_0 + \int_0^t g(X^n_s)ds+ \int_0^t S_f(X^n_s)dW_s
\end{equation}
Each of these processes is evidently adapted. Moreover, by using the Assumption~\ref{ASS:LG}, we can prove that each $X_t^{(n)}$ has continuous samples path, and satisfies $\int_0^t \mathbb E[X_s]^2 < \infty$ for all $0\le t\le T$. We then define $\Delta_t^n \coloneqq\mathbb E[(X^{(n+1)} -X^{(n)})^2]$, and we can have the following
\begin{equation}
\begin{split}
    X^{(n+1)} -X^{(n)} &= \int_0^t [g(X^n_s)- g(X^{n-1}_s)]ds + \int_0^t [S_f(X^n_s)-S_f(X^{n-1}_s)]dW_s\\
     \implies \Delta_t^n &=  \mathbb E[(X^{(n+1)} -X^{(n)})]^2 \\&\leq 2\mathbb E[\int_0^t [g(X^n_s)- g(X^{n-1}_s)]ds]^2+ 2\mathbb E[\int_0^t [S_f(X^n_s)-S_f(X^{n-1}_s)]dW_s]^2\\
     &\leq 2t \int_0^t \mathbb E[(g(X^n_s)- g(X^{n-1}_s))^2]ds+ 2\int_0^t \mathbb E[(S_f(X^n_s)-S_f(X^{n-1}_s))^2]ds\\ 
     & \le 2TL^2 \int_0^t\mathbb E[(X^n_s- X^{n-1}_s)^2]ds+ 2L^2\int_0^t \mathbb E[(X^n_s- X^{n-1}_s)^2]ds \\
     & = 2L^2(T+1)\int_0^t\mathbb E[(X^n_s- X^{n-1}_s)^2]ds \coloneqq C_0 \int_0^t\Delta_t^{n-1}ds
\end{split}
\end{equation}
The third inequity holds by using Assumption~\ref{ASS:LLC}. Hence, we can see that $\Delta_t^n = C_0 \int_0^t\Delta_t^{n-1}ds$. To derive a closed form of upper bound, we can compute $\Delta_t^0$ by using the Assumption~\ref{ASS:IC} as 
\begin{equation}
    \begin{split}
        \Delta_t^0 = \mathbb E[(X^{(1)} -X^{(0)})^2]  &=\mathbb E[\int_0^t g(X^0_s)ds+\int_0^t S_f(X^0_s)dW_s]^2\\
        & \leq 2 \mathbb E[t(g(X_0)^2+S_f(X_0)^2]\\
        & \leq 2K^2(1+T) \mathbb E[(X_0)^2]t\coloneqq C_1t
    \end{split}
\end{equation}
Therefore, we can have $\Delta_t^n \le C_1\frac{(C_ot)^n}{n!}$.Hence, we can have an arbitrary case for any $k,n\in\mathbb N$ by using Cauchy–Schwarz inequality as 
\begin{equation}
\begin{split}
    |X_t^{(n+k)} - X_t^{(n)}|
&\le \sum_{j=1}^{k} |X_t^{(n+j)} - X_t^{(n+j-1)}| \\
&\le (\sum_{j=1}^{k} \frac{1}{2^{n+j-1}} )^{1/2}
(\sum_{j=1}^{k} 2^{n+j-1} | X_t^{(n+j)} - X_t^{(n+j-1)}|^2)^{1/2} \\
 \implies\mathbb E[(X^{(n+k)} -X^{(n)})^2] &\le \sum_{j=1}^{k} \frac{1}{2^{n+j-1}} \sum_{j=1}^{k} 2^{n+j-1} \Delta_t^{n+j-1}\\
 &\le 2C_1\sum_{j =1}^{k} \frac{(2C_0t)^{n+j-1}}{(n+j-1)!} \le 2C_1\sum_{j =n-1}^{\infty} \frac{(2C_0t)^{j}}{j!}
\end{split}
\end{equation}
Therefore, we know that 
\begin{equation}
\begin{split}
       & \sup_{k\ge 0}\mathbb E[(X^{(n+k)} -X^{(n)})^2] \le 2C_1\sum_{j =n-1}^{\infty} \frac{(2C_0t)^{j}}{j!} \\
       &\sup_{t\in[0,T]}\sup_{k\ge 0}\mathbb E[(X^{(n+k)} -X^{(n)})^2] \to 0, \quad \text{as} \quad n\to \infty\\
       \implies& \lim_{n\to \infty} \sup_{t\in[0,T]} \mathbb E[(X^{(n+k)} -X^{(n)})^2] = 0
\end{split}
\end{equation}
Therefore, This process is adapted, satisfies $\int_0^t \mathbb E[X_s]^2 < \infty$ for all $0\le t\le T$, and solve the SDE that we define. To show the uniqueness, Let $(\tilde{X}_t)_{t \in [0,T]}$ be another solution of the SDE. Then, we define $Z_t \coloneqq X_t - \tilde{X}_t$, and we obsevre that
\begin{equation}
    Z_t = \int_0^t \big( g(X_s) - g(\tilde{X}_s) \big)  ds 
        + \int_0^t \big( S_f(X_s) - S_f(\tilde{X}_s) \big)  dW_s, 
\end{equation}
and the same argument as before gives
\begin{equation}
    \mathbb{E}[Z_t^2] 
    \le 2K^2(1+T) \int_0^t \mathbb{E}[Z_s^2]  ds,
    \qquad 0 \le t \le T. 
\end{equation}
By Gronwall's inequality then gives $\mathbb{E}[Z_t^2] 
\le e^{2K^2(1+T)t}  \mathbb{E}[Z_0^2]$, so $\mathbb{E}[Z_t^2] = 0$ for all $t \in [0,T]$. Then, we need to show the existence of $\Phi_t$ and the representation and invertibility. Define the fundamental matrix process $(\Phi_t)_{t\in[0,T]}$ as the unique strong solution to the matrix SDE
\begin{equation}\label{eq:Phi_SDE}
d\Phi_t = B\Phi_t\,dt + \sum_{r=1}^m A_r\Phi_t\,dW_t^{(r)}, 
\qquad \Phi_0 = I_d .
\end{equation}
Existence and pathwise uniqueness hold since \eqref{eq:Phi_SDE} is linear with globally Lipschitz (constant) coefficients (e.g., after vectorizing $\Phi_t\in\mathbb{R}^{d\times d}$ into $\mathbb{R}^{d^2}$). Set $\widetilde X_t := \Phi_t X_0$. Since $X_0$ is time-constant,
\[
d\widetilde X_t = (d\Phi_t)X_0
= B\Phi_tX_0\,dt +  A\Phi_tX_0\,dW_t^{(r)}
= B\widetilde X_t\,dt + A\widetilde X_t\,dW_t^{(r)},
\]
and $\widetilde X_0 = \Phi_0 X_0 = X_0$. By pathwise uniqueness of the state SDE, we conclude that
\begin{equation}\label{eq:X_representation}
X_t = \Phi_t X_0 \qquad \text{a.s. for all } t\in[0,T].
\end{equation}
To show $\Phi_t$ is invertible a.s., define $(\Psi_t)_{t\in[0,T]}$ as the unique strong solution of
\begin{equation}\label{eq:Psi_SDE}
d\Psi_t
= -\Psi_t B\,dt - \Psi_t A\,dW_t^{(r)} + \Psi_t A^2\,dt,
\qquad \Psi_0 = I_d .
\end{equation}
Let $Y_t := \Psi_t\Phi_t$. By It\^o's product rule,
\[
d(\Psi_t\Phi_t) = (d\Psi_t)\Phi_t + \Psi_t(d\Phi_t) + (d\Psi_t)(d\Phi_t).
\]
Substituting \eqref{eq:Phi_SDE}--\eqref{eq:Psi_SDE}, the $dW$ terms cancel. Moreover, using
$dW_t^{(r)}dW_t^{(s)} = \delta_{rs}\,dt$, the quadratic covariation term satisfies
\[
(d\Psi_t)(d\Phi_t)
=\Big(- \Psi_t A\,dW_t^{(r)}\Big)\Big( A\Phi_t\,dW_t^{(s)}\Big)
= - \Psi_t A \Phi_t\,dt,
\]
which cancels the drift contribution $+\sum_{r=1}^m \Psi_t A_r^2 \Phi_t\,dt$ coming from $(d\Psi_t)\Phi_t$.
Hence $d(\Psi_t\Phi_t)=0$, so $Y_t\equiv Y_0=\Psi_0\Phi_0=I_d$ on $[0,T]$ a.s., i.e.,
\begin{equation}\label{eq:inverse_identity}
\Psi_t\Phi_t \equiv I_d \qquad \text{a.s. for all } t\in[0,T].
\end{equation}
Therefore $\Phi_t$ is invertible a.s. for each $t$, and $\Phi_t^{-1}=\Psi_t$. Then, we can show the positive definiteness of the population Gram. Let $\Sigma_0 := \mathbb{E}[X_0X_0^\top]\succ 0$. For any $v\neq 0$, using \eqref{eq:X_representation} and conditioning on $\Phi_t$,
\begin{align}
\mathbb{E}\left[(v^\top X_t)^2 \mid \Phi_t\right]
&= \mathbb{E}\left[(v^\top \Phi_t X_0)^2 \mid \Phi_t\right] \nonumber= \mathbb{E}\left[( (\Phi_t^\top v)^\top X_0 )^2 \mid \Phi_t\right] \nonumber = (\Phi_t^\top v)^\top \Sigma_0 (\Phi_t^\top v) \nonumber\\
&\ge \lambda_{\min}(\Sigma_0)\,\|\Phi_t^\top v\|_2^2 .
\label{eq:cond_lower_bd}
\end{align}
Taking expectations yields
\begin{equation}\label{eq:Sigma_t_lower_bd}
v^\top \Sigma_t v
:= v^\top \mathbb{E}[X_tX_t^\top] v
= \mathbb{E}\left[(v^\top X_t)^2\right]
\ge \lambda_{\min}(\Sigma_0)\,\mathbb{E}\|\Phi_t^\top v\|_2^2 .
\end{equation}
Since $\Phi_t$ is invertible a.s. by \eqref{eq:inverse_identity}, we have $\Phi_t^\top v\neq 0$ a.s. for every fixed $v\neq 0$, hence $\mathbb{E}\|\Phi_t^\top v\|_2^2>0$. Therefore $v^\top \Sigma_t v>0$ for all $v\neq 0$, i.e., $\Sigma_t\succ 0$. Finally,
\begin{equation}\label{eq:pop_gram_pd}
v^\top \mathbb{E}\widehat{\Sigma}\, v
= \frac{1}{n}\sum_{k=0}^{n-1} v^\top \Sigma_{t_k} v
> 0
\qquad \forall\, v\neq 0,
\end{equation}
so $\mathbb{E}\widehat{\Sigma}\succ 0$ and consequently $\lambda_{\min}(\mathbb{E}\widehat{\Sigma})>0$.
\end{proof}
\subsubsection{Proof of Proposition~\ref{thm:gen_lyap_AB}}
\begin{proof}
We break the argument into three steps: (i) construct $\Sigma$ as a fixed point,
(ii) show it solves \eqref{eq:gen_lyap_eq_Sigma}, and (iii) prove $\Sigma\succ 0$ and uniqueness. Define an operator $\mathcal{T}$ on $\mathbb{R}^{n\times n}$ by
\begin{equation}\label{eq:T_operator}
\mathcal{T}(X):=\int_0^\infty e^{At}\big(BXB^\top - M\big)e^{A^\top t}\,dt.
\end{equation}
First, the integral is well-defined by using $\|e^{Bt}\|\le m e^{\omega t}\quad \forall t\ge 0$,
\[
\|e^{At}(BXB^\top-M)e^{A^\top t}\|
\le \|e^{At}\|^2\big(\|B\|^2\|X\|+\|M\|\big)
\le m^2 e^{2\omega t}\big(\|B\|^2\|X\|+\|M\|\big),
\]
and since $\omega<0$, $\int_0^\infty e^{2\omega t}dt<\infty$. Next, for any $X,Y$,
\begin{align*}
\|\mathcal{T}(X)-\mathcal{T}(Y)\|
&= \left\|\int_0^\infty e^{At}\,B(X-Y)B^\top\,e^{A^\top t}\,dt\right\|\\
&\le \int_0^\infty \|e^{At}\|\,\|B\|\,\|X-Y\|\,\|B\|\,\|e^{A^\top t}\|\,dt\\
&\le \|X-Y\| \cdot m^2\|B\|^2\int_0^\infty e^{2\omega t}\,dt\\
&= \|X-Y\|\cdot \frac{m^2\|B\|^2}{-2\omega}.
\end{align*}
By $m^2\|A\|^2 + 2\omega < 0$, we have $\frac{m^2\|B\|^2}{-2\omega}<1$, so $\mathcal{T}$ is a contraction.
Hence, by Banach's fixed-point theorem, there exists a unique fixed point $\Sigma$ satisfying
$\Sigma=\mathcal{T}(\Sigma)$ (i.e. the integral representation). Then,  we will show that the fixed point solves the generalized Lyapunov equation. 
By Lemma~\ref{lem:lyap_integral} with $N := M - B\Sigma B^\top.$ From integral representation,
\[
\Sigma=\int_0^\infty e^{At}\big(B\Sigma B^\top - M\big)e^{A^\top t}\,dt
= -\int_0^\infty e^{At}\big(M - B\Sigma B^\top\big)e^{A^\top t}\,dt
= -\int_0^\infty e^{At} N e^{A^\top t}\,dt.
\]
Hence Lemma~\ref{lem:lyap_integral} gives
\[
A\Sigma + \Sigma A^\top = -(-N)=N = M - B\Sigma B^\top,
\]
which rearranges to \eqref{eq:gen_lyap_eq_Sigma}. Uniqueness follows already from the contraction argument (fixed point is unique). To show $\Sigma\succ 0$ when $M\prec 0$, consider the Picard iteration with induction :
\[
\Sigma_0 := 0,\qquad
\Sigma_{j+1} := \mathcal{T}(\Sigma_j)
= \int_0^\infty e^{At}\big(B\Sigma_j B^\top - M\big)e^{A^\top t}\,dt.
\]
Since $M\prec 0$, we have $-M\succ 0$, hence
\[
\Sigma_1=\int_0^\infty e^{At}(-M)e^{A^\top t}\,dt \succ 0
\]
because $v^\top \Sigma_1 v = \int_0^\infty (e^{A^\top t}v)^\top(-M)(e^{A^\top t}v)\,dt>0$ for $v\neq 0$. For inductive step, if $\Sigma_j\succeq 0$, then $B\Sigma_j B^\top\succeq 0$, so $B\Sigma_j B^\top - M \succ 0$, implying $\Sigma_{j+1}\succ 0$. Therefore every $\Sigma_j\succ 0$ for $j\ge 1$. Finally, $\Sigma_j\to \Sigma$ in operator norm because $\mathcal{T}$ is a contraction. Since the cone $\mathbb{S}_{++}^n$ is open and the limit of $\Sigma_j\succ 0$ cannot cross the boundary under norm convergence unless eigenvalues approach $0$, we can also argue directly:
for any $v\neq 0$,
\[
v^\top \Sigma v = \lim_{j\to\infty} v^\top \Sigma_j v \ge v^\top \Sigma_1 v > 0,
\]
so $\Sigma\succ 0$. This completes the proof.
\end{proof}
\subsubsection{Proof of Proposition~\ref{Ass:xbound}}
\begin{proof}
Define the quadratic Lyapunov function $V(x):=x^\top P x$ and the process
$V_t:=V(X_t)=X_t^\top P X_t$. Since $P\succ0$, $V_t\ge 0$ for all $t$ and
$\mathbb E[V_0]\le \lambda_{\max}(P)\mathbb E\|X_0\|^2<\infty$.
Let $f(x)=x^\top P x$. Then $\nabla f(x)=2Px$ and $\nabla^2 f(x)=2P$.
For the linear SDE $dX_t=AX_t\,dt+BX_t\,dW_t$, It\^{o}'s formula yields
\[
dV_t
= \langle \nabla f(X_t),\, AX_t\rangle\,dt
+ \langle \nabla f(X_t),\, BX_t\rangle\,dW_t
+ \frac12 \mathrm{tr}\Big[(BX_t)(BX_t)^\top \nabla^2 f(X_t)\Big]dt.
\]
Substituting $\nabla f$ and $\nabla^2 f$ and simplifying,
\[
dV_t
= X_t^\top(A^\top P+PA)X_t\,dt
+ X_t^\top(PB+B^\top P)X_t\,dW_t
+ X_t^\top(B^\top P B)X_t\,dt,
\]
hence
\begin{equation}\label{eq:dV}
dV_t
= X_t^\top(A^\top P+PA+B^\top P B)X_t\,dt
+ X_t^\top(PB+B^\top P)X_t\,dW_t.
\end{equation}
By the assumption $A^\top P+PA+B^\top P B\preceq -Q\preceq 0$, the drift term in
\eqref{eq:dV} is nonpositive. To justify conditional expectations, introduce the
localizing stopping times $\tau_m:=\inf\{t\ge 0:\|X_t\|\ge m\}\wedge T$.
Then $(V_{t\wedge \tau_m})_{t\in[0,T]}$ is integrable and the stochastic integral
$\int_0^{t\wedge \tau_m} X_s^\top(PB+B^\top P)X_s\,dW_s$ is a true martingale with
zero mean. Therefore, for any $0\le s\le t\le T$,
\[
\mathbb E\left[V_{t\wedge \tau_m}\mid \mathcal F_s\right]
= V_{s\wedge \tau_m}
+ \mathbb E\left[\int_{s\wedge \tau_m}^{t\wedge \tau_m}
X_u^\top(A^\top P+PA+B^\top P B)X_u\,du \,\Big|\, \mathcal F_s\right]
\le V_{s\wedge \tau_m}.
\]
Hence $(V_{t\wedge \tau_m})_{t\in[0,T]}$ is a nonnegative supermartingale. Next, note that $\tau_m\uparrow T$ as $m\to\infty$ and $X$ has continuous sample paths,
so for each fixed $t\in[0,T]$ we have $t\wedge \tau_m \to t$ and thus $V_{t\wedge \tau_m}\to V_t$ almost surely. By conditional Fatou's lemma, for any
$0\le s\le t\le T$,
\[
\mathbb E\left[V_t \mid \mathcal F_s\right]
=\mathbb E\left[\liminf_{m\to\infty} V_{t\wedge \tau_m}\,\Big|\,\mathcal F_s\right]
\le \liminf_{m\to\infty}\mathbb E\left[V_{t\wedge \tau_m}\mid \mathcal F_s\right]
\le \liminf_{m\to\infty} V_{s\wedge \tau_m}
= V_s,
\]
which shows that $(V_t)_{t\in[0,T]}$ is a nonnegative supermartingale as well. For any $u>0$, Doob's maximal inequality for nonnegative supermartingales gives
\[
\mathbb P\left(\sup_{0\le r\le T} V_r \ge u\right)
\le \frac{\mathbb E[V_0]}{u}.
\]
Choose $u:=\lambda_{\min}(P)K^2$. Since $V_r=X_r^\top P X_r \ge \lambda_{\min}(P)\|X_r\|^2$,
\[
\left\{\sup_{0\le r\le T}\|X_r\|\ge K\right\}
\subseteq
\left\{\sup_{0\le r\le T}V_r \ge \lambda_{\min}(P)K^2\right\}.
\]
Thus,
\[
\mathbb P\left(\sup_{0\le r\le T}\|X_r\|\ge K\right)
\le
\mathbb P\left(\sup_{0\le r\le T}V_r \ge \lambda_{\min}(P)K^2\right)
\le
\frac{\mathbb E[V_0]}{\lambda_{\min}(P)K^2}.
\]
Finally, $V_0=X_0^\top P X_0 \le \lambda_{\max}(P)\|X_0\|^2$ implies
$\mathbb E[V_0]\le \lambda_{\max}(P)\mathbb E\|X_0\|^2$, yielding
\[
\mathbb P\left(\sup_{0\le r\le T}\|X_r\|\ge K\right)
\le
\frac{\lambda_{\max}(P)}{\lambda_{\min}(P)}\cdot \frac{\mathbb E\|X_0\|^2}{K^2}.
\]
The discrete-time statement follows since $\max_{k\le n}\|X_{t_k}\|\le \sup_{0\le r\le T}\|X_r\|$.
\end{proof}

\subsubsection{Proof of Proposition~\ref{prop: cov_decay}}
For notational simplicity, the proof is written in a compressed linear form. The argument only uses that, conditional on \(\mathcal F_t\), the diffusion increment over \([t,t+h]\) has mean zero, so the same conditional-mean identity applies to the multi-channel representation in Eq.\ref{Eq:equiv_form}.
\begin{proof}
Write $\mu_t := \mathbb{E}X_t$ and $\Sigma_t := \mathrm{Cov}(X_t)
= \mathbb{E}\left[(X_t-\mu_t)(X_t-\mu_t)^\top\right]$.
Fix $t\ge 0$ and define $\mathcal{F}_t:=\sigma\{W_s: 0\le s\le t\}$.
Consider the process $Y_h := X_{t+h}$ for $h\ge 0$. Then $Y_h$ satisfies, in the ``lag'' variable $h$,
\begin{equation}
\label{eq:lag_sde}
dY_h = B Y_h\,dh + A Y_h\,d\widetilde W_h,\qquad
\widetilde W_h:=W_{t+h}-W_t,
\end{equation}
where $(\widetilde W_h)_{h\ge 0}$ is a Brownian motion independent of $\mathcal{F}_t$.
Taking conditional expectations in \eqref{eq:lag_sde} yields the ODE
\[
\frac{d}{dh}\,\mathbb{E}[Y_h\mid \mathcal{F}_t]
= B\,\mathbb{E}[Y_h\mid \mathcal{F}_t],\qquad
\mathbb{E}[Y_0\mid \mathcal{F}_t]=X_t,
\]
hence
\begin{equation}
\label{eq:cond_mean_future}
\mathbb{E}[X_{t+h}\mid \mathcal{F}_t] = e^{Bh}X_t,\qquad \forall\,h\ge 0.
\end{equation}
Taking expectations in \eqref{eq:cond_mean_future} gives $\mu_{t+h}=e^{Bh}\mu_t$, so
\begin{equation}
\label{eq:cond_centered_future}
\mathbb{E}[X_{t+h}-\mu_{t+h}\mid \mathcal{F}_t]
= e^{Bh}(X_t-\mu_t),\qquad \forall\,h\ge 0.
\end{equation}
Using the tower property and \eqref{eq:cond_centered_future}, we obtain
\begin{equation}
    \begin{split}
C(t,h)
&= \mathbb{E}\left[(X_t-\mu_t)(X_{t+h}-\mu_{t+h})^\top\right]\\
&= \mathbb{E}\left[(X_t-\mu_t)\,\mathbb{E}\left[(X_{t+h}-\mu_{t+h})^\top\mid \mathcal{F}_t\right]\right]\\
&= \mathbb{E}\left[(X_t-\mu_t)(X_t-\mu_t)^\top\right] e^{B^\top h}
= \Sigma_t e^{B^\top h}.
    \end{split}
\end{equation}
Therefore,
\begin{equation}
\label{eq:C_norm_bound}
\|C(t,h)\|_2 \le \|\Sigma_t\|_2\,\|e^{B^\top h}\|_2
= \|\Sigma_t\|_2\,\|e^{Bh}\|_2.
\end{equation}
Since $\Sigma_t\succeq 0$, we have $\|\Sigma_t\|_2 \le \mathrm{tr}(\Sigma_t)
= \mathbb{E}\|X_t-\mu_t\|_2^2 \le \mathbb{E}\|X_t\|_2^2$.
Combining this with $\|e^{Bh}\|_2 \le m e^{\omega h}$ for all  and the uniform second-moment bound yields
\[
\|C(t,h)\|_2
\le \mathbb{E}\|X_t\|_2^2 \cdot m e^{\omega h}
\le \left(\sup_{s\ge 0}\mathbb{E}\|X_s\|_2^2\right) m e^{\omega h}
= C_0 e^{-c_0 h},
\]
with $c_0:=-\omega>0$ and $C_0:=m\,\sup_{s\ge 0}\mathbb{E}\|X_s\|_2^2$. This proves the cross covariance decay.
\end{proof}

\subsubsection{Proof of Proposition~\ref{ass:localization}}
\begin{proof}
Write the mild form for $t\in[0,T]$:
\[
X_t=X_0+\int_0^t BX_s\,ds+\int_0^t AX_s\,dW_s.
\]
Taking $\sup_{u\in[0,t]}$ and using $(a+b+c)^q\le 3^{q-1}(a^q+b^q+c^q)$, we get
\begin{align*}
\sup_{u\le t}\|X_u\|^q
&\le 3^{q-1}\|X_0\|^q
+3^{q-1}\sup_{u\le t}\Big\|\int_0^u BX_s\,ds\Big\|^q
+3^{q-1}\sup_{u\le t}\Big\|\int_0^u AX_s\,dW_s\Big\|^q.
\end{align*}
For the drift term, Hölder yields
\[
\sup_{u\le t}\Big\|\int_0^u BX_s\,ds\Big\|^q
\le \Big(\int_0^t \|BX_s\|\,ds\Big)^q
\le t^{q-1}\int_0^t \|BX_s\|^q\,ds.
\]
For the martingale term, by the Burkholder--Davis--Gundy inequality (BDG), there exists
$C_q>0$ such that
\[
\mathbb{E}\Big[\sup_{u\le t}\Big\|\int_0^u AX_s\,dW_s\Big\|^q\Big]
\le C_q\,\mathbb{E}\Big[\Big(\int_0^t \|AX_s\|_{\mathrm F}^2\,ds\Big)^{q/2}\Big].
\]
Since $q\ge 2$, we use $(\int_0^t f(s)^2 ds)^{q/2}\le t^{\frac q2-1}\int_0^t f(s)^q ds$ to obtain
\[
\mathbb{E}\Big[\sup_{u\le t}\Big\|\int_0^u AX_s\,dW_s\Big\|^q\Big]
\le C_q\,t^{\frac q2-1}\int_0^t \mathbb{E}\|AX_s\|_{\mathrm F}^q\,ds.
\]
Combining the above bounds and taking expectation, we get
\[
\mathbb{E}\Big[\sup_{u\le t}\|X_u\|^q\Big]
\le C\mathbb{E}\|X_0\|^q
+ C\int_0^t \mathbb{E}\|BX_s\|^q\,ds
+ C\int_0^t \mathbb{E}\|AX_s\|_{\mathrm F}^q\,ds,
\]
where $C$ absorbs $3^{q-1}$ and the $t$-dependent factors (note $t\le T$).
By linear growth assumption,
\[
\|Bx\|^q+\|Ax\|_{\mathrm F}^q \le C\bigl(1+\|x\|^q\bigr),
\]
hence
\[
\mathbb{E}\Big[\sup_{u\le t}\|X_u\|^q\Big]
\le C\bigl(1+\mathbb{E}\|X_0\|^q\bigr)
+ C\int_0^t \mathbb{E}\Big[\sup_{r\le s}\|X_r\|^q\Big]\,ds.
\]
Applying Grönwall's inequality yields
\[
\mathbb{E}\Big[\sup_{t\in[0,T]}\|X_t\|^q\Big]
\le C\,e^{CT}\bigl(1+\mathbb{E}\|X_0\|^q\bigr).
\]
Finally, Markov's inequality gives for any $K>0$,
\[
\mathbb{P}\Big(\sup_{t\in[0,T]}\|X_t\|>K\Big)
\le \frac{\mathbb{E}[\sup_{t\in[0,T]}\|X_t\|^q]}{K^q}.
\]
Choosing $K=K(T,\delta)$ as stated proves
$\mathbb{P}(\sup_{t\in[0,T]}\|X_t\|\le K(T,\delta))\ge 1-\delta$.
The discrete-time and $\|\cdot\|_\infty$ statements follow from
$\max_k\|X(t_k)\|\le \sup_{t\in[0,T]}\|X_t\|$ and $\|x\|_\infty\le \|x\|$.
\end{proof}

\subsubsection{Proof of Proposition~\ref{thm:lyap-exponent-bound}}

\begin{proof}
    Given the initial condition $x(0) \neq 0$, and let the $y(t) = \langle \Sigma x(t), x(t)\rangle$. Then, by Proposition~\ref{prop: WD} and the positiveness of $\Sigma$, $\mathbb P(y(t) = 0)=0$ for all $t\ge0$. By Itô's Lemma, for
    \[
    y(t) = \langle \Sigma x(t), x(t)\rangle = x(t)^{\top}\Sigma x(t),
    \]
    we have
    \begin{equation}
    \begin{split}
    dy(t)
    &= x(t)^{\top}\Sigma dx(t) + dx(t)^{\top}\Sigma x(t) + dx(t)^{\top}\Sigma dx(t) \\
    &= 2\langle \Sigma x(t), dx(t)\rangle + \langle \Sigma dx(t), dx(t)\rangle \\
    &= 2\langle \Sigma x(t), A x(t)\rangle dt + 2\langle \Sigma x(t), B x(t)\rangle dW(t) + \langle \Sigma B x(t), B x(t)\rangle (dW(t))^{2} \\
    &= \big(2\langle \Sigma A x(t), x(t)\rangle 
            + \langle \Sigma B x(t), B x(t)\rangle\big) dt
       + 2\langle \Sigma x(t), B x(t)\rangle dW(t),
    \end{split}
    \end{equation}
    since $(dW(t))^{2} = dt$ in the Ito sense. Then, we can apply Ito's lemma again, we can have 
    \begin{equation}
    \begin{split}
    d\log y(t) &= \frac{1}{y(t)}  dy(t)
   - \frac{1}{2y(t)^2}(dy(t))^2 \\
    &=  \frac{2\langle \Sigma A x(t),x(t)\rangle + \langle \Sigma Bx(t),Bx(t)\rangle}{y(t)}  dt + \frac{2\langle \Sigma x(t),B x(t)\rangle}{y(t)}  dW(t) - \frac{2\langle \Sigma x(t),B x(t)\rangle^2}{y(t)^2}  dt \\
    & = ( \frac{2\langle \Sigma A x(t),x(t)\rangle + \langle \Sigma Bx(t),Bx(t)\rangle}{y(t)} - \frac{2\langle \Sigma x(t),B x(t)\rangle^2}{y(t)^2}) dt      
    + \frac{2\langle \Sigma x(t),B x(t)\rangle}{y(t)}  dW(t). \\
    & \leq 2\lambda dt + \frac{2\langle \Sigma x(t),B x(t)\rangle}{y(t)}  dW(t) \quad \text{a.s.}
    \end{split}
    \end{equation}
The last inequality hold by Eq.\eqref{eq:lyap-ineq}. Then, we define
\begin{equation}
y(t)=\langle \Sigma x(t),x(t)\rangle,\quad 
h(t):=\frac{2\langle \Sigma x(t),Bx(t)\rangle}{y(t)}, \quad M_t := \int_0^t h(s)dW(s), 
\end{equation}
where $M_t$ is the continuous martingale, and its quadratic variation satisfies
\begin{equation}
\langle M\rangle_t 
= \int_0^t h(s)^2  ds 
\le C t ,
\end{equation}
for some constant \(C>0\).  
Therefore, by the strong law of large numbers for continuous martingales,
\begin{equation}
    \lim_{t\to\infty}\frac{M_t}{t} = \lim_{t\to\infty} \frac{1}{t}\int_0^t \frac{2\langle \Sigma x(s),Bx(s)\rangle}{y(s)} dW(s) = 0, \quad \text{a.s.}
\end{equation}
Therefore, we can have $\lim_{t\to \infty}\frac{1}{t}\log y(t) \le 2\lambda$. By the positiveness of $\Sigma$, we have 
\begin{equation}
    \lim_{t\to \infty}\frac{1}{t}\log |x(t)| \le \lambda
\end{equation}
\end{proof}

\subsubsection{Proof of Lemma~\ref{lem:35}}
\begin{proof}
We expand the argument to make the implication \eqref{eq:cond13}$\Rightarrow$\eqref{eq:lyap-ineq} explicit. Fix $i\in\{1,\dots,k\}$ and any nonzero $x\in\mathbb{R}^n$.
Define the scalar
\[
u_i(x):=\frac{\langle Qx, B_i x\rangle}{\langle Qx,x\rangle}.
\]
Since $(u_i(x)+\tfrac12 b_i)^2\ge 0$, we have
\begin{equation}
\label{eq:abc15}
u_i(x)^2 + b_i u_i(x) + \frac14 b_i^2 \ge 0.
\end{equation}
Multiplying \eqref{eq:abc15} by $2\langle Qx,x\rangle^2$ yields
\begin{equation}
\label{eq:abc15_mult}
-\,2\langle Qx, B_i x\rangle^2
\le
2 b_i \langle Qx, B_i x\rangle \langle Qx,x\rangle
+\frac12 b_i^2 \langle Qx,x\rangle^2 .
\end{equation}
Summing \eqref{eq:abc15_mult} over $i=1,\dots,k$ gives
\begin{equation}
\label{eq:sumbound}
-\,2\sum_{i=1}^k\langle Qx, B_i x\rangle^2
\le
2\Big(\sum_{i=1}^k b_i \langle Qx, B_i x\rangle\Big)\langle Qx,x\rangle
+\frac12\Big(\sum_{i=1}^k b_i^2\Big)\langle Qx,x\rangle^2 .
\end{equation}
Take the quadratic form of \eqref{eq:cond13} with vector $x$:
\begin{align}
0 &\ge x^\top\Big[
\Big(A+\sum_{i=1}^k b_i B_i\Big)^\top Q
+ Q\Big(A+\sum_{i=1}^k b_i B_i\Big)
+ \sum_{i=1}^k B_i^\top Q B_i
+ \Big(\frac12\sum_{i=1}^k b_i^2 - 2\lambda\Big) Q
\Big]x \nonumber\\
&= 2\langle QAx,x\rangle
+ 2\sum_{i=1}^k b_i \langle Q B_i x, x\rangle
+ \sum_{i=1}^k \langle Q B_i x, B_i x\rangle
+ \Big(\frac12\sum_{i=1}^k b_i^2 - 2\lambda\Big)\langle Qx,x\rangle .
\label{eq:quadform13}
\end{align}
Rearranging \eqref{eq:quadform13} gives, for all $x\in\mathbb{R}^n$,
\begin{equation}
\label{eq:rearrange13}
2\langle QAx,x\rangle + \sum_{i=1}^k \langle Q B_i x, B_i x\rangle - 2\lambda \langle Qx,x\rangle
\le
-2\sum_{i=1}^k b_i \langle Qx, B_i x\rangle
-\frac12\Big(\sum_{i=1}^k b_i^2\Big)\langle Qx,x\rangle .
\end{equation}
Multiply \eqref{eq:rearrange13} by $\langle Qx,x\rangle\ge 0$ to get
\begin{equation}
    \begin{split}
        &\langle Qx,x\rangle\Big(
2\langle QAx,x\rangle + \sum_{i=1}^k \langle Q B_i x, B_i x\rangle - 2\lambda \langle Qx,x\rangle
\Big)\\
&\qquad\le
-2\Big(\sum_{i=1}^k b_i \langle Qx,B_i x\rangle\Big)\langle Qx,x\rangle
-\frac12\Big(\sum_{i=1}^k b_i^2\Big)\langle Qx,x\rangle^2.
\label{eq:mul_rearrange}
    \end{split}
\end{equation}
Finally, apply \eqref{eq:sumbound} to the right-hand side of \eqref{eq:mul_rearrange} to bound it by
$2\sum_{i=1}^k \langle Qx, B_i x\rangle^2$. This yields the following
\[
\langle Qx,x\rangle\Big(
2\langle QAx,x\rangle + \sum_{i=1}^k \langle Q B_i x, B_i x\rangle - 2\lambda \langle Qx,x\rangle
\Big)
\le
2\sum_{i=1}^k \langle Qx, B_i x\rangle^2,
\qquad \forall x\in\mathbb{R}^n.
\]
Therefore, Proposition~\ref{thm:lyap-exponent-bound} applies and gives \eqref{eq:limsup14}.
\end{proof}

\subsubsection{Proof of Lemma~\ref{lem:lyap_integral}}
\begin{proof}
Let $F(t):=e^{At}Ne^{A^\top t}$. Then $F$ is differentiable and
\[
\frac{d}{dt}F(t)=A e^{At}Ne^{A^\top t} + e^{At}Ne^{A^\top t}A^\top
= A F(t)+F(t)A^\top.
\]
Integrating from $0$ to $T$ gives
\[
\int_0^T (A F(t)+F(t)A^\top)\,dt = F(T)-F(0).
\]
As $T\to\infty$, $\|F(T)\|\le \|e^{AT}\|^2\|N\|\le m^2e^{2\omega T}\|N\|\to 0$ since $\omega<0$.
Thus, letting $P=\int_0^\infty F(t)\,dt$ yields $AP+PA^\top = -F(0)=-N$.
\end{proof}

\subsection{Proofs for the Statements in \S\ref{sec:lemma}}
\label{sec: Lemma}
\subsubsection{Proof of Lemma~\ref{KKT}}
\begin{proof}
Fix $k$ and let $z:=\theta_i^\top X_k$. Define the single-sample loss
\begin{equation}
\ell(z)
=\frac12\Bigg[\log(z^2+c)+\frac{r_{i,k}^2}{z^2+c}\Bigg].
\end{equation}
Differentiating w.r.t.\ $z$ yields
\begin{equation}
    \frac{d\ell}{dz}=\frac12\left(\frac{2z}{z^2+c}-\frac{2zr_{i,k}^2}{(z^2+c)^2}\right)
=\frac{z\big((z^2+c)-r_{i,k}^2\big)}{(z^2+c)^2}.
\end{equation}
Since $z=\theta_i^\top X_k$, we have $\partial z/\partial\theta_i=X_k$. By the chain rule,
\begin{equation}
    \nabla_{\theta_i}\mathcal L_i(\theta_i)=\frac1n\sum_{k=0}^{n-1}\frac{d\ell}{dz}\Big|_{z=\theta_i^\top X_k}\,X_k
=\frac1n\sum_{k=0}^{n-1}u_{i,k}(\theta_i)\,X_k,
\end{equation}
where
\begin{equation}
u_{i,k}(\theta_i) = \frac{(\theta_i^\top X_k)\big((\theta_i^\top X_k)^2+c-r_{i,k}^2\big)}
{\big((\theta_i^\top X_k)^2+c\big)^2}.
\end{equation}
Define $G(\theta_i):=\nabla_{\theta_i}\mathcal L_i(\theta_i)$. For the Lasso objective $f(\theta_i):=\mathcal L_i(\theta_i)+\lambda\|\theta_i\|_1$,
stationarity is $0\in \nabla\mathcal L_i(\hat\theta_i)+\lambda\,\partial\|\hat\theta_i\|_1$.
Equivalently, there exists $e\in\partial\|\hat\theta_i\|_1$ such that
$G(\hat\theta_i)+\lambda e=0$, where $e_b=\mathrm{sign}(\hat\theta_{i,b})$ if $\hat\theta_{i,b}\neq 0$
and $e_b\in[-1,1]$ if $\hat\theta_{i,b}=0$. This yields the coordinate-wise conditions in 
\begin{equation}
\begin{cases}
G_b(\hat\theta_i)=-\lambda\,\mathrm{sign}(\hat\theta_{i,b}), & \hat\theta_{i,b}\neq 0,\\
|G_b(\hat\theta_i)|\le \lambda, & \hat\theta_{i,b}=0,
\end{cases}
\end{equation}
where $\hat\theta_{i,b}$ denotes the $b$-th entry of $\hat\theta_i$.
\end{proof}
\subsubsection{Proof of Lemma~\ref{lem:prop}}
\begin{itemize}
    \item \textbf{Proof of the first statement:}
\begin{proof}
Given the $c$-stabilized population parameter for $\theta^{*,c}_i$ for $i\in [p]$, for the notation simplicity, we use $\theta^{*}_i$, and we have the following population risk 
\begin{equation}
\begin{split}
     \theta^*_i& \in\arg\min_{\theta: \theta_{ii} = 0}\mathbb E[
\bigl( \log\big((\theta^\top_i X_k)^2+c\big)
+\frac{r_{i,k}^2}{(\theta^\top_i X_k)^2+c}\bigr)].\\
\end{split}
\end{equation}
Let $u_{i,k} := \theta_i^\top X_k$ and $u_{i,k}^* := \theta_i^{*\top}X_k$. Evaluating at $\theta_i=\theta_i^*$ yields
\begin{equation}
\nabla_{\theta_i}\ell_{i,k}(\theta_i^*) =
2\left(\frac{u_{i,k}^*}{(u_{i,k}^*)^2+c}
-\frac{u_{i,k}^*r_{i,k}^2}{\big((u_{i,k}^*)^2+c\big)^2}\right)X_k.
\end{equation}
Under the population model $r_{i,k}=u_{i,k}^* z_{i,k}$ with $z_{i,k}\sim\mathcal N(0,1)$,
we have $r_{i,k}^2=(u_{i,k}^*)^2 z_{i,k}^2$, hence
\begin{equation}
\begin{split}
\nabla_{\theta_i}\ell_{i,k}(\theta_i^*)
& =
\frac{2u_{i,k}^*}{\big((u_{i,k}^*)^2+c\big)^2}
\Big(\big((u_{i,k}^*)^2+c\big)-(u_{i,k}^*)^2 z_{i,k}^2\Big)X_k\\
&=
\frac{2u_{i,k}^*}{\big((u_{i,k}^*)^2+c\big)^2}
\Big(c+(u_{i,k}^*)^2(1-z_{i,k}^2)\Big)X_k. 
\end{split}
\end{equation}
Since $r_{i,k}=u_{i,k}^* z_k$ with $z_k\sim\mathcal N(0,1)$. Then the $j$-th coordinate of the per-sample gradient at $\theta_i^*$ is
\begin{equation}
g_{k,j}:=\big[\nabla_{\theta_i}\ell_{i,k}(\theta_i^*)\big]_j
= \underbrace{\frac{2cu_k^*}{(u_k^{*2}+c)^2}X_{k,j}}_{\coloneqq b_{k,j}} + \underbrace{\frac{2u_k^{*3}}{(u_k^{*2}+c)^2}(1-z_k^2)X_{k,j}}_{\coloneqq\tilde g_{k,j}}.
\end{equation}
Then, we rewrite it as 
\begin{equation}
g_{k,j} \coloneqq \big[\nabla_{\theta_i}\ell_{i,k}(\theta_i^\ast)\big]_j = b_{k,j}+\tilde g_{k,j},
\end{equation}
Conditioning on $X_k$, we can have that 
$\mathbb E[\tilde g_{k,j}\mid X_k]=0$ since $\mathbb E(1-z_k^2)=0$. By first-order optimality on the free coordinates $j\neq i$, and under the usual
interchange of differentiation and expectation, we can have that $\mathbb E [g_{k,j}] = 0$. Thus we can have $\mathbb E[b_{k,j}] = 0$. Additionally, we observe that 
\begin{equation}
\label{Eq: fact1}
    \left|\frac{2u^3}{(u^2+c)^2}\right|
=2\cdot\frac{|u|}{u^2+c}\cdot\frac{u^2}{u^2+c}
\le 2\cdot\frac{1}{2\sqrt c}\cdot 1
=\frac{1}{\sqrt c},
\end{equation}
Hence, with $\|X_k\|_\infty\le K$ and $1-z_k^2$ is sub-exponential, we have that 
\begin{equation}
|\tilde g_{k,j}|
\le \frac{|X_{k,j}|}{\sqrt c}|1-z_k^2|
\le \frac{K}{\sqrt c}|1-z_k^2| \implies \|\tilde g_{k,j}\|_{\psi_1}
\le
\frac{8K}{\sqrt c} := C_1
\end{equation}
Moreover, conditional on $\{X_k\}_{k=1}^n$, the variables $\{\tilde g_{k,j}\}_{k=1}^n$ are independent across $k$, since each $\tilde g_{k,j}$ depends on the Gaussian innovation only through $z_k$, and $\{z_k\}_{k=1}^n$ are i.i.d. and independent of $\{X_k\}_{k=1}^n$. Together with $\mathbb{E}[\tilde g_{k,j}\mid X_k]=0$, this allows us to apply the conditional Bernstein inequality. Thus, $\tilde g_{k,j}\in SE(\nu^2,\sigma)$ with $\nu^2\asymp C_1^2$ and $\sigma\asymp C_1$. We have for any $t>0$ by Bernstein concentration,

\begin{equation}
\mathbb P\left(|\frac1n\sum_{k=1}^n \tilde g_{k,j}|\ge t\mid\{X_k\}\right)\le2\exp\left(- \frac n2\min\left\{\frac{t^2}{\nu^2},\frac{t}{\sigma}\right\}
\right).
\end{equation}
Then, taking a union bound over $j\in[p]\setminus\{i\}$, we obtain
\begin{equation}
\begin{split}
&\mathbb P\left(\left.\|\frac1n\sum_{k=1}^n \tilde g_{k,j}\|_\infty\ge t\right|\{X_k\}\right)
\\ &= \mathbb P\left(\max_{j\in[p]}|\frac1n\sum_{k=1}^n \tilde g_{k,j}|\ge t\mid\{X_k\}\right) \le 2\exp\left( -\frac{n}{2}\min\left\{\frac{t^2}{\nu^2},\frac{t}{\sigma}\right\} +\log p \right).   
\end{split}
\end{equation}
By the tower property, we can have that 
\begin{equation}
\begin{split}
\label{Eq: g_JK bound}
\mathbb P\left(\|\frac1n\sum_{k=1}^n \tilde g_{k,j}\|_\infty\ge t\right)& = \mathbb E\left[ \mathbb P\left(\max_{j\in[p]}|\frac1n\sum_{k=1}^n \tilde g_{k,j}|\ge t\mid\{X_k\}\right) \right]\\&\le 2\exp\left( -\frac{n}{2}\min\left\{\frac{t^2}{\nu^2},\frac{t}{\sigma}\right\} +\log p \right).   
\end{split}
\end{equation}
Then, we define $B := (B_j)_{j\in [p]\setminus\{i\}}$, where \(B_j:=\frac{1}{n}\sum_{k=1}^n b_{k,j}\). It remains to control \(B\). By the same calculus argument as above in Eq.\eqref{Eq: fact1}, we can have the following under \(\|X_k\|_\infty\le K\)
\begin{equation}
|b_{k,j}| = \left| \frac{2cu_{i,k}^\ast}{\big((u_{i,k}^\ast)^2+c\big)^2}X_{k,j} \right| \le \frac{|X_{k,j}|}{\sqrt c} \le \frac{K}{\sqrt c}.
\end{equation}
Since \(\mathbb E[b_{k,j}]=0\), the variables \(b_{k,j}\) are centered and bounded, and therefore
sub-exponential with
\begin{equation}
    \|b_{k,j}\|_{\psi_1}\le C\frac{K}{\sqrt c}
=: C_2
\end{equation}
for some universal constant \(C>0\). Under Assumption~\ref{ass:alpha-mixing-EM}, the sequence \(\{X_k\}_{k\ge1}\) is geometrically \(\alpha\)-mixing. Since \(b_{k,j}\) is a measurable function of \(X_k\), \(\{ b_{k,j}\}_{k\ge1}\) also inherits geometric \(\alpha\)-mixing. Hence, by a Bernstein-type inequality for centered bounded geometrically \(\alpha\)-mixing sequences \citep{merlevede2009bernstein}, there exist constants \(C_1,C_2>0\), depending only on the mixing constants in Assumption~\ref{ass:alpha-mixing-EM}, such that for every \(t>0\),

\begin{equation}
\mathbb P\left(|B_j|\ge t\right) = \mathbb P\left( \left|\frac1n\sum_{k=1}^n b_{k,j}\right|\ge t \right) \le 2\exp\left( -\frac{n}{2}\min\left\{\frac{t^2}{\nu_B^2},\frac{t}{\sigma_B}\right\} \right),
\end{equation}
where \(\nu_B^2\asymp C_2^2\) and \(\sigma_B\asymp C_2\). Taking a union bound over \(j\in[p]\setminus\{i\}\), we obtain
\begin{equation}
\mathbb P\left(\|B\|_\infty\ge t\right) \le 2\exp\left( -\frac{n}{2}\min\left\{\frac{t^2}{\nu_B^2},\frac{t}{\sigma_B}\right\} +\log p \right).
\end{equation}
Combining this with the bound for \(\|\frac{1}{n}\sum_{k=1}^n \tilde g_{k,j}\|_\infty\) established in Eq.\eqref{Eq: g_JK bound}, we have
\begin{equation}
\mathbb P\left( \|\nabla_{\theta_i}\ell_i(\theta_i^\ast)\|_\infty \ge 2t
\right) \le \mathbb P(\|B\|_\infty\ge t)+\mathbb P(\|\frac1n\sum_{k=1}^n \tilde g_{k,j}\|_\infty\ge t)
\end{equation}
and therefore
\begin{equation}
\label{lemeq:grad_bound}
\mathbb P\left(
\|\nabla_{\theta_i}\ell_i(\theta_i^\ast)\|_\infty \ge 2t
\right)
\le
4\exp\left(
-\frac{n}{2}\min\left\{\frac{t^2}{\nu_\ast^2},\frac{t}{\sigma_\ast}\right\}
+\log p
\right),
\end{equation}
where \(\nu_\ast^2\asymp K^2/c\) and \(\sigma_\ast\asymp K/\sqrt c\). Choosing $t = C_0\frac{K}{\sqrt c} \left( \sqrt{\frac{\log p}{n}}+\frac{\log p}{n} \right)$ with \(C_0\) sufficiently large yields
\begin{equation}
\|\nabla_{\theta_i}\ell_i(\theta_i^\ast)\|_\infty \le C_1\frac{K}{\sqrt c} \left( \sqrt{\frac{\log p}{n}}+\frac{\log p}{n} \right)
\end{equation}
with probability at least \(1-4p^{-c_2}\), for some constants \(C_3,c_2>0\).
In particular, when \(n\gtrsim \log p\),
\begin{equation}
\|\nabla_{\theta_i}\ell_i(\theta_i^\ast)\|_\infty \le C_3\frac{K}{\sqrt c}\sqrt{\frac{\log p}{n}}.
\end{equation}
Since $K$ and $c$ are treated as fixed constants throughout, the factor $K/\sqrt{c}$ is absorbed into the generic constant. This completes the proof.
\end{proof}

\item \textbf{Proof of the second statement:}
\begin{proof}
Given the population objective, we can have the Hessian by directly computing such that
\begin{equation}
\nabla_\theta^2 \tilde\ell_{i,k}(\theta)=
\frac{2\Big(-(\theta^\top X_k)^4 + 3(\theta^\top X_k)^2 r_{i,k}^2 + c(c-r_{i,k}^2)\Big)}
{\big((\theta^\top X_k)^2+c\big)^3}
X_k X_k^\top.
\end{equation}
Let \(a_k^* := \theta^{*\top}X_k\) and \(u_k^* := (a_k^*)^2 + c\).
Assume the calibrated model \(r_{i,k}=\sqrt{u_k^*}\,z_{k+1}\) with
\(z_{k+1}\sim\mathcal N(0,1)\), so that \(r_{i,k}^2=u_k^* z_{k+1}^2\) and
\(\mathbb E[z_{k+1}^2]=1\). For population hessian by evaluating the loss at \(\theta=\theta^*\), we have 
\begin{equation}
\nabla_\theta^2\tilde\ell_{i,k}(\theta^*)
=
\frac{2\Big(-(a_k^*)^4 + 3(a_k^*)^2 r_{i,k}^2 + c(c-r_{i,k}^2)\Big)}{(u_k^*)^3}\,X_kX_k^\top.
\end{equation}
Substituting \(r_{i,k}^2=u_k^*z_{k+1}^2\) yields
\begin{equation}
    \nabla_\theta^2\tilde\ell_{i,k}(\theta^*)
=
\frac{2\Big(-(a_k^*)^4 + u_k^*\big(3(a_k^*)^2-c\big)z_{k+1}^2 + c^2\Big)}{(u_k^*)^3}\,X_kX_k^\top.
\end{equation}
Define the conditional population Hessian as \(H_{i,k}^*:=\mathbb E\left[\nabla_\theta^2\tilde\ell_{i,k}(\theta^*)\mid X_k\right]\). Since \(\mathbb E[z_{k+1}^2]=1\), we can have 
\begin{equation}
\label{eq:107}
H_{i,k}^* = \frac{2\Big(-(a_k^*)^4 + u_k^*\big(3(a_k^*)^2-c\big) + c^2\Big)}{(u_k^*)^3}\,X_kX_k^\top.
\end{equation}
Moreover, we observe that 
\begin{equation}
\label{eq:108}
-(a_k^*)^4 + u_k^*\big(3(a_k^*)^2-c\big) + c^2 =
-(a_k^*)^4 + \big((a_k^*)^2+c\big)\big(3(a_k^*)^2-c\big)+c^2
=2(a_k^*)^2u_k^*,
\end{equation}
and hence $H_{i,k}^* = \frac{4(a_k^*)^2}{(u_k^*)^2}\,X_kX_k^\top$. We then introduce the centered Hessian such that $Z_{i,k}:=\nabla_\theta^2\tilde\ell_{i,k}(\theta^*)-H_{i,k}^*$. Then, we have 
\begin{equation}
Z_{i,k} = \frac{2\big(3(a_k^*)^2-c\big)}{(u_k^*)^2}\,(z_{k+1}^2-1)\,X_kX_k^\top,
\end{equation}
For each entry \((j,\ell)\), we have 
\begin{equation}
[Z_{i,k}]_{j\ell} = \frac{2\big(3(a_k^*)^2-c\big)}{(u_k^*)^2}\,(z_{k+1}^2-1)\,X_{k,j}X_{k,\ell}.
\end{equation}

Therefore, conditioning on \(X_k\), we have 
\begin{equation}
    \begin{split}
\mathrm{Var}\left([Z_{i,k}]_{j\ell}\mid X_k\right)
& = \left(\frac{2\big(3(a_k^*)^2-c\big)}{(u_k^*)^2}X_{k,j}X_{k,\ell}\right)^2
\mathrm{Var}(z_{k+1}^2-1) \\
 \mathrm{Var}\left([Z_{i,k}]_{j\ell}\mid X_k\right) & = \frac{8\big(3(a_k^*)^2-c\big)^2}{(u_k^*)^4}\,X_{k,j}^2X_{k,\ell}^2 \le 
\frac{8\cdot 9(u_k^*)^2}{(u_k^*)^4}K^4
=
\frac{72K^4}{(u_k^*)^2}
\le
\frac{72K^4}{c^2},
\end{split}
\end{equation}
The second equality holds since \(z\sim\mathcal N(0,1)\) implies \(\mathrm{Var}(z^2-1)=\mathrm{Var}(z^2)=3-1=2\). With Proposition~\ref{Ass:xbound},  we have \(X_{k,j}^2X_{k,\ell}^2\le K^4\) with high probability. Then, we have that $|3(a_k^*)^2-c|\le 3(a_k^*)^2+c \le 3u_k^*$, which implies $\big(3(a_k^*)^2-c\big)^2\le 9(u_k^*)^2$. The last inequality follows from \(u_k^*=(a_k^*)^2+c\ge c\).
Moreover, let \(p\coloneqq|pa(i)|\) and condition on \(\{X_k\}_{k=1}^n\). We have 
\begin{equation}
\label{eq:89_fix}
\mathrm{Var}\left(\frac1n\sum_{k=1}^n [Z_{i,k}]_{j\ell}\mid \{X_k\}\right)
=\frac{1}{n^2}\sum_{k=1}^n \mathrm{Var}\left([Z_{i,k}]_{j\ell}\mid X_k\right)
\le \frac{72K^4}{n c^2},
\end{equation}
Then, we have 
\begin{equation}
\mathbb E\left[\big\|\frac1n\sum_{k=1}^n Z_{i,k}\big\|_\infty \mid \{X_k\}\right]
\le s\sum_{\ell=1}^s \max_{j\le s}\mathbb E\left[\big|\frac1n\sum_{k=1}^n [Z_{i,k}]_{j\ell}\big|\mid \{X_k\}\right]
\lesssim
s^2\sqrt{\frac{72K^4}{n c^2}}
\end{equation}
Then, our goal is to show that the following high-probability bound holds with probability conditioning on $X_k$
\begin{equation}
\label{eq:62}
\mathbb P\left(\big\|\frac{1}{n}\sum_{k=1}^n Z_{i,k}\big\|_\infty \le t\ \mid \{X_k\}\right)
\ge 1-\delta,
\qquad \forall\,0<\delta<1,
\end{equation}
by picking $t=s^2\sqrt{\frac{72K^4}{n c^2\delta}}$. Since \(\{z_{k+1}\}\) are i.i.d. and independent of \(\{X_k\}\), the variables \(\{[Z_{i,k}]_{j\ell}\}_{k=1}^n\) are independent and mean-zero conditional on \(\{X_k\}\). We have 
\begin{equation}
\|[Z_{i,k}]_{j\ell}\|_{\psi_1}\le |\frac{2\big(3(a_k^*)^2-c\big)}{(u_k^*)^2}\,X_{k,j}X_{k,\ell}|\|z_{k+1}^2-1\|_{\psi_1}
\le
\frac{18K^2}{u_k^*}
\le
\frac{18K^2}{c}=: C_2.
\end{equation}
This inequity holds since \(z\sim\mathcal N(0,1)\), the random variable \(z^2-1\) is sub-exponential, so \(\|z^2-1\|_{\psi_1}\le 3\). Thus, \([Z_{i,k}]_{j\ell}\in SE(\nu^2,\alpha)\) with parameters \(\nu^2\asymp C_2^2\) and \(\alpha\asymp C_2\). By Bernstein's inequality for sums of independent mean-zero sub-exponential random variables, for any \(t>0\),
\begin{equation}
\mathbb P\left(\big|\frac1n\sum_{k=1}^n [Z_{i,k}]_{j\ell}|>t\big| \mid  \ \{X_k\}\right)
\le
2\exp\left(
-\frac{n}{2}\min\left\{\frac{t^2}{\nu^2},\frac{t}{\alpha}\right\}
\right),
\end{equation}
Next, using \(\|A\|_\infty \le p\max_{j,\ell\le p}|A_{j\ell}|\), we have
\begin{equation}
\label{117}
    \begin{split}
\mathbb P\left(\big\|\frac1n\sum_{k=1}^n Z_{i,k}\big\|_\infty>t \mid \{X_k\}\right)&
\le \mathbb P\left(\max_{j,\ell\le s}\big|\frac1n\sum_{k=1}^n [Z_{i,k}]_{j\ell}\big|>\frac{t}{s}\mid \{X_k\}\right)\\
& \le
\sum_{j,\ell\le s}
\mathbb P\left(\left.\left|\frac1n\sum_{k=1}^n [Z_{i,k}]_{j\ell}\right|>\frac{t}{s}\ \right|\ \{X_k\}\right)\\
&\le 2s^2\exp\left(
-\frac{n}{2}\min\left\{\frac{t^2}{\nu^2 s^2},\frac{t}{\alpha s}\right\}
\right),
    \end{split}
\end{equation}
From Assumption~\ref{ass:alpha-mixing-EM}, we know that $\{X_k\}_{k\ge 1}$ is geometrically $\alpha$-mixing, such that $\alpha(m)\le C_0 e^{-c_0 m}$ for some constants $C_0,c_0>0$ and $m>1$. By our pervious definition, $Q^*_{V_iV_i}$ is the population Hessian block evaluated at the ground truth such that $Q^*_{V_iV_i} = n^{-1}\sum_{k=1}^n\mathbb E[H_{i,k}^*]$. For each $(j,\ell)\in V_i\times V_i$, recall that
\[
[H_{i,k}^*]_{j\ell}
=
\frac{4(a_k^*)^2}{((a_k^*)^2+c)^2}X_{k,j}X_{k,\ell}.
\]
For all $t\in\mathbb R$, define $\phi(t):=\frac{4t^2}{(t^2+c)^2}$, and we know that $0\le \phi(t)\le \frac{1}{c}$. It follows that
\[
\left|[H_{i,k}^*]_{j\ell}\right| \le \frac{1}{c}|X_{k,j}X_{k,\ell}| \le \frac{K^2}{c} \implies \left|\mathbb E\!\left([H_{i,k}^*]_{j\ell}\right)\right|
\le \mathbb E\!\left|[H_{i,k}^*]_{j\ell}\right|
\le \frac{K^2}{c}.
\]
Now, we define centered variable such that
\[
Y_{k,j\ell} := [H_{i,k}^*]_{j\ell} - \mathbb E\!\left([H_{i,k}^*]_{j\ell}\right) \implies|Y_{k,j\ell}| \le \left|[H_{i,k}^*]_{j\ell}\right| + \left|\mathbb E\!\left([H_{i,k}^*]_{j\ell}\right)\right| \le \frac{2K^2}{c}
\]
Moreover, since $Y_{k,j\ell}$ is a measurable function of $X_k$, the sequence
$\{Y_{k,j\ell}\}_{k=1}^n$ is also geometrically $\alpha$-mixing. Therefore, by a Bernstein-type inequality for centered bounded geometrically
$\alpha$-mixing sequences from Theorem 2 of \cite{merlevede2009bernstein}, there exists a constant $C>0$ such that, for every $x>0$,
\[
\mathbb P\!\left(
\left|\sum_{k=1}^n Y_{k,j\ell}\right|>x \right) \le 2\exp\!\left( - \frac{C x^2}{n + x(\log n)^2}\right),
\]
where the constant $C$ may depend on $K$, $c$, and the mixing parameters but not on $n$, $x$, $j$, or $\ell$. Equivalently, for every $u>0$,
\[
\mathbb P\!\left(\left|\frac1n\sum_{k=1}^n Y_{k,j\ell}\right|>u\right)\le
2\exp\!\left(-Cn\min\!\left\{u^2,\,\frac{u}{(\log n)^2}\right\}\right).
\]
Restoring the scale $K^2/c$, this can be written as
\[
\mathbb P\!\left(\left|\frac1n\sum_{k=1}^n Y_{k,j\ell}\right|>u\right)\le2\exp\!\left(-Cn\min\!\left\{\frac{u^2c^2}{K^4},\,\frac{uc}{K^2(\log n)^2}\right\}\right).
\]
Let $s:=|V_i|$. Since $\|A\|_\infty \le s\max_{j,\ell\in V_i}|A_{j\ell}|$,  we obtain by a union bound that
\begin{equation}
\begin{split}
    \mathbb P\!\left( \left\| \frac1n\sum_{k=1}^n H_{i,k}^*-Q^*_{V_iV_i}\right\|_\infty > t\right)&\le\sum_{j,\ell\in V_i} \mathbb P\!\left(\left|\frac1n\sum_{k=1}^n Y_{k,j\ell}\right|>\frac{t}{s}\right)\\
    &\le 2s^2\exp\!\left(-Cn\min\!\left\{\frac{t^2c^2}{s^2K^4},\,\frac{tc}{sK^2(\log n)^2}\right\}\right).
\end{split}
\end{equation}
Consequently,
\[
\left\|\frac1n\sum_{k=1}^n H_{i,k}^*-Q^*_{V_iV_i}\right\|_\infty=O_{\mathbb P}\!\left(\frac{K^2}{c}\,s\sqrt{\frac{\log s}{n}}+\frac{K^2}{c}\,s\frac{(\log n)^2\log s}{n}\right).
\]
Since $K$ and $c$ are treated as fixed constants, this simplifies to
\[
\left\| \frac1n\sum_{k=1}^n H_{i,k}^*-Q^*_{V_iV_i} \right\|_\infty = O_{\mathbb P}\!\left( s\sqrt{\frac{\log s}{n}} +s\frac{(\log n)^2\log s}{n}\right).
\]
Combining this with the bound in \eqref{117} yields
\[
\|Q^n_{V_iV_i}-Q^*_{V_iV_i}\|_\infty=O_{\mathbb P}\!\left(s\sqrt{\frac{\log s}{n}}+s\frac{(\log n)^2\log s}{n}\right).
\]
For the off-support block, we first bound only the fluctuation term. Using
\[
\|A\|_{\infty} \le s \max_{j \in V_i^c,\ell \in V_i} |A_{j\ell}|,
\]
together with a union bound over the $(p-s)s$ entries in $V_i^c \times V_i$, we obtain
\begin{equation}
\mathbb{P}\!\left( \left\|\frac{1}{n}\sum_{k=1}^n Z_{i,k;V_i^cV_i}\right\|_{\infty} > t\right) \le2(p-s)s \exp\!\left(-\frac{n}{2}\min\left\{\frac{t^2}{\nu^2 s^2},\frac{t}{\alpha s}\right\}\right).
\end{equation}
Moreover, applying the same mixing argument as above to
\[
\frac{1}{n}\sum_{k=1}^n\Big(H^*_{i,k;V_i^cV_i}-\mathbb{E}[H^*_{i,k;V_i^cV_i}]\Big),
\]
yields a bound of the same order. Therefore,
\begin{equation}
\left\|
Q^n_{V_i^cV_i}-Q^*_{V_i^cV_i}\right\|_{\infty}=O_{\mathbb{P}}\!\left(s\sqrt{\frac{\log\!\big((d-s)s\big)}{n}}+s\,\frac{(\log n)^2\log\!\big((d-s)s\big)}{n}\right).
\end{equation}
\end{proof}

\item \textbf{Proof of the third statement:}
\begin{proof}
From the Assumption~\ref{ass: Incoh}, and Proposition~\ref{prop: WD}, we know that there exists $\alpha\in(0,1]$ such that $\big\|Q^*_{V_i^cV_i}(Q^*_{V_iV_i})^{-1}\big\|_\infty \le 1-\alpha$;  there exists $C_{\min}>0$ such that $\lambda_{\min}(Q^*_{V_iV_i})\ge C_{\min}$. Moreover, from the second statement of Lemma~\ref{lem:prop}, we have, for any $t>0$, there exist constants $\tilde \nu,\tilde a, \tilde C>0$ with \(s\coloneqq|pa(i)|\) such that
\begin{equation}
\label{eq:concVV}
    \begin{split}
      \mathbb{P}&\left(\|Q^n_{V_iV_i}-Q^*_{V_iV_i}\|_\infty>t\right)\\
&\le\mathbb{P}\!\left(\left\|\frac1n\sum_{k=1}^n H^*_{i,k}-Q^*_{V_iV_i}\right\|_\infty>\frac{t}{2}\right)+ \mathbb{P}\!\left(\left\|\frac1n\sum_{k=1}^n Z_{i,k}\right\|_\infty>\frac{t}{2} \right)\\
&\le2s^2\exp\!\left(-C_1 n \min\!\left\{\frac{t^2 c^2}{4 s^2 K^4},\frac{t c}{2 s K^2 (\log n)^2}\right\}\right)+2s^2\exp\!\left(-\frac{n}{2}\min\!\left\{\frac{t^2}{4\nu^2 s^2},\frac{t}{2 a s}\right\}\right).\\
&\le 4s^2\exp\!\left( -\widetilde C\, n \min\left\{ \frac{t^2}{\widetilde \nu^{\,2}s^2}, \frac{t}{\widetilde a\, s(\log n)^2}\right\} \right),
    \end{split}
\end{equation}
Moreover, we can derive the following for off-block version.
\begin{equation}
\label{eq:concVcV}
\mathbb P\!\left(\|Q^n_{V_i^cV_i}-Q^*_{V_i^cV_i}\|_\infty>t\right)\le 4(d-s)s\exp\!\left( -\widetilde C\, n \min\left\{\frac{t^2}{\widetilde \nu^{\,2}s^2}, \frac{t}{\widetilde a\, s(\log n)^2} \right\}\right).
\end{equation}
Define the deviations $\Delta_{AB}:=Q^n_{AB}-Q^*_{AB}$. We use the decomposition such that
\begin{align}
Q^n_{V_i^cV_i}(Q^n_{V_iV_i})^{-1}
&=
\underbrace{Q^*_{V_i^cV_i}(Q^*_{V_iV_i})^{-1}}_{T_0}
+\underbrace{Q^*_{V_i^cV_i}\Big((Q^n_{V_iV_i})^{-1}-(Q^*_{V_iV_i})^{-1}\Big)}_{T_1}
\nonumber\\
&\quad
+\underbrace{\Delta_{V_i^cV_i}(Q^*_{V_iV_i})^{-1}}_{T_2}
+\underbrace{\Delta_{V_i^cV_i}\Big((Q^n_{V_iV_i})^{-1}-(Q^*_{V_iV_i})^{-1}\Big)}_{T_3}.
\label{eq:Tdecomp}
\end{align}
By the triangle inequality and Assumption~\ref{ass: Incoh},
\begin{equation}
\big\|Q^n_{V_i^cV_i}(Q^n_{V_iV_i})^{-1}\big\|_\infty
\le \|T_0\|_\infty+\sum_{j=1}^3\|T_j\|_\infty
\le (1-\alpha)+\sum_{j=1}^3\|T_j\|_\infty.
\label{eq:tri}
\end{equation}
Thus it suffices to show $\|T_j\|_\infty\le \alpha/6$ for $j=1,2,3$. We define the event such that $\mathcal E(t)\coloneqq\Big\{\|\Delta_{V_iV_i}\|_\infty\le t\Big\}\cap\Big\{\|\Delta_{V_i^cV_i}\|_\infty\le t\Big\}$. By the second statement of Lemma~\ref{lem:prop} and a union bound,
\begin{equation}
\mathbb P(\mathcal E(t)^c)
\le 4(s^2(d-s)s)\exp\!\left( -\widetilde C\, n \min\left\{ \frac{t^2}{\widetilde \nu^{\,2}s^2}, \frac{t}{\widetilde a\, s(\log n)^2}\right\} \right)
\label{eq:Etail}
\end{equation}
For any $s\times s$ matrix $M$, $\|M\|_2\le \|M\|_1\le s\|M\|_\infty$.
Hence on $\mathcal E(t)$, $\|Q^n_{V_iV_i}-Q^*_{V_iV_i}\|_2 \le s\|Q^n_{V_iV_i}-Q^*_{V_iV_i}\|_\infty \le s t$.
We pick $t\le C_{\min}/(2s)$, then by Weyl's inequality and Proposition~\ref{prop: WD}, we have that here exists $C_{\min}>0$ such that $\lambda_{\min}(Q^*_{V_iV_i})\ge C_{\min}$. Therefore, we have
\begin{equation}
    \lambda_{\min}(Q^n_{V_iV_i})
\ge \lambda_{\min}(Q^*_{V_iV_i})-\|Q^n_{V_iV_i}-Q^*_{V_iV_i}\|_2
\ge C_{\min}-st
\ge \frac{C_{\min}}{2}.
\end{equation}
Consequently, we have that $\|(Q^*_{V_iV_i})^{-1}\|_2\le \frac1{C_{\min}}$ and $\|(Q^n_{V_iV_i})^{-1}\|_2\le \frac2{C_{\min}}$. Moreover, since $\|A\|_1\le \sqrt{s}\|A\|_2$,
\begin{equation}
\|(Q^*_{V_iV_i})^{-1}\|_1\le \frac{\sqrt{s}}{C_{\min}},
\qquad
\|(Q^n_{V_iV_i})^{-1}\|_1\le \frac{2\sqrt{s}}{C_{\min}}.
\label{eq:inv1}
\end{equation}
Using $\|AB\|_\infty\le \|A\|_\infty\|B\|_1$ and \eqref{eq:inv1}, we have 
\begin{equation}
    \|T_2\|_\infty
=\|\Delta_{V_i^cV_i}(Q^*_{V_iV_i})^{-1}\|_\infty
\le \|\Delta_{V_i^cV_i}\|_\infty \|(Q^*_{V_iV_i})^{-1}\|_1
\le t\cdot \frac{\sqrt{s}}{C_{\min}}.
\end{equation}
Thus $\|T_2\|_\infty\le \alpha/6$ holds if $t\le \alpha C_{\min}/(6\sqrt{s})$. Using the identity $A^{-1}-B^{-1}=B^{-1}(B-A)A^{-1}$ with
$A=Q^n_{V_iV_i}$ and $B=Q^*_{V_iV_i}$, we have that $(Q^n_{V_iV_i})^{-1}-(Q^*_{V_iV_i})^{-1}
=(Q^*_{V_iV_i})^{-1}(Q^*_{V_iV_i}-Q^n_{V_iV_i})(Q^n_{V_iV_i})^{-1}$. Let $R:=Q^*_{V_i^cV_i}(Q^*_{V_iV_i})^{-1}$, so $\|R\|_\infty\le 1-\alpha$ by Assumption~\ref{ass: Incoh}.
Then, using $\|AB\|_\infty\le \|A\|_\infty\|B\|_1$ and $\|M\|_1\le s\|M\|_\infty$,
\begin{equation}
    \begin{split}
        \|T_1\|_\infty
&=\|R\,(Q^*_{V_iV_i}-Q^n_{V_iV_i})(Q^n_{V_iV_i})^{-1}\|_\infty\\
&\le \|R\|_\infty\cdot \|Q^*_{V_iV_i}-Q^n_{V_iV_i}\|_1\cdot \|(Q^n_{V_iV_i})^{-1}\|_1\\
&\le (1-\alpha)\cdot (s t)\cdot \frac{2\sqrt{s}}{C_{\min}}
=\frac{2(1-\alpha)}{C_{\min}}\, s^{3/2}\,t.
    \end{split}
\end{equation}
Hence $\|T_1\|_\infty\le \alpha/6$ holds if
$t\le \alpha C_{\min}/\big(12(1-\alpha)s^{3/2}\big)$. Similarly, we have that $\|T_3\|_\infty
\le \|\Delta_{V_i^cV_i}\|_\infty\cdot
\|(Q^n_{V_iV_i})^{-1}-(Q^*_{V_iV_i})^{-1}\|_1$. Using the same inverse-difference identity and $\|ABC\|_1\le \|A\|_1\|B\|_1\|C\|_1$,
\begin{equation}
    \begin{split}
        \|(Q^n_{V_iV_i})^{-1}-(Q^*_{V_iV_i})^{-1}\|_1
&\le \|(Q^*_{V_iV_i})^{-1}\|_1\cdot \|Q^*_{V_iV_i}-Q^n_{V_iV_i}\|_1\cdot \|(Q^n_{V_iV_i})^{-1}\|_1\\
&\le \frac{\sqrt{s}}{C_{\min}}\cdot (s t)\cdot \frac{2\sqrt{s}}{C_{\min}}
=\frac{2 s^2}{C_{\min}^2}\,t.
    \end{split}
\end{equation}
Therefore, $\|T_3\|_\infty \le t\cdot \frac{2 s^2}{C_{\min}^2}\,t
=\frac{2 s^2}{C_{\min}^2}\,t^2.$ In particular, if $t\le \alpha C_{\min}/(12 s^{3/2})$, then
\begin{equation}
    \|T_3\|_\infty \le \frac{2 s^2}{C_{\min}^2}\Big(\frac{\alpha C_{\min}}{12 s^{3/2}}\Big)^2
=\frac{\alpha^2}{72s}\le \frac{\alpha}{6},
\end{equation}
where the last inequality holds since $\alpha\le 1$ and $s\ge 1$. Choose $t=t^\star$ so that all the above conditions hold, namely
$t\le C_{\min}/(2s)$ and $\|T_j\|_\infty\le \alpha/6$ for $j=1,2,3$.
Then by \eqref{eq:tri},
\begin{equation}
    \big\|Q^n_{V_i^cV_i}(Q^n_{V_iV_i})^{-1}\big\|_\infty
\le (1-\alpha)+\frac{\alpha}{2}
=1-\frac{\alpha}{2}.
\end{equation}
Combining with Eq.\eqref{eq:Etail} yields the stated high-probability bound.
\end{proof}
\end{itemize}

\subsubsection{Proof of Lemma~\ref{lem:True_supp}}
\begin{proof}
Fix \(i\in V\) and \(j\in \mathrm{pa}(i)\). Define the smooth part of the oracle objective by
\[
\ell_{n,i}^{(c)}(\theta) := \frac{1}{2n}\sum_{k=0}^{n-1}
\left[ \log\bigl((\theta^\top X_k)^2+c\bigr) + \frac{r_{i,k}^2}{(\theta^\top X_k)^2+c} \right],
\]
and let $Q_{n,i}^{(c)}(\theta) := \ell_{n,i}^{(c)}(\theta)+\lambda\|\theta\|_1.$
The oracle estimator is
\[
\hat\theta_i^\lambda \in \arg\min_{\theta:\ \theta_k=0,\ \forall k\notin \mathrm{pa}(i)} Q_{n,i}^{(c)}(\theta).
\]
Let $E_{ij}:=\{\hat\theta_{ij}^\lambda=0\}$. Since \(j\in \mathrm{pa}(i)\), we have \(\theta_{ij}^*\neq 0\). Hence
\[
\bigl|\operatorname{sign}(\hat\theta_{ij}^\lambda)\bigr| \neq \bigl|\operatorname{sign}(\theta_{ij}^*)\bigr| \qquad\Longleftrightarrow\qquad \hat\theta_{ij}^\lambda=0.
\]
Therefore, it suffices to show
\[
\mathbb P(E_{ij})\le C\exp(-cn^\varepsilon).
\]
On the event \(E_{ij}\), the oracle estimator \(\hat\theta_i^\lambda\) belongs to the constrained set
\[
\Theta_i^{(j)}(0)
:=
\{\theta:\theta_j=0,\ \theta_k=0,\ \forall k\notin \mathrm{pa}(i)\}.
\]
Moreover, because \(\hat\theta_i^\lambda\) minimizes \(Q_{n,i}^{(c)}\) over the full oracle feasible set, it also minimizes the same objective over the smaller set \(\Theta_i^{(j)}(0)\). Hence, on \(E_{ij}\), the estimator \(\hat\theta_i^\lambda\) is itself a zero-constrained minimizer. By the KKT condition for the free oracle problem, its \(j\)-th coordinate being zero implies
\begin{equation}
\label{eq:true_supp_kkt_zero_fixed}
\left|G_{ij}^{(c)}(\hat\theta_i^\lambda)\right|
\le \lambda,
\qquad
G_{ij}^{(c)}(\theta):=\bigl[\nabla \ell_{n,i}^{(c)}(\theta)\bigr]_j.
\end{equation}
Next, we define the zero-constrained reference point $\theta_i^{*(j=0)} := \theta_i^*-\theta_{ij}^*e_j$, and the coordinate segment
\[
\vartheta_t := \theta_i^{*(j=0)}+t\theta_{ij}^*e_j, \qquad t\in[0,1].
\]
Let \(r_{\mathrm{loc},n}>0\) be the radius, and define the events $\mathcal E_{n,i}$ to be the high-probability event from Lemma~\ref{lem:zero_constrained_localization},
\[
\mathcal L_{n,ij}:=
\left\{ \sup_{\theta:\ \|\theta-\theta_i^{*(j=0)}\|_\infty\le r_{\mathrm{loc},n}} \left\| \bigl[\nabla^2 \ell_{n,i}^{(c)}(\theta)\bigr]_{j,\cdot} \right\|_1 \le L_c \right\},
\]
\[
\mathcal R_{n,ij} := \left\{
\inf_{t\in[0,1]} e_j^\top \nabla^2 \ell_{n,i}^{(c)}(\vartheta_t)e_j
\ge m_{n,c} \right\}, \qquad m_{n,c}:=\alpha_1-\tau r_n,
\]
and $\mathcal S_{n,ij} := \left\{ |G_{ij}^{(c)}(\theta_i^*)|\le \eta_n \right\}$.
Set
\[
\mathcal A_{n,ij} := \mathcal E_{n,i}\cap \mathcal L_{n,ij}\cap \mathcal R_{n,ij}\cap \mathcal S_{n,ij}.
\]
By Lemma~\ref{lem:zero_constrained_localization}, the argument in the proof of Lemma~\ref{lem:h_lips}, Lemma~\ref{lem:segment_curvature}, and the first statement of Lemma~\ref{lem:prop}, each of these events holds with probability at least \(1-C\exp(-cn^\varepsilon)\). Hence, by the union bound, there exist constants, $C.c >0$ such that 
\begin{equation}
\label{eq:true_supp_good_event_fixed}
\mathbb P(\mathcal A_{n,ij})
\ge
1-C\exp(-cn^\varepsilon).
\end{equation}
We now work on the event \(E_{ij}\cap \mathcal A_{n,ij}\), and compare \(G_{ij}^{(c)}(\hat\theta_i^\lambda)\) with \(G_{ij}^{(c)}(\theta_i^{*(j=0)})\). Since \(E_{ij}\cap \mathcal E_{n,i}\subseteq
\{\|\hat\theta_i^\lambda-\theta_i^{*(j=0)}\|_\infty\le r_{\mathrm{loc},n}\}\) by Lemma~\ref{lem:zero_constrained_localization}, we have on \(E_{ij}\cap \mathcal A_{n,ij}\),
\[
\|\hat\theta_i^\lambda-\theta_i^{*(j=0)}\|_\infty\le r_{\mathrm{loc},n}.
\]
Therefore, by the mean value theorem and the definition of \(\mathcal L_{n,ij}\),

\begin{equation}
\label{eq:true_supp_reduce_to_ref_fixed}
    \begin{split}
        \left| G_{ij}^{(c)}(\hat\theta_i^\lambda) - G_{ij}^{(c)}(\theta_i^{*(j=0)}) \right|
&\le \sup_{\theta:\ \|\theta-\theta_i^{*(j=0)}\|_\infty\le r_{\mathrm{loc},n}} \left\| \bigl[\nabla^2 \ell_{n,i}^{(c)}(\theta)\bigr]_{j,\cdot} \right\|_1 \|\hat\theta_i^\lambda-\theta_i^{*(j=0)}\|_\infty\\
&\le L_c\,r_{\mathrm{loc},n}. \\
\implies\left|G_{ij}^{(c)}(\hat\theta_i^\lambda)\right| \ge &\left|G_{ij}^{(c)}(\theta_i^{*(j=0)})\right| - L_c\,r_{\mathrm{loc},n}. 
    \end{split}
\end{equation}
Then, we need to show the lower boundedness of \( |G_{ij}^{(c)}(\theta_i^{*(j=0)})| \).  Since $\theta_i^*-\theta_i^{*(j=0)}=\theta_{ij}^*e_j,$ a one-dimensional Taylor expansion of \(G_{ij}^{(c)}\) along the segment \(\{\vartheta_t:t\in[0,1]\}\) gives
\begin{equation}
\label{eq:true_supp_taylor_fixed}
G_{ij}^{(c)}(\theta_i^{*(j=0)}) = G_{ij}^{(c)}(\theta_i^*) - \left( \int_0^1 e_j^\top \nabla^2 \ell_{n,i}^{(c)}(\vartheta_t)e_j\,dt \right)\theta_{ij}^*.
\end{equation}
Therefore, on \(\mathcal R_{n,ij}\cap \mathcal S_{n,ij}\),
\begin{align}
\left|G_{ij}^{(c)}(\theta_i^{*(j=0)})\right|
&\ge
\left( \int_0^1 e_j^\top \nabla^2 \ell_{n,i}^{(c)}(\vartheta_t)e_j\,dt \right)|\theta_{ij}^*| - |G_{ij}^{(c)}(\theta_i^*)| \notag\\
&\ge m_{n,c}\,|\theta_{ij}^*|-\eta_n.
\label{eq:true_supp_ref_lb_fixed}
\end{align}
Combining \eqref{eq:true_supp_reduce_to_ref_fixed} and \eqref{eq:true_supp_ref_lb_fixed}, we obtain on \(E_{ij}\cap \mathcal A_{n,ij}\),
\begin{equation}
\label{eq:true_supp_grad_lb_fixed}
\left|G_{ij}^{(c)}(\hat\theta_i^\lambda)\right|
\ge m_{n,c}\,|\theta_{ij}^*|-\eta_n-L_c\,r_{\mathrm{loc},n}.
\end{equation}
Let $\theta_{\min}:=\min_{j\in \mathrm{pa}(i)}|\theta_{ij}^*|.$ Assume the minimum signal strength condition is strong enough that, for all sufficiently large \(n\),
\begin{equation}
\label{eq:true_supp_beta_min_fixed}
m_{n,c}\,\theta_{\min} - \eta_n - L_c\,r_{\mathrm{loc},n} \ge \lambda+\delta_0 \qquad\text{for some }\delta_0>0.
\end{equation}
Then \eqref{eq:true_supp_grad_lb_fixed} implies that on \(E_{ij}\cap \mathcal A_{n,ij}\),
\begin{equation}
\label{eq:true_supp_strict_kkt_violation_fixed}
\left|G_{ij}^{(c)}(\hat\theta_i^\lambda)\right| > \lambda.
\end{equation}
But \eqref{eq:true_supp_kkt_zero_fixed} yields, on \(E_{ij}\), $|G_{ij}^{(c)}(\hat\theta_i^\lambda)|\le \lambda$, which contradicts \eqref{eq:true_supp_strict_kkt_violation_fixed}. Therefore, $E_{ij}\cap \mathcal A_{n,ij}=\varnothing$. Hence, by \eqref{eq:true_supp_good_event_fixed},
\begin{equation}
    \begin{split}
        & \mathbb P(E_{ij}) \le \mathbb P(\mathcal A_{n,ij}^c) \le C\exp(-cn^\varepsilon) \\
        &\implies\mathbb P\left( \bigl|\operatorname{sign}(\hat\theta_{ij}^\lambda)\bigr| =
    \bigl|\operatorname{sign}(\theta_{ij}^*)\bigr| \right) \ge 1-C\exp(-cn^\varepsilon).
    \end{split}
\end{equation}
Finally, applying the union bound over \(j\in \mathrm{pa}(i)\) gives the desired no-false-negative conclusion for node \(i\). Moreover, a corresponding global statement over all true edges follows from one additional union bound over $i\in V$ 
\end{proof}

\subsubsection{Proof of Lemma~\ref{lem: PDW}}

\begin{proof}
For node $i$, let $S_i \coloneqq pa(i)\subset[p]$ denote its true parent set. We first construct a restricted estimator (primal witness) by solving the optimization problem only over $S_i$:
\begin{equation}
\hat\theta_{i,S_i} \coloneqq \arg\min_{\beta_{S_i}} \Big\{\ell_i(\beta_{S_i}, 0_{S_i^c}) + \lambda_n \|\beta_{S_i}\|_1\Big\},
\quad
\hat\theta_{i,S_i^c} \coloneqq 0.
\end{equation}
By the fundamental theorem of calculus, define the averaged Hessian along the line segment between $\hat{\theta}_i$ and $\theta_i^*$ such that $\nabla^2 \ell(\tilde{\theta}_i)\coloneqq
\int_{0}^{1} \nabla^{2}\ell\big(\theta_i^* + t(\hat{\theta}_i-\theta_i^*)\big)dt $. Then, we additionally have the following
\begin{equation}
\begin{split}
      \nabla L( \hat \theta_i) -  \nabla L(\theta_i^*) =  \nabla^2 L(\tilde\theta_i)(\hat \theta_i - \theta_i^*)
\end{split}
\end{equation}
Since $\hat\theta_{i,S_i}$ minimizes the restricted problem, the KKT conditions yield that $0 \in\nabla L( \hat \theta_i) + \lambda\partial z_i(j)$, where $z_i(j)\coloneqq sign(\hat \theta_{ij})$ if $\hat \theta_{ij} \neq 0$; $|z_i(j)| \le 1$, otherwise. Then, we introduce $W_{pa(i)} \coloneqq \nabla L(\theta_i^*)$, and $R_{pa(i)} \coloneqq (\nabla^2 L(\theta^*_i) - \nabla^2 L(\tilde\theta_i))(\hat \theta_i - \theta_i^*) $, $Q^n \coloneqq \nabla^2 L(\theta^*_i)$, $\Delta \coloneqq\hat \theta_i - \theta_i^*$, and $z_i(j) \coloneqq \hat Z_i$
and have 
\begin{equation}
\begin{split}
        \nabla^2 L(\theta^*_i)(\hat \theta_i - \theta_i^*)& =  \nabla^2 L(\tilde\theta_i)(\hat \theta_i - \theta_i^*)+   (\nabla^2 L(\theta^*_i)(\hat \theta_i - \theta_i^*)- \nabla^2 L(\tilde\theta_i)(\hat \theta_i - \theta_i^*)) \\
        &=   \nabla L( \hat \theta_i) -  \nabla L(\theta_i^*) +(\nabla^2 L(\theta^*_i) - \nabla^2 L(\tilde\theta_i))(\hat \theta_i - \theta_i^*)
        \\& = -\lambda  z_i(j) - \nabla L(\theta_i^*) +(\nabla^2 L(\theta^*_i) - \nabla^2 L(\tilde\theta_i))(\hat \theta_i - \theta_i^*)\\
          \implies Q^n\Delta &=  -\lambda  z_i(j) - W_{pa(i)} +R_{pa(i)}    
\end{split}
\end{equation}
We decompose \(Q^n\) by introducing the $S_i^c \in [p] \setminus pa(i)$ as  

\begin{equation}
\begin{split}
        Q^n & =
\begin{pmatrix}
Q^n_{S_iS_i} & Q^n_{S_iS_i^c}\\ 
Q^n_{S_i^cS_i} & Q^n_{S_i^cS_i^c}
\end{pmatrix} 
\begin{pmatrix}
\Delta_{S_i}\\ \Delta_{S_i^c}
\end{pmatrix} = 
-\lambda_n
    \begin{pmatrix}
(z_i(j))_{S_i}\\ (z_i(j))_{S_i^c}
\end{pmatrix}
-\begin{pmatrix}
(W_{pa(i)} )_{S_i}\\ (W_{pa(i)} )_{S_i^c}
\end{pmatrix}
+\begin{pmatrix}
(R_{pa(i)})_{S_i}\\ (R_{pa(i)})_{S_i^c}
\end{pmatrix}.\\
& Q^n_{S_iS_i}\Delta_{S_i}
+ Q^n_{S_iS_i^c}\Delta_{S_i^c}=
-\lambda_n (\hat Z_i)_{S_i}
- (W_{pa(i)} )_{S_i}
+ (R_{pa(i)})_{S_i}, \\
&
Q^n_{S_i^cS_i}\Delta_{S_i}
+ Q^n_{S_i^cS_i^c}\Delta_{S_i^c}
=
-\lambda_n (\hat Z_i)_{S_i^c}
- (W_{pa(i)} )_{S_i^c}
+ (R_{pa(i)})_{S_i^c}. 
\end{split}
\end{equation}
Since $\hat\theta_{i,S_i^c}=0$ by using the restricted estimator and $\theta^*_{i,S_i^c}=0$ because $S_i$ is the true support, we have $\Delta_{S_i^c}=(\hat\theta_i-\theta_i^*)_{S_i^c}=0.$ Consequently, we have that $Q^n_{S_iS_i^c}\Delta_{S_i^c}=0$ and $Q^n_{S_i^cS_i^c}\Delta_{S_i^c}=0$.

\begin{equation}
    \begin{split}
       Q^n_{S_iS_i}\Delta_{S_i}
&=
-\lambda_n (\hat Z_i)_{S_i}
- (W_{pa(i)} )_{S_i}
+ (R_{pa(i)})_{S_i},
\\
\Delta_{S_i}
&=
\big(Q^n_{S_iS_i}\big)^{-1}
\Big(
-\lambda_n (\hat Z_i)_{S_i}
- (W_{pa(i)} )_{S_i}
+ (R_{pa(i)})_{S_i}
\Big)
    \end{split}
\end{equation}
We then have 
\begin{equation}
\begin{split}
 & Q^n_{S_i^cS_i}\Delta_{S_i}
    =-\lambda_n (\hat Z_i)_{S_i^c}
- (W_{pa(i)} )_{S_i^c}
+ (R_{pa(i)})_{S_i^c} \\
&Q^n_{S_i^cS_i}\big(Q^n_{S_iS_i}\big)^{-1}
\Big(
-\lambda_n (\hat Z_i)_{S_i}
- (W_{pa(i)} )_{S_i}
+ (R_{pa(i)})_{S_i}
\Big)
=
-\lambda_n (\hat Z_i)_{S_i^c}
- (W_{pa(i)} )_{S_i^c}
+ (R_{pa(i)})_{S_i^c}. \\
& -\lambda_n
Q^{n}_{S_i^c S_i}(Q^{n}_{S_i S_i})^{-1}
\Bigl(
(\hat Z_{i})_{S_i}
+ \tfrac{1}{\lambda_n}(W_{pa(i)} )_{S_i}
- \tfrac{1}{\lambda_n}(R_{pa(i)})_{S_i}
\Bigr)
=
- \lambda_n(\hat Z_{i})_{S_i^c}
- (W_{pa(i)} )_{S_i^c}
+ (R_{pa(i)})_{S_i^c} \\
&Q^{n}_{S_i^c S_i}(Q^{n}_{S_i S_i})^{-1}
\Bigl(
(\hat Z_{i})_{S_i}
+ \tfrac{1}{\lambda_n}(W_{pa(i)} )_{S_i}
- \tfrac{1}{\lambda_n}(R_{pa(i)})_{S_i}
\Bigr)
=
(\hat Z_{i})_{S_i^c}
+ \tfrac{1}{\lambda_n}(W_{pa(i)} )_{S_i^c}
- \tfrac{1}{\lambda_n}(R_{pa(i)})_{S_i^c} \\
&
(\hat Z_{i})_{S_i^c}
=
Q^{n}_{S_i^c S_i}(Q^{n}_{S_i S_i})^{-1}
(\hat Z_{i})_{S_i}
+\frac{1}{\lambda_n}
Q^{n}_{S_i^c S_i}(Q^{n}_{S_i S_i})^{-1}
\bigl( (W_{pa(i)} )_{S_i} - (R_{pa(i)})_{S_i} \bigr)
- \frac{1}{\lambda_n}
\bigl( (W_{pa(i)} )_{S_i^c} - (R_{pa(i)})_{S_i^c} \bigr).
\end{split}
\end{equation}
By the KKT conditions for the $\ell_1$ penalty,  the dual vector $\hat Z_i$ satisfies $(\hat Z_i)_{S_i} \in \partial \|\Theta_{S_i}\|_{1}$, and hence, for every active index $u\in S_i$, $\|(\hat Z_i)_u\|_1 = 1$. Then, we next show the dual norm $\|(\hat Z_i)_{S_i^c}\|_\infty <1$

 \begin{equation}
 \begin{split}
      \|(\hat Z_{i})_{S_i^c}\|_\infty & \le  \|Q^{n}_{S_i^c S_i}(Q^{n}_{S_i S_i})^{-1}\|_\infty \| (\hat Z_{i})_{S_i}\|_\infty + \| - \frac{1}{\lambda_n}(W_{pa(i)} )_{S_i^c} + \frac{1}{\lambda_n} (R_{pa(i)})_{S_i^c} \bigr)\|_\infty\\
      & \qquad 
+ \|Q^{n}_{S_i^c S_i}(Q^{n}_{S_i S_i})^{-1}\|_\infty \| - \frac{1}{\lambda_n}(W_{pa(i)} )_{S_i} +\frac{1}{\lambda_n} (R_{pa(i)})_{S_i} \bigr)\|_\infty \\
               & \le  \|Q^{n}_{S_i^c S_i}(Q^{n}_{S_i S_i})^{-1}\|_\infty \| (\hat Z_{i})_{S_i}\|_\infty + \| - \frac{1}{\lambda_n}W_{pa(i)}  + \frac{1}{\lambda_n} R_{pa(i)} \bigr)\|_\infty  \\
      & \qquad +\|Q^{n}_{S_i^c S_i}(Q^{n}_{S_i S_i})^{-1}\|_\infty \| - \frac{1}{\lambda_n}W_{pa(i)}  + \frac{1}{\lambda_n} R_{pa(i)} \|_\infty \\
&\le \|Q^{n}_{S_i^c S_i}(Q^{n}_{S_i S_i})^{-1}\|_\infty \big ( 1 + \| - \frac{1}{\lambda_n}W_{pa(i)}  + \frac{1}{\lambda_n} R_{pa(i)} \bigr)\|_\infty\big )  \\
& \qquad+1 + \| - \frac{1}{\lambda_n}W_{pa(i)} + \frac{1}{\lambda_n} R_{pa(i)} \|_\infty-1 \\
& \le  \big (\|Q^{n}_{S_i^c S_i}(Q^{n}_{S_i S_i})^{-1}\|_\infty +1 \big ) \big( 1 +\frac{1}{\lambda_n}
\|W_{pa(i)}\|_\infty + \frac{1}{\lambda_n}\| R_{pa(i)} \|_\infty
\big) -1 \\
& \le (2-\alpha) (1 + \frac{\alpha}{4(2-\alpha)}+ \frac{\alpha}{4(2-\alpha)}) - 1 = 1-\alpha +\frac{\alpha}{2} =1 -\frac{\alpha}{2}  < 1
\end{split}
\end{equation}
For the last inequality, we have shown that the empirical fisher $\|Q^{n}_{S_i^c S_i}(Q^{n}_{S_i S_i})^{-1}\|_\infty$ can be bounded by $1-\alpha$ with high probability. By Lemma~\ref{lem:WR_bound}, we show that the $\frac{1}{\lambda_n} \| (W_{pa(i)} )_{S_i^c}\|_\infty  $ and $\frac{1}{\lambda_n}\| (R_{pa(i)})_{S_i^c} \|_\infty$ can be bounded by $\frac{\alpha}{4(2-\alpha)}$ with high probability. 
\end{proof}

\subsubsection{Proof of Lemma~\ref{lem:h_lips}}
\begin{proof}
Write $\ell$ in composite form:
\begin{equation}
\ell(\theta_i)=\frac{1}{2n}\sum_{k=0}^{n-1}\phi_k(\theta_i^\top X_k),
\quad
\phi_k(z)=\log(z^2+c)+\frac{r_{i,k}^2}{z^2+c}.
\end{equation}
Let $r_k:=r_{i,k}$. Direct differentiation gives
\begin{equation}
    \begin{split}
     &\phi_k'(z)=\frac{2z}{z^2+c}-\frac{2r_k^2 z}{(z^2+c)^2}, \quad \phi_k''(z)=\frac{2(c-z^2)}{(z^2+c)^2}-\frac{2r_k^2(c-3z^2)}{(z^2+c)^3}, \\ & \qquad\phi_k^{(3)}(z)
=\frac{4z\Big(z^4-(2c+6r_k^2)z^2-3c^2+6cr_k^2\Big)}{(z^2+c)^4}.
    \end{split}
\end{equation}
Using triangle inequality and $(z^2+c)\ge c$, for all $z\in\mathbb R$,
\begin{equation}
    |\phi_k^{(3)}(z)|
\le
\frac{4|z|}{(z^2+c)^4}\Big(|z|^4+(2c+6r_k^2)|z|^2+(3c^2+6cr_k^2)\Big)
\le
\frac{24}{c^{3/2}}+\frac{48r_k^2}{c^{5/2}}.
\end{equation}
From Lemma~\ref{lem:residual_bound}, we know that $|r_k|\le R$ with high probability, hence
\begin{equation}
\label{Aeq: star}
\sup_{z\in\mathbb R}|\phi_k^{(3)}(z)|
\le M_c \coloneqq
\frac{24}{c^{3/2}}+\frac{48R^2}{c^{5/2}}.
\end{equation}
By the chain rule, we have $\nabla^2\ell(\theta_i)=\frac{1}{2n}\sum_{k=0}^{n-1}\phi_k''(\theta_i^\top X_k)X_kX_k^\top$, so
\begin{equation}
    \nabla^2\ell(\theta_i)-\nabla^2\ell(\theta_i')
=
\frac{1}{2n}\sum_{k=0}^{n-1}
\Big(\phi_k''(\theta_i^\top X_k)-\phi_k''(\theta_i'^\top X_k)\Big)X_kX_k^\top.
\end{equation}
For each $k$, by the 1D mean value theorem and Eq.\eqref{Aeq: star},
\begin{equation}
    \big|\phi_k''(\theta_i^\top X_k)-\phi_k''(\theta_i'^\top X_k)\big|
\le
M_c\big|(\theta_i-\theta_i')^\top X_k\big|
\le
M_c\|\theta_i-\theta_i'\|_1\|X_k\|_\infty,
\end{equation}
where the last inequality uses $\ell_1$--$\ell_\infty$ duality. Taking entrywise maximum norms,
\begin{equation}
    \big\|\nabla^2\ell(\theta_i)-\nabla^2\ell(\theta_i')\big\|_\infty
\le
\frac{1}{2n}\sum_{k=0}^{n-1}
\big|\phi_k''(\theta_i^\top X_k)-\phi_k''(\theta_i'^\top X_k)\big|\cdot \|X_kX_k^\top\|_\infty.
\end{equation}
From Proposition~\ref{Ass:xbound}, $\|X_k\|_\infty\le K$ and $\|X_kX_k^\top\|_\infty  \le \|X_k\|_\infty^2 \le K^2.$ Therefore, we have
\begin{equation}
    \big\|\nabla^2\ell(\theta_i)-\nabla^2\ell(\theta_i')\big\|_\infty
\le
\frac{1}{2n}\sum_{k=0}^{n-1}
\Big(M_c\|\theta_i-\theta_i'\|_1\|X_k\|_\infty\Big)\cdot K^2
\le
\frac{M_c}{2}K^3\|\theta_i-\theta_i'\|_1.
\end{equation}
Substituting $M_c=\frac{24}{c^{3/2}}+\frac{48R^2}{c^{5/2}}$ yields the claim with $\bar D_{\max}:=\frac{K^3}{2}\Big(\frac{24}{c^{3/2}}+\frac{48R^2}{c^{5/2}}\Big).$
\end{proof}

\subsubsection{Proof of Lemma~\ref{lem:RSC}}
\begin{proof}
Let $u_k := \theta^\top X_k$ and $r_k := \theta_i^{*\top}X_k\varepsilon_{k+1}$. The Hessian representation can be written as
\begin{equation}
\begin{split}
\nabla^2 \ell(\theta_i)
&=\frac{1}{2n}\sum_{k=0}^{n-1}\phi_{i,k}''(u_k)X_kX_k^\top=\frac{1}{n}\sum_{k=0}^{n-1}
\left[
\frac{c-u_k^2}{(u_k^2+c)^2}
-\frac{r_{i,k}^2(c-3u_k^2)}{(u_k^2+c)^3}
\right] X_kX_k^\top\\
\end{split}
\end{equation}
Thus, we have 
\begin{equation}
\label{eq:quad-hessian-main}
    \begin{split}
      \Delta^\top \nabla^2 \ell(\theta_i) \Delta
&=\frac{1}{n}\sum_{k=0}^{n-1}(\Delta^\top X_k)^2
\left[
\frac{c-u_k^2}{(u_k^2+c)^2}
-\frac{r_{i,k}^2(c-3u_k^2)}{(u_k^2+c)^3}
\right]
\end{split}
\end{equation}
From Proposition~\ref{Ass:xbound}, the event that $\mathcal E_K \coloneqq \{\sup_{0\le s\le T}\|X_s\|\le K\}$ exists with high probability. 
On the event $\mathcal E_K$, since $\|\theta\|_2\le \sqrt{c/(2K^2)}$, we have the following for all $k$
\[
u_k^2=(\theta^\top X_k)^2 \le \|\theta\|_2^2\|X_k\|_2^2 \le \frac{c}{2K^2}K^2 =\frac c2.
\]
Hence, we observe that $c-u_k^2\ge \frac c2$ and $u_k^2+c\le \frac{3c}{2}$. Therefore
\[
\frac{c-u_k^2}{(u_k^2+c)^2} \ge \frac{c/2}{(3c/2)^2} = \frac{2}{9c}.
\]
Also, since $u_k^2\le c/2$, we also have that $|c-3u_k^2|\le c$ and $u_k^2+c\ge c$. So, the following holds
\[
-\frac{r_{i,k}^2(c-3u_k^2)}{(u_k^2+c)^3} \ge  -\frac{r_{i,k}^2|c-3u_k^2|}{(u_k^2+c)^3} \ge -\frac{r_{i,k}^2}{c^2}.
\]
Substituting these two bounds into Eq.\eqref{eq:quad-hessian-main} yields
\begin{equation}\label{eq:hessian-lower-Mn}
\begin{split}
 &\Delta^\top \nabla^2\ell_i(\theta)\Delta \ge \frac{2}{9c}\cdot \frac1n\sum_{k=0}^{n-1}(\Delta^\top X_k)^2 - \frac1{c^2}\cdot \frac1n\sum_{k=0}^{n-1}r_{i,k}^2(\Delta^\top X_k)^2.\\   
 &\implies \Delta^\top \nabla^2\ell_i(\theta)\Delta \ge \frac{2}{9c}\,\Delta^\top \hat\Sigma \Delta - \frac1{c^2}\,\Delta^\top M_n\Delta.
\end{split}
\end{equation}
Next, conditional on $X$, we have $\mathbb E[M_n\mid X] = \frac1n\sum_{k=0}^{n-1}(\theta_i^{*\top}X_k)^2X_kX_k^\top$. Since $\epsilon_{k+1} \sim \mathcal{N}(0,1)$, we know that $\epsilon_{k+1}^2$ is chi-square, and we have $\mathbb E[\epsilon_{k+1}^2\mid X]=1$. On $\mathcal E_K$, we have that 
\[
(\theta_i^{*\top}X_k)^2 \le \|\theta_i^\star\|_2^2\|X_k\|_2^2 \le B^2K^2.
\]
Hence $\mathbb E[M_n\mid X] \preceq B^2K^2\,\hat\Sigma.$
Therefore, for any $\Delta\in\mathbb R^p$,
\[
\Delta^\top M_n\Delta = \Delta^\top \mathbb E[M_n\mid X]\Delta + \Delta^\top\bigl(M_n-\mathbb E[M_n\mid X]\bigr)\Delta
\]
From the Lemma~\ref{lem:Mn-concentration}, we know that $\mathcal E_M \coloneqq\|M_n-\mathbb E[M_n\mid X]\|_{\max} \le \Omega(C,K) \sqrt{\frac{\log(2p^2)}{n}}:=q_n$ with high probability. Thus, on the event $\mathcal E_K\cap \mathcal E_M$, we have 
\[
\Delta^\top M_n\Delta \le B^2K^2\,\Delta^\top \hat\Sigma\Delta + \|M_n-\mathbb E[M_n\mid X]\|_\infty \|\Delta\|_1^2
\le
B^2K^2\,\Delta^\top \hat\Sigma\Delta + q_n\|\Delta\|_1^2.
\]
Plugging this into \eqref{eq:hessian-lower-Mn} gives
\[
\Delta^\top \nabla^2\ell_i(\theta)\Delta
\ge
\left(
\frac{2}{9c}-\frac{B^2K^2}{c^2}
\right)
\Delta^\top \hat\Sigma\Delta
-
\frac{q_n}{c^2}\|\Delta\|_1^2
=
\gamma_c\,\Delta^\top \hat\Sigma\Delta
-
\frac{q_n}{c^2}\|\Delta\|_1^2.
\]
It remains to show the lower boundedness of $\Delta^\top \hat\Sigma\Delta$. From Proposition~\ref{prop: WD}, we have $m\coloneqq\lambda_{\min}(\mathbb E\hat\Sigma)>0$. Then, we have $\Delta^\top\mathbb E\hat\Sigma\Delta \ge m\|\Delta\|_2^2$. Also,
\[
\Delta^\top \hat\Sigma\Delta = \Delta^\top \mathbb E[\hat\Sigma]\Delta + \Delta^\top(\hat\Sigma-\mathbb E[\hat\Sigma])\Delta.
\]
Using the standard bound such that $|\Delta^\top A\Delta|\le \|A\|_\infty\|\Delta\|_1^2$, we obtain the following on $\mathcal E_\Sigma \coloneqq\{ \hat\Sigma-\mathbb E[\hat\Sigma]\le r_n\}$,
\[
\Delta^\top \hat\Sigma\Delta \ge m\|\Delta\|_2^2-r_n\|\Delta\|_1^2.
\]
Hence, on $\mathcal E_K\cap \mathcal E_\Sigma\cap \mathcal E_M$,
\[
\Delta^\top \nabla^2\ell_i(\theta)\Delta \ge \gamma_c\bigl(m\|\Delta\|_2^2-r_n\|\Delta\|_1^2\bigr) - \frac{q_n}{c^2}\|\Delta\|_1^2 = m\gamma_c\|\Delta\|_2^2 - \left( \gamma_c r_n+\frac{q_n}{c^2} \right)\|\Delta\|_1^2.
\]
This proves
\[
\Delta^\top \nabla^2\ell_i(\theta)\Delta \ge \alpha\|\Delta\|_2^2-\tau_n\|\Delta\|_1^2,
\]
with $\alpha=m\gamma_c$ and $\tau_n=\gamma_c r_n+\frac{q_n}{c^2}$.  Finally, if $c\ge 9B^2K^2$, then $\frac{B^2K^2}{c^2}\le \frac{1}{9c}$. 
so
\[
\gamma_c \equiv \frac{2}{9c}-\frac{B^2K^2}{c^2} \ge \frac{2}{9c}-\frac{1}{9c} = \frac{1}{9c}.
\]
Thus
\[
\Delta^\top \nabla^2\ell_i(\theta)\Delta \ge \frac{m}{9c}\|\Delta\|_2^2 - \left( \frac{r_n}{9c}+\frac{q_n}{c^2} \right)\|\Delta\|_1^2.
\]
Since $\log(2p^2)=\log 2+2\log p \asymp \log p$, it follows that $q_n \asymp \sqrt{\frac{\log(2p^2)}{n}} \asymp\sqrt{\frac{\log p}{n}}$. To complete the proof, we need to show the $r_n \asymp\sqrt{\frac{\log p}{n}}$, let $\hat\Sigma = \frac1n\sum_{k=0}^{n-1}X_kX_k^\top$ and $\Sigma_k=\mathbb E[X_kX_k^\top]$.For the $a,b$ entries, we define the centered variable such that  $Z_k^{(ab)}\coloneqq X_{k,a}X_{k,b}-\mathbb E[X_{k,a}X_{k,b}]$ so that $\hat\Sigma_{ab}-\mathbb E\hat\Sigma_{ab}=\frac1n\sum_{k=0}^{n-1}Z_k^{(ab)}$. By construction, \(\mathbb{E}[Z_k^{(ab)}]=0\). Moreover, since \(Z_k^{(ab)}\) is a measurable function of \(X_k\), the sequence \((Z_k^{(ab)})_{k=0}^{n-1}\) inherits the same geometric \(\alpha\)-mixing rate as \((X_k)_{k=0}^{n-1}\). Under the boundedness on \(X_k\), we have that $|Z_k^{(ab)}|\le 2K^2:= M>0$ uniformly over $k,a,b$, and we define
\[
v_{ab}^2 := \sup_{i\ge 0} \left( \operatorname{Var}(Z_i^{(ab)}) + 2\sum_{j>i}\bigl|\operatorname{Cov}(Z_i^{(ab)},Z_j^{(ab)})\bigr| \right),
\]
By Proposition~\ref{prop: cov_decay}, there exist $C_0,c_0>0$ such that
$\sum_{h\ge0}\sup_t|Cov(Z_t^{(ab)},Z_{t+h}^{(ab)})|\le \sum_{h\ge0}C_0e^{-c_0h}<\infty$. Hence, $v^2:=\sup_{a,b} v_{ab}^2 < \infty$. Then, by Theorem 2 of \cite{merlevede2009bernstein}, there exists a constant \(c>0\) such that for all \(x>0\),
\begin{equation}
\begin{split}
       \mathbb{P}\left(\left|\sum_{k=0}^{n-1} Z_k^{(ab)}\right|\ge x\right)\le\exp\left(-\frac{c x^2}{v^2 n + M^2 + xM(\log n)^2}\right),  
\end{split}
\end{equation}
Taking \(x = nu\) and for \(n\) sufficiently large, we obtain
\begin{equation}
    \begin{split}
    \mathbb{P}\left( \left|\widehat{\Sigma}_{ab}-\mathbb{E}\widehat{\Sigma}_{ab}\right|\ge u\right)&\le\exp\left(-\frac{c n^2 u^2}{v^2 n + M^2 + n u M(\log n)^2}\right) \\
    &\le \exp\left( - c n \min\left(\frac{u^2}{v^2},\frac{u}{M(\log n)^2}\right)\right).
    \end{split}
\end{equation}
Applying a union bound over all \(p^2\) entries yields
\[
\|\widehat{\Sigma}-\mathbb{E}\widehat{\Sigma}\|_\infty= O_{\mathbb{P}}\left(\sqrt{\frac{\log p}{n}}\vee\frac{(\log n)^2\log p}{n}\right).
\]
Given the scale of $n$ such that $n\gtrsim {(\log n)^4}\log p$, the first term dominates, and hence
\[
\|\widehat{\Sigma}-\mathbb{E}\widehat{\Sigma}\|_\infty = O_{\mathbb{P}}\left( \sqrt{\frac{\log p}{n}} \right).
\]
Therefore, with probability at least \(1-\delta\), there exist a constant \(C>0\), such that 
\[
\|\widehat{\Sigma}-\mathbb{E}\widehat{\Sigma}\|_\infty \le r_n, \qquad r_n = C\sqrt{\frac{\log p}{n}},
\]
\end{proof}

\subsubsection{Proof of Lemma~\ref{lem:err_rsc}}
\begin{proof}
By stationarity, choose $z\in\partial\|\hat\theta\|_1$ such that
$\nabla\ell(\hat\theta)+\lambda z=0$. Taking inner product with $\Delta$ gives $\langle \nabla\ell(\hat\theta),\Delta\rangle = -\lambda\langle z,\Delta\rangle$. By convexity of $\|\cdot\|_1$, for any $z\in\partial\|\hat\theta\|_1$, we have that $\|\theta^*\|_1 \ge \|\hat\theta\|_1 + \langle z,\theta^*-\hat\theta\rangle
$, which implies that $ \langle z,\Delta\rangle \ge \|\hat\theta\|_1-\|\theta^*\|_1.$ Hence, we know that $\langle \nabla\ell(\hat\theta),\Delta\rangle
\le
\lambda(\|\theta^*\|_1-\|\hat\theta\|_1).$ Therefore, 
\begin{equation}
\label{Eq:reC_2ound}
\begin{split}
        \langle \nabla\ell(\hat\theta)-\nabla\ell(\theta^*),\Delta\rangle
&=
\langle \nabla\ell(\hat\theta),\Delta\rangle-\langle \nabla\ell(\theta^*),\Delta\rangle\\
&\le
\lambda(\|\theta^*\|_1-\|\hat\theta\|_1)+\|\nabla\ell(\theta^*)\|_\infty\|\Delta\|_1\\
&\le
\lambda(\|\theta^*\|_1-\|\hat\theta\|_1)+\frac{{\lambda}}{4}\|\Delta\|_1 ,
    \end{split}
\end{equation}
where the last inequality uses the score bound from Lemma~\ref{lem:WR_bound} with $\alpha <1$. Given the true parent set $S$ and the non parent $S^c$, we have 
\begin{equation}
\begin{split}
    \|\theta^*\|_1-\|\hat\theta\|_1 &\le 
        \|\theta^*\|_1- \|\hat\theta_S\|_1 +\|\hat\theta_{S^c}\|_1  \\ & =  \|\theta^*\|_1 -\|\theta_S^* +\Delta_S\|_1 + \|\Delta_{S^c}\|_1\\
        &\le \|\Delta_S\|_1-\|\Delta_{S^c}\|_1.
\end{split}
\end{equation}
Plugging this into the Eq.\eqref{Eq:reC_2ound} yields
\begin{equation}
    \langle \nabla\ell(\hat\theta)-\nabla\ell(\theta^*),\Delta\rangle
\le
\frac{5\lambda}{4}\|\Delta_S\|_1-\frac{\lambda3}{4}\|\Delta_{S^c}\|_1.
\end{equation}
Then, we can have the following as well with the size of true parent set $|S| \coloneqq |pa(i)|$ 
\begin{equation}
\langle \nabla\ell(\hat\theta)-\nabla\ell(\theta^*),\Delta\rangle
\le
\frac{5\lambda}{4}\|\Delta_S\|_1
\le
\frac{5\lambda}{4}\sqrt{s}\|\Delta\|_2.
\end{equation}
Since we know that $\|\theta_i^*\|_2 \le B$ and $\|\hat\theta_i\|_2\le 2\sqrt{BL_\epsilon}$ from our algorithm and Lemma~\ref{lem:RSC}, we have 
\begin{equation}
   \|\Delta\|_2= \|\hat\theta_i - \theta^*_i\|_2 \le \|\hat\theta_i\|_2 +\|\theta^*\|_2\le B+ 2 \sqrt{BL_\epsilon}. 
\end{equation}
Moreover, since $\|\Delta\|_1 \le \sqrt{p}\|\Delta\|_2$, we have $\|\Delta\|_1 \le \sqrt{p}(B + 2 \sqrt{BL_\epsilon}) \coloneqq r_0$. By RSC condition from Lemma~\ref{lem:RSC}, we have the following

\begin{equation}
\begin{split}
  &  \alpha\|\Delta\|_2^2-\tau r_n\|\Delta\|_1^2 \le \langle \nabla\ell(\hat\theta)-\nabla\ell(\theta^*),\Delta\rangle \le \frac{5\lambda}{4}\|\Delta_S\|_1-\frac{\lambda3}{4}\|\Delta_{S^c}\|_1. \\
  & \implies -\tau r_nr_0\|\Delta\|_1 \le \frac{5\lambda}{4}\|\Delta_S\|_1-\frac{\lambda3}{4}\|\Delta_{S^c}\|_1. \\
  &  -\tau r_nr_0\|\Delta_S\|_1 -\tau r_nr_0\|\Delta_{S^c}\|_1 \le \frac{5\lambda}{4}\|\Delta_S\|_1-\frac{\lambda3}{4}\|\Delta_{S^c}\|_1.\\
  & \frac{\lambda3}{4}\|\Delta_{S^c}\|_1. - \tau r_nr_0\|\Delta_{S^c}\|_1 \le \frac{5\lambda}{4}\|\Delta_S\|_1 + \tau r_nr_0\|\Delta_S\|_1\\
  &    \|\Delta_{S^c}\|_1 \le 
    \frac{\frac{5\lambda}{4} + \tau r_nr_0}{\frac{3\lambda}{4} - \tau r_nr_0}\|\Delta_S\|_1
\end{split}
\end{equation}
Thus, we need to show 
\begin{equation}
\frac{3\lambda}{4} - \tau r_nr_0 > 0 \implies\lambda > \frac43 \tau r_nr_0 
\end{equation}
Therefore, it is sufficient if we have $\lambda >4 \tau r_nr_0 $. Moreover, we have that $r_n \asymp \sqrt{\frac{\log p}{n}}$ and $\tau = \frac{7}{72c}$.
By given the $\lambda$ that we choose, we need the following
\begin{equation}
\begin{split}
     & \lambda_n \ge4 \tau r_nr_0\\
        & \lambda_n \ge \frac{4(2-\alpha)}{\alpha} ( \frac{8K}{c} \sqrt{\frac{2}{n}\log p} + \frac{\sqrt{8}K}{c}\sqrt{\frac{|pa(i)|}{n}})  \ge\sqrt{p}(B+ 2 \sqrt{BL_\epsilon})\frac{7}{72c}
        \sqrt{\frac{\log p}{n}}
\end{split}
\end{equation}
Thus, we can have 
\begin{equation}
\begin{split}
        & \|\Delta_{S^c}\|_1 \le 3\|\Delta_S\|_1 \\
        & \implies \|\Delta\|_1  = \|\Delta_{S^c}\|_1 + \|\Delta_{S}\|_1 \le 4\|\Delta_{S}\|_1 \le 4\sqrt{s} \|\Delta\|_2
\end{split}
\end{equation}
Therefore, we have 

\begin{equation}
\begin{split}
& (\alpha-16\tau r_ns)\|\Delta\|_2^2\le \langle \nabla\ell(\hat\theta)-\nabla\ell(\theta^*),\Delta\rangle  \le
\frac{5\lambda}{4}\sqrt{s}\|\Delta\|_2\\ 
& \implies \|\Delta\|_2\le \frac{5\lambda\sqrt{s}}{(4\alpha-64\tau r_ns)}, \qquad\|\Delta\|_1 \le \frac{5\lambda s}{(\alpha-16\tau r_ns)}
\end{split}
\end{equation}
Since $r_n \asymp\sqrt{\frac{\log p}{n}}$, the sufficient condition for $n$ is that there exists a constant $C>0$ such that 
\begin{equation}
\begin{split}
  \alpha-16\tau r_ns >0  \implies r_n <\frac{\alpha}{16\tau s} \implies  C\sqrt{\frac{\log p}{n}}<  \frac{\alpha}{16\tau s}
  \implies  n >\log p(\frac{16\tau s}{C\alpha})^2
\end{split}
\end{equation}
Finally, under the assumed sample size condition such that $n \gtrsim \log p$, we have $r_n \lesssim 1$. Therefore, we may upper bound $r_n$ by a universal constant and rewrite it in a looser but simpler form in later section.
\end{proof}

\subsubsection{Proof of Lemma~\ref{lem:RSC_unstable}}

\begin{proof}
Since the empirical loss $\ell(\theta)$ is unchanged, the same argument applies by using the conditions and the H\"{o}lder inequality. In particular,

\begin{equation}
\Delta^\top \nabla^2 \ell(\theta)\Delta \ge \frac{7}{72c}\cdot \frac1n\sum_{k=0}^{n-1}(\Delta^\top X_k)^2 = \frac{7}{72c}\Delta^\top\Big(\frac1n\sum_{k=0}^{n-1}X_kX_k^\top\Big)\Delta.
\end{equation}
Then, let $\hat{\Sigma}:=\frac1n\sum_{k=0}^{n-1}X_kX_k^\top$, $Z_k:=X_kX_k^\top$, and define the natural filtration $\mathcal F_k:=\sigma(X_0,\ldots,X_k)$ with the convention that $\mathcal F_{-1}$ is the trivial $\sigma$-field. Introduce the predictable compensator $\tilde{\Sigma}:=\frac1n\sum_{k=0}^{n-1}\mathbb{E}[Z_k\mid \mathcal F_{k-1}].$ With $M_k:=Z_k-\mathbb{E}[Z_k\mid\mathcal F_{k-1}]$, we have $\mathbb{E}[M_k\mid\mathcal F_{k-1}]=0$ and hence $\hat{\Sigma}-\tilde{\Sigma}=\frac1n\sum_{k=0}^{n-1}M_k$. Consequently, for any $\Delta\in \mathbb R^p$,
\[
\Delta^\top\hat{\Sigma}\Delta
=\Delta^\top\tilde{\Sigma}\Delta+\Delta^\top(\hat{\Sigma}-\tilde{\Sigma})\Delta .
\]
Using $|\Delta^\top A\Delta|\le \|A\|_\infty\|\Delta\|_1^2$ (where $\|A\|_\infty:=\max_{i,j}|A_{ij}|$),
on the event $\|\hat{\Sigma}-\tilde{\Sigma}\|_\infty\le r_n$ we obtain
\begin{equation}
\Delta^\top\hat{\Sigma}\Delta
\ge \Delta^\top\tilde{\Sigma}\Delta - r_n\|\Delta\|_1^2.
\label{eq:LRSC_from_tilde_unstable}
\end{equation}
By Assumption~\ref{ass:pred_curvature}, define the curvature event such that $\mathcal G_{\mathrm{curv}}
:=\left\{
\inf_{\Delta\in\mathcal R(s;c_0)\setminus\{0\}}
\frac{\Delta^\top\tilde{\Sigma}\Delta}{\|\Delta\|_2^2}
\ge m
\right\}$. On $\mathcal G_{\mathrm{curv}}$ we have $\Delta^\top\tilde{\Sigma}\Delta\ge m\|\Delta\|_2^2$ for all $\Delta\in \mathcal R(s;c_0)$. Hence on $\Omega := \mathcal G_{\mathrm{curv}}\cap\{\|\hat{\Sigma}-\tilde{\Sigma}\|_\infty\le r_n\}$,  combining with \eqref{eq:LRSC_from_tilde_unstable} yields
\[
\Delta^\top\hat{\Sigma}\Delta
\ge m\|\Delta\|_2^2 - r_n\|\Delta\|_1^2,
\qquad \forall \Delta\in\mathcal R(s;c_0).
\]
It remains to upper bound $\|\hat{\Sigma}-\tilde{\Sigma}\|_\infty$.
For $(a,b)\in[p]\times[p]$, define the martingale difference
\[
\xi_k^{(ab)}:=M_k^{(ab)}
= Z_k^{(ab)}-\mathbb{E}[Z_k^{(ab)}\mid \mathcal F_{k-1}],
\qquad \mathbb{E}[\xi_k^{(ab)}\mid\mathcal F_{k-1}]=0,
\]
so that $\hat{\Sigma}_{ab}-\tilde{\Sigma}_{ab}=\frac1n\sum_{k=0}^{n-1}\xi_k^{(ab)}$. By Assumption~\ref{ass:localization} and the standard stopping argument in Remark~\ref{remark: sec moment}, the following event holds with high probability
\begin{equation}
\mathcal E_K:=
\Big\{\max_{0\le k\le n-1}\|X_k\|_\infty\le K\Big\}
\cap
\Big\{\max_{0\le k\le n-1}\big\|\mathbb{E}[Z_k\mid \mathcal F_{k-1}]\big\|_\infty\le K^2\Big\}.
\label{eq:EK_aug_pred}
\end{equation}
Define partial sums $S_0^{(ab)}:=0$ and $S_m^{(ab)}:=\sum_{k=0}^{m-1}\xi_k^{(ab)}$ for $m=1,\ldots,n$. Define the shifted filtration $\mathcal G_0:=\mathcal F_{-1}$ and
$\mathcal G_m:=\mathcal F_{m-1}$ for $m\ge 1$.  Then $(S_m^{(ab)})_{m=0}^n$ is a martingale with respect to $(\mathcal G_m)_{m=0}^n$. Let $\sigma$ be the first violating index: \[ \sigma:=\inf\Big\{0\le k\le n-1:\ \|X_k\|_\infty>K\ \text{ or }\ \big\|\mathbb{E}[Z_k\mid\mathcal F_{k-1}]\big\|_\infty>K^2\Big\}\wedge n. \] 
Set $\tau:=\sigma+1$. Since $\{\tau\le m\}=\{\sigma\le m-1\}\in \mathcal F_{m-1}=\mathcal G_m$, $\tau$ is a stopping time with respect to $(\mathcal G_m)_{m=0}^n$. Consider the stopped process such that $\bar S_m^{(ab)}:=S_{m\wedge(\tau-1)}^{(ab)}=S_{m\wedge\sigma}^{(ab)}$ for $m=0,1,\ldots,n,$ which is a martingale with respect to $(\mathcal G_m)_{m=0}^n$. Its increments satisfy, for $m\ge 1$,
\[
\bar S_m^{(ab)}-\bar S_{m-1}^{(ab)} = \xi_{m-1}^{(ab)}\,\mathbf 1_{\{m\le \sigma\}}.
\]
On $\{m\le \sigma\}$ we have $m-1<\sigma$, hence $\|X_{m-1}\|_\infty\le K$, $\big\|\mathbb{E}[Z_{m-1}\mid\mathcal F_{m-2}]\big\|_\infty\le K^2$, and therefore $|\xi_{m-1}^{(ab)}|\le K^2+K^2=2K^2$. If $m>\sigma$ the increment is $0$. Thus, almost surely, $\big|\bar S_m^{(ab)}-\bar S_{m-1}^{(ab)}\big|\le b:=2K^2$, for $m=1,\ldots,n.$ Moreover, the predictable quadratic variation satisfies
\[
V_n^{(ab)}:=\sum_{m=1}^n \mathbb{E}\left[\big(\bar S_m^{(ab)}-\bar S_{m-1}^{(ab)}\big)^2\mid\mathcal F_{m-2}\right].
\]
Since $\big(\bar S_m^{(ab)}-\bar S_{m-1}^{(ab)}\big)^2\le b^2$ a.s., we have $\mathbb{E}\left[\big(\bar S_m^{(ab)}-\bar S_{m-1}^{(ab)}\big)^2\mid\mathcal F_{m-2}\right]\le b^2,$ and hence $V_n^{(ab)}\le \sum_{m=1}^n b^2 = 4nK^4$ a.s. By Freedman's inequality, for any $\eta>0$,
\[
\mathbb{P}\left(
|\bar S_n^{(ab)}|
\ge \sqrt{2V_n^{(ab)}\,\eta}+\frac{b}{3}\eta
\right)\le 2e^{-\eta}.
\]
Using $V_n^{(ab)}\le 4nK^4$ and $b=2K^2$ yields
\[
\mathbb{P}\left(
|\bar S_n^{(ab)}|
\ge \sqrt{8nK^4\,\eta}+\frac{2}{3}K^2\eta
\right)\le 2e^{-\eta}.
\]
Set $\eta=\log(2p^2/\delta)$ and take a union bound over all $(a,b)\in[p]^2$ to obtain
\[
\mathbb{P}\left(
\max_{a,b\in[p]}\Big|\frac1n\bar S_n^{(ab)}\Big|
\le
2\sqrt{2}\,K^2\sqrt{\frac{\log(2p^2/\delta)}{n}}
+\frac{2}{3}K^2\frac{\log(2p^2/\delta)}{n}
\right)\ge 1-\delta.
\]
On $\mathcal E_K$ there is no violation up to $n-1$, hence $\sigma=n$ and therefore
$\bar S_n^{(ab)}=S_n^{(ab)}=\sum_{k=0}^{n-1}\xi_k^{(ab)}$. Consequently, with probability at least $1-\delta-\delta_{\mathrm{loc}}$,
\[
\|\hat{\Sigma}-\tilde{\Sigma}\|_\infty
=\max_{a,b\in[p]}\Big|\frac1n S_n^{(ab)}\Big|
\le r_n,
\]
where $r_n =2\sqrt{2}\,K^2\sqrt{\frac{\log(2p^2/\delta)}{n}}
+\frac{2}{3}K^2\frac{\log(2p^2/\delta)}{n}$.
This completes the proof.
\end{proof}

\subsection{Proofs for the Statements of \S\ref{Sec:TD_odemiss}}
\label{sec: ODE_miss_Lemma}
\subsubsection{Proof of Lemma~\ref{lemma:ODE_miss gradient}}
\begin{proof}
For gradient at true parameter $\theta_i$ with the $r_{ik}$ defined in Eq.\eqref{Eq:ODE_miss}, we obtain, at $\theta_i=\theta_i^\star$ and $u_k^\star =(\theta_i^\star)^\top X_k $ ,
\begin{equation}
\begin{split}
\nabla_{\theta_i}\ell_{i,k}^{\mathrm{mis}}(\theta_i^\star)
&=\left(\frac{2u_k^\star}{u_k^{\star 2}+c}
-\frac{2u_k^\star\big(u_k^\star z_{k+1}-\delta\sqrt{\Delta t}\big)^2}{(u_k^{\star 2}+c)^2}\right)X_k \\
&=\frac{2u_k^\star}{(u_k^{\star 2}+c)^2}
\Big((u_k^{\star 2}+c)-\big(u_k^\star z_{k+1}-\delta\sqrt{\Delta t}\big)^2\Big)X_k\\
&=\frac{2u_k^\star}{(u_k^{\star 2}+c)^2}
\Big(c+u_k^{\star 2}(1-z_{k+1}^2)+2u_k^\star z_{k+1}\delta\sqrt{\Delta t}-\delta^2\Delta t\Big)X_k. 
\end{split}
\end{equation}
Then, we define the decomposition 
\begin{equation}
B_k:=\frac{2u_k^\star}{(u_k^{\star 2}+c)^2}\big(c-\delta^2\Delta t\big)X_k, \quad Z_k:=\frac{2u_k^\star}{(u_k^{\star 2}+c)^2}\Big(u_k^{\star 2}(1-z_{k+1}^2)+2u_k^\star z_{k+1}\delta\sqrt{\Delta t}\Big)X_k. 
\end{equation}
Hence, the empirical gradient satisfies
\begin{equation}
    \nabla \tilde{\mathcal L}_i(\theta_i^\star) = \frac1n\sum_{k=0}^{n-1}B_k+\frac1n\sum_{k=0}^{n-1}Z_k.
\end{equation}
To show the boundedness of $\|\nabla \tilde{\mathcal L}_i(\theta_i^\star)\|_\infty$, we first show the boundedness for $Z_k$ with high probability, which is similar to Lemma~22. Then, we show the boundedness of $B_k$. By conditioning  on the trajectory $\{X_k\}$ (equivalently, on the past history up to time \(k\)), we have $\mathbb E[Z_k\mid X_k]=0$ since $\mathbb E(z_{k+1})=0$ and $\mathbb E(1-z^2_{k+1})=0$. For each coordinate $j\in[p]$, we write
\begin{equation}
Z_{k,j}=\alpha_{k,j}(1-z_{k+1}^2)+\beta_{k,j}z_{k+1},
\end{equation}
where
\begin{equation}
    \alpha_{k,j}:=\frac{2u_k^{\star 3}}{(u_k^{\star 2}+c)^2}X_{k,j},
\qquad
\beta_{k,j}:=\frac{4u_k^{\star 2}}{(u_k^{\star 2}+c)^2}\delta\sqrt{\Delta t}X_{k,j}.
\end{equation}
On the event $\mathcal E:=\{\max_k \|X_k\|_\infty\le K\}$ given by Proposition~\ref{Ass:xbound}, we have the uniform bounds
\begin{equation}
|\alpha_{k,j}|
\le \Big(\sup_{u\in\mathbb R}\frac{2|u|^3}{(u^2+c)^2}\Big)|X_{k,j}|
\le \frac{1}{\sqrt c}|X_{k,j}|
\le \frac{K}{\sqrt c},
\end{equation}
and
\begin{equation}
|\beta_{k,j}| \le \Big(\sup_{u\in\mathbb R}\frac{4u^2}{(u^2+c)^2}\Big)\delta\sqrt{\Delta t}|X_{k,j}|
\le \frac{1}{c}\delta\sqrt{\Delta t}|X_{k,j}|
\le \frac{K}{c}\delta\sqrt{\Delta t}.
\end{equation}
Since $z_{k+1}\sim\mathcal N(0,1)$ is Gaussian and hence sub-Gaussian, we know that $z^2_{k+1}$ is sub-exponential and $(1-z_{k+1}^2)$ is the centered sub-exponential. Hence, conditional on $X_k$, $Z_{k,j}$ is sub-exponential and there exists an absolute constant $C_0>0$ such that on $\mathcal E$,
\begin{equation}\label{eq:step3_psi1_bound}
\|Z_{k,j}\|_{\psi_1}
\le C_0\big(|\alpha_{k,j}|+|\beta_{k,j}|\big)
\le C_0\left(\frac{K}{\sqrt c}+\frac{K}{c}\delta\sqrt{\Delta t}\right)
=: \kappa .
\end{equation}
Therefore, by a standard Bernstein inequality for conditionally sub-exponential martingale differences, there exist absolute constants \(c_1>0\) such that, for any \(t>0\),
\begin{equation}\label{eq:step4_bernstein_single}
\mathbb P\left(\left|\frac{1}{n}\sum_{k=0}^{n-1} Z_{k,j}\right|\ge t\ \Big|\ \mathcal E\right)
\le 2\exp\left(-c_1 n\min\left\{\frac{t^2}{\kappa^2},\frac{t}{\kappa}\right\}\right).
\end{equation}
Taking a union bound over $j\in[p]$ yields
\begin{equation}\label{eq:Z_union}
\mathbb P\left(\left\|\frac{1}{n}\sum_{k=0}^{n-1} Z_{k}\right\|_\infty\ge t\ \Big|\ \mathcal E\right)
\le 2\exp\left(-c_1 n\min\left\{\frac{t^2}{\kappa^2},\frac{t}{\kappa}\right\}+\log p\right).
\end{equation}
Let $M := \left\|\frac{1}{n}\sum_{k=0}^{n-1} Z_k \right\|_\infty$. Then, we can identify the typical stochastic scale of $M$ under $\mathcal E$ by converting the conditional tail bound into an expectation bound. Since $ M \ge 0 $, we can have 
\begin{equation}
\mathbb E[M\mid \mathcal E] = \int_0^\infty \mathbb P(M\ge t\mid \mathcal E)dt \le \int_0^\infty\min \{1,2\exp\left(-c_1 n\min\left\{\frac{t^2}{\kappa^2},\frac{t}{\kappa}\right\}+\log p\right)\}dt
\end{equation}
By integrating the two Bernstein regimes, there exists an absolute constant $C_5>0$ such that
\begin{equation}
\mathbb E[M\mid \mathcal E] \le C_5\kappa\left(\sqrt{\frac{\log p}{n}}+\frac{\log p}{n}\right).
\end{equation}
In particular, for $n$ sufficiently large and since $f(x) =\min\{\frac{x^2}{\kappa^2},\frac{x}{\kappa}\}$ is nondecreasing for $x\ge 0$, we can have that $\mathbb E[M\mid \mathcal E] \le C_5\kappa\sqrt{\frac{\log p}{n}}.$ We now return to the tail bound in Eq.\eqref{eq:Z_union}. For any $\epsilon > 0$, evaluating at $t = C_5\kappa\sqrt{\frac{\log p}{n}} +\epsilon$ yields
\begin{equation}
\label{eq:Z_uniont}
\mathbb P\left( M > C_5\kappa\sqrt{\frac{\log p}{n}}+\epsilon \middle|\mathcal E \right) \le 2\exp\left( -c_1 n \min\left\{\frac{\epsilon^2}{\kappa^2},\frac{\epsilon}{\kappa}\right\} +\log p \right).
\end{equation}
Then, we can show the deterministic remainder term bound for $n^{-1}\sum B_k$ to complete the proof. Recall that $B_k$ is given as 
\begin{equation}
    B_k=\frac{2u_k^\star}{(u_k^{\star 2}+c)^2}\big(c-\delta^2\Delta t\big)X_k = (1-\frac{\delta^2\Delta t}{c}) \frac{2cu_k^\star}{(u_k^{\star 2}+c)^2}X_k = (1-\frac{\delta^2\Delta t}{c}) b_{k}.
\end{equation}
where $b_{k}$ is same as our previous proof, and coordinatewisely $b_{k,j}:=\frac{2cu_k^\star}{\bigl((u_k^\star)^2+c\bigr)^2}X_{k,j},
$ and $B_{k,j}=\left(1-\frac{\delta^2\Delta t}{c}\right)b_{k,j}$. Fix $j\in[p]$. Since $b_{k,j}$ is a measurable function of $X_k$, the sequence $\{b_{k,j}\}_{k=0}^{n-1}$ inherits the geometric $\alpha$-mixing property from $\{X_k\}_{k=0}^{n-1}$. Hence so does $\{B_{k,j}\}_{k=0}^{n-1}$. Moreover, because $\theta_{i,c}^\star$ is the $c$-stabilized population parameter, the population first-order condition gives that $\mathbb{E}[b_{k,j}]=0.$ Therefore,
\[
\mathbb{E}[B_{k,j}] = \left(1-\frac{\delta^2\Delta t}{c}\right)\mathbb{E}[b_{k,j}] =0.
\]
Fix $j\in[p]$, on the event $\mathcal E$ and by rewriting $u_k^\star=(\theta_i^\star)^\top X_k$, we have the following
\begin{align}
|b_{k,j}|
&=\left|\frac{2cu_k^\star}{(u_k^{\star 2}+c)^2}\right|\cdot |X_{k,j}| \le \left(\sup_{u\in\mathbb R}\underbrace{\frac{2c|u|}{(u^2+c)^2}}_{f(u)}\right)|X_{k,j}|\le \frac{3\sqrt{3}}{8}c^{-1/2}K
\label{eq:step5_Bkj_1}
\end{align}
The last inequlity holds by using Lemma.\ref{lem_mis: u/uc bound}, we have $\sup_{u\in\mathbb R}f(u) =\frac{3\sqrt{3}}{8}c^{-3/2}$ Hence, on $\mathcal E$, we have 
\[
|B_{k,j}| \le \left|1-\frac{\delta^2\Delta t}{c}\right|\frac{3\sqrt{3}}{8}\frac{K}{\sqrt c} =:L_B
\]
Therefore, by a Bernstein inequality for centered bounded geometrically
$\alpha$-mixing sequences, there exist absolute constants $C_8,C_9>0$ such that,
for any $t>0$,
\[
\mathbb P\left( \left| \frac1n\sum_{k=0}^{n-1} B_{k,j} \right| \ge t \middle|\mathcal E \right) \le 2\exp\left( -C_8 n\min\left\{\frac{t^2}{L_B^2},\frac{t}{L_B}\right\} \right).
\]
Taking a union bound over $j\in[p]$ yields
\[
\mathbb P\left( \left\| \frac1n\sum_{k=0}^{n-1} B_k \right\|_\infty \ge t \middle|\mathcal E \right) \le2\exp\left(-C_8 n\min\left\{\frac{t^2}{L_B^2},\frac{t}{L_B}\right\}+\log p\right).
\]
Choosing $t=C_9 L_B\sqrt{\frac{\log p}{n}}$, and taking $n$ sufficiently large so that the quadratic regime applies, we obtain
\begin{equation}
    \label{eq:step5_Bk_avg_bound}
    \left\| \frac1n\sum_{k=0}^{n-1} B_kv\right\|_\infty \le C_9 \left|1-\frac{\delta^2\Delta t}{c}\right| \frac{K}{\sqrt c} \sqrt{\frac{\log p}{n}}
\end{equation}
with probability at least $1-Cp^{-c_0}$ on $\mathcal E$, for some constants
$C,c_0>0$. Equivalently,
\[
\left\| \frac1n\sum_{k=0}^{n-1} B_k \right\|_\infty = O_{\mathbb P}\left( \left|1-\frac{\delta^2\Delta t}{c}\right| \frac{K}{\sqrt c} \sqrt{\frac{\log p}{n}} \right) \qquad\text{on }\mathcal E.
\]
Therefore, by combining Eq.\eqref{eq:Z_uniont} with $\epsilon = C_6 \kappa \sqrt{\frac{\log p}{n}}$ and Eq.\eqref{eq:step5_Bk_avg_bound}, we have that, conditional on $\mathcal E$, with probability at least $1-C\exp(-c_0\log p)$ with $ C,c_0, C_7 > 0$,
\begin{align}
\|\nabla \tilde{\mathcal L}_i(\theta_i^\star)\|_\infty
&\le \left\|\frac1n\sum_{k=0}^{n-1}B_k\right\|_\infty +\left\|\frac1n\sum_{k=0}^{n-1}Z_k\right\|_\infty \nonumber\\
&\le \frac{3\sqrt 3}{8}\left|1-\frac{\delta^2\Delta t}{c}\right| \frac{K}{\sqrt c} \sqrt{\frac{\log p}{n}}+ C_7\left(\frac{K}{\sqrt c}+\frac{K}{c}\delta\sqrt{\Delta t}\right)\sqrt{\frac{\log p}{n}}.
\label{eq:final_grad_bound}
\end{align}
Then, rewriting \eqref{eq:final_grad_bound}, we obtain
\[
\|\nabla \tilde{\mathcal L}_i(\theta_{i,c}^\star)\|_\infty \le
\underbrace{\left(\frac{3\sqrt{3}}{8}+C_7\right)\frac{K}{\sqrt c}\, \sqrt{\frac{\log p}{n}}}_{\text{original}} + \underbrace{ \left( \frac{3\sqrt{3}}{8}\frac{K\delta^2\Delta t}{c^{3/2}} + C_7\frac{K}{c}\delta\sqrt{\Delta t} \right)\sqrt{\frac{\log p}{n}}}_{\text{misspecification-induced part}}.
\]
Therefore, if we require the misspecification-induced contribution to be no larger than the baseline stochastic scale, it suffices to impose
\[
\left(
\frac{K\delta^2\Delta t}{c^{3/2}}
+
\frac{K}{c}\delta\sqrt{\Delta t}
\right)\sqrt{\frac{\log p}{n}}
\lesssim
\frac{K}{\sqrt c}\sqrt{\frac{\log p}{n}}.
\]
Equivalently, letting $x:=\frac{\delta\sqrt{\Delta t}}{\sqrt c},$ it suffices to require $x^2+x\lesssim 1.$ Thus, the sufficient condition is
\[
x\lesssim 1 \implies \delta \lesssim \frac{\sqrt c}{\sqrt{\Delta t}}.
\]
Under the above range of $\delta$, the misspecification-induced terms are absorbed into the original stochastic term. Hence, the gradient bound remains of the same order as in the original case, namely
\[
\|\nabla \tilde{\mathcal L}_i(\theta_{i,c}^\ast)\|_\infty
\lesssim
\frac{K}{\sqrt c}\sqrt{\frac{\log p}{n}}.
\]
\end{proof}
\subsubsection{Proof of Lemma~\ref{lemma:ODE_miss_lrsc}}
\begin{proof}
Let $u_k:=\theta^\top X_k$, and $\tilde r_{i,k}$  defined in Eq.\eqref{Eq:ODE_miss}. Then, the Hessian representation under ODE misspecification can be written as

\begin{equation}
    \begin{split}
    &\nabla^2 \tilde{\ell}_i^{(\delta)}(\theta) = \frac{1}{n}\sum_{k=0}^{n-1}\left[\frac{c-u_k^2}{(u_k^2+c)^2}-\frac{\tilde r_{i,k}^2(c-3u_k^2)}{(u_k^2+c)^3} \right]X_kX_k^\top.\\ 
    &\Delta^\top \nabla^2 \tilde{\ell}_i^{(\delta)}(\theta)\Delta = \frac{1}{n}\sum_{k=0}^{n-1} (\Delta^\top X_k)^2 \left[ \frac{c-u_k^2}{(u_k^2+c)^2} - \frac{\tilde r_{i,k}^2(c-3u_k^2)}{(u_k^2+c)^3} \right]. 
    \end{split}
\end{equation}
From Proposition~\ref{Ass:xbound}, we have that the event $\mathcal E_K:=\left\{\sup_{0\le s\le T}\|X_s\|\le K\right\}$ holds with high probability. On \(\mathcal E_K\), since \(\|\theta\|_2\le \sqrt{c/(2K^2)}\), we have for all \(k\),
\[
u_k^2=(\theta^\top X_k)^2 \le \|\theta\|_2^2\|X_k\|_2^2 \le \frac{c}{2K^2}K^2=\frac{c}{2}.
\]
Therefore, we have that $c-u_k^2 \ge \frac{c}{2}$ and $u_k^2+c \le \frac{3c}{2}$, and hence
\[
\frac{c-u_k^2}{(u_k^2+c)^2} \ge \frac{c/2}{(3c/2)^2} = \frac{2}{9c}.
\]
Also, since \(u_k^2\le c/2\), we have \(|c-3u_k^2|\le c\) and \(u_k^2+c\ge c\), so
\[
-\frac{\tilde r_{i,k}^2(c-3u_k^2)}{(u_k^2+c)^3} \ge -\frac{\tilde r_{i,k}^2|c-3u_k^2|}{(u_k^2+c)^3} \ge -\frac{\tilde r_{i,k}^2}{c^2}.
\]
Substituting these bounds yields
\[
\Delta^\top \nabla^2 \tilde{\ell}_i^{(\delta)}(\theta)\Delta \ge \frac{2}{9c}\cdot \frac{1}{n}\sum_{k=0}^{n-1}(\Delta^\top X_k)^2 - \frac{1}{c^2}\cdot \frac{1}{n}\sum_{k=0}^{n-1}\tilde r_{i,k}^2(\Delta^\top X_k)^2.
\]
Define $\tilde M_n:=\frac{1}{n}\sum_{k=0}^{n-1}\tilde r_{i,k}^2 X_kX_k^\top.$
Equivalently,
\[
\Delta^\top \nabla^2 \tilde{\ell}_i^{(\delta)}(\theta)\Delta \ge \frac{2}{9c}\Delta^\top \hat\Sigma \Delta - \frac{1}{c^2}\Delta^\top \tilde M_n \Delta,
\]
Next, write $\tilde r_{i,k}=r_{i,k}^0-\delta\sqrt{\Delta t}$ and $r_{i,k}^0:=\theta_i^{*\top}X_k z_{k+1}$, and define $M_n:=\frac{1}{n}\sum_{k=0}^{n-1}(r_{i,k}^0)^2X_kX_k^\top$ and $N_n:=\frac{1}{n}\sum_{k=0}^{n-1}r_{i,k}^0X_kX_k^\top$. 
Then
\[
\tilde M_n = M_n-2\delta\sqrt{\Delta t}\,N_n+\delta^2\Delta t\,\hat\Sigma.
\]
Then, we define $\bar{\tilde M}_n:=\bar M_n+\delta^2\Delta t\,\hat\Sigma$, where
\[
\bar M_n:=\frac1n\sum_{k=0}^{n-1}\mathbb E\!\left[(r^0_{i,k})^2X_kX_k^\top\mid \mathcal F_k\right]
=\frac1n\sum_{k=0}^{n-1}(\theta_i^{*\top}X_k)^2X_kX_k^\top,
\]
Since $\tilde M_n=M_n-2\delta\sqrt{\Delta t}\,N_n+\delta^2\Delta t\,\hat\Sigma$, it follows that
\[
\tilde M_n-\bar{\tilde M}_n=(M_n-\bar M_n)-2\delta\sqrt{\Delta t}\,N_n.
\]
On \(\mathcal E_K\), we have $(\theta_i^{*\top}X_k)^2\le \|\theta_i^*\|_2^2\|X_k\|_2^2\le B^2K^2,$ so $\bar{\tilde M}_n\preceq (B^2K^2+\delta^2\Delta t)\hat\Sigma.$ By Lemma~\ref{lem:Mn-concentration}, there exists \(q_n\asymp \sqrt{\frac{\log p}{n}}\), with high probability, such that
\[
\|M_n-\bar M_n\|_\infty\le q_n.
\]
For \(N_n\), each entry is a martingale sum with conditionally centered Gaussian increments:
\[
(N_n)_{ab} = \frac1n\sum_{k=0}^{n-1} (\theta_i^{*\top}X_k)X_{k,a}X_{k,b}z_{k+1}.
\]
On \(\mathcal E_K\), let $\xi_{k,ab}:=\frac1n(\theta_i^{*\top}X_k)X_{k,a}X_{k,b}z_{k+1},$ and $(N_n)_{ab}=\sum_{k=0}^{n-1}\xi_{k,ab}$. Then, \(\{\xi_{k,ab}\}\) is \(\mathcal F_{k+1}\)-measurable and $\mathbb E[\xi_{k,ab}\mid \mathcal F_k]=0$, so \(\{\xi_{k,ab}\}\) is a martingale-difference sequence relative to \(\{\mathcal F_k\}\). Moreover,  on \(\mathcal E_K\), we have 
\begin{equation}
    \begin{split}
        &\operatorname{Var}(\xi_{k,ab}\mid \mathcal F_k) = \frac1{n^2}(\theta_i^{*\top}X_k)^2X_{k,a}^2X_{k,b}^2 \le \frac{B^2K^6}{n^2}\implies\sum_{k=0}^{n-1}\operatorname{Var}(\xi_{k,ab}\mid \mathcal F_k) \le \frac{B^2K^6}{n}.
    \end{split}
\end{equation}
Therefore, by a Gaussian tail bound and a union bound over all \(p^2\) entries, there exist constants \(C,C_1,C_2>0\) such that
\[
\|N_n\|_\infty
\le
C\sqrt{\frac{\log p}{n}}
\]
with probability at least \(1-C_1p^{-C_2}\). Consequently,
\[
\|\tilde M_n-\bar{\tilde M}_n\|_\infty
\le
q_n+2|\delta|\sqrt{\Delta t}\,\|N_n\|_\infty
\le
C(1+|\delta|\sqrt{\Delta t})\sqrt{\frac{\log p}{n}}
=: \tilde q_n.
\]
with high probability. Thus, on the event \(\mathcal E_K\cap \mathcal E_{\tilde M}\), where $\mathcal E_{\tilde M} := \left\{ \|\tilde M_n-\bar{\tilde M}_n\|_\infty \le \tilde q_n
\right\}$. we have
\[
\Delta^\top \tilde M_n \Delta \le
\Delta^\top \bar{\tilde M}_n \Delta + \tilde q_n\|\Delta\|_1^2 \le
(B^2K^2+\delta^2\Delta t)\Delta^\top\hat\Sigma\Delta +\tilde q_n\|\Delta\|_1^2.
\]
Substituting this into the previous lower bound gives
\[
\Delta^\top \nabla^2 \tilde{\ell}_i^{(\delta)}(\theta)\Delta \ge \left( \frac{2}{9c}-\frac{B^2K^2+\delta^2\Delta t}{c^2} \right)\Delta^\top \hat\Sigma\Delta - \frac{\tilde q_n}{c^2}\|\Delta\|_1^2.
\]
That is,
\[
\Delta^\top \nabla^2 \tilde{\ell}_i^{(\delta)}(\theta)\Delta \ge \gamma_{c,\delta}\,\Delta^\top \hat\Sigma\Delta - \frac{\tilde q_n}{c^2}\|\Delta\|_1^2.
\]
Finally, the control of $\Delta^\top \hat\Sigma\Delta \ge m\|\Delta\|_2^2-r_n\|\Delta\|_1^2$ is identical to that in Lemma~\ref{lem:RSC}, since the misspecification affects only the residual term and does not change the argument for \(\hat\Sigma\). Therefore, on \(\mathcal E_{\hat\Sigma}:=\{\|\hat\Sigma-\mathbb E[\hat\Sigma]\|_\infty\le r_n\}\),
\[
\Delta^\top \nabla^2 \tilde{\ell}_i^{(\delta)}(\theta)\Delta
\ge m\gamma_{c,\delta}\|\Delta\|_2^2 - \left( \gamma_{c,\delta}r_n+\frac{\tilde q_n}{c^2} \right)\|\Delta\|_1^2.
\]
This proves
\[
\Delta^\top \nabla^2 \tilde{\ell}_i^{(\delta)}(\theta)\Delta \ge \alpha_\delta\|\Delta\|_2^2-\tau_{n,\delta}\|\Delta\|_1^2,
\]
with
\[
\alpha_\delta=m\gamma_{c,\delta}, \qquad \tau_{n,\delta} = \gamma_{c,\delta}r_n+\frac{C(1+|\delta|\sqrt{\Delta t})}{c^2}\sqrt{\frac{\log p}{n}}.
\]
If \(c\ge 9(B^2K^2+\delta^2\Delta t)\), then
\[
\gamma_{c,\delta} = \frac{2}{9c}-\frac{B^2K^2+\delta^2\Delta t}{c^2} \ge \frac{2}{9c}-\frac{1}{9c} = \frac{1}{9c},
\]
which gives the stated bound. If $\delta \le \sqrt{\frac{c/9 - B^2K^2}{\Delta t}}$, 
then \(B^2K^2+\delta^2\Delta t \le c/9\), and therefore
\[
\gamma_{c,\delta} = \frac{2}{9c}-\frac{B^2K^2+\delta^2\Delta t}{c^2} \ge \frac{2}{9c}-\frac{1}{9c} = \frac{1}{9c}.
\]
Hence $\alpha_\delta = m\gamma_{c,\delta} \ge \frac{m}{9c} =: \alpha_1$
Moreover, since \(r_n \asymp \sqrt{\frac{\log p}{n}}\) and $\tilde q_n \le C(1+|\delta|\sqrt{\Delta t})\sqrt{\frac{\log p}{n}}$, there exists a constant \(\tau \ge 0\) such that $\gamma_{c,\delta}r_n + \frac{\tilde q_n}{c^2} \le \tau r_n.$ Substituting into the previous display yields
\[
\Delta^\top \nabla^2 \tilde{\ell}_i^{(\delta)}(\theta)\Delta \ge \alpha_1 \|\Delta\|_2^2 - \tau r_n \|\Delta\|_1^2,
\]
which is exactly the stated bound.
\end{proof}

\subsubsection{Proof of Lemma~\ref{lemma:ODE_miss_h_lip}}
\begin{proof}
Write $\tilde{\ell}_i^{(\delta)}$ in composite form:
\[
\tilde{\ell}_i^{(\delta)}(\theta_i)= \frac{1}{2n}\sum_{k=0}^{n-1}\phi_{k,\delta}(\theta_i^\top X_k),\qquad \phi_{k,\delta}(z) = \log(z^2+c)+\frac{(r_{i,k}^{(\delta)})^2}{z^2+c}.
\]
Direct differentiation gives
\begin{equation}
\begin{split}
&\phi_{k,\delta}'(z) = \frac{2z}{z^2+c} - \frac{2(r_{i,k}^{(\delta)})^2z}{(z^2+c)^2},\quad
\phi_{k,\delta}''(z) = \frac{2(c-z^2)}{(z^2+c)^2}-\frac{2(r_{i,k}^{(\delta)})^2(c-3z^2)}{(z^2+c)^3},\\
&\phi_{k,\delta}^{(3)}(z) = \frac{4z\bigl(z^4-(2c+6(r_{i,k}^{(\delta)})^2)z^2-3c^2+6c(r_{i,k}^{(\delta)})^2\bigr)}{(z^2+c)^4}.
\end{split}
\end{equation}
Using the triangle inequality and $(z^2+c)\ge c$ for all $z\in\mathbb R$, we obtain
\begin{equation}
\label{Eq: hessian_miss1}
    \bigl|\phi_{k,\delta}^{(3)}(z)\bigr| \le \frac{4|z|}{(z^2+c)^4} \Bigl( |z|^4+(2c+6(r_{i,k}^{(\delta)})^2)|z|^2+(3c^2+6c(r_{i,k}^{(\delta)})^2) \Bigr) \le \frac{24}{c^{3/2}}+\frac{48(r_{i,k}^{(\delta)})^2}{c^{5/2}}.
\end{equation}
On the event $\mathcal E$ given by Proposition~\ref{Ass:xbound}, since $r_{i,k}^{(\delta)} = r_{i,k}^{(0)}-\delta\sqrt{\Delta t}$, we have $|r_{i,k}^{(\delta)}| \le |r_{i,k}^{(0)}|+\delta\sqrt{\Delta t} \le R+\delta\sqrt{\Delta t}$, where $R$ is from Lemma~\ref{lem:residual_bound}. Plug in back  Eq.\eqref{Eq: hessian_miss1}, we can have 
\begin{equation}
\begin{split}
\sup_{z\in\mathbb R}\bigl|\phi_{k,\delta}^{(3)}(z)\bigr| \le M_{c,\delta} := \frac{24}{c^{3/2}} + \frac{48\bigl(R+\delta\sqrt{\Delta t}\bigr)^2}{c^{5/2}}.
\end{split}
\end{equation}
By the chain rule,
\begin{equation}
\begin{split}
    & \nabla^2\tilde{\ell}_i^{(\delta)}(\theta_i) = \frac{1}{2n}\sum_{k=0}^{n-1}\phi_{k,\delta}''(\theta_i^\top X_k)\,X_kX_k^\top \\
    &\implies \nabla^2\tilde{\ell}_i^{(\delta)}(\theta_i)-\nabla^2\tilde{\ell}_i^{(\delta)}(\theta_i') =
\frac{1}{2n}\sum_{k=0}^{n-1} \Bigl( \phi_{k,\delta}''(\theta_i^\top X_k)-\phi_{k,\delta}''((\theta_i')^\top X_k)\Bigr)X_kX_k^\top.
\end{split}
\end{equation}
For each $k$, by the one-dimensional mean value theorem,
\[
\bigl|
\phi_{k,\delta}''(\theta_i^\top X_k)
-
\phi_{k,\delta}''((\theta_i')^\top X_k)
\bigr|
\le
M_{c,\delta}\,
|(\theta_i-\theta_i')^\top X_k|
\le
M_{c,\delta}\,\|\theta_i-\theta_i'\|_1\|X_k\|_\infty,
\]
where the last step uses $\ell_1$--$\ell_\infty$ duality. Taking entrywise maximum norms,
\[
\bigl\|\nabla^2\tilde{\ell}_i^{(\delta)}(\theta_i)-\nabla^2\tilde{\ell}_i^{(\delta)}(\theta_i')\bigr\|_\infty
\le
\frac{1}{2n}\sum_{k=0}^{n-1}
\bigl|
\phi_{k,\delta}''(\theta_i^\top X_k)
-
\phi_{k,\delta}''((\theta_i')^\top X_k)
\bigr|
\cdot
\|X_kX_k^\top\|_\infty.
\]
On $\mathcal E$, we have $\|X_k\|_\infty\le K$ and hence $\|X_kX_k^\top\|_\infty \le \|X_k\|_\infty^2 \le K^2.$
Therefore,
\[
\bigl\|\nabla^2\tilde{\ell}_i^{(\delta)}(\theta_i)-\nabla^2\tilde{\ell}_i^{(\delta)}(\theta_i')\bigr\|_\infty
\le
\frac{1}{2n}\sum_{k=0}^{n-1}
\Bigl(M_{c,\delta}\|\theta_i-\theta_i'\|_1\|X_k\|_\infty\Bigr)K^2
\le
\frac{M_{c,\delta}}{2}K^3\|\theta_i-\theta_i'\|_1.
\]
This proves the claimed bound with $\bar D_{\max}^{(\delta)}=\frac{K^3}{2}M_{c,\delta}$. For the final claim, we take 
\begin{equation}
\begin{split}
&\delta \le \frac{1}{\sqrt{\Delta t}}\left(\sqrt{R^2+\frac{c}{2}}-R\right)\equiv \delta \sqrt{\Delta t}+R\le\sqrt{R^2+\frac{c}{2}}\\ 
\end{split}
\end{equation}
We can observe that 
\[
(R+\delta\sqrt{\Delta t})^2 = R^2+2R\delta\sqrt{\Delta t}+\delta^2\Delta t
\le R^2+\frac{c}{2} \implies
2R\delta\sqrt{\Delta t}+\delta^2\Delta t \le \frac{c}{2},
\]
Hence,
\[
\bar D_{\delta,\max}
=
\frac{K^3}{2}
\left(
\frac{24}{c^{3/2}}
+
\frac{48(R+\delta\sqrt{\Delta t})^2}{c^{5/2}}
\right)
\le
\frac{K^3}{2}
\left(
\frac{24}{c^{3/2}}
+
\frac{48R^2}{c^{5/2}}
+
\frac{24}{c^{3/2}}
\right)
=
\frac{K^3}{2}
\left(
\frac{48}{c^{3/2}}
+
\frac{48R^2}{c^{5/2}}
\right).
\]
Therefore, the Hessian Lipschitz bound remains of the same form as in Lemma~\ref{lem:h_lips}.
\end{proof}

\subsubsection{Proof of Lemma~\ref{lemma:ODE_miss_hessian}}
\begin{proof}
Let $a_k^* \coloneqq \theta_i^{*\top}X_k$, $u_k^* \coloneqq (a_k^*)^2 + c$, and $m_\delta \coloneqq \delta\sqrt{\Delta t}$.
Under Eq.\eqref{Eq:ODE_miss}, at \(\theta=\theta_i^*\) we have
\[
r_{i,k}^{(\delta)} = \sqrt{u_k^*}\,z_{k+1}-m_\delta \implies (r_{i,k}^{(\delta)})^2 = u_k^* z_{k+1}^2 - 2m_\delta \sqrt{u_k^*}z_{k+1} + m_\delta^2,
\]
where $z_{k+1}\sim\mathcal N(0,1)$. By direct differentiation,
\[
\nabla^2 \widetilde \ell_{i,k}^{(\delta)}(\theta_i^*)=\frac{2\Bigl(-(a_k^*)^4+\bigl(3(a_k^*)^2-c\bigr (r_{i,k}^{(\delta)})^2+c^2\Bigr)}{(u_k^*)^3}X_kX_k^\top.
\]
Therefore, conditioning on \(X_k\) and using \(\mathbb E[z_{k+1}]=0\), \(\mathbb E[z_{k+1}^2]=1\), we define $H_{i,k}^{*,(\delta)}\coloneqq\mathbb E\!\left[\nabla^2 \widetilde \ell_{i,k}^{(\delta)}(\theta_i^*)\mid X_k\right]$, and we obtain
\[
H_{i,k}^{*,(\delta)} =\mathbb E\!\left[\nabla^2 \widetilde \ell_{i,k}^{(\delta)}(\theta_i^*)\mid X_k\right]=\frac{2\Bigl(-(a_k^*)^4+\bigl(3(a_k^*)^2-c\bigr)(u_k^*+m_\delta^2)+c^2\Bigr)}{(u_k^*)^3}X_kX_k^\top.
\]
Since $u_k^* \coloneqq (a_k^*)^2 + c$, we can have the expanding it as 

\begin{equation}
\begin{split}
    -(a_k^*)^4 + \bigl(3(a_k^*)^2-c\bigr)u_k^* + c^2 & =-(a_k^*)^4 + \bigl(3(a_k^*)^2-c\bigr)((a_k^*)^2 + c) + c^2 \\
    &=-(a_k^*)^4 + \bigl(3(a_k^*)^4 +2(a_k^*)^2c-c^2\bigr) + c^2 \\
    &= 2(a_k^*)^4+2(a_k^*)^2c =2(a_k^*)^2u_k^*.
\end{split}
\end{equation}
This simplifies to
\[
H_{i,k}^{*,(\delta)}=\left( \frac{4(a_k^*)^2}{(u_k^*)^2}+\frac{2\bigl(3(a_k^*)^2-c\bigr)m_\delta^2}{(u_k^*)^3}\right)X_kX_k^\top.
\]
Now, we define the centered fluctuation $Z_{i,k}^{(\delta)}\coloneqq\nabla^2 \widetilde\ell_{i,k}^{(\delta)}(\theta_i^*)- H_{i,k}^{*,(\delta)}$ and substituting the expression above gives
\[
Z_{i,k}^{(\delta)}=\frac{2\bigl(3(a_k^*)^2-c\bigr)}{(u_k^*)^3}\Bigl(u_k^*(z_{k+1}^2-1)-2m_\delta\sqrt{u_k^*}\,z_{k+1}\Bigr)X_kX_k^\top.
\]
Hence for each entry \((j,\ell)\),
\[
[Z_{i,k}^{(\delta)}]_{j\ell}=\frac{2\bigl(3(a_k^*)^2-c\bigr)}{(u_k^*)^3} \Bigl(u_k^*(z_{k+1}^2-1)-2m_\delta \sqrt{u_k^*}\,z_{k+1}\Bigr) X_{k,j}X_{k,\ell}.
\]
Conditioning on \(X_k\), and using \(\operatorname{Var}(z_{k+1}^2-1)=2\), \(\operatorname{Var}(z_{k+1})=1\), \(\operatorname{Cov}(z_{k+1}^2-1,z_{k+1})=0\), we have
\[
\operatorname{Var}\!\left([Z_{i,k}^{(\delta)}]_{j\ell}\mid X_k\right) = \left( \frac{2\bigl(3(a_k^*)^2-c\bigr)}{(u_k^*)^3}X_{k,j}X_{k,\ell} \right)^2 \Bigl(2(u_k^*)^2 + 4m_\delta^2 u_k^*\Bigr).
\]
Using \(|3(a_k^*)^2-c|\le 3u_k^*\), \(u_k^*\ge c\), and \(|X_{k,j}X_{k,\ell}|\le K^2\) with high probability from Proposition~\ref{Ass:xbound},
\[
\operatorname{Var}\!\left([Z_{i,k}^{(\delta)}]_{j\ell}\mid X_k\right) \le\frac{72K^4}{c^2}\left(1+\frac{2m_\delta^2}{c} \right).
\]
Also, since both \(z_{k+1}\) and \(z_{k+1}^2-1\) are sub-exponential, we can have 
\[
\|[Z_{i,k}^{(\delta)}]_{j\ell}\|_{\psi_1} \le C\,\frac{K^2}{c} \left(1+\frac{|m_\delta|}{\sqrt c} \right),
\]
for some universal constant \(C>0\). For each fixed entry $(j,l)$, \([Z_{i,k}^{(\delta)}]_{j\ell}\) is conditionally centered given $X_k$, and its randomness is driven only by the Gaussian innovation, so a standard Bernstein-type concentration argument yields
\[
\mathbb P\!\left(\left|\frac{1}{n}\sum_{k=1}^n [Z_{i,k}^{(\delta)}]_{j\ell}\right| > t\right)\le2\exp\!\left(-c_1 n \min\!\left\{\frac{t^2}{\nu_\delta^2},\frac{t}{\alpha_\delta}\right\}\right),
\]
where $\nu_\delta^2 \asymp \frac{K^4}{c^2}\left(1+\frac{m_\delta^2}{c}\right)$ and $\alpha_\delta \asymp \frac{K^2}{c}\left(1+\frac{|m_\delta|}{\sqrt c}\right)$. Using \(\|A\|_\infty \le s\max_{j,\ell}|A_{j\ell}|\) and a union bound over the \(s^2\) entries in \(V_i\times V_i\), we obtain
\[
\left\|\frac{1}{n}\sum_{k=1}^n Z_{i,k;V_iV_i}^{(\delta)}\right\|_\infty=O_{\mathbb P}\!\left(\frac{K^2}{c}\left(1+\frac{|m_\delta|}{\sqrt c}\right)s\sqrt{\frac{\log s}{n}}\right).
\]
Next, define the centered population-Hessian fluctuation as we did before, $Y_{k,j\ell}^{(\delta)} \coloneqq[H_{i,k}^{*,(\delta)}]_{j\ell}- \mathbb E\bigl([H_{i,k}^{*,(\delta)}]_{j\ell}\bigr)$.  From the explicit form of \(H_{i,k}^{*,(\delta)}\),
\[
|[H_{i,k}^{*,(\delta)}]_{j\ell}|\le\left(\frac{4(a_k^*)^2}{(u_k^*)^2}+\frac{2|3(a_k^*)^2-c|\,m_\delta^2}{(u_k^*)^3}\right) |X_{k,j}X_{k,\ell}|.
\]
Using $\frac{4t^2}{(t^2+c)^2}\le \frac{1}{c}$ and $|3t^2-c|\le 3(t^2+c)$, we obtain
\[
|[H_{i,k}^{*,(\delta)}]_{j\ell}| \le\left(\frac{1}{c}+\frac{6m_\delta^2}{c^2}\right)|X_{k,j}X_{k,\ell}|\le\frac{K^2}{c}\left(1+\frac{6m_\delta^2}{c} \right) \implies |Y_{k,j\ell}^{(\delta)}| \le2\frac{K^2}{c}\left(1+\frac{6m_\delta^2}{c} \right).
\]
Since \(Y_{k,j\ell}^{(\delta)}\) is a measurable function of \(X_k\), the sequence
\(\{Y_{k,j\ell}^{(\delta)}\}_{k=1}^n\) is geometrically \(\alpha\)-mixing by Assumption~\ref{ass:alpha-mixing-EM}. Applying the Bernstein-type inequality for centered bounded geometrically \(\alpha\)-mixing sequences gives
\[
\mathbb P\!\left( \left| \frac{1}{n}\sum_{k=1}^n Y_{k,j\ell}^{(\delta)} \right| > u \right) \le 2\exp\!\left( -c_2 n \min\!\left\{ \frac{u^2}{B_\delta^2}, \frac{u}{B_\delta(\log n)^2}\right\}\right),
\]
where $B_\delta \asymp \frac{K^2}{c}\left(1+\frac{m_\delta^2}{c}\right)$. Applying a union bound over \(V_i\times V_i\) yields
\[
\left\|\frac{1}{n}\sum_{k=1}^n H_{i,k;V_iV_i}^{*,(\delta)}-\widetilde Q_{V_iV_i}^{*,(\delta)}\right\|_\infty=O_{\mathbb P}\!\left(\frac{K^2}{c}\left(1+\frac{m_\delta^2}{c}\right)\left(s\sqrt{\frac{\log s}{n}}+s\,\frac{(\log n)^2\log s}{n}\right)\right).
\]
Finally, by the decomposition
\[
\widetilde Q_{V_iV_i}^{\,n,(\delta)} - \widetilde Q_{V_iV_i}^{\,*,(\delta)} = \frac{1}{n}\sum_{k=1}^n Z_{i,k;V_iV_i}^{(\delta)}+\left(\frac{1}{n}\sum_{k=1}^n H_{i,k;V_iV_i}^{*,(\delta)}-\widetilde Q_{V_iV_i}^{*,(\delta)} \right),
\]
we conclude
\[
\bigl\|\widetilde Q_{V_iV_i}^{\,n,(\delta)}-\widetilde Q_{V_iV_i}^{\,*,(\delta)}\bigr\|_\infty=O_{\mathbb P}\!\left( C_\delta \left(s\sqrt{\frac{\log s}{n}}+s\,\frac{(\log n)^2\log s}{n}\right) \right),
\]
with $C_\delta \asymp 1+\frac{|m_\delta|}{\sqrt c}+\frac{m_\delta^2}{c} = 1+|\delta|\sqrt{\frac{\Delta t}{c}}+\frac{\delta^2\Delta t}{c}.$ The proof for the off-support block \(V_i^c\times V_i\) is identical, except that the union bound is taken over \((p-s)s\) entries instead of \(s^2\), which gives
\[
\bigl\|\widetilde Q_{V_i^cV_i}^{\,n,(\delta)}-\widetilde Q_{V_i^cV_i}^{\,*,(\delta)}\bigr\|_\infty =O_{\mathbb P}\!\left(C_\delta\left(s\sqrt{\frac{\log((p-s)s)}{n}}+s\,\frac{(\log n)^2\log((p-s)s)}{n}\right) \right).
\]
To show the condition of \(|\delta| \le \sqrt{c/\Delta t}\), given $|\delta| \le \sqrt{c/\Delta t}$, we have 
\[
|\delta|\sqrt{\frac{\Delta t}{c}} \le 1 \implies \frac{\delta^2\Delta t}{c}
= \left(|\delta|\sqrt{\frac{\Delta t}{c}}\right)^2 \le 1,
\]
and therefore $C_\delta \asymp 1+|\delta|\sqrt{\frac{\Delta t}{c}}+\frac{\delta^2\Delta t}{c}=O(1).$ Substituting this into the general bound and the off-support block bound for yields 
\[
\bigl\|\widetilde Q_{V_iV_i}^{\,n,(\delta)}-\widetilde Q_{V_iV_i}^{\,*,(\delta)}\bigr\|_\infty=O_{\mathbb P}\!\left(s\sqrt{\frac{\log s}{n}}+s\frac{(\log n)^2\log s}{n} \right).
\]
\end{proof}
\subsection{Proofs for the Statements in \S\ref{Sec:c_role} and \S\ref{Sec:IFT}}

\subsubsection{Proof of Proposition~\ref{prop:fesibale_c}}
\begin{proof}
Define $\phi_i(c):=T_{n,i}(c)+C_{H,i}\Delta_{i,c}$.  Since \(m-16sr_n>m/2\), we have
\[
T_{n,i}(c) = \frac{27\sqrt{s}\lambda_n}{m-16sr_n}\,c \le \frac{54\sqrt{s}\lambda_n}{m}\,c.
\]
Also, for any \(c\ge c_{\min,i}\),
\[
C_{H,i}\Delta_{i,c}
\le
\frac{9}{16\sqrt3}C_{H,i}Kc^{-1/2}
\le
\frac{9}{16\sqrt3}C_{H,i}Kc_{\min,i}^{-1/2}
\le
\frac{\sqrt3}{16}\frac{C_{H,i}}{\|\theta_i^\star\|_2},
\]
where the last inequality follows from $c_{\min,i}\ge 9 \|\theta_i^\star\|_2^2K^2$. Therefore, for all \(c\ge c_{\min,i}\),
\[
\phi_i(c) \le \frac{\sqrt3}{16}\frac{C_{H,i}}{\|\theta_i^\star\|_2} + \frac{54\sqrt{s}\lambda_n}{m}\,c =:g_i(c).
\]
Now \(g_i(c)\) is continuous and increasing on \([c_{\min,i},\infty)\). Under the sufficient condition stated in the statement, $\beta_{\min,i}>g_i(c_{\min,i})$. Hence, by continuity, there exists \(\varepsilon_i>0\), for all $c\in[c_{\min,i},\,c_{\min,i}+\varepsilon_i)$, such that $\beta_{\min,i}>g_i(c)$
Since \(\phi_i(c)\le g_i(c)\), it follows that
\[
\beta_{\min,i}>\phi_i(c) = T_{n,i}(c)+C_{H,i}\Delta_{i,c}
\]
for all \(c\in[c_{\min,i},\,c_{\min,i}+\varepsilon_i)\). Thus \(\mathcal C_i\neq\varnothing\). For \(n\) large enough, we have
\[
T_{n,i}(c)=\frac{27\sqrt{s}\lambda_n}{m-16sr_n}c \le \frac{54\sqrt{s}\lambda_n}{m}c.
\]
Together with $\Delta_{i,c}\le \frac{9}{16\sqrt{3}}Kc^{-1/2}$, it follows that
\[
\phi_i(c) = T_{n,i}(c)+C_{H,i}\Delta_{i,c} \le \frac{54\sqrt{s}\lambda_n}{m}c + \frac{9}{16\sqrt{3}}C_{H,i}Kc^{-1/2} =: \bar\phi_i(c).
\]
We minimize the upper bound \(\bar\phi_i(c)\) over \(c>0\). Write
\[
\bar\phi_i(c)=ac+bc^{-1/2}, \qquad a:=\frac{54\sqrt{s}\lambda_n}{m}, \quad b:=\frac{9}{16\sqrt{3}}C_{H,i}K.
\]
Then $\bar\phi_i'(c)=a-\frac12 b\,c^{-3/2}$ and setting \(\bar\phi_i'(c)=0\) gives
\[
a=\frac12 b\,c^{-3/2} \implies c^{3/2}=\frac{b}{2a}.
\]
Hence the unique critical point is
\[
c_{*,i} = \left(\frac{b}{2a}\right)^{2/3} = \left( \frac{\frac{9}{16\sqrt{3}}C_{H,i}K} {2\cdot \frac{54\sqrt{s}\lambda_n}{m}} \right)^{2/3} = \left( \frac{C_{H,i}Km}{192\sqrt{3}\sqrt{s}\lambda_n} \right)^{2/3}.
\]
Finally,
\[
\bar\phi_i''(c)=\frac34 b\,c^{-5/2}>0 \qquad\text{for all } c>0,
\]
so \(\bar\phi_i\) is strictly convex on \((0,\infty)\). Therefore \(c_{*,i}\) is its unique global minimizer.
\end{proof}

\subsubsection{Proof of Lemma~\ref{lem:local_restricted_branch}}
\begin{proof}
Since $F_i$ is $C^1$ on an open neighborhood of $(\theta_S^\star,0)$, and
\[
D_\vartheta F_i(\theta_S^\star,0)=H_i(\theta_S^\star,0)
\]
is invertible, the implicit function theorem applies at the point $(\theta_S^\star,0)$. Therefore, there exist:
\begin{itemize}
    \item an open interval $I_i\subset \mathbb R$ containing $0$,
    \item an open neighborhood $U_i^\circ\subset \mathbb R^{s_i}$ of $\theta_S^\star$,
    \item a unique $C^1$ map $ \psi_i:I_i\to U_i^\circ$ for all $ c\in I_i$ such that
\begin{equation}
\psi_i(0)=\theta_S^\star \qquad\text{and}\qquad F_i(\psi_i(c),c)=0
\end{equation}
\end{itemize}
Moreover, the implicit function theorem gives local uniqueness such that, if $(\vartheta,c)\in U_i^\circ\times I_i$ satisfies $F_i(\vartheta,c)=0$, then necessarily $\vartheta=\psi_i(c).$ Now choose $c_{0,i}>0$ such that
\[
(-c_{0,i},c_{0,i})\subset I_i.
\]
For $c\in(-c_{0,i},c_{0,i})$, define $\vartheta_{i,c}^\star:=\psi_i(c).$ Then
\[
F_i(\vartheta_{i,c}^\star,c)=0,
\qquad
\vartheta_{i,c}^\star\in U_i^\circ,
\qquad
\vartheta_{i,0}^\star=\theta_S^\star,
\]
and this solution is unique in $U_i^\circ$. Finally, since $U_i^\circ$ is open and contains $\theta_S^\star$, there exists $r_i>0$ such that the open ball
\[
B(\theta_S^\star,r_i)\subset U_i^\circ.
\]
Since $\psi_i$ is continuous and $\psi_i(0)=\theta_S^\star$, after possibly shrinking $c_{0,i}>0$, we may further ensure that
\[
\psi_i(c)\in B(\theta_S^\star,r_i) \qquad\text{for all } |c|<c_{0,i}.
\]
Set $U_i:=B(\theta_S^\star,r_i).$ Then $U_i$ is open, convex, and contains $\theta_S^\star$. Moreover, for every $|c|<c_{0,i}$, the solution $\vartheta_{i,c}^\star=\psi_i(c)$ lies in $U_i$, and uniqueness still holds on $U_i$ since $U_i\subset U_i^\circ$. This proves the lemma.
\end{proof}

\subsubsection{Proof of Lemma~\ref{lem:uniform_inverse_hessian}}
\begin{proof}
Since $F_i$ is $C^1$ on an open neighborhood of $(\theta_S^\star,0)$, its Jacobian $H_i(\vartheta,c)=D_\vartheta F_i(\vartheta,c)$ is continuous near $(\theta_S^\star,0)$. By Assumption~\ref{ass:local_restricted_regularity}, $H_i(\theta_S^\star,0)$ is invertible. Since invertibility is an open property, there exist an open neighborhood $\mathcal U_i\subset \mathbb R^{s_i}\times \mathbb R$ of $(\theta_S^\star,0)$ such that $H_i(\vartheta,c)$ is invertible for every $(\vartheta,c)\in\mathcal U_i$. Shrinking $U_i$ and $c_{0,i}$ if necessary, we pick
\[
\overline{U_i}\times[0,c_{0,i}]
\subset \mathcal U_i.
\]
Hence $H_i(\vartheta,c)$ is invertible for all $(\vartheta,c)\in \overline{U_i}\times[0,c_{0,i}]$. Now the map $(\vartheta,c)\longmapsto H_i(\vartheta,c)^{-1}$ is continuous on $\overline{U_i}\times[0,c_{0,i}]$, since matrix inversion is continuous on the set of invertible matrices. Since $\overline{U_i}$ and $[0,c_{0,i}]$ are compact, we can know that $\overline{U_i}\times[0,c_{0,i}]$ is compact. Then, the function
\[
(\vartheta,c)\longmapsto \|H_i(\vartheta,c)^{-1}\|_{\infty}
\]
attains its maximum there. Define
\[
C_{H,i}
:=
\max_{(\vartheta,c)\in \overline{U_i}\times[0,c_{0,i}]}
\|H_i(\vartheta,c)^{-1}\|_{\infty}
<\infty.
\]
To finish the proof, we need to show the second inequity. Fix $c\in[0,c_{0,i}]$. By Lemma~\ref{lem:local_restricted_branch}, we know that $F_i(\vartheta_{i,c}^\star,c)=0$. 
Since $U_i = B(\theta_S^\star,r_i)$ from Lemma~\ref{lem:local_restricted_branch}, which is convex,  the segment, $\theta_S^\star+t(\vartheta_{i,c}^\star-\theta_S^\star)$ for $t\in[0,1]$, lies in $U_i$. By the fundamental theorem of calculus along this segment,
\[
F_i(\vartheta_{i,c}^\star,c)-F_i(\theta_S^\star,c) = \bar H_{i,c}\,(\vartheta_{i,c}^\star-\theta_S^\star),
\]
where $\bar H_{i,c} := \int_0^1 H_{i,c}\!\bigl(\theta_S^\star+t(\vartheta_{i,c}^\star-\theta_S^\star)\bigr)\,dt.$ Hence
\[
0 = F_i(\theta_S^\star,c) + \bar H_{i,c}(\vartheta_{i,c}^\star-\theta_S^\star),
\]
so that $\vartheta_{i,c}^\star-\theta_S^\star = -\bar H_{i,c}^{-1}F_i(\theta_S^\star,c),$ provided $\bar H_{i,c}$ is invertible. Let $H_{i,0}^\star:=H_i(\theta_S^\star,0)$. Since $H_i$ is continuous at $(\theta_S^\star,0)$ and $H_{i,0}^\star$ is invertible, after shrinking $U_i$ and $c_{0,i}$ if necessary we may ensure
\[
\|\bar H_{i,c}-H_{i,0}^\star\|_{\infty}\le\sup_{\vartheta\in U_i, c\in[0,c_{0,i}]}
\|H_i(\vartheta,c)-H_{i,0}^\star\|_{\infty}
\le
\frac{1}{2\|(H_{i,0}^\star)^{-1}\|_{\infty}}.
\]
By the Neumann-series perturbation argument, $\bar H_{i,c}$ is invertible and $\|\bar H_{i,c}^{-1}\|_{\infty} \le 2\|(H_{i,0}^\star)^{-1}\|_{\infty}$.  After enlarging $C_{H,i}$ if necessary, we have $\|\bar H_{i,c}^{-1}\|_{\infty}\le C_{H,i}$. 
Consequently,
\[
\|\vartheta_{i,c}^\star-\theta_S^\star\|_\infty \le \|\bar H_{i,c}^{-1}\|_{\infty}\,
\|F_i(\theta_S^\star,c)\|_\infty \le C_{H,i}\,\Delta_{i,c}.
\]
This proves the result.
\end{proof}

\section{Auxiliary Lemmas and Proof}

\begin{lemma}
\label{lem:WR_bound}
For any $i\in[p]$,  given $W_{pa(i)} \coloneqq \nabla L(\theta_i^*)$ and defined $\alpha$ as in Lemma~\ref{lem: PDW}. We can have
\begin{equation}
\begin{split}
 \mathbb P& (\frac{1}{\lambda_n}\|W_{pa(i)}\|_\infty > \frac{\alpha}{4(2-\alpha)})\le 2(d-1) \exp\Big( -\frac{n}{2} \min\{ \frac{(\frac{{\lambda_n}\alpha}{4(2-\alpha)})^{2}}{\nu^{2}}, \frac{\frac{{\lambda_n}\alpha}{4(2-\alpha)}}{\sigma} \} \Big).   
\end{split}
\end{equation}
The above probability can be made arbitrarily small by picking $\lambda_n$ large enough such that 
\begin{equation}
\lambda_n \ge \frac{4(2-\alpha)}{\alpha} (\frac{8K}{\sqrt c} \sqrt{\frac{2\log(d)}{n}}) 
\end{equation}
Then, given $R_{pa(i)} \coloneqq (\nabla^2 L(\theta^*_i) - \nabla^2 L(\tilde\theta_i)(\hat \theta_i - \theta_i^*)$ and defined $\alpha$ as in Lemma~\ref{lem: PDW}. There exists $C_1,C_2 >0$ such that
\begin{equation}
   \mathbb P( \frac{1}{\lambda_n}\|R_{pa(i)}\|_\infty> \frac{\alpha}{4(2-\alpha)}) \le C_1 e^{-C_2 K}.
\end{equation}
\end{lemma}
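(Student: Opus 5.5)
The statement to prove is Lemma~\ref{lem:WR_bound}. It has two parts: a tail bound on the score $W_{pa(i)}=\nabla L(\theta_i^*)$ in $\ell_\infty$, and a tail bound on the Taylor remainder $R_{pa(i)}$. I would treat them separately, working throughout on the localization event $\mathcal E_K=\{\max_k\|X_k\|_\infty\le K\}$ from Proposition~\ref{Ass:xbound}.

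\textbf{Score term.} I would reuse the decomposition from the first statement of Lemma~\ref{lem:prop}. Each coordinate of the per-sample gradient at $\theta_i^*$ splits as $g_{k,j}=b_{k,j}+\tilde g_{k,j}$.
\begin{itemize}
\item The piece $\tilde g_{k,j}=\frac{2u_k^{*3}}{(u_k^{*2}+c)^2}(1-z_k^2)X_{k,j}$ is conditionally centered given the trajectory. Using $\sup_u 2|u|^3/(u^2+c)^2\le 1/\sqrt c$, its $\psi_1$-norm is at most $8K/\sqrt c$.
\item The piece $b_{k,j}$ is bounded by $K/\sqrt c$. It is mean zero by the first-order condition defining the $c$-stabilized population target. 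By Assumption~\ref{ass:alpha-mixing-EM} it is geometrically $\alpha$-mixing.
\end{itemize}
Conditional Bernstein for $\tilde g$ and the mixing Bernstein inequality for $b$ give $\mathbb P(|n^{-1}\sum_k g_{k,j}|>t)\le 2\exp(-\tfrac n2\min\{t^2/\nu^2,t/\sigma\})$ with $\nu\asymp\sigma\asymp K/\sqrt c$. A union bound over the $d-1$ free coordinates $j\neq i$, evaluated at $t=\lambda_n\alpha/(4(2-\alpha))$, gives exactly the displayed inequality. If $\lambda_n\ge\frac{4(2-\alpha)}{\alpha}\frac{8K}{\sqrt c}\sqrt{2\log d/n}$, then $t\ge \nu\sqrt{2\log d/n}\cdot(\text{const})$. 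In the quadratic regime (valid once $n\gtrsim\log d$) the exponent becomes $-c'\log d$ with $c'>1$, so the bound tends to zero. This also yields the stated dependence on $\lambda_n$.

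\textbf{Remainder term.} I would use the PDW construction of Lemma~\ref{lem: PDW}, where $\Delta=\hat\theta_i-\theta_i^*$ is supported on $S_i$ and $\tilde\theta_i$ lies on the segment between $\hat\theta_i$ and $\theta_i^*$. Lemma~\ref{lem:h_lips} gives entrywise $\|\nabla^2L(\theta_i^*)-\nabla^2L(\tilde\theta)\|_{\max}\le \bar D_{\max}\|\Delta\|_1$, so
\[
\|R_{pa(i)}\|_\infty\le \bar D_{\max}\|\Delta\|_1^2 .
\]
To control $\|\Delta\|_1$, I would invoke Lemma~\ref{lem:err_rsc}, which needs the LRSC of Lemma~\ref{lem:RSC}, the score event from the first part, and $\|\hat\theta\|_2\le\sqrt{c/(2K^2)}$. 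It gives $\|\Delta\|_1\le 6\lambda_n s/(\alpha_1-16\tau s)$, hence
\[
\frac{1}{\lambda_n}\|R_{pa(i)}\|_\infty\le \frac{36\,\bar D_{\max}\, s^2\lambda_n}{(\alpha_1-16\tau s)^2}.
\]
This is at most $\alpha/(4(2-\alpha))$ as soon as $\lambda_n\lesssim \alpha(\alpha_1-16\tau s)^2/((2-\alpha)\bar D_{\max}s^2)$. Combined with the lower bound $\lambda_n\gtrsim\sqrt{\log d/n}$, this is feasible once $n\gtrsim s^4\log d$. Under the paper's constant bookkeeping this matches the $s^3\log p$ requirement in Theorem~\ref{Thm: LASSO}, possibly only after using $\|\Delta\|_1\le\sqrt s\|\Delta\|_2$ and $\|\cdot\|_{\infty}$ on the $s$-row block more carefully. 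The failure probability is then the sum of $\mathbb P(\mathcal E_K^c)$, the score-event failure, and the RSC and Hessian-Lipschitz failures. The first is $O(K^{-2})$ by Proposition~\ref{Ass:xbound} and the others are exponentially small. After adjusting the exponents in $K$, this is of the form $C_1e^{-C_2K}$.

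\textbf{Main obstacle.} The hardest step is the remainder bound, specifically certifying that the witness $\hat\theta_{i,S_i}$ lies in the LRSC ball $\|\theta\|_2\le\sqrt{c/(2K^2)}$, so that Lemma~\ref{lem:err_rsc} applies. I would first try a localization argument. Since $\|\theta_i^*\|_2\le\sqrt c/(3K)$ by the choice $c\ge 9\mathcal B^2K^2$, the margin to the boundary is of constant order. I would then run the restricted oracle problem on the intersection with this ball, and show by the standard convexity-along-rays argument that its minimizer is interior once the error bound $O(\lambda_n\sqrt s)$ is smaller than that margin. This again holds for $n\gtrsim s\log d$.
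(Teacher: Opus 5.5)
Your plan follows the paper's proof in structure. The score part reuses the $b_{k,j}+\tilde g_{k,j}$ decomposition and Bernstein bounds of Lemma~\ref{lem:prop} with a union bound. The remainder part combines Lemma~\ref{lem:h_lips} with the error bounds of Lemma~\ref{lem:err_rsc}, turning the remainder condition into an upper bound on $\lambda_n$ and hence a sample-size condition. Your side-constraint argument for keeping the oracle solution inside the LRSC ball fills a step the paper only asserts.

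The step that fails is the last one, where you turn the failure probability into $C_1e^{-C_2K}$. Everything runs on $\mathcal E_K$, so what you actually get is $\mathbb P(\mathcal E_K^c)$ plus the score, concentration and RSC failures. The latter are small in $n$ or $p$ (e.g.\ $O(p^{-c})$ for the score event), not in $K$. Proposition~\ref{Ass:xbound} controls $\mathbb P(\mathcal E_K^c)$ only by $C\,\mathbb E\|X_0\|^2/K^2$, and no choice of constants gives $CK^{-2}\le C_1e^{-C_2K}$ for large $K$. This cannot be repaired within the assumptions. Assumption~\ref{ASS:IC} only gives $\mathbb E\|X_0\|^2<\infty$, while $\mathcal E_K^c\supseteq\{\|X_0\|_\infty>K\}$ can have probability decaying only polynomially in $K$. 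So no argument that passes through $\mathcal E_K$ yields an exponential tail in $K$. The honest conclusion is a bound of the form $\mathbb P(\mathcal E_K^c)+O(p^{-c})+(\text{exponentially small in }n)$, and you should state it that way. The paper's own proof does not derive the $C_1e^{-C_2K}$ form either; it only checks the deterministic inequality on the good event and reads off the sample size.

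Two smaller points.
\begin{itemize}
\item Your bound $\|R_{pa(i)}\|_\infty\le\bar D_{\max}\|\Delta\|_1^2$ is the correct pairing with the entrywise Lipschitz constant of Lemma~\ref{lem:h_lips}. With $\operatorname{supp}\Delta\subseteq S_i$ it gives $n\gtrsim s^4\log d$, and you should not try to recover $s^3\log p$. The paper's $s^{3/2}$ comes from $\|A\Delta\|_\infty\le\|A\|_\infty\|\Delta\|_\infty$, which is valid for the row-sum norm but not for the entrywise norm that Lemma~\ref{lem:h_lips} controls; its final condition carries an $s^4$ term anyway. Restricting to the $s$ rows is not available either, since dual feasibility needs all rows of $R_{pa(i)}$.
\item In the score part you center $\tilde g$ using $r_{i,k}=u_k^*z_k$, which holds at the true parameter, but center $b$ using the first-order condition of the $c$-stabilized target. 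At the true $\theta_i^*$ the mean of $b$ is the stabilization bias (cf.\ $F_i(\theta_S^*,c)$ in Section~\ref{Sec:c_role}), a fixed population quantity rather than $O(\sqrt{\log p/n})$. Work at the stabilized target and split $r_{i,k}^2=(u_k^{\mathrm{true}})^2+(u_k^{\mathrm{true}})^2(z_k^2-1)$ with the true linear predictor; given $c\ge 9\mathcal B^2K^2$ this keeps the $O(K/\sqrt c)$ scale. Also treat $\tilde g$ with a martingale-difference Bernstein (or Freedman) inequality with respect to $\mathcal F_k$, not conditionally on the whole path, since $X_{k+1}$ depends on $z_{k+1}$.
\end{itemize}
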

\begin{proof}

By Eq.\eqref{lemeq:grad_bound}, we have the following for $\epsilon>0$

\begin{equation}
\mathbb P\left( \|\nabla_{\theta_i}\ell_i(\theta_i^\ast)\|_\infty \ge 2t
\right)\le4\exp\left(-\frac{n}{2}\min\left\{\frac{t^2}{\nu_\ast^2},\frac{t}{\sigma_\ast}\right\}+\log p\right),
\end{equation}
Let $G_i := \left\|\nabla_{\theta_i}\ell_i(\theta_i^*)\right\|_{\infty}.$ for any \(t>0\),
\[
\mathbb{P}\!\left(G_i \ge 2t\right)\le4\exp\!\left(-\frac{n}{2}\min\left\{\frac{t^2}{\nu_*^2},\,\frac{t}{\sigma_*}\right\}+\log p\right).
\]
To verify the score condition, we want $G_i \le \frac{\lambda_n\alpha}{4(2-\alpha)}$ with high probability. Thus, set $2t = \frac{\lambda_n\alpha}{4(2-\alpha)},$ that is, $t = \frac{\lambda_n\alpha}{8(2-\alpha)}.$ Substituting this choice into the above tail bound yields
\begin{align}
\mathbb{P}\!\left(
\left\|\nabla_{\theta_i}\ell_i(\theta_i^*)\right\|_{\infty}
\ge \frac{\lambda_n\alpha}{4(2-\alpha)}
\right)
&\le
4\exp\!\left(
-\frac{n}{2}
\min\left\{
\frac{\lambda_n^2\alpha^2}{64(2-\alpha)^2\nu_*^2},
\,
\frac{\lambda_n\alpha}{8(2-\alpha)\sigma_*}
\right\}
+\log p
\right).
\label{eq:score-tail-lambda}
\end{align}
Now suppose \(\lambda_n\) is chosen such that
\[
\lambda_n \ge C\,\frac{2-\alpha}{\alpha}\,
\max\left\{
\nu_*\sqrt{\frac{\log p}{n}},
\,
\sigma_*\frac{\log p}{n}
\right\}
\]
for a sufficiently large universal constant \(C>0\). Then there exists a constant \(c_0>1\) such that
\[
\frac{n}{2}
\min\left\{
\frac{\lambda_n^2\alpha^2}{64(2-\alpha)^2\nu_*^2},
\,
\frac{\lambda_n\alpha}{8(2-\alpha)\sigma_*}
\right\}
\ge c_0 \log p.
\]
Hence, from \eqref{eq:score-tail-lambda},
\[
\mathbb{P}\!\left(
\left\|\nabla_{\theta_i}\ell_i(\theta_i^*)\right\|_{\infty}
\ge \frac{\lambda_n\alpha}{4(2-\alpha)}
\right)
\le
4\exp\!\bigl(-(c_0-1)\log p\bigr)
=
4p^{-(c_0-1)}.
\]
Therefore, $\mathbb{P}\!\left(
\left\|\nabla_{\theta_i}\ell_i(\theta_i^*)\right\|_{\infty}
\le \frac{\lambda_n\alpha}{4(2-\alpha)}
\right)
\ge
1-4p^{-(c_0-1)},$ which converges to \(1\) as \(p\to\infty\). This proves the desired score bound. Finally, since  we have $\nu_* \asymp \frac{K}{\sqrt{c}}$ and  $\sigma_* \asymp \frac{K}{\sqrt{c}},$ and in particular the dominant term is of order
\[
\lambda_n
\asymp
\frac{2-\alpha}{\alpha}\,\frac{K}{\sqrt{c}}
\sqrt{\frac{\log p}{n}}.
\]
Thus it suffices to choose $\lambda_n \ge C\,\frac{2-\alpha}{\alpha}\,\frac{K}{\sqrt{c}} \sqrt{\frac{\log p}{n}}$ with $C>0$. Then, we need to show the bound of $\frac{1}{\lambda_n}\|R_{pa(i)}\|_\infty$. Given $R_{pa(i)} \coloneqq (\nabla^2 L(\theta^*_i) - \nabla^2 L(\tilde\theta_i))(\hat \theta_i - \theta_i^*)$ and by Lemma~\ref{lem:h_lips}, the Hessian is locally Lipschitz in the sense that

\begin{equation}
\begin{split}
      \|(\nabla^2 L(\theta^*_i) - \nabla^2 L(\tilde\theta_i))(\hat \theta_i - \theta_i^*)\|_\infty &\le\|(\nabla^2 L(\theta^*_i) - \nabla^2 L(\tilde\theta_i))\|_\infty\|(\hat \theta_i - \theta_i^*)\|_\infty \\
      &\le L_H\|(\hat \theta_i - \theta_i^*)\|_1\|(\hat \theta_i - \theta_i^*)\|_\infty
\end{split}
\end{equation}
Since $\|\cdot\|_\infty \le\|\cdot\|_2$ with the boundedness from Lemma~\ref{lem:err_rsc}, we can have 
\begin{equation}
    \|(\nabla^2 L(\hat\theta_i) - \nabla^2 L(\tilde\theta_i))(\hat \theta_i - \theta_i^*)\|_\infty \le L_H\frac{3\lambda\sqrt{s}}{2(\alpha_1-16\tau s)} \frac{6\lambda s}{\alpha_1-16\tau s} \le \frac{\lambda\alpha}{4(2-\alpha)}.
\end{equation}
We can see $\frac{\lambda s^{3/2}}{(\alpha_1-16\tau s)^2}\le \frac{\alpha}{36L_H(2-\alpha)}$. To get a upper bound, since $    \frac{\lambda s^{3/2}}{(\alpha-16\tau s)^2} \ge  \frac{\lambda s^{3/2}}{\alpha^2+256\tau^2 s^2}$, we can have the upper bound for $\lambda_n$ such that 
\begin{equation}
 \lambda_n \le \frac{\alpha_1\big(\alpha_1^{2}+256\tau^{2}s^{2}\big)}{36L_H(2-\alpha)s^{3/2}},
\end{equation}
To make this the boundedness condition meaningful, we can pick the sample size scale by 
\begin{equation}
\frac{4(2-\alpha)}{\alpha} \frac{8K}{\sqrt c} \sqrt{\frac{2\log(d)}{n}} \le \frac{\alpha_1^{3}+\alpha_1256\tau^{2}s^{2}}{36L_H(2-\alpha)s^{3/2}}
\end{equation}
Then, we can see that 
\begin{equation}
n >\max\{2^{17}\tilde{c} s^3\log p, 2^{13}\tilde c s^4\}, \text{where, } \tilde c = \frac{3^4(2-\alpha)^4 K^2}{\alpha^4 c^2(\alpha^2+256\tau^2 s^2)^2}
\end{equation}
Since $s:=|pa(i)|$, a sufficient sample size condition is
\[
n \ge C\,\tilde c\big(s^3\log p + s^4\big) 
\]
for a universal constant $C>0$.
\end{proof}

\begin{lemma}[Choice of $\lambda_n$ and sample size under ODE misspecification]
\label{cor:lambda_misspec}
Suppose the assumptions of Lemma~17 hold. If
\begin{equation}
\lambda_n \ge \frac{4(2-\alpha)}{\alpha}
\left[
\left(\frac{3\sqrt{3}}{8}+C\right)\frac{K}{\sqrt{c}}+ \frac{3\sqrt{3}K}{8c^{3/2}}\delta^2\Delta t + C\frac{K}{c}\delta\sqrt{\Delta t}\right]\sqrt{\frac{\log p}{n}},
\label{eq:lambda_misspec}
\end{equation}
then, with probability at least $1-C_1p^{-C_2}$,
\begin{equation}
\left\|\nabla \tilde{\ell}_i^{(\delta)}(\theta_i^*)\right\|_\infty \le \frac{\alpha}{4(2-\alpha)}\lambda_n .
\label{eq:score_dominate_misspec}
\end{equation}
\end{lemma}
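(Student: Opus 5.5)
The plan is to reduce the claim to the gradient bound already established in Lemma~\ref{lemma:ODE_miss gradient} and then choose $\lambda_n$ so that this bound sits below the fraction $\alpha/(4(2-\alpha))$ of $\lambda_n$. Specifically, Lemma~\ref{lemma:ODE_miss gradient} gives, with probability at least $1-C_1p^{-C_2}$,
\[
\|\nabla \tilde\ell^{(\delta)}_i(\theta_i^\star)\|_\infty \le A_\delta\sqrt{\frac{\log p}{n}},
\qquad
A_\delta=\Bigl(\tfrac{3\sqrt3}{8}+C\Bigr)\frac{K}{\sqrt c}+\frac{3\sqrt3 K}{8c^{3/2}}\delta^2\Delta t+C\frac{K}{c}\delta\sqrt{\Delta t}.
\]
I will work on the event where this holds, intersected with the localization event $\mathcal E$ from Proposition~\ref{Ass:xbound}. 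That intersection only changes the constants $C_1,C_2$.

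On this event, the hypothesis \eqref{eq:lambda_misspec} reads $\lambda_n\ge \frac{4(2-\alpha)}{\alpha}A_\delta\sqrt{\log p/n}$. Rearranging gives $A_\delta\sqrt{\log p/n}\le \frac{\alpha}{4(2-\alpha)}\lambda_n$. Chaining this with the gradient bound yields \eqref{eq:score_dominate_misspec} directly. This is a one-line deterministic step.

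The only step needing care is making sure the constant $C$ appearing in \eqref{eq:lambda_misspec} is the same constant as in Lemma~\ref{lemma:ODE_miss gradient}. I will also check that the lemma's tail threshold is taken in the quadratic Bernstein regime, which requires $n\gtrsim \log p$ (implied by the sample-size condition of Theorem~\ref{Thm:Lasso_miss}). I expect this to be the only place needing attention. If the linear regime were active, I would instead add a $\log p/n$ term, and it would be absorbed for $n$ large.

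Finally, I will note how the same bound enters the PDW argument. The same choice of $\lambda_n$, together with Lemma~\ref{lemma:ODE_miss_lrsc} and Lemma~\ref{lemma:ODE_miss_h_lip}, reproduces the cone condition and the remainder bound $\|R^{(\delta)}_{pa(i)}\|_\infty\le \bar D^{(\delta)}_{\max}\|\Delta\|_1\|\Delta\|_2$. That remainder bound gives an upper limit on $\lambda_n$ exactly as in Lemma~\ref{lem:WR_bound}. Comparing it with the lower limit produces the sample-size requirement $n\ge C\tilde c_\delta s^3\log p$.
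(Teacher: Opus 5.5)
Your proposal is correct and follows the paper's proof. The score domination comes from invoking the misspecified gradient bound of Lemma~\ref{lemma:ODE_miss gradient} and rearranging the assumed lower bound on $\lambda_n$, and your sketch of the cone/remainder step giving the upper limit on $\lambda_n$ and $n\ge C\tilde c_\delta s^3\log p$ is the paper's own continuation. One small correction: intersecting with the localization event is unnecessary, since the gradient lemma already carries the $1-C_1p^{-C_2}$ guarantee, and it would not merely change $C_1,C_2$, because $\mathbb P(\mathcal E^c)\lesssim \mathbb E\|X_0\|^2/K^2$ is not polynomially small in $p$.
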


\begin{proof}

In this proof, we follow the procedure of Lemma~\ref{lem:WR_bound}, and we will show the $\frac{1}{\lambda_n}\|W_{pa(i)}\|_\infty \le \frac{\alpha}{4(2-\alpha)}$ and $\frac{1}{\lambda_n}\|R_{pa(i)}\|_\infty \le \frac{\alpha}{4(2-\alpha)}$ with high probability. By Lemma~17, there exist constants $C,C_1,C_2>0$ such that, with probability at least
$1-C_1p^{-C_2}$,
\begin{equation}
\left\|\nabla \tilde{\ell}_i^{(\delta)}(\theta_i^*)\right\|_\infty \le \left[ \left(\frac{3\sqrt{3}}{8}+C\right)\frac{K}{\sqrt{c}} + \frac{3\sqrt{3}K}{8c^{3/2}}\delta^2\Delta t + C\frac{K}{c}\delta\sqrt{\Delta t} \right] \sqrt{\frac{\log p}{n}}.
\label{eq:grad_bound_misspec_pf}
\end{equation}
Therefore, if $\lambda_n$ satisfies \eqref{eq:lambda_misspec}, then the right-hand side of
\eqref{eq:grad_bound_misspec_pf} is bounded above by $\frac{\alpha}{4(2-\alpha)}\lambda_n,$ which yields
\[
\left\|\nabla \tilde{\ell}_i^{(\delta)}(\theta_i^*)\right\|_\infty \le \frac{\alpha}{4(2-\alpha)}\lambda_n .
\]
This proves \eqref{eq:score_dominate_misspec}. To show the boundness of $\frac{1}{\lambda_n}\|R_{pa(i)}\|_\infty$, we first show the $\ell_1$ and $\ell_2$ bound. The proof follows the same argument as Lemma~35. In particular, one replaces the score bound in Lemma~38 by above and the LRSC condition in Lemma~34 by Lemma~18. The same cone-condition argument then yields
\[
\|\Delta_{S^c}\|_1 \le 3\|\Delta_S\|_1,
\qquad
\|\Delta\|_1 \le 4\sqrt{s}\,\|\Delta\|_2,
\]
and combining this with the misspecified LRSC inequality gives
\[
(\alpha_\delta-16\tau_\delta r_n s)\|\Delta\|_2^2
\le
\frac{5\lambda_n}{4}\sqrt{s}\,\|\Delta\|_2,
\]
We can get the following 
\begin{equation}
 \|\Delta\|_2 \le \frac{5\lambda_n\sqrt{s}}{4(\alpha_\delta-16\tau_\delta r_n s)},\qquad\|\Delta\|_1 \le 4\sqrt{s}\,\|\Delta\|_2
\le \frac{5\lambda_n s}{\alpha_\delta-16\tau_\delta r_n s}.
\end{equation}
The second one is hold by using $\|\Delta\|_1 \le 4\sqrt{s}\,\|\Delta\|_2$.
Let
\[
A_\delta := \left(\frac{3\sqrt3}{8}+C\right)\frac{K}{\sqrt c} + \frac{3\sqrt3 K}{8c^{3/2}}\delta^2\Delta t + C\frac{K}{c}\delta\sqrt{\Delta t}.
\]
Then, it can be rewritten as $\lambda_n \ge \frac{4(2-\alpha)}{\alpha}A_\delta\sqrt{\frac{\log p}{n}}$. Next, by Lemma~19 and the $\ell_1/\ell_2$ bounds,
\[
\|R_{pa(i)}^{(\delta)}\|_\infty \le \bar D_{\max}^{(\delta)}\|\Delta\|_1\|\Delta\|_2 \le \frac{25\bar D_{\max}^{(\delta)}}{4} \frac{\lambda_n^2 s^{3/2}}{(\alpha_\delta-16\tau_\delta r_n s)^2}.
\]
Hence a sufficient condition for $\frac{1}{\lambda_n}\|R_{pa(i)}^{(\delta)}\|_\infty \le \frac{\alpha}{4(2-\alpha)}$ is
\begin{equation}
\lambda_n \le \frac{\alpha(\alpha_\delta-16\tau_\delta r_n s)^2} {25\bar D_{\max}^{(\delta)}(2-\alpha)s^{3/2}}.
\end{equation}
To avoid circularity, impose $32\tau_\delta r_n s \le\alpha_\delta,$ so that $\alpha_\delta-16\tau_\delta r_n s \ge \frac{\alpha_\delta}{2}.$ Then it is enough that
\[
\lambda_n \le \frac{\alpha\alpha_\delta^2} {100\bar D_{\max}^{(\delta)}(2-\alpha)s^{3/2}}.
\]
Combining with the lower bound on $\lambda_n$, we obtain the sufficient sample size condition
\[
n \ge \left[ \frac{400(2-\alpha)^2\bar D_{\max}^{(\delta)}A_\delta} {\alpha^2\alpha_\delta^2} \right]^2 s^3\log p.
\]
Moreover, the positivity condition $16\tau_\delta r_n s \le \frac{\alpha_\delta}{2}$ is ensured whenever
\[
n \ge C\frac{\tau_\delta^2}{\alpha_\delta^2}s^2\log p.
\]
Therefore, a sufficient sample size condition is $n \ge C\,\tilde c_\delta\, s^3\log p$ for a suitable constant $\tilde c_\delta>0$ depending on $\alpha,\alpha_\delta,\tau_\delta,\bar D_{\max}^{(\delta)},A_\delta$.
\end{proof}

\begin{lemma}\label{lem:Mn-concentration}
Let $\|\cdot\|_{\max}$ denote the entrywise maximum norm, and 
\begin{equation}
M_n:=\frac1n\sum_{k=0}^{n-1} r_{i,k}^2 X_kX_k^\top, \quad r_{i,k}:=\theta_i^{*\top}X_k\,\epsilon_{k+1},
\end{equation}
Suppose the SDE in Eq.\eqref{GSDE} satisfies the conditions stated in Proposition~\ref{thm:gen_lyap_AB} and Proposition~\ref{Ass:xbound}. Then, with $n$ larger enough, there exists a universal constant $\Omega(B,K)$ , such that
\begin{equation}
\|M_n-\bar M_n\|_{\max} \le  \Omega(B,K)\sqrt{\frac{\log(2p^2)}{n}}
\end{equation}
with probability $1-2\exp(-c\log(2p^2))-\mathbb P(\mathcal E_K^c)$, where $\mathbb P(\mathcal E_K^c)$ denotes the probability of the complement of the localization event $\mathcal E_K$ defined in Proposition~\ref{Ass:xbound}.
\end{lemma}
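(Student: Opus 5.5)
We prove Lemma~\ref{lem:Mn-concentration} entrywise and then take a union bound. Throughout, $\bar M_n$ denotes the conditional compensator $\bar M_n=\frac1n\sum_{k}(\theta_i^{*\top}X_k)^2X_kX_k^\top$, which is how it is used in the proofs of Lemmas~\ref{lem:RSC} and \ref{lemma:ODE_miss_lrsc}.

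\textbf{Martingale structure.} Fix $(a,b)\in[p]^2$. The entry of interest is
\[
(M_n-\bar M_n)_{ab}=\frac1n\sum_{k=0}^{n-1}\xi_k^{(ab)},\qquad \xi_k^{(ab)}:=(\theta_i^{*\top}X_k)^2X_{k,a}X_{k,b}\,(\epsilon_{k+1}^2-1).
\]
Let $\mathcal F_k$ be generated by $X_0,\dots,X_k$. The coefficient $w_k^{(ab)}:=(\theta_i^{*\top}X_k)^2X_{k,a}X_{k,b}$ is $\mathcal F_k$-measurable. The innovation $\epsilon_{k+1}$ is independent of $\mathcal F_k$ with $\mathbb E[\epsilon_{k+1}^2-1]=0$. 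Hence $\{\xi_k^{(ab)}\}$ is a martingale difference sequence with respect to $\{\mathcal F_{k+1}\}$. Unlike the Hessian argument in Lemma~\ref{lem:prop}, this route needs no mixing assumption.

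\textbf{Localization.} On $\mathcal E_K$ from Proposition~\ref{Ass:xbound}, we have $|\theta_i^{*\top}X_k|\le BK$ (with $B$ bounding $\|\theta_i^*\|_2$ as in the proof of Lemma~\ref{lem:RSC}) and $|X_{k,a}X_{k,b}|\le K^2$. This gives $|w_k^{(ab)}|\le B^2K^4$. To make the bound hold almost surely rather than only on $\mathcal E_K$, we use the stopping device of Remark~\ref{remark: sec moment}:
\begin{itemize}
\item Set $\sigma:=\inf\{k:\|X_k\|_\infty>K\}\wedge n$, which is a stopping time, so $\mathbf 1_{\{k<\sigma\}}$ is $\mathcal F_k$-measurable.
\item Replace $w_k^{(ab)}$ by $w_k^{(ab)}\mathbf 1_{\{k<\sigma\}}$. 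The increments remain martingale differences and are now a.s.\ bounded by $B^2K^4\,|\epsilon_{k+1}^2-1|$.
\item On $\mathcal E_K$ we have $\sigma=n$, so the stopped sum equals the original sum.
\end{itemize}

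\textbf{Concentration.} Conditionally on $\mathcal F_k$, the stopped increment is sub-exponential with $\|\cdot\|_{\psi_1}\le C B^2K^4$, since $\|\epsilon^2-1\|_{\psi_1}\le 3$. We apply a Bernstein inequality for martingale differences with conditionally sub-exponential increments (Freedman-type, or de la Peña). For all $t>0$ this gives
\[
\mathbb P\Bigl(\Bigl|\tfrac1n\textstyle\sum_k\xi_k^{(ab)}\mathbf 1_{\{k<\sigma\}}\Bigr|\ge t\Bigr)\le 2\exp\Bigl(-c\,n\min\Bigl\{\tfrac{t^2}{B^4K^8},\tfrac{t}{B^2K^4}\Bigr\}\Bigr).
\]

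\textbf{Union bound and rate.} Take a union bound over the $p^2$ entries and choose $t=\Omega(B,K)\sqrt{\log(2p^2)/n}$ with $\Omega\asymp B^2K^4$ and a large enough constant. Once $n\gtrsim\log(2p^2)$, the quadratic branch of the minimum is active. The total failure probability is then at most $2\exp(-c\log(2p^2))$. Finally, we intersect with $\mathcal E_K$, which adds the term $\mathbb P(\mathcal E_K^c)$.

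\textbf{Main obstacle.} The delicate step is the localization. The boundedness of $X_k$ holds only with high probability, so a naive conditioning on $\mathcal E_K$ would destroy the martingale structure. The stopping-time truncation is what preserves it, because truncating at $\sigma$ depends only on the past. Everything after that step is a routine Bernstein estimate followed by the union bound.
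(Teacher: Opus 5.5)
Your argument is correct and is essentially the paper's proof: the same martingale-difference decomposition $(\epsilon_{k+1}^2-1)(\theta_i^{*\top}X_k)^2X_{k,a}X_{k,b}$, a conditionally sub-exponential Bernstein bound at scale $B^2K^4$ on the localization event, and a union bound over the $p^2$ entries. Your stopping-time truncation is a cleaner justification of the paper's informal step of applying Bernstein ``on $\mathcal E_K$''; the one slip is that $\sigma$ should be defined through $\|X_k\|_2>K$ (the norm defining $\mathcal E_K$ in Proposition~\ref{Ass:xbound}), or else $B$ must bound $\|\theta_i^*\|_1$, because $\|X_k\|_\infty\le K$ alone does not give $|\theta_i^{*\top}X_k|\le BK$ with $B\ge\|\theta_i^*\|_2$.
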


\begin{proof}
Fix $(a,b)\in[p]\times[p]$. For all $k = 0,\dots,n-1$, we define
\begin{equation}
D_k^{(ab)} := (\epsilon_{k+1}^2-1)(\theta_i^{*\top}X_k)^2X_{k,a}X_{k,b}
\end{equation}
Then, we define $\bar M_n\coloneqq \frac1n\sum_{k=0}^{n-1}(\theta_i^{*\top}X_k)^2X_kX_k^\top$, we observe that 

\begin{equation}
\bigl(M_n-\bar M_n\bigr)_{ab} = \frac1n\sum_{k=0}^{n-1}D_k^{(ab)}.
\end{equation}
Since $X_k$ is $\mathcal F_k$-measurable and $\epsilon_{k+1}$ is independent of $\mathcal F_k$ with $\mathbb E[\epsilon_{k+1}^2-1\mid \mathcal F_k]=0$, we have

\begin{equation}
\mathbb E[D_k^{(ab)}\mid \mathcal F_k] = (\theta_i^{*\top}X_k)^2X_{k,a}X_{k,b}\, \mathbb E[\epsilon_{k+1}^2-1\mid \mathcal F_k] =0.
\end{equation}
Hence $\{D_k^{(ab)}\}_{k=0}^{n-1}$ is a martingale difference sequence. On the event $\mathcal E_K$, we have $|X_{k,a}|\le \|X_k\|_2\le K$ and $|X_{k,b}|\le \|X_k\|_2\le K$, and we then have $|\theta_i^{*\top}X_k| \le \|\theta_i^\star\|_2\|X_k\|_2 \le BK.$
Therefore, on $\mathcal E_K$, we have 
\begin{equation}
\big|(\theta_i^{*\top}X_k)^2X_{k,a}X_{k,b}\big| \le B^2K^4
\end{equation}
Since $\epsilon_{k+1}\sim N(0,1)$, the variable $\epsilon_{k+1}^2-1$ is centered sub-exponential with universal $\psi_1$ norm. Thus, for some universal constant $C_0>0$,
\begin{equation}
\|D_k^{(ab)}\|_{\psi_1\mid \mathcal F_k} \le C_0 B^2K^4 \qquad\text{on }\mathcal E_K.
\end{equation}
Applying a standard Bernstein inequality for martingale differences with conditionally sub-exponential increments yields
\begin{equation}
\mathbb P\left(\left \{\left| \frac1n\sum_{k=0}^{n-1}D_k^{(ab)} \right|>t\right \} \cap \mathcal E_K \right) \le 2\exp\left[ -c n\min\left( \frac{t^2}{B^4K^8}, \frac{t}{B^2K^4} \right) \right]
\end{equation}
for some universal constant $c>0$. Applying a union bound over all $(a,b)\in[p]\times[p]$ gives with high probability event $\mathbb P(\mathcal E^c_K) \le \delta_k^c$
\begin{equation}
\mathbb P\left(\left\| \frac1n\sum_{k=0}^{n-1}D_k^{(ab)} \right\|_\infty>t \right) \le \exp\left[ -c n\min\left( \frac{t^2}{B^4K^8}, \frac{t}{B^2K^4}\right) +\log(2p^2)\right] +\delta_k^c
\end{equation}
By picking $n$ larger enough, this proves the claim.
\end{proof}

\begin{lemma}
    \label{lem_mis: u/uc bound}
    For any $c \neq 0$, we have the following
    \begin{equation}
        \sup_{u\in\mathbb R}\frac{2|u|}{(u^2+c)^2} = \frac{3\sqrt{3}}{8}\,c^{-3/2}
    \end{equation}
\end{lemma}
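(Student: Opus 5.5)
We treat the case $c>0$, which is the only one in which the lemma is used (the stabilization constant is positive throughout). For $c<0$ the denominator vanishes at $u=\pm\sqrt{-c}$ and the supremum is $+\infty$, so the display should be read with $c>0$.

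Set $f(u):=\frac{2|u|}{(u^2+c)^2}$.

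\textbf{Reduction to $u>0$.} Since $f$ is even, $f(0)=0$, and $f(u)\to 0$ as $|u|\to\infty$ (it decays like $2|u|^{-3}$), it suffices to maximize $g(u):=\frac{2u}{(u^2+c)^2}$ over $u>0$. The function $g$ is smooth and positive on $(0,\infty)$ and tends to $0$ at both ends. Hence its supremum is attained at an interior critical point.

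\textbf{Critical point.} Differentiating,
\[
g'(u)=\frac{2(u^2+c)-8u^2}{(u^2+c)^3}=\frac{2(c-3u^2)}{(u^2+c)^3}.
\]
So the only positive critical point is $u_*=\sqrt{c/3}$. Moreover $g'>0$ on $(0,u_*)$ and $g'<0$ on $(u_*,\infty)$, so $u_*$ is the global maximizer on $(0,\infty)$.

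\textbf{Evaluation.} At $u_*$ we have $u_*^2+c=\tfrac{4c}{3}$. Therefore
\[
g(u_*)=\frac{2\sqrt{c/3}}{(4c/3)^2}=\frac{2c^{1/2}}{\sqrt3}\cdot\frac{9}{16c^2}=\frac{9}{8\sqrt3}\,c^{-3/2}=\frac{3\sqrt3}{8}\,c^{-3/2},
\]
which is the claimed value.

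There is no genuine obstacle here; the only care needed is the sign restriction $c>0$ noted at the outset.

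As a remark, the same computation applied to $\frac{2c|u|}{(u^2+c)^2}$ yields $\frac{3\sqrt3}{8}c^{-1/2}$. This is the form actually invoked in the proof of Lemma~\ref{lemma:ODE_miss gradient}.
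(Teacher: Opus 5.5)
Your proof is correct and follows the paper's argument essentially step for step. Like the paper, you reduce to $u\ge 0$ by evenness, compute the derivative $2(c-3u^2)/(u^2+c)^3$, identify the unique critical point $u_*=\sqrt{c/3}$ via the sign change, and evaluate there; your restriction to $c>0$ is the right reading, since the stated hypothesis $c\neq 0$ fails for $c<0$ (the supremum is $+\infty$) and the paper's proof tacitly assumes positivity, and your remark that the later invocation actually needs the $c$-weighted version with value $\frac{3\sqrt3}{8}c^{-1/2}$ is also accurate.
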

\begin{proof}
We first show the upper bound of $f(u)$. Since $f(u)$ is even, $\sup_{u\in\mathbb R} f(u)=\sup_{u\ge 0} f(u)$. For $u\ge 0$, we can have $f'(u)$ by taking differentiation  
\[
f'(u)
= 2(u^2+c)^{-2} + 2u\cdot\big(-2\big)(u^2+c)^{-3}\cdot(2u)
= \frac{2(u^2+c)-8u^2}{(u^2+c)^3}
= \frac{2(c-3u^2)}{(u^2+c)^3}.
\]
Thus the only critical point for $u\ge 0$ is $c-3u^2=0$, and we have $u^\star=\sqrt{\frac{c}{3}}.$ Moreover, $f(0)=0$ and $f(u)\to 0$ as $u\to\infty$, while $f'(u)>0$ for $u<\sqrt{c/3}$ and $f'(u)<0$ for $u>\sqrt{c/3}$, so $u^\star$ is the global maximizer. Evaluate $g$ at $u^\star$, we have the following
\[
g(u^\star)
= \frac{2\sqrt{c/3}}{\big(c/3+c\big)^2}
= \frac{2\sqrt{c/3}}{(4c/3)^2}
= \frac{2\sqrt{c/3}}{16c^2/9}
= \frac{18}{16}\cdot \frac{\sqrt{c/3}}{c^2}
= \frac{9}{8}\cdot \frac{1}{\sqrt{3}}\,c^{-3/2}
= \frac{3\sqrt{3}}{8}\,c^{-3/2}.
\]
Therefore,
\[
\sup_{u\in\mathbb R}\frac{2|u|}{(u^2+c)^2} = \frac{3\sqrt{3}}{8}\,c^{-3/2}.
\] 
\end{proof}

\begin{lemma}
\label{lem:residual_bound}
Let Assumption~\ref{SDE_gen}, and Proposition~\ref{Ass:xbound} hold. Then, for all $i=1,\dots,d$ and $k=0,\dots,n-1$, we have the following with high probability
\[
|r_{i,k}| \le R,
\]
where $R := \frac{2K}{\sqrt{\Delta_{t_k}}} + G_K \sqrt{\Delta_{t_k}}$, and $G_K := \sqrt{C\bigl(1+dK^2\bigr)}$.
\end{lemma}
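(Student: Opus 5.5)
Starting from the definition in Eq.~\eqref{Eq:r_ik}, I will write the normalized residual explicitly and bound each of its pieces on the localization event $\mathcal E_K$ of Proposition~\ref{Ass:xbound}. On that event, $\max_k\|X_k\|\le K$. The residual is
\[
r_{i,k}=\frac{x_i(k+1)-x_i(k)}{\sqrt{\Delta t_k}}-g_i(t_k,\bm x_k,\bm\gamma)\sqrt{\Delta t_k}.
\]
The bound splits into two parts, one for the increment quotient and one for the drift term, and I will combine them with the triangle inequality.

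The increment is bounded crudely by $|x_i(k+1)-x_i(k)|\le |x_i(k+1)|+|x_i(k)|\le 2K$. Here I use $|x_i|\le\|X\|\le K$ at both sampling times, and both times lie in $[0,T]$, so they are covered by $\mathcal E_K$. This gives the first term, $2K/\sqrt{\Delta t_k}$.

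For the drift I will invoke the linear growth condition \ref{ASS:LG} of Assumption~\ref{SDE_gen}. It gives $|g_i(t,x,\bm\gamma)|\le\|g(t,x,\bm\gamma)\|\le\sqrt{C(1+\|x\|^2)}$. To match the stated constant $G_K=\sqrt{C(1+dK^2)}$, I will write $\|x\|^2=\sum_{j\le d}x_j^2\le d\|x\|_\infty^2\le dK^2$. If the paper's localization is read in the $\ell_\infty$ form, as in Proposition~\ref{ass:localization}, this is exactly the needed step; in the $\ell_2$ form the bound with $dK^2$ holds a fortiori. Hence $|g_i|\sqrt{\Delta t_k}\le G_K\sqrt{\Delta t_k}$. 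Summing the two pieces gives $|r_{i,k}|\le R$ simultaneously for all $i$ and $k$ on $\mathcal E_K$.

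The probability statement then follows because $\mathbb P(\mathcal E_K^c)\le C\,\mathbb E\|X_0\|^2/K^2$ by Proposition~\ref{Ass:xbound}, or $\le\delta$ in the unstable version. The bound is uniform over $i,k$ with no union bound needed, since it is deterministic on that single event.

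The main obstacle is conceptual rather than technical. The bound $R$ blows up like $\Delta t^{-1/2}$, which is acceptable because $\Delta t$ is treated as fixed. One might instead try the sharper route through the Euler--Maruyama representation, $r_{i,k}=(A_i^\top x_k)z_{k+1}$, but that would require a Gaussian maximum over $n$ and introduce a $\sqrt{\log n}$ factor. I will avoid that route and use the crude deterministic bound above, which matches the stated $R$. I will also note that $d$ should be read as the ambient dimension $p$ where the lemma indexes $i=1,\dots,d$.
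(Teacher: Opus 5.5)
Your proposal is correct and follows the paper's proof essentially line by line. On the localization event $\mathcal E_K$, you bound the increment by $2K$ via the triangle inequality, bound $|g_i|$ by $G_K$ using the linear-growth condition with $\|X_k\|^2\le d\|X_k\|_\infty^2\le dK^2$, and get uniformity over $i,k$ for free since the bound is deterministic on a single event, with your remarks about $x(k+1)$ also being covered by $\mathcal E_K$ and reading $d$ as $p$ being sensible clarifications that the paper leaves implicit.
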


\begin{proof}
Recall
\[
r_{i,k} := \frac{x_i(k+1)-x_i(k)-g_i(\theta,X_k,t_k)\Delta t_k}{\sqrt{\Delta t_k}}.
\]
Hence, by the triangle inequality,
\[
|r_{i,k}| \le \frac{|x_i(k+1)-x_i(k)| + |g_i(\theta,X_k,t_k)|\,\Delta t_k}{\sqrt{\Delta t_k}}.
\]
On the event $\mathcal E_K$, we have $|x_i(k)|\le K$ and $|x_i(k+1)|\le K$, so \[|x_i(k+1)-x_i(k)|\le |x_i(k+1)|+|x_i(k)|\le 2K.
\]
Moreover, by Assumption~\ref{SDE_gen} (B2), for all $x\in\mathbb R^d$, we have $\|g(\theta,x,t)\|^2 \le C\bigl(1+\|x\|^2\bigr)$. Since $\|X_k\|_\infty \le K$ on $\mathcal E_K$, we have $\|X_k\|^2 \le dK^2$ for the Euclidean norm, and therefore
\[
\|g(\theta,X_k,t_k)\| \le \sqrt{C\bigl(1+dK^2\bigr)} = G_K.
\]
In particular, $|g_i(\theta,X_k,t_k)| \le \|g(\theta,X_k,t_k)\| \le G_K.$ Substituting these bounds into the definition of $r_{i,k}$ gives
\[
|r_{i,k}| \le \frac{2K + G_K \Delta t_k}{\sqrt{\Delta t_k}} = \frac{2K}{\sqrt{\Delta t_k}} + G_K \sqrt{\Delta t_k}c= R.
\]
Since this bound is uniform in $i$ and $k$, the conclusion follows.
\end{proof}

\begin{lemma}
\label{lem:zero_constrained_localization}
Fix \(i\in V\) and \(j\in \mathrm{pa}(i)\), and define
\[
E_{ij}:=\{\hat\theta_{ij}^\lambda=0\},
\qquad
\theta_i^{*(j=0)}:=\theta_i^*-\theta_{ij}^*e_j .
\]
Assume the conditions of Lemma~\ref{lem:err_rsc} hold, and let \(\hat\theta_i^\lambda\) be the oracle estimator (hence a stationary point of the penalized oracle problem). Then, with high probability, on the event \(E_{ij}\),
\begin{equation}
\label{eq:zero_constrained_localization}
\|\hat\theta_i^\lambda-\theta_i^{*(j=0)}\|_\infty
\le
r_{\mathrm{loc},n},
\qquad
r_{\mathrm{loc},n}:=
\frac{3\lambda\sqrt{s}}{2(\alpha-16\tau s)}.
\end{equation}
Equivalently, there exists an event \(\mathcal E_{n,i}\) with $\mathbb P(\mathcal E_{n,i})\ge 1-C\exp(-cn^\varepsilon)$ such that
\[
E_{ij}\cap \mathcal E_{n,i}
\subseteq \left\{ \|\hat\theta_i^\lambda-\theta_i^{*(j=0)}\|_\infty \le r_{\mathrm{loc},n} \right\}.
\]
\end{lemma}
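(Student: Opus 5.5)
The plan is to reduce the statement to a deterministic consequence of the cone condition and the error bound of Lemma~\ref{lem:err_rsc}, applied on a high-probability event, combined with the trivial identity $\theta_i^{*(j=0)} = \theta_i^* - \theta_{ij}^* e_j$. First, fix $i$ and $j\in\mathrm{pa}(i)$. Define $\mathcal E_{n,i}$ as the intersection of the following events:
\begin{itemize}
\item the localization event $\mathcal E_K$ of Proposition~\ref{Ass:xbound};
\item the score event $\{\|\nabla\ell_i(\theta_i^*)\|_\infty\le \lambda\alpha/(4(2-\alpha))\}$ from Lemma~\ref{lem:WR_bound};
\item the LRSC event of Lemma~\ref{lem:RSC}, including the concentration events $\mathcal E_\Sigma$ and $\mathcal E_M$ used in its proof.
\end{itemize}
Each of these holds with probability at least $1-C\exp(-cn^\varepsilon)$ in the sample-size regime of Theorem~\ref{Thm: LASSO}, so a union bound gives $\mathbb P(\mathcal E_{n,i})\ge 1-C\exp(-cn^\varepsilon)$.

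On $\mathcal E_{n,i}$, the oracle estimator $\hat\theta_i^\lambda$ is a stationary point of the restricted penalized problem. It lies in the LRSC region of Lemma~\ref{lem:RSC}, which I would verify using $\|\theta_i^*\|_2$ small relative to $\sqrt{c/(2K^2)}$ (guaranteed by $c\ge 9\mathcal B^2K^2$) together with the a priori radius argument already used in Lemma~\ref{lem:err_rsc}. Lemma~\ref{lem:err_rsc} then gives
\[
\|\hat\theta_i^\lambda-\theta_i^*\|_\infty\le\|\hat\theta_i^\lambda-\theta_i^*\|_2\le \frac{3\lambda\sqrt s}{2(\alpha-16\tau s)} = r_{\mathrm{loc},n}.
\]

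Next, restrict to $E_{ij}$, where $\hat\theta_{ij}^\lambda=0$. The coordinates $k\neq j$ of $\hat\theta_i^\lambda-\theta_i^{*(j=0)}$ coincide with those of $\hat\theta_i^\lambda-\theta_i^*$, so they are bounded by $r_{\mathrm{loc},n}$. The $j$-th coordinate of $\hat\theta_i^\lambda-\theta_i^{*(j=0)}$ is $0-0=0$. Therefore $\|\hat\theta_i^\lambda-\theta_i^{*(j=0)}\|_\infty\le r_{\mathrm{loc},n}$ on $E_{ij}\cap\mathcal E_{n,i}$, which gives the claimed inclusion.

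The main obstacle is the first step, namely confirming that the oracle solution satisfies the hypotheses of Lemma~\ref{lem:err_rsc}. Specifically, two things must be checked. The first is that it lies in the norm ball where LRSC holds. The second is that the probability bounds hold at rate $\exp(-cn^\varepsilon)$ rather than only polynomially in $p$. For the ball condition, I would argue through the restricted problem: its support is $S_i$ with $s$ coordinates, so the cone condition is automatic, with $\Delta_{S^c}=0$. I would also use Lemma~\ref{lem:True_supp}-style localization to ensure $\|\hat\theta_i^\lambda\|_2\le\sqrt{c/(2K^2)}$. For the rate, I would take $\lambda\asymp n^{-(1-\varepsilon)/2}$ scaled appropriately so that the Bernstein tails $\exp(-cn\lambda^2)$ become $\exp(-cn^\varepsilon)$.
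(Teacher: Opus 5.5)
This is correct and is essentially the paper's proof: you apply the $\ell_2$ bound of Lemma~\ref{lem:err_rsc} on its high-probability event, then note that on $E_{ij}$ the vector $\hat\theta_i^\lambda-\theta_i^{*(j=0)}$ is just $\hat\theta_i^\lambda-\theta_i^*$ with its $j$-th entry set to $0$, so its sup-norm is at most $\|\hat\theta_i^\lambda-\theta_i^*\|_2\le r_{\mathrm{loc},n}$. Your extra ``obstacle'' discussion is not needed and should be dropped: the ball condition $\|\hat\theta_i^\lambda\|_2\le\sqrt{c/(2K^2)}$ is already among the hypotheses of Lemma~\ref{lem:err_rsc} (appealing to Lemma~\ref{lem:True_supp} for it would be circular, since that lemma's proof uses the present one), and your assertion that $\mathcal E_K$ holds with probability $1-C\exp(-cn^\varepsilon)$ does not follow from Proposition~\ref{Ass:xbound}, whose failure bound $C\,\mathbb E\|X_0\|^2/K^2$ does not decay in $n$---although the paper's own proof does not verify that rate either.
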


\begin{proof}
By Lemma~\ref{lem:err_rsc}, since \(\hat\theta_i^\lambda\) is a stationary point, the estimation error
\[
\Delta_i:=\hat\theta_i^\lambda-\theta_i^*
\]
satisfies, with high probability,
\begin{equation}
\label{eq:l2_bound_from_rsc}
\|\Delta_i\|_2 \le \frac{3\lambda\sqrt{s}}{2(\alpha-16\tau s)}.
\end{equation}
Let \(\mathcal E_{n,i}\) denote the event on which \eqref{eq:l2_bound_from_rsc} holds. Now work on the event \(E_{ij}=\{\hat\theta_{ij}^\lambda=0\}\). Since $\theta_i^{*(j=0)}=\theta_i^*-\theta_{ij}^*e_j$, we have
\[
\hat\theta_i^\lambda-\theta_i^{*(j=0)} = \hat\theta_i^\lambda-\theta_i^*+\theta_{ij}^*e_j = \Delta_i+\theta_{ij}^*e_j.
\]
Coordinatewise,
\[
\bigl(\hat\theta_i^\lambda-\theta_i^{*(j=0)}\bigr)_k
=
\begin{cases}
\Delta_{ik}, & k\neq j,\\[4pt]
\Delta_{ij}+\theta_{ij}^*, & k=j.
\end{cases}
\]
But on \(E_{ij}\), we have
\[
\Delta_{ij} = \hat\theta_{ij}^\lambda-\theta_{ij}^* = -\theta_{ij}^* \implies\Delta_{ij}+\theta_{ij}^*=0.
\]
Therefore,
\[
\bigl(\hat\theta_i^\lambda-\theta_i^{*(j=0)}\bigr)_k =
\begin{cases}
\Delta_{ik}, & k\neq j,\\
0, & k=j.
\end{cases}
\]
So \(\hat\theta_i^\lambda-\theta_i^{*(j=0)}\) is obtained from \(\Delta_i\) by deleting its \(j\)-th coordinate. In particular,
\[
\|\hat\theta_i^\lambda-\theta_i^{*(j=0)}\|_\infty \le \|\Delta_i\|_\infty \le \|\Delta_i\|_2.
\]
Combining this with \eqref{eq:l2_bound_from_rsc}, we obtain on \(E_{ij}\cap \mathcal E_{n,i}\),
\[
\|\hat\theta_i^\lambda-\theta_i^{*(j=0)}\|_\infty \le \frac{3\lambda\sqrt{s}}{2(\alpha-16\tau s)} = r_{\mathrm{loc},n}.
\]
This proves \eqref{eq:zero_constrained_localization}.
\end{proof}

\begin{lemma}
\label{lem:segment_curvature}
Fix \(i\in V\) and \(j\in \mathrm{pa}(i)\), and define
\[
\theta_i^{*(j=0)}:=\theta_i^*-\theta_{ij}^*e_j, \qquad \vartheta_t:=\theta_i^{*(j=0)}+t\theta_{ij}^*e_j, \quad t\in[0,1].
\]
Assume the conditions of Lemma~\ref{lem:RSC} hold. Then, with high probability,
\begin{equation}
\label{eq:segment_curvature_bound}
\inf_{t\in[0,1]}
e_j^\top \nabla^2 \ell_{n,i}^{(c)}(\vartheta_t)e_j \ge m_{n,c}, \qquad m_{n,c}:=\alpha_1-\tau r_n.
\end{equation}
In particular, for all sufficiently large \(n\), if \(\alpha_1>\tau r_n\), then \(m_{n,c}>0\).
\end{lemma}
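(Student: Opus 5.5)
The plan is to reuse the argument of Lemma~\ref{lem:RSC}, specialized to the single direction $\Delta=e_j$ and to parameters on the segment $\{\vartheta_t\}$.

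\textbf{Step 1: the segment lies in the LRSC region.} For every $t\in[0,1]$ we have
\[
\|\vartheta_t\|_2 \le \|\theta_i^{*(j=0)}\|_2+|\theta^*_{ij}| \le \|\theta_i^*\|_2+|\theta_{ij}^*|\le 2\mathcal B .
\]
The choice $c\ge 9\mathcal B^2K^2$ from Lemma~\ref{lem:RSC} gives $\sqrt{c/(2K^2)}\ge 3\mathcal B/\sqrt2>2\mathcal B$. Hence every $\vartheta_t$ satisfies $\|\vartheta_t\|_2\le\sqrt{c/(2K^2)}$, i.e.\ the whole segment lies in the ball where Lemma~\ref{lem:RSC} holds. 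The point I will be careful about is that the high-probability event in Lemma~\ref{lem:RSC} is uniform in $\theta$. Its proof only uses $\mathcal E_K\cap\mathcal E_\Sigma\cap\mathcal E_M$, which do not depend on $\theta$, together with the deterministic bounds $u_k^2\le c/2$. So a single event serves all $t\in[0,1]$ simultaneously, and no union bound over $t$ is needed.

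\textbf{Step 2: specialize the LRSC inequality to $\Delta=e_j$.} On that event, Lemma~\ref{lem:RSC} with $\Delta=e_j$, using $\|e_j\|_2=\|e_j\|_1=1$, gives for every $t$
\[
e_j^\top\nabla^2\ell_{n,i}^{(c)}(\vartheta_t)e_j\ \ge\ \alpha_1-\tau r_n = m_{n,c}.
\]
Taking the infimum over $t\in[0,1]$ yields \eqref{eq:segment_curvature_bound}. If one prefers a self-contained route, the same bound follows by redoing the proof directly: lower bound $e_j^\top\hat\Sigma e_j\ge m-r_n$, and control $e_j^\top M_n e_j$ via Lemma~\ref{lem:Mn-concentration}.

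\textbf{Step 3: positivity.} Since $r_n\asymp\sqrt{\log p/n}\to0$ while $\alpha_1=m/(9c)>0$ is fixed, we get $\alpha_1>\tau r_n$, and hence $m_{n,c}>0$, once $n\gtrsim(\tau/\alpha_1)^2\log p$.

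\textbf{Main obstacle.} The main obstacle is Step 1: checking that the radius condition and the uniformity of the event really cover the entire segment. The constant bookkeeping ($\|\theta_i^*\|_2\le\mathcal B$ versus the bound $|\theta_i^{*\top}X_k|\le K\mathcal B$) has to be reconciled. If the factor 2 causes trouble, I would instead note that $\|\vartheta_t\|_2\le\|\theta_i^*\|_2$, since $\vartheta_t$ only shrinks one coordinate of $\theta_i^*$ toward zero. This removes the factor 2 entirely.
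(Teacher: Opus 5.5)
Your proposal is correct and follows the paper's proof. You show that every $\vartheta_t$ lies in the admissible ball of Lemma~\ref{lem:RSC}, then apply the LRSC inequality with $\Delta=e_j$ and $\|e_j\|_1=\|e_j\|_2=1$; the paper uses exactly your fallback observation $\|\vartheta_t\|_2\le\|\theta_i^*\|_2$, though your triangle-inequality bound $2\mathcal B<3\mathcal B/\sqrt2$ also suffices. Your remark that the events $\mathcal E_K\cap\mathcal E_\Sigma\cap\mathcal E_M$ do not depend on $\theta$, so a single event covers the whole segment, makes explicit a uniformity that the paper's proof uses only implicitly.
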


\begin{proof}
For each \(t\in[0,1]\),
\[
\vartheta_t = \theta_i^{*(j=0)}+t\theta_{ij}^*e_j
\]
agrees with \(\theta_i^*\) in every coordinate except the \(j\)-th one, where \(\theta_{ij}^*\) is replaced by \(t\theta_{ij}^*\). Hence
\[
\|\vartheta_t\|_2^2 = \sum_{k\neq j}(\theta_{ik}^*)^2+t^2(\theta_{ij}^*)^2 \le
\sum_{k\neq j}(\theta_{ik}^*)^2+(\theta_{ij}^*)^2 = \|\theta_i^*\|_2^2.
\]
Therefore,
\[
\|\vartheta_t\|_2 \le \|\theta_i^*\|_2 \le \sqrt{\frac{c}{2K^2}},
\qquad \forall t\in[0,1].
\]
So every point on the segment \(\{\vartheta_t:t\in[0,1]\}\) belongs to the admissible region of Lemma~\ref{lem:RSC}. Now apply Lemma~\ref{lem:RSC} with \(\theta=\vartheta_t\) and \(\Delta=e_j\). Since $\|e_j\|_2^2=1$ and $\|e_j\|_1^2=1$,  Lemma~\ref{lem:RSC} yields
\[
e_j^\top \nabla^2 \ell_{n,i}^{(c)}(\vartheta_t)e_j \ge \alpha_1\|e_j\|_2^2-\tau r_n\|e_j\|_1^2 =\alpha_1-\tau r_n.
\]
Because this bound is independent of \(t\), we conclude that
\[
\inf_{t\in[0,1]} e_j^\top \nabla^2 \ell_{n,i}^{(c)}(\vartheta_t)e_j \ge\alpha_1-\tau r_n= m_{n,c}.
\]
This proves \eqref{eq:segment_curvature_bound}.
\end{proof}

\bibliography{Ref}

\end{document}